\DeclareOldFontCommand{\rm}{\normalfont\rmfamily}{\mathrm}
\DeclareOldFontCommand{\sf}{\normalfont\sffamily}{\mathsf}
\DeclareOldFontCommand{\tt}{\normalfont\ttfamily}{\mathtt}
\DeclareOldFontCommand{\bf}{\normalfont\bfseries}{\mathbf}
\DeclareOldFontCommand{\it}{\normalfont\itshape}{\mathit}
\DeclareOldFontCommand{\sl}{\normalfont\slshape}{\@nomath\sl}
\DeclareOldFontCommand{\sc}{\normalfont\scshape}{\@nomath\sc}
\newcommand{\E}{\mathbf{E}}
\newcommand{\D}{{\mathcal{D}}}
\newcommand{\N}{\mathbb{N}}
\newcommand{\R}{\mathbb{R}}
\newcommand{\Rd}{\mathbb{R}^d}
\newcommand{\Z}{\mathbb{Z}}
\newcommand{\bj}{\mathbf{j}}
\newcommand{\bi}{\mathbf{i}}
\newcommand{\bll}{\mathbf{l}}
\newcommand{\beq}{\begin{eqnarray*}}
\newcommand{\eeq}{\end{eqnarray*}}
\newcommand{\beqm}{\begin{eqnarray}}
\newcommand{\eeqm}{\end{eqnarray}}
\newtheorem{corollary}{Corollary}
\newtheorem{lemma}{Lemma}
\newtheorem{theorem}{Theorem}
\newtheorem{remark}{Remark}
\newtheorem{definition}{Definition}
\newcommand{\EXP}{{\bf E}}
\newcommand{\PROB}{{\bf P}}
\renewcommand{\P}{{\cal P}}
\newcommand{\F}{{\cal F}}
\begin{document}

\begin{center}
{\LARGE \textbf{
On the rate of convergence of fully connected deep neural network regression estimates}}
\footnote{
Running title: {Fully connected deep neural networks}}
\vspace{0.5cm}

Michael Kohler and Sophie Langer\footnote{
Corresponding author. Tel: +49-6151-16-23371} 

{\textit{Fachbereich Mathematik, Technische Universit\"at Darmstadt, 
Schlossgartenstr. 7, 64289 Darmstadt, Germany,
email: kohler@mathematik.tu-darmstadt.de, langer@mathematik.tu-darmstadt.de}}

\end{center}
\vspace{0.5cm}

\begin{center}
September 29, 2020
\end{center}
\vspace{0.5cm}

\noindent
{\textbf{Abstract}}\\
Recent results in nonparametric regression show that
deep learning, i.e., neural network estimates
with many hidden layers, are able to circumvent the so--called
curse of dimensionality in case that suitable restrictions
on the structure of the regression function hold.
One key feature of the neural networks used in these
results is that their network architecture has a
further constraint, namely the \textit{network sparsity}. In this paper
we show that we can get similar results also for
least squares estimates based on
simple fully connected neural networks with ReLU activation functions.
Here either the number
of neurons per hidden layer is fixed and the number of hidden
layers tends to infinity suitably fast for sample size tending to infinity,
or the number of hidden layers is bounded by some logarithmic factor
in the sample size and the number of neurons per hidden layer
tends to infinity suitably fast for sample size tending to infinity.
The proof is
based on new approximation results concerning deep neural networks.

\vspace*{0.2cm}

\noindent{\textit{AMS classification:}} Primary 62G08; secondary 41A25, 82C32.

\vspace*{0.2cm}

\noindent{\textit{Key words and phrases:}}
curse of dimensionality,
deep learning,
neural networks,
nonparametric regression,
rate of convergence.

\section{Introduction}
Neural networks belong since many years to the most promising approaches in 
nonparametric statistics in view of multivariate
statistical applications, in particular in pattern recognition
and in nonparametric regression
(see, e.g.,
the monographs \cite{AB09, DGL96, GKKW02, H98, HPK91, R95}). 
In recent years the
focus in applications is on what is called deep learning, where
multilayer feedforward neural networks with many hidden
layers are fitted to observed data (see, e.g.,
\cite{Sch15} and the
literature cited therein). Motivated by this practical success,
there is also an increasing interest
in the literature in showing good theoretical properties
of these neural networks, see, e.g., \cite{MP16a, ES15, GoPeElBo19, Y18, YZ19, LS20} 
and the literature cited therein for the analysis
of corresponding
approximation properties of neural networks.

\subsection{Nonparametric regression}
In this paper we study these kind of estimates
in connection with nonparametric regression. Here,
$(\bold{X},Y)$ is an $\Rd \times \R$--valued random vector
satisfying 
$\EXP \{Y^2\}<\infty$, and given a sample
of size $n$ of $(\bold{X},Y)$, i.e., given a data set
\begin{equation*}
\D_n = \left\{
(\bold{X}_1,Y_1), \ldots, (\bold{X}_n,Y_n) 
\right\},
\end{equation*}
where
$(\bold{X},Y)$, $(\bold{X}_1,Y_1)$, \ldots, $(\bold{X}_n,Y_n)$ are i.i.d.,
the aim is to construct an estimator
\[
m_n(\cdot)=m_n(\cdot, \D_n):\Rd \rightarrow \R
\]
of the so--called regression function $m:\Rd \rightarrow \R$,
$m(\bold{x})=\EXP\{Y|\bold{X}=\bold{x}\}$ such that the so--called $L_2$-error
\[
\int |m_n(\bold{x})-m(\bold{x})|^2 {\PROB}_{\bold{X}} (d\bold{x})
\]
is ``small'' (cf., e.g., \cite{GKKW02}
for a systematic introduction to nonparametric regression and
a motivation for the $L_2$-error).

\subsection{Neural Networks}
In order to construct such regression estimates with neural
networks, the first step is to define a suitable
space of functions $f:\Rd \rightarrow \R$ by using neural networks.
The starting point here is the choice of an activation function $\sigma: \mathbb{R} \to \mathbb{R}$.
Traditionally, so--called squashing functions are chosen as activation
function $\sigma: \mathbb{R} \to \mathbb{R}$, which are nondecreasing
and satisfy $\lim_{x \rightarrow - \infty} \sigma(x)=0$
and
$\lim_{x \rightarrow  \infty} \sigma(x)=1$,
e.g., the so-called sigmoidal or logistic squasher
\begin{equation*}
\sigma(x)=\frac{1}{1+\exp(-x)}, \quad x \in \R.
\end{equation*}
Recently, also unbounded activation functions are used, e.g., the
ReLU activation function 
\begin{align*}
\sigma(x)=\max\{x,0\}.
\end{align*}

The network architecture $(L, \textbf{k})$ depends on a positive integer $L$ called the \textit{number of hidden layers} and a \textit{width vector} $\textbf{k} = (k_1, \ldots, k_{L}) \in \mathbb{N}^{L}$ that describes the number of neurons in the first, second, $\ldots$, $L$-th hidden layer. A multilayer feedforward neural network with network architecture $(L, \textbf{k})$ and ReLU activation function $\sigma$ is a real-valued function defined on $\mathbb{R}^d$ of the form
\begin{equation}\label{inteq1}
f(\bold{x}) = \sum_{i=1}^{k_L} c_{1,i}^{(L)}f_i^{(L)}(\bold{x}) + c_{1,0}^{(L)}
\end{equation}
for some $c_{1,0}^{(L)}, \ldots, c_{1,k_L}^{(L)} \in \mathbb{R}$ and for $f_i^{(L)}$'s recursively defined by
\begin{equation*}
f_i^{(s)}(\bold{x}) = \sigma\left(\sum_{j=1}^{k_{s-1}} c_{i,j}^{(s-1)} f_j^{(s-1)}(\bold{x}) + c_{i,0}^{(s-1)} \right)
\end{equation*}
for some $c_{i,0}^{(s-1)}, \dots, c_{i, k_{s-1}}^{(s-1)} \in \mathbb{R}$,
$s \in \{2, \dots, L\}$,
and
\begin{equation*}
f_i^{(1)}(\bold{x}) = \sigma \left(\sum_{j=1}^d c_{i,j}^{(0)} x^{(j)} + c_{i,0}^{(0)} \right)
\end{equation*}
for some $c_{i,0}^{(0)}, \dots, c_{i,d}^{(0)} \in \mathbb{R}$. 
The space of neural networks with 
$L$ hidden layers and $r$ neurons per layer 
is defined by
\begin{align}\label{F}
  \mathcal{F}(L, r) = \{ &f \, : \,  \text{$f$ is of the form } \eqref{inteq1}
  \text{ with }
k_1=k_2=\ldots=k_L=r 
\}.
\end{align}
As there is no further restriction on the network architecture (e.g. no sparsity restriction as in \cite{Sch17}) and as two neurons are only connected if and only if they belong to neighboring layers, we refer to the networks of the class $\mathcal{F}(L, r)$, similar as \cite{YZ19}, as \textit{fully connected feedfoward neural networks}. The representation of this kind of network as a directed acyclic graph is shown in \hyperref[fprod]{Fig.\ref*{neunet}}. Here we get an impression of how such a network looks like and why we call those networks \textit{fully connected}. Remark that this network class also contains networks with some weights chosen as zero.

\begin{figure}[h!]
\centering
\pagestyle{empty}
\def\layersep{2.5cm}
\begin{tikzpicture}[shorten >=1pt,->,draw=black, node distance=\layersep, scale=1]
\centering
    \tikzstyle{every pin edge}=[<-,shorten <=1pt]
    \tikzstyle{neuron}=[circle,fill=black!25,minimum size=10pt,inner sep=0pt]
    \tikzstyle{input neuron}=[neuron, fill=
    black];
    \tikzstyle{output neuron}=[neuron, fill=
    black];
    \tikzstyle{hidden neuron}=[neuron, fill=black!50
    ];
    \tikzstyle{annot} = [
    text centered
    ]

    \foreach \name / \y in {1,...,4}
        \node[input neuron, pin=left:\footnotesize{$x^{(\y)}$}, 
        xshift=1cm
        ] (I-\name) at (0,-\y) {};

    \foreach \name / \y in {1,...,5}
        \path[
        yshift=0.5cm
        ]
            node[hidden neuron] (H-\name) at (\layersep,-\y cm) {};
            
    \foreach \name / \y in {1,...,5}
        \path[yshift=1cm]
            node[hidden neuron, right of = H-\name] (H2-\name) {};

    \node[output neuron,pin={[pin edge={->}]right:\footnotesize{$f(\bold{x})$}}, right of=H2-3, xshift=-0.8cm] (O) {};

    \foreach \source in {1,...,4}
        \foreach \dest in {1,...,5}
            \path (I-\source) edge (H-\dest);
            
    \foreach \source in {1,...,5}
        \foreach \dest in {1,...,5}
            \path (H-\source) edge (H2-\dest);

    \foreach \source in {1,...,5}
        \path (H2-\source) edge (O);

    \node[annot,above of =H-1, xshift=1.3cm, node distance=1cm] (hl) {\footnotesize{Hidden layers}};
    \node[annot,above of=I-1, node distance = 1cm] {\footnotesize{Input}};
   \node[annot,above of=O, yshift=1.5cm, node distance = 1cm] {\footnotesize{Output}};
   \node[draw, below of= H-5, yshift=2.2cm, xshift=-0.4cm, rounded corners, minimum size=1cm] (r) {\footnotesize{$\sigma(\bold{c}^t\bold{x}+c_0)$}};
\end{tikzpicture}
\caption{A fully connected network of the class $\mathcal{F}(2,5)$}
\label{neunet}
\end{figure}
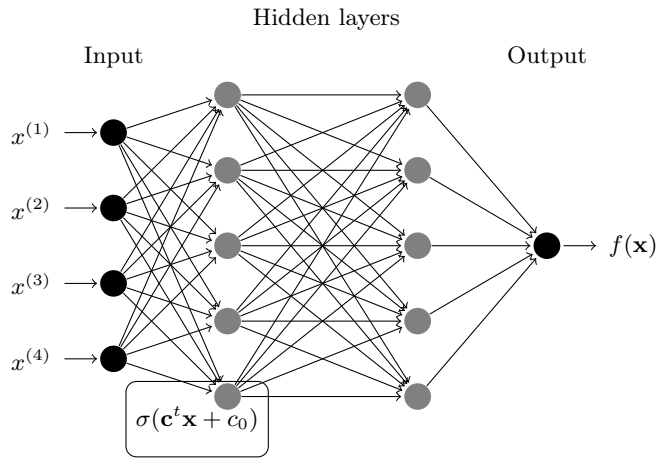

In the sequel the number $L=L_n$ of hidden layers and number $r=r_n$ of neurons per hidden layer
of the above function space are properly chosen. Then we
 define
 the corresponding
 neural network regression estimator as the minimizer of the so--called
 empirical $L_2$-risk over the function space $\F(L_n,r_n)$, i.e.,
 we define our estimator by
\begin{align}
\label{least}
m_n(\cdot)
=
\arg \min_{f \in \F(L_n,r_n)}
\frac{1}{n} \sum_{i=1}^n |f(\bold{X}_i)-Y_i|^2.
\end{align}
For simplicity we assume here and in the sequel that the minimum above indeed exists. When this is not the case our theoretical results also hold for any estimate which minimizes the above empirical $L_2$-risk up to a small additional term.

\subsection{Curse of dimensionality}
In order to judge the quality of such estimates theoretically, usually
the rate of convergence of the $L_2$-error is considered.
It is well-known, that smoothness assumptions on the
regression function are necessary in order to derive non-trivial
results on the rate of convergence
(see, e.g., Theorem 7.2 and Problem 7.2 in 
\cite{DGL96} and
Section 3 in \cite{DW80}).
 For that purpose, we introduce the following definition of $(p,C)$-smoothness.
\begin{definition}
\label{intde2} 
  Let $p=q+s$ for some $q \in \N_0$ and $0< s \leq 1$.
A function $m:\R^d \rightarrow \R$ is called
$(p,C)$-smooth, if for every $\bm{\alpha}=(\alpha_1, \dots, \alpha_d) \in
\N_0^d$
with $\sum_{j=1}^d \alpha_j = q$ the partial derivative
$\partial^q m/(\partial x_1^{\alpha_1}
\dots
\partial x_d^{\alpha_d}
)$
exists and satisfies
\[
\left|
\frac{
\partial^q m
}{
\partial x_1^{\alpha_1}
\dots
\partial x_d^{\alpha_d}
}
(x)
-
\frac{
\partial^q m
}{
\partial x_1^{\alpha_1}
\dots
\partial x_d^{\alpha_d}
}
(z)
\right|
\leq
C
\|\bold{x}-\bold{z}\|^s
\]
for all $\bold{x},\bold{z} \in \R^d$, where $\Vert\cdot\Vert$ denotes the Euclidean norm.
\end{definition}
 \cite{Sto82} showed that the optimal minimax rate of convergence in nonparametric
regression for $(p,C)$-smooth functions is $n^{-2p/(2p+d)}$. This rate
suffers from a characteristic feature in case of high-dimensional functions: If $d$ is relatively large compared to $p$, then this rate of convergence can be extremely slow (so--called curse of dimensionality).
As was shown in \cite{Sto85, Sto94} it is possible to circumvent
this curse of dimensionality by imposing structural assumptions
like additivity on the regression function. This is also used, e.g., in
 so-called single index models, in which
\[
m(\bold{x}) = g(\bold{a}^{\top} \bold{x}), \quad \bold{x} \in \Rd
\]
is assumed to hold, where $g: \R \rightarrow \R$ is a univariate
function and $\bold{a} \in \Rd$ is a $d$-dimensional vector
(see, e.g., \cite{Ha93, HaSt89,KoXi07,YYR02}).
Related to this is  the so-called projection pursuit, where the regression function
is assumed to be a sum of functions of the above form, i.e.,
\[
m(\bold{x}) = \sum_{k=1}^K g_k(\bold{a}_k^{\top} \bold{x}), \quad \bold{x} \in \Rd
\]
for $K \in \N$, $g_k: \R \rightarrow \R$ and $\bold{a}_k \in \Rd$ (see, e.g., \cite{FrSt81}). If we assume that the univariate functions in these postulated structures are
$(p,C)$-smooth, adequately chosen regression estimates can achieve the above univariate rates of convergence up to some logarithmic factor
(cf., e.g., Chapter 22 in \cite{GKKW02}).

\cite{HM07} studied the case of a regression function, which satisfies
\[
m(\bold{x})=g\left(\sum_{l_1=1}^{L_1}g_{l_1}  \left(\sum_{l_2=1}^{L_2}g_{l_1, l_2}\left( \ldots \sum_{l_r=1}^{L_r}g_{l_1,\ldots, l_r}(\bold{x}^{l_1,\ldots, l_r}) \right)\right)\right),
\]
where $g, g_{l_1}, \ldots, g_{l_1,\ldots, l_r}: \R \rightarrow \R$
are
$(p,C)$-smooth univariate functions and $\bold{x}^{l_1,\ldots,l_r}$ are single components of $\bold{x}\in\Rd$ (not necessarily different for two different indices $(l_1,\ldots,l_r)$).
With the use of a penalized least squares estimate, they proved
that in this setting the rate $n^{-2p/(2p+1)}$ can be achieved.

The rate of convergence
of neural network regression estimates
has been analyzed by 
\cite{Bar91,Bar93, Bar94, BK17, KoKr05,KoKr17, McCaGa94,Sch17, Suz19,OhnKim19,FM18,OoSu19}.
For the $L_2$-error of a
single hidden layer neural network, \cite{Bar94} proves a dimensionless rate of $n^{-1/2}$
(up to some logarithmic factor), provided the Fourier transform has a finite first
moment (which basically
requires that the function becomes smoother with increasing
dimension $d$ of $X$).
\cite{McCaGa94} showed a rate of $n^{(-2p/(2p+d+5))+\varepsilon}$ for the $L_2$-error of suitably defined single hidden layer neural network estimators for $(p,C)$-smooth functions, but their study was restricted to the use of a certain cosine squasher as an activation function.

The rate of convergence of
neural network regression
estimates based on two layer neural networks has been analyzed in
\cite{KoKr05}. Therein, interaction models were studied,
where the regression function satisfies
\[
m(\bold{x})
=
\sum_{I \subseteq \{1, \dots, d\}, |I|=d^*}
m_I(\bold{x}_I), \qquad \bold{x}=(x^{(1)}, \dots, x^{(d)})^{\top} \in \Rd
\]
for some $d^* \in \{1, \dots, d\}$ and $m_I:\R^{d^*} \rightarrow \R$
$(I \subseteq \{1, \dots, d\}, |I| \leq d^*)$, where
\[
\bold{x}_{\{i_1,\ldots,i_{d^*}\}}=
(x^{(i_1)}, \dots, x^{(i_{d^*})})
\quad
\mbox{for }
1 \leq i_1 < \ldots < i_{d^*} \leq d,
\]
and
in case that all $m_I$ are $(p,C)$-smooth for some $p \leq 1$
it was shown that suitable neural network regression estimators achieve a rate of convergence of $n^{-2p/(2p+d^*)}$
(up to some logarithmic factor),
which is again a convergence rate independent of $d$.
In \cite{KoKr17}, this result was extended
to so--called $(p,C)$-smooth generalized hierarchical interaction models of
order $d^*$, which are defined as follows:
\begin{definition}
\label{deold}
Let $d \in \N$, $d^* \in \{1, \dots, d\}$ and $m:\Rd \rightarrow \R$.

\noindent
\textbf{a)}
We say that $m$ satisfies a generalized hierarchical interaction model
of order $d^*$ and level $0$, if there exist $\bold{a}_1, \dots, \bold{a}_{d^*} \in
\R^{d}$
and
$f:\R^{d^*} \rightarrow \R$
such that
\[
m(\bold{x}) = f(\bold{a}_1^{\top} \bold{x}, \dots, \bold{a}_{d^*}^{\top} \bold{x})
\quad \mbox{for all } \bold{x} \in \Rd.
\]

\noindent
\textbf{b)}
We say that $m$ satisfies a generalized hierarchical interaction model
of order $d^*$ and level $l+1$, if there exist $K \in \N$,
$g_k: \R^{d^*} \rightarrow \R$ $(k \in \{1, \dots, K\})$
and
$f_{1,k}, \dots, f_{d^*,k} :\R^{d} \rightarrow \R$ $(k \in \{1, \dots, K\})$
such that $f_{1,k}, \dots, f_{d^*,k}$
$(k \in \{1, \dots, K\})$
satisfy a generalized
hierarchical interaction model
of order $d^*$ and level $l$
and
\[
m(\bold{x}) = \sum_{k=1}^K g_k \left(
f_{1,k}(\bold{x}), \dots, f_{d^*,k}(\bold{x})
\right)
\quad \mbox{for all } \bold{x} \in \Rd.
\]

\noindent
\textbf{c)}
We say that the generalized hierarchical interaction model defined above
is $(p,C)$-smooth, if all functions $f$ and $g_k$ occurring in
its definition are $(p,C)$--smooth according to \autoref{intde2}.
\end{definition}


It was shown that for such models
least squares estimators based on
suitably defined multilayer
neural networks (in which the number of hidden layers depends
on the level of the generalized interaction model) achieve the rate of convergence  $n^{-2p/(2p+d^*)}$
(up to some logarithmic factor) in case
$p \leq 1$.
\cite{BK17} showed that this result even holds
for $p>1$ provided the squashing function is suitably
chosen. Similiar rate of convergence
results as in \cite{BK17}
have been shown in
\cite{Sch17}
for neural network regression estimates using
the ReLU activation function. Here slightly more general function spaces, which
fulfill some composition assumption, were studied.
Related results have been shown in
\cite{Suz19}
in case of Besov spaces as a model for the smoothness
of the regression function and in
\cite{OhnKim19} in case of non-ReLU activation
functions.
\cite{FM18} derived
results concerning  estimation by neural networks of piecewise
polynomial regression
functions with partitions having rather general smooth boundaries.
In \cite{OoSu19} the rate of convergence of ResNet-type convolutional
neural networks have been analyzed. Here the convolutional neural
networks corresponds to a fully connected deep neural network
with constant width and depth converging to infinity for sample size
tending to infinity. The class of neural networks uses the ReLU
activation function and very small bounds on the absolute value of the weights in the hidden layers 
and a large bound on the absolute value of the weights in the
output layer. In case of a $(p,C)$--smooth regression function
up to a logarithmic factor the rate of convergence $n^{-2p/(2p+d)}$
is shown.


 The main results in \cite{BK17} and \cite{Sch17} are
new approximation results for neural networks. Here \cite{Sch17}
bounds the supremum norm error of the approximation of smooth
functions on a cube, while the corresponding approximation
bound in \cite{BK17} holds only on a subset of the
cube of measure close to one, which is sufficient in order
to bound the approximation error of the neural network in $L_2$.
In both papers a further restriction of the network
architecture, in form of a sparsity constraint, is needed to show their theoretical
results. Thus the topology of the
neural network is difficult in view of an implementation of the
corresponding least squares estimate.
In particular, in \cite{Sch17} the topology of the neural network
was not completely specified, it was described how many weights
are nonzero but not which of the weights are nonzero.


\subsection{Main results in this article}
The above results lead to the conjecture that network sparsity
is necessary in order to be able to
derive good rates of convergence of neural network
regression estimates.
Our main result in this article is
that this is not the case.
To show this, we derive similar rate of convergence
results as in \cite{BK17} and in \cite{Sch17}
for least squares estimators based on simple fully connected feedforward neural networks.
In these networks
either the number of neurons per hidden layer is fixed and the number of hidden
layers tends to infinity suitably fast for sample size tending to infinity,
or  the number of hidden layers is bounded by some logarithmic factor
in the sample size and the number of neurons per hidden layer
tends to infinity suitably fast for sample size tending to infinity.
In the first case the networks will be much deeper than the class
of networks considered for the least squares estimates in
\cite{BK17} and \cite{Sch17}, where the number of hidden layers
is either bounded by a constant or by some logarithmic factor
in the sample size.
From an approximation theoretical point of view we derive two new error bounds
for the approximation of $(p,C)$--smooth functions by
(very wide or very deep) neural networks using the ReLU activation function, 
which are essential to show our convergence result. In particular, we generalize the approximation result from \cite{Y18} from H\"older--smooth to $(p,C)$--smooth functions. Compared to previous works based on sparse neural network estimates our result does not focus on the number of non--zero parameters but on the overall number of parameters in the network. In particular, we show that in case of networks  with constant width and $W$ weights we can achieve an approximation error of size $W^{-2p/d}$ instead of $W^{-p/d}$ as stated in \cite{BK17} and \cite{Sch17}. By bounding the number of parameters in this sense, the topology of our neural networks is much easier in view of an implementation of the corresponding least squares estimate. For instance, as shown in \autoref{lst:e1}, using Python's packages \texttt{tensorflow} and \texttt{keras} enables us an easy and fast implementation. Although sparsely connected networks are often prefered in practical applications, there are some open questions about an efficient implementation of these networks. So-called pruning methods, for instance, start with large strongly connected neural networks and delete redundant parameters during the training process. The main drawback is, that due to the large initial size of the networks, the computational costs of the method are high. That is why the implementation of sparsely connected networks is critical questioned (see e.g. \cite{U19, Liu18}).
With regard to our convergence result we analyze a slightly more general function space, which includes 
all the other types of structures of $m$ mentioned earlier. 
\\
Independently of us, \cite{YZ19} published a similar result for the approximation of smooth functions by simple fully connected deep neural networks. For a network with width $2d+10$ and $W$ weights, they also showed an approximation rate of $W^{-2p/d}$. After the original version of our paper a relating arXiv article was uploaded by \cite{LS20}. Therein our approximation result, where either width or depth are varied, was generalized to ReLU networks where both width and depth are varied simultaneously.

\lstinputlisting[language=Python, caption={Python code for fitting of fully connected neural networks to data $x_{learn}$ and $y_{learn}$}, label={lst:e1}]{pytest.py}
\subsection{Notation}
Throughout the paper, the following notation is used:
The sets of natural numbers and real numbers
are denoted by $\N$ and $\R$, respectively. Furthermore, we set $\N_0=\N \cup \{0\}$. For $z \in \R$, we denote
the smallest integer greater than or equal to $z$ by
$\lceil z \rceil$ and the largest integer smaller or equal to $z$ by 
$\lfloor z \rfloor$. We set $z_+=\max\{z,0\}$.
Vectors are denoted by bold letters, e.g. $\bold{x} = (x^{(1)}, \dots, x^{(d)})^T$. We define $\bold{1}=(1, \dots, 1)^T$ and $\bold{0} = (0, \dots, 0)^T$. A $d$-dimensional multi-index is a $d$-dimensional vector $\bold{j} = (j^{(1)}, \dots, j^{(d)})^T \in \N_0^d$. As usual, we define $\|\bold{j}\|_1 = j^{(1)}+\dots+j^{(d)}$, $\bold{j}! = j^{(1)}! \cdots j^{(d)}!$, 
\[
\bold{x}^{\bold{j}} = (x^{(1)})^{j^{(1)}}\cdots (x^{(d)})^{j^{(d)}} \ \mbox{and} \ \partial^{\bold{j}} = \frac{\partial^{j^{(1)}}}{\partial (x^{(1)})^{j^{(1)}}} \cdots \frac{\partial^{j^{(d)}}}{\partial (x^{(d)})^{j^{(d)}}}.
\]
Let $D \subseteq \R^d$ and let $f:\R^d \rightarrow \R$ be a real-valued
function defined on $\R^d$.
We write $\bold{x} = \arg \min_{\bold{z} \in D} f(z)$ if
$\min_{\bold{z} \in \D} f(\bold{z})$ exists and if
$\bold{x}$ satisfies
$\bold{x} \in D$ and $f(\bold{x}) = \min_{\bold{z} \in \D} f(\bold{z})$.
The Euclidean and the supremum norms of $\bold{x} \in \Rd$
are denoted by $\|\bold{x}\|$ and $\|\bold{x}\|_\infty$, respectively.
For $f:\R^d \rightarrow \R$
\[
\|f\|_\infty = \sup_{\bold{x} \in \R^d} |f(\bold{x})|
\]
is its supremum norm, and the supremum norm of $f$
on a set $A \subseteq \R^d$ is denoted by
\[
\|f\|_{\infty,A} = \sup_{\bold{x} \in A} |f(\bold{x})|.
\]
Furthermore we define the norm $\| \cdot \|_{C^q(A)}$ of the smooth function space $C^q(A)$ by
\begin{align*}
\|f\|_{C^q(A)} :=\max\left\{\|\partial^{\bj}f\|_{\infty, A}: \|\bj\|_1 \leq q, \bj \in \N^d\right\}
\end{align*}
for any $f \in C^q(A)$. 
Let $\bold{z}_1, \dots, \bold{z}_n \in \Rd$, set
$\bold{z}_1^n := (\bold{z}_1, \dots, \bold{z}_n)$,
let $\mathcal{F}$ be a set of functions $f: \R^d \to \R$ and let $\epsilon > 0$.
We denote by $\mathcal{N}_1(\epsilon, \mathcal{F}, \bold{z}_1^n)$ the $\epsilon-\Vert \cdot \Vert_{1}$-covering number on $\bold{z}_1^n$, i.e. the minimal number $N \in \N$ such that there exist functions $f_1, \dots, f_N: \Rd \to \R$ with the property that for every $f \in \mathcal{F}$ there is a $j=j(f) \in \{1, \dots, N\}$ such that
\begin{align*}
\frac{1}{n} \sum_{i=1}^n |f(\bold{z}_i) - f_j(\bold{z}_i)| < \epsilon.
\end{align*}
We define the truncation operator $T_{\beta}$ with level $\beta > 0$ as
\begin{equation*}
T_{\beta}u =
\begin{cases}
u \quad &\text{if} \quad |u| \leq \beta\\
\beta \cdot {\rm sign}(u) \quad &\text{otherwise}.
\end{cases}
\end{equation*}
Furthermore, for $f:\Rd \rightarrow \R$ we define $T_\beta f:\Rd \rightarrow
\R$ by $(T_\beta f)(\bold{x})=T_\beta (f(\bold{x}))$. And if $\F$ is a set
of functions $f:\Rd \rightarrow \R$ we set
\[
T_\beta \F = \{ T_\beta f \, : \, f \in \F \}.
\]

\subsection{Outline}
The main result is presented in Section \ref{se2}. Our new results
concerning the approximation of $(p,C)$--smooth functions by deep neural networks are described in Section \ref{se3}. Section \ref{se4} deals with a result concerning the approximation of hierarchical composition models (see Definition \ref{de2} below) by neural networks.  Section \ref{se5} contains the proof of the main result.

\section{Main result}
\label{se2}
As already mentioned above, the only possible way to avoid the so--called curse of dimensionality is to restrict 
the underlying function class. We therefore consider functions, which fulfill the following definition:
\begin{definition}
\label{de2}
Let $d \in \N$ and $m: \Rd \to \R$ and let
$\P$ be a subset
of $(0,\infty) \times \N$

\noindent
\textbf{a)}
We say that $m$ satisfies a hierarchical composition model of level $0$
with order and smoothness constraint $\mathcal{P}$, if there exists a $K \in \{1, \dots, d\}$ such that
\[
m(\bold{x}) = x^{(K)} \quad \mbox{for all } \bold{x} = (x^{(1)}, \dots, x^{(d)})^{\top} \in \Rd.
\]
\noindent
\textbf{b)}
We say that $m$ satisfies a hierarchical composition model
of level $l+1$ with order and smoothness constraint $\mathcal{P}$, if there exist $(p,K)  \in \P$, $C>0$, \linebreak $g: \R^{K} \to \R$ and $f_{1}, \dots, f_{K}: \Rd \to \R$, such that
$g$ is $(p,C)$--smooth,
$f_{1}, \dots, f_{K}$ satisfy a  hierarchical composition model of level $l$
with order and smoothness constraint $\mathcal{P}$
and 
\[m(\bold{x})=g(f_{1}(\bold{x}), \dots, f_{K}(\bold{x})) \quad \mbox{for all } \bold{x} \in \Rd.\]
\end{definition}
For $l=1$ and some order and smoothness constraint $\mathcal{P} \subseteq (0,\infty) \times \N$ our space of hierarchical composition models becomes
\begin{align*}
\mathcal{H}(1, \mathcal{P}) = \{&h: \R^{d} \to \R: h(\bold{x}) = g(x^{(\pi(1))}, \dots, x^{(\pi(K))}), \text{where} \notag \\
 & g:\R^{K} \to \R \ \text{is} \  (p, C) \ \text{--smooth} \ \text{for some} \ (p, K) \in \mathcal{P} \notag \\
 & \text{and} \ \pi: \{1, \dots, K\} \to \{1, \dots, d\}\}.
\end{align*}
For $l > 1$, we recursively define 
\begin{align*}
\mathcal{H}(l, \mathcal{P}) := \{&h: \R^{d} \to \R: h(\bold{x}) = g(f_1(\bold{x}), \dots, f_{K}(\bold{x})), \text{where} \notag\\
& g:\R^{K} \to \R \ \text{is} \ (p, C) \text{--smooth} \ \text{for some} \
(p, K) \in \mathcal{P} \notag \\
& \text{and} \ f_i \in \mathcal{H}(l-1, \mathcal{P})\}.
\end{align*}

In practice, it is conceivable, that there exist input--output--relationships, which can be described by a regression function contained in $\mathcal{H}(l,\mathcal{P})$. Particulary, our assumption is motivated by applications in connection with complex technical systems, which are constructed in a modular form. Here each modular part can be again a complex system, which also explains the recursive construction in \autoref{de2}.
It is shown in  \cite{BK17} and in  \cite{Sch17}
 that the function classes used therein generalize all other models
 mentioned in our article.
 As the function class of \cite{BK17} (see \autoref{deold}) forms some special case of $\mathcal{H}(l,\mathcal{P})$ in form of an alternation between summation and composition, this is also true for our more general model. Compared to the function class studied in \cite{Sch17}, our definition forms a slight generalization, since we allow different smoothness and order constraints within the same level in the composition. In particular, also the additional examples
 mentioned in \cite{Sch17} are contained in our function class.
 
Our main result is the following theorem. 

\begin{theorem}
  \label{th1}
 Let $(\bold{X}, Y), (\bold{X}_1, Y_1), \dots, (\bold{X}_n, Y_n)$ be independent and identically distributed 
 random values such that $\rm{supp}(\bold{X})$ is bounded and
  \begin{equation*}
  \E\left\{ \exp(c_1 \cdot Y^2) \right\} < \infty
  \end{equation*}
  for some constant $c_1 > 0$. Let the corresponding regression function $m$ be contained in the class $\mathcal{H}(l, \mathcal{P})$ for some $l \in \N$ and $\mathcal{P} \subseteq [1,\infty) \times \N$. Each function $g$ in the definition of $m$ can be of different smoothness $p_g=q_g+s_g$ ($q_g \in \N_0$ and $s_g \in (0,1]$) and of different input dimension $K_g$, where $(p_g,K_g) \in \mathcal{P}$. Denote by $K_{max}$ the maximal input dimension and by $p_{\max}$ the maximal smoothness of one of the functions $g$. Assume that for each $g$ all partial derivatives of order less than or equal to $q_g$ are bounded, i.e., 
   \begin{equation*}
\Vert g\Vert_{C^{q_g}(\R^d)} \leq c_{2}
  \end{equation*}
   for some constant $c_2 >0$ and that $p_{\max}, K_{\max} < \infty$. Let each function $g$ be Lipschitz continuous with Lipschitz constant $C_{Lip} \geq 1$. 
Let $\tilde{m}_n$ be defined as in \eqref{least}
  for some $L_n, r_n \in \N$,
  and define $m_n = T_{c_3 \cdot \log(n)} \tilde{m}_n$ for some $c_3 >0$ sufficiently large.

  \noindent
  {\bf a)} Choose $c_{4}, c_{5} >0$ sufficiently large and set
  \[
  L_n = \left\lceil
    c_{4} \cdot \log n
    \right\rceil
    \quad \mbox{and} \quad
r_n =
\left\lceil
c_{5} \cdot
\max_{(p,K) \in \P} n^{\frac{K}{2(2p+K)}}
\right\rceil.
\]
 Then 
  \begin{equation*}
  \EXP \int |m_n(\bold{x}) - m(\bold{x})|^2 {\PROB}_{\bold{X}}(d\bold{x}) \leq c_6 \cdot (\log(n))^6 \cdot \max_{(p,K) \in \mathcal{P}} n^{-\frac{2p}{2p+K}}
  \end{equation*}  
  holds for sufficiently large $n$. 

  \noindent
  {\bf b)} Choose $c_{7}, c_{8} >0$ sufficiently large and set
\[
L_n = \left\lceil
    c_{7} \cdot \max_{(p,K) \in \P} n^{\frac{K}{2(2p+K)}}
\cdot \log n
\right\rceil
\quad \mbox{and} \quad
 r_n = r=
\left\lceil
c_{8} 
\right\rceil.
\]
 Then 
  \begin{equation*}
  \EXP \int |m_n(\bold{x}) - m(\bold{x})|^2 {\PROB}_{\bold{X}}(d\bold{x}) \leq c_9 \cdot (\log(n))^6 \cdot \max_{(p,K) \in \mathcal{P}} n^{-\frac{2p}{2p+K}}
  \end{equation*}  
  holds for sufficiently large $n$. 
\end{theorem}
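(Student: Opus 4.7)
The plan is to use the classical decomposition of the $L_2$-error of a truncated least squares estimator into an estimation error and an approximation error, then plug in the approximation results for hierarchical composition models (Section~\ref{se4}) together with a covering number bound for the fully connected class $\mathcal{F}(L_n,r_n)$.

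First, I would invoke a standard oracle inequality for truncated least squares estimates over a function class (e.g.\ in the spirit of Theorem~11.3 / Lemma~18.1 of \cite{GKKW02}). Using $m_n=T_{c_3\log n}\tilde{m}_n$ and the sub-Gaussian moment condition $\EXP\{\exp(c_1 Y^2)\}<\infty$, this yields a bound of the form
\begin{align*}
\EXP \int |m_n-m|^2\,\PROB_{\bold{X}}(d\bold{x})
&\le c\cdot\frac{(\log n)^2\bigl(\log\mathcal{N}_1\!\bigl(\tfrac{1}{n},T_{\beta_n}\mathcal{F}(L_n,r_n),\bold{X}_1^n\bigr)+1\bigr)}{n}\\
&\quad + 2\inf_{f\in\mathcal{F}(L_n,r_n)}\int|f-m|^2\,\PROB_{\bold{X}}(d\bold{x}),
\end{align*}
where $\beta_n=c_3\log n$. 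This separates the two tasks cleanly.

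For the approximation error, I would apply the approximation theorem for the class $\mathcal{H}(l,\mathcal{P})$ (proved in Section~\ref{se4}, itself built from the $(p,C)$-smooth approximation result of Section~\ref{se3}). The key observation is that each building block $g:\R^K\to\R$ of smoothness $p$ can be approximated by a ReLU subnetwork to accuracy of order $r^{-p/K}$ (up to log factors) using $\mathcal{O}(L\cdot r^2)$ weights inside a fully connected architecture, provided the product $L\cdot r^2$ is large enough. The Lipschitz assumption $C_{Lip}\ge 1$ on every $g$ lets errors propagate up through the $l$ levels of the hierarchy only with a multiplicative constant depending on $l$, $C_{Lip}$, and $K_{\max}$. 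For the choice of $r_n$ in part~(a) (or $L_n$ in part~(b)), this gives a uniform approximation error of order $\max_{(p,K)\in\mathcal{P}}n^{-p/(2p+K)}$, whose square matches the target rate.

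For the estimation error, I would bound the covering number $\mathcal{N}_1$ of the truncated class in terms of the total number of parameters $W_n\asymp L_n r_n^2$ and the truncation level $\beta_n$. A standard parameter-counting / pseudo-dimension bound for fully connected ReLU networks gives
\[
\log\mathcal{N}_1(\tfrac{1}{n},T_{\beta_n}\mathcal{F}(L_n,r_n),\bold{x}_1^n)\le c\cdot W_n L_n\log(W_n\beta_n n),
\]
so the first summand is of order $(\log n)^{c}\cdot W_n L_n/n$. Plugging in the choices of $L_n,r_n$, the estimation and approximation contributions both become $\max_{(p,K)\in\mathcal{P}}n^{-2p/(2p+K)}$ up to a polylogarithmic factor (the $(\log n)^6$ arises by tracking all $\log n$ factors from the truncation, the covering number, the depth/width, and the number of hierarchy levels).

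The main obstacle is the approximation step: one must realize the hierarchical approximation inside a single fully connected network of width $r_n$ (or depth $L_n$ in part~(b)) without any sparsity, while keeping the weight magnitudes polynomial in $n$ so that the covering number bound above stays $O(W_n L_n \log n)$. This is precisely what the new approximation results of Sections~\ref{se3}–\ref{se4} are designed for, so the proof of Theorem~\ref{th1} reduces to assembling these pieces and verifying that the choices of $L_n$ and $r_n$ balance the two error terms.
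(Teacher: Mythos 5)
Your proposal follows the same structure as the paper's proof: a standard oracle inequality (the paper's Lemma~18 in Supplement~B) to split the truncated least squares error into an estimation term and an approximation term, a VC-dimension-based covering number bound for ReLU networks in terms of $L_n$ and $r_n$ (the paper's Lemma~19, built on the Bartlett--Harvey--Liaw--Mehrabian bound), and the hierarchical approximation result of Theorem~\ref{th3} (with $M_{j,i}\asymp n^{1/(2(2p_j^{(i)}+K_j^{(i)}))}$) to control the infimum over $\mathcal{F}(L_n,r_n)$, balanced to yield $\max_{(p,K)\in\mathcal{P}}n^{-2p/(2p+K)}$ up to polylogarithmic factors. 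Two small corrections: the per-block approximation accuracy achievable inside the fully connected architecture is of order $r^{-2p/K}$, not $r^{-p/K}$ — this factor of two in the exponent is precisely the improvement of Theorem~\ref{th2}/Corollary~\ref{c1} over the earlier sparse constructions, and with the weaker exponent $r^{-p/K}$ the stated choice of $r_n$ would not balance the two terms at the claimed rate. Also, the covering number bound used here (via the VC dimension of the sign of ReLU networks) does not require any control on weight magnitudes, so the concern about keeping weights polynomial in $n$ can be dropped.
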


\begin{remark}
\autoref{th1}  shows that in case that the regression function satisfies an hierarchical composition model with smoothness and order constraint $\mathcal{P}$ the $L_2$-errors of least squares neural network regression estimates based on a set of fully connected neural networks achieve the rate of convergence $\max_{(p,K) \in \mathcal{P}} n^{-2p/(2p+K)}$ (up to some logarithmic factor), which does not depend on $d$ and which does therefore circumvent the so-called \textit{curse of dimensionality}. 
\end{remark}

\begin{remark}
  Due to the fact that some parameters in the definition of the estimator in \autoref{th1}  are usually unknown in practice, they have to be chosen in a data--dependent way. Out of a set of different numbers of hidden layers and neurons per layer the best estimator is then chosen adaptively. One simple possibility
  to do this is to use the so--called {\it splitting of the sample} method,
  cf., e.g.,  Section 2.4 and Chapter 7  in \cite{GKKW02}.
  Here the sample is splitted into a learning sample of size $n_l$ and a
  testing sample of size $n_t$, where $n_l+n_t=n$ (e.g.,
  $n_l \approx n/2 \approx n_t$),
  the estimator is computed for several different selections of width and depth using only the learning sample, the empirical $L_2$-risks
  of these estimators are then computed on the testing sample, and finally
  the parameter value is chosen for which the empirical $L_2$-risk
  on the testing sample is minimal.
\end{remark}


\section{Approximation of smooth functions by
fully connected deep neural networks with ReLU activation function}
\label{se3}
The aim of this section is to present a new result concerning the approximation of $(p,C)$-smooth functions by  deep neural networks. 
\begin{theorem}
  \label{th2}
  Let $d \in \N$,
  let $f:\Rd \rightarrow \R$ be $(p,C)$--smooth for some $p=q+s$,
  $q \in \N_0$  and $s \in (0,1]$, and $C>0$. Let $a \geq 1$
    and $M \in \N$ sufficiently large (independent of the size of $a$ but 
     \begin{align*}
       M \geq 2 \ \mbox{and} \ M^{2p} \geq c_{10} \cdot \left(\max\left\{a, \|f\|_{C^q([-a,a]^d)}
       \right\}\right)^{4(q+1)}
    \end{align*}    
 must hold for some sufficiently large constant $c_{10} \geq 1$).
 \\
a) Let $L, r \in \N$ such that
\begin{enumerate}
\item $L \geq 5+\lceil \log_4(M^{2p})\rceil \cdot \left(\lceil \log_2(\max\{q, d\} + 1\})\rceil+1\right)$
\item $r \geq 2^d \cdot 64 \cdot \binom{d+q}{d} \cdot d^2 \cdot (q+1) \cdot M^d$
\end{enumerate}
hold.
 There exists a neural network
\begin{align*}
f_{net, wide} \in \mathcal{F}(L,r)
\end{align*} 
with the property that
\begin{align}
 \| f-f_{net, wide}\|_{\infty, [-a,a]^d} \leq
  c_{11} \cdot \left(\max\left\{a, \|f\|_{C^q([-a,a]^d)}\right\} \right)^{4(q+1)} \cdot M^{-2p}.
  \label{th2eq1}
\end{align}
b) Let $L, r \in \N$ such that
\begin{enumerate}
\item $L \geq 5M^d+\left\lceil \log_4\left(M^{2p+4 \cdot d \cdot (q+1)} \cdot e^{4 \cdot (q+1) \cdot (M^d-1)}\right)\right\rceil\\ 
\cdot \lceil \log_2(\max\{q,d\}+1)\rceil+\lceil \log_4(M^{2p})\rceil$
\item $r \geq 132 \cdot 2^d\cdot   \lceil e^d\rceil
  \cdot \binom{d+q}{d} \cdot \max\{ q+1, d^2\}$
\end{enumerate}
hold. There exists a neural network
 \begin{align*}
f_{net, deep} \in \mathcal{F}(L,r)
\end{align*} 
 such that (\ref{th2eq1}) holds with
$f_{net,wide}$ replaced by $f_{net,deep}$.
\end{theorem}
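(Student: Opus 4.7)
The plan is a partition-and-Taylor construction followed by a careful ReLU implementation. First I would fix $h = 2a/M$, partition $[-a,a]^d$ into $M^d$ axis-aligned subcubes of side $h$ with centers $\bx_\bk$, and on each subcube approximate $f$ by its degree-$q$ Taylor polynomial
\[
p_\bk(\bx) \;=\; \sum_{\|\bj\|_1 \le q}\frac{\partial^\bj f(\bx_\bk)}{\bj!}\,(\bx-\bx_\bk)^\bj,
\]
whose remainder on the cube around $\bx_\bk$ is controlled by $C h^p$ via $(p,C)$-smoothness. I then glue the local Taylor pieces with a tensor-product partition of unity $\{\phi_\bk\}$ built from one-dimensional tent functions, which are piecewise linear and hence exactly expressible by a constant-depth ReLU subnetwork. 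The candidate approximant is
\[
\tilde f(\bx) \;=\; \sum_\bk \phi_\bk(\bx)\,p_\bk(\bx).
\]

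Second, I would implement $\tilde f$ by a ReLU network. The only nonlinear primitive required is scalar multiplication: Yarotsky's squaring gadget approximates $(x,y)\mapsto xy$ on a bounded set to accuracy $\varepsilon$ using a constant-width ReLU subnetwork of depth $O(\log(1/\varepsilon))$. A balanced binary tree of such gadgets produces any monomial of total degree at most $q+d$ to accuracy $\varepsilon$ in depth $O(\log(\max\{q,d\}+1)\cdot\log(1/\varepsilon))$ and width $O(1)$. Calibrating $\varepsilon\sim M^{-2p}/\|f\|_{C^q([-a,a]^d)}^{4(q+1)}$ matches the arithmetic error to the target accuracy and accounts for both the $\log_4(M^{2p})$ factor in the depth bound and the prefactor $(\max\{a,\|f\|_{C^q([-a,a]^d)}\})^{4(q+1)}$ in \eqref{th2eq1}. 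For part (a) I assemble all $M^d$ cube contributions in parallel: per cube I stack $\binom{d+q}{d}$ monomial channels, $2^d$ tent-product channels, and $d^2(q+1)$ scratch channels for the squaring gadgets; a constant-depth aggregator then sums against the Taylor coefficients, and the widths multiply to give the stated bound on $r$. For part (b) I traverse the $M^d$ cubes sequentially. A constant bank of registers carries the input $\bx$, a running accumulator, the current multi-index, and scratch space for the squaring gadget, while the current cube is processed; each of the $M^d$ stages consumes depth $O(\log(\max\{q,d\}+1)\cdot \log_4(M^{2p+4d(q+1)}\cdot e^{4(q+1)(M^d-1)}))$, and the exponential-in-$M^d$ factor inside the logarithm is what absorbs the amplification of per-stage multiplication errors along the accumulator.

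The main obstacle is the error-propagation bookkeeping. Each binary-tree multiplication enlarges the error by a factor equal to the magnitude of its current partial product, so one must carefully track how the bounds $|(\bx-\bx_\bk)^\bj/\bj!|\le h^{\|\bj\|_1}$ and $|\partial^\bj f(\bx_\bk)/\bj!|\le\|f\|_{C^q([-a,a]^d)}$ interact through the $O(q)$-deep tree of squaring gadgets. The prefactor $\|f\|_{C^q([-a,a]^d)}^{4(q+1)}$ in \eqref{th2eq1} arises precisely from compounding these magnitudes, and the standing assumption $M^{2p}\ge c_{10}\cdot(\max\{a,\|f\|_{C^q([-a,a]^d)}\})^{4(q+1)}$ is what guarantees the compounded arithmetic error stays below the $M^{-2p}$ Taylor remainder. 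In the sequential case the same compounding is additionally amplified across the $M^d$ stages of the accumulator, which explains the extra $5M^d$ and $e^{4(q+1)(M^d-1)}$ terms inside the depth constraint of part (b), and why the constant width of (b) can be kept at $132\cdot 2^d\cdot\lceil e^d\rceil\cdot\binom{d+q}{d}\cdot\max\{q+1,d^2\}$ rather than scaling with $M^d$.
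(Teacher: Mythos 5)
There is a genuine quantitative gap that makes the plan fall short of the stated rate. You partition $[-a,a]^d$ into $M^d$ cubes of side $h = 2a/M$, so the local Taylor remainder you invoke is of order $C h^p \asymp a^p M^{-p}$. But \eqref{th2eq1} requires an approximation error of order $M^{-2p}$ at width $r = O(M^d)$. A single-level partition of unity over $M^d$ cubes can never beat the $M^{-p}$ floor set by the Taylor remainder, so your construction in fact targets the weaker rate already obtained by Bauer--Kohler and Schmidt--Hieber (error $\sim W^{-p/d}$ in the number of weights $W$), whereas the theorem at hand is precisely the quadratic improvement to $W^{-2p/d}$. Your own internal bookkeeping flags the mismatch: you calibrate the arithmetic tolerance to $\varepsilon \sim M^{-2p}$, but the analytical error of $\tilde f$ itself never drops below $M^{-p}$. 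To drive the remainder to $M^{-2p}$ you would need $M^{2d}$ cubes of side $2a/M^2$, at which point your ``per-cube channel block stacked in parallel'' layout costs $O(M^{2d})$ neurons per layer, far exceeding the permitted width.

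The idea you are missing, and the crux of the paper's proof, is a two-scale construction that resolves this tension by storing information in the \emph{weights} rather than the \emph{neurons}. The paper works with a coarse partition $\P_1$ of $M^d$ cubes and a fine partition $\P_2$ of $M^{2d}$ cubes, and exploits that a fully connected layer of width $O(M^d)$ carries $O(M^{2d})$ connections to the next layer. The first two hidden layers approximate the indicator of each coarse cube $C_{i,1}$; the weights from the $i$-th detector to the $j$-th collector encode the derivative values $(\partial^{\bll}f)\bigl((\bold{\tilde{C}}_{j,i})_{left}\bigr)$ for all pairs $(i,j)$, so all $M^{2d}$ Taylor coefficients on the fine grid become accessible without the width ever exceeding $O(M^d)$; two further layers then localize to the fine subcube and select the correct coefficients. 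In part (b) the same quadratic gain is achieved with constant width and depth $O(M^d)$: the fine-grid derivatives inside the active coarse cube are computed \emph{sequentially} from a single coarse-cube Taylor seed plus $\mathbb{Z}$-valued correction terms, the whole correction sequence encoded in a single real number and decoded one digit per stage. Your sequential plan for (b) faces the same $M^{-p}$ ceiling for the same reason. Separately, a smaller point: the $d$-dimensional tensor-product tent $\phi_\bk$ is a product of $d$ one-dimensional piecewise-linear tents and is therefore \emph{not} piecewise linear for $d>1$; it cannot be realized exactly by a constant-depth ReLU subnetwork and must itself go through approximate multiplication, just as the paper handles its B-spline weights $w_{\P_2}$.
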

\begin{remark}
  The above result focuses on the convergence rate and no
  attempt has been made to minimize the constants in the
  definition of $L$ and $r$.
\end{remark}

%

The following corollary translates \autoref{th2} b) in terms of the
number of overall parameters versus the approximation accuracy.

\begin{corollary}
\label{c1}
  Let $d \in \N$,
  let $f:\Rd \rightarrow \R$ be $(p,C)$--smooth for some $p=q+s$,
  $q \in \N_0$  and $s \in (0,1]$, and $C>0$. Let $a \geq 1$ and $\epsilon > 0$. 
  Then there exists a fully connected neural network $f_{net}$ with $c_{12} \cdot \epsilon^{-d/(2p)}$ parameters, such that
  \begin{align*}
  \|f-f_{net}\|_{\infty, [-a,a]^d} \leq \epsilon.
  \end{align*}
\end{corollary}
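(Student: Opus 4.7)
The plan is to derive \autoref{c1} as a straightforward parameter-counting consequence of \autoref{th2} b). First I would pick $M \in \N$ to be the smallest integer satisfying the hypotheses of \autoref{th2} (namely $M \geq 2$ and $M^{2p} \geq c_{10} \cdot (\max\{a,\|f\|_{C^q([-a,a]^d)}\})^{4(q+1)}$) together with
\[
c_{11} \cdot \left(\max\{a,\|f\|_{C^q([-a,a]^d)}\}\right)^{4(q+1)} \cdot M^{-2p} \leq \epsilon.
\]
Such an $M$ exists and satisfies $M \leq c' \cdot (1 + \epsilon^{-1/(2p)})$ where $c'$ depends only on $a$, $\|f\|_{C^q([-a,a]^d)}$, $p$, $q$, $d$, and $C$. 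Plugging this $M$ into \autoref{th2} b) with $L,r$ equal to the stated lower bounds produces a network $f_{net} \in \mathcal{F}(L,r)$ satisfying $\|f-f_{net}\|_{\infty,[-a,a]^d} \leq \epsilon$ by construction.

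The remaining step is the parameter count. The width bound in \autoref{th2} b) does not depend on $M$, hence $r$ is bounded by a constant depending only on $d$ and $q$. For the depth, the leading term is $5M^d$; the logarithmic correction $\lceil \log_4(M^{2p+4d(q+1)} \cdot e^{4(q+1)(M^d-1)}) \rceil \cdot \lceil \log_2(\max\{q,d\}+1)\rceil$ is itself $O(M^d)$, because $\log_4 e^{4(q+1)(M^d-1)}$ is linear in $M^d$ (with a constant depending on $d$ and $q$), and the second factor is a constant in $M$. Hence $L = O(M^d)$. A network in $\mathcal{F}(L,r)$ has at most $r(d+1) + (L-1)r(r+1) + (r+1) = O(Lr^2)$ weights and biases, which with $r = O(1)$ gives a total parameter count of $O(M^d) = O(\epsilon^{-d/(2p)})$, as claimed.

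There is no conceptual obstacle: the hard approximation work has already been carried out in \autoref{th2} b), and \autoref{c1} is essentially a change of variables that replaces the internal discretization parameter $M$ by the approximation accuracy $\epsilon$. The only point that calls for brief care is verifying that the exponential-looking term $e^{4(q+1)(M^d-1)}$ in the depth bound becomes linear in $M^d$ once the outer $\log_4$ is applied, so that it is absorbed by the leading $5M^d$ contribution and does not inflate the parameter count beyond the desired $\epsilon^{-d/(2p)}$ rate.
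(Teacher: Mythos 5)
Your proposal is correct and takes essentially the same approach as the paper: choose $M$ of order $\epsilon^{-1/(2p)}$ (the paper takes $M=\lceil c_{13}\cdot\epsilon^{-1/(2p)}\rceil$), invoke Theorem 2 b), and count parameters via the $W=(d+1)r+(L-1)(r+1)r+(r+1)$ formula. You spell out the one point the paper leaves implicit, namely that $\log_4\bigl(e^{4(q+1)(M^d-1)}\bigr)$ is linear in $M^d$ so the depth remains $O(M^d)$; this is a useful clarification but not a different argument.
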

\begin{proof}
The number of overall weights $W$ in a neural network with $L$ hidden layers and $r$ neurons per layer can be computed by 
\begin{align*}
W=(d+1) \cdot r + (L-1) \cdot (r+1) \cdot r+(r+1).
\end{align*}
Using \autoref{th2} b), where we choose $M=\lceil c_{13} \cdot \epsilon^{-1/(2p)} \rceil$ for some constant $c_{13} >0$, implies the assertion. 
\end{proof}

\begin{remark}
Compared with \cite{Sch17} and \cite{BK17}, where the total number of parameters is $c_{14} \cdot \epsilon^{-d/p}$ for some constant $c_{14} > 0$ in case of an approximation error of $\epsilon$, Corollary \ref{c1} gives a quadratic improvement.
\end{remark}

\begin{proof}[Sketch of the proof of \autoref{th2}]
%

  The basic idea is to construct deep neural networks which approximate
  a piecewiece Taylor polynomial with respect to a partition of $[-a,a]^d$
  into $M^{2d}$ equivolume cubes. Our approximation starts on a coarse 
  grid with $M^d$ equivolume cubes and calculates the position of the cube 
  $C$ with $\bold{x} \in C$. This cube is then sub-partitioned into $M^d$ smaller
  cubes to finally compute the values of our Taylor polynomial on the finer grid 
  with $M^{2d}$ cubes. Part a) and b) use a different approach to achieve this.\\
  In part a) we exploit the fact that a network with $c_{15} \cdot M^d$ neurons per layer 
  has $c_{16} \cdot M^{2d}$ connections between two consecutive layers. 
  Then each of the $c_{16} \cdot M^{2d}$ weights in our network is matched 
  to one of the $c_{16} \cdot M^{2d}$ possible values of the derivatives of $f$. To detect the right 
  values of the derivatives for our Taylor polynomial we proceed in two steps: In the first two hidden layers 
  our network approximates the indicator function for every cube on the coarse grid. The output layer of those 
  networks is then multiplied by the derivatives of $f$ on the cube, respectively. And those values are the input of the $c_{17} \cdot M^d$ networks in the next two hidden layers, which approximate the indicator function multiplied by the values of the derivatives, respectively, on the $M^d$ smaller cubes of the sub-partition of $C$ with $\bold{x} \in C$. Using this two step approximation we finally detect the right values of the derivatives on the $M^{2d}$ equivolume cubes. In the
  remaining layers we compute the Taylor polynomial. \\
%
  In part b) in the first $c_{18} \cdot M^d$ layers of the network the values of the derivatives
  of $f$ necessary for the computation of
  a piecewise Taylor polynomial of $f$ with respect to the partition on the 
 coarse grid are determined. Then additional $c_{19} \cdot M^d$
  layers of the network are used to compute a piecewise Taylor polynomial
  of $f$ on the sub-partition (into $M^d$ smaller cubes) of the cube $C$ with $\bold{x} \in C$ (where $C$
  is again one of the cubes of the coarse grid).
%
%
  Here
  the values of the derivatives are computed successively by computing
  them one after another by a Taylor approximation using the previously
  computed values and suitably defined correction terms.
\end{proof}

\section{Approximation of hierarchical composition models by neural networks}
\label{se4}
In this section we use \autoref{th2} to prove a result concerning the approximation of hierarchical composition models with smoothness and order constraint $\mathcal{P} \subseteq [1, \infty) \times \N$ by deep neural networks. In order to formulate this result, we observe in a first step, 
that one has to compute different hierarchical composition models of some level $i$ $(i\in \{1, \dots, l-1\})$ to compute a function $h_1^{(l)} \in \mathcal{H}(l, \mathcal{P})$. Let $\tilde{N}_i$ denote the number of hierarchical composition models of level $i$, needed to compute $h_1^{(l)}$. 
We denote in the following by
\begin{align}
\label{h}
h_j^{(i)}: \R^{d} \to \R 
\end{align}
the $j$--th hierarchical composition model of some level $i$ ($j \in \{1, \ldots, \tilde{N}_i\}, i \in \{1, \ldots, l\}$), that applies a $(p_j^{(i)}, C)$--smooth function $g_j^{(i)}: \R^{K_j^{(i)}} \to \R$ with $p_j^{(i)} = q_j^{(i)} + s_j^{(i)}$, $q_j^{(i)} \in \N_0$ and $s_j^{(i)} \in (0,1]$, where $(p_j^{(i)}, K_j^{(i)}) \in \mathcal{P}$.
 The computation of $h_1^{(l)}(\bold{x})$ can then be recursively described as follows:
    \begin{equation}\label{hji}
  h_j^{(i)}(\bold{x}) =  g_{j}^{(i)}\left(h^{(i-1)}_{\sum_{t=1}^{j-1} K_t^{(i)}+1}(\bold{x}), \dots, h^{(i-1)}_{\sum_{t=1}^j K_t^{(i)}}(\bold{x}) \right)
  \end{equation}
for $j \in \{1, \dots, \tilde{N}_i\}$ and $i \in \{2, \dots, l\}$
  and
    \begin{equation}\label{hj1}
  h_j^{(1)}(\bold{x}) = g_j^{(1)}\left(x^{\left(\pi(\sum_{t=1}^{j-1} K_t^{(1)}+1)\right)}, \dots, x^{\left(\pi(\sum_{t=1}^{j} K_t^{(1)})\right)}\right)
  \end{equation}
  for some function $\pi: \{1, \dots, \tilde{N}_1\} \to \{1, \dots, d\}$. 
  Furthermore for  \linebreak
  $i \in \{1, \dots, l-1\}$
  the recursion
\begin{align}
\label{N}
\tilde{N}_l = 1 \ \text{and} \ \tilde{N}_{i} = \sum_{j=1}^{\tilde{N}_{i+1}} K_j^{(i+1)} 
\end{align}
holds.

  \begin{figure}
  \centering
  \small{
  \begin{tikzpicture}[
  level/.style={rectangle = 4pt, draw, text centered, anchor=north, text=black},
  input/.style={rounded corners=7pt, draw, rounded corners=1mm, text centered, anchor=north, text=black},
  level distance=1cm
  ] 
\node (H1l) [level] {$g_1^{(2)}$}
[level distance = 0.5cm]
		[sibling distance = 3cm]
              [level distance = 1cm]
	            child{
	                node (H1l1) [level] {\scriptsize $g_1^{(1)}$}
	                [level distance = 0.5cm]
	               		[sibling distance = 1.2cm, level distance = 1cm]
	                    child{
	                        node (K1) [level] {\scriptsize $x^{(\pi(1))}$}
	                        }
	                        child{
	                            node (K2) [level] {\scriptsize $x^{(\pi(2))}$}
	                           }
	                                }
          child{
                node (H2l1) [level] {\scriptsize $g_2^{(1)}$}
                [level distance = 0.5cm]
                [sibling distance=1.2cm, level distance = 1cm]
                    child{
                        node (K3) [level] {\scriptsize $x^{(\pi(3))}$}
                        }
                        child{
                            node (K4) [level] {\scriptsize $x^{(\pi(4))}$}
                           }
                            child{
                            node (K4) [level] {\scriptsize $x^{(\pi(5))}$}
                           }   
                                }                                           
           child{
                node (HK) [level] {\scriptsize $g_{3}^{(1)}$}
                [level distance = 0.5cm]
               		[sibling distance = 1.2cm, level distance = 1cm]
                    	child{
                        node (H1l2) [level] {\scriptsize $x^{(\pi(6))}$}
                        }
                        child{
                            node (State04) [level] {\scriptsize $x^{(\pi(7))}$}
                           }
                                }
                  ; 
  \end{tikzpicture}}
  \caption{Illustration of a hierarchical composition model of the class $\mathcal{H}(2, \mathcal{P})$ with the structure $h_1^{(2)}(x) = g_1^{(2)}(h_1^{(1)}(x), h_2^{(1)}(x), h_3^{(1)}(x))$,  $h_1^{(1)}(x) = g_1^{(1)}(x^{(\pi(1))}, x^{(\pi(2))})$, $h_2^{(1)}(x)=g_2^{(1)}(x^{(\pi(3))}, x^{(\pi(4)}, x^{(\pi(5))})$ and $h_3^{(1)}(x) = g_3^{(1)}(x^{(\pi(6))}, x^{(\pi(7))})$, defined as in \eqref{hji} and \eqref{hj1}.}
\label{h2}
\end{figure}
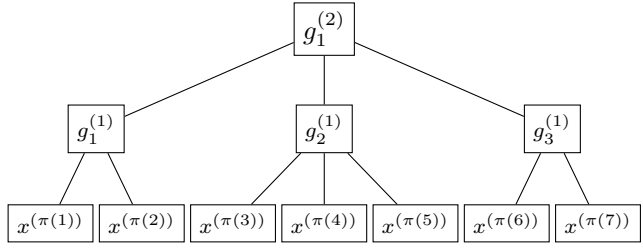
\noindent
The exemplary structure of a function $h_1^{(2)} \in \mathcal{H}(2, \mathcal{P})$ is illustrated in \hyperref[fprod]{Fig.\ref*{h2}}. 
Here one can get a perception of how the hierarchical composition models of different levels are stacked on top of each other. The approximation result of such a function $h_1^{(l)}$ by a neural network is summarized in the following theorem:

\begin{theorem}
  \label{th3}
Let $m: \mathbb{R}^d \to \mathbb{R}$ be contained in the class $\mathcal{H}(l, \mathcal{P})$ for some $l \in \N$ and $\mathcal{P} \subseteq [1,\infty) \times \N$.  Let $\tilde{N}_i$ be defined as in \eqref{N}. Each $m$ consists of different functions $h_j^{(i)}$ $(j \in \{1, \ldots, \tilde{N}_i\},$
 $ i\in \{1, \dots, l\})$ defined as in \eqref{h}, \eqref{hji} and \eqref{hj1}. 
 Assume that the corrsponding functions $g_j^{(i)}$ are Lipschitz continuous with Lipschitz constant $C_{Lip} \geq 1$ and satisfy
  \begin{equation*}
  \|g_j^{(i)}\|_{C^{q_j^{(i)}}(\R^d)} \leq c_{20}
  \end{equation*}
  for some constant $c_{20} >0$. Denote by $K_{max} = \max_{i,j} K_j^{(i)} < \infty$ the maximal input dimension and by $p_{max} = \max_{i,j} p_j^{(i)} < \infty$
 the maximal smoothness of the functions $g_j^{(i)}$. Let $a \geq 1$ and $M_{j,i} \in \mathbb{N}$ sufficiently large (each independent of the size of $a$, but $\min_{j,i} M_{j,i}^{2} >c_{21} \cdot a^{4(p_{max}+1)} /(2^{l} K_{\max} C_{Lip})^{l}$ must hold for some constant $c_{21}>0$ sufficiently large).
\\
a) Let $L, r \in \N$ be such that
 \begin{itemize}
 \item[(i)] $L \geq l \cdot \Bigg(5+\left\lceil \log_{4}\left(\max_{j,i} M_{j,i}^{2p_j^{(i)}}\right)\right\rceil $ \\
   \hspace*{3cm} $\cdot  \left(\lceil \log_2(\max\{K_{\max},p_{\max}\}+1)\rceil+1\right)\Bigg)$
\item[(ii)] $r \geq \max_{i \in \{1, \dots, l\}} \sum_{j=1}^{\tilde{N}_{i}} 2^{K_j^{(i)}} \cdot 64 \cdot \binom{K_j^{(i)}+q_j^{(i)}}{K_j^{(i)}} \cdot (K_j^{(i)})^2 \cdot (q_j^{(i)}+1) \cdot M_{j,i}^{K_j^{(i)}}$
\end{itemize} 
hold. Then there exists a neural network $t_1$ of the network class $\mathcal{F}\left(L, r\right)$ 
with the property that
\begin{equation}
\label{th3a}
\|t_1-m\|_{\infty, [-a,a]^d} \leq c_{22} \cdot a^{4(p_{\max}+1)} \cdot \max_{j,i} M_{j,i}^{-2p_j^{(i)}}.
\end{equation}

\noindent
b) Let $L, r \in \N$ such that
 \begin{itemize}
 \item[(i)] $L
   \geq
   \sum_{i=1}^{l}
   \sum_{j=1}^{\tilde{N}_i}
   \Big(
   5M_{j,i}^{K_j^{(i)}}+
   \\
   \hspace*{1cm}
   \left\lceil
   \log_{4}
   \left(
   M_{j,i}^{2p_j^{(i)}+4\cdot K_j^{(i)} \cdot (q_j^{(i)}+1)}
   \cdot e^{4 \cdot (q_j^{(i)} +1) \cdot (M_{j,i}^{K_j^{(i)}}-1)}
   \right)
   \right\rceil \\
   \hspace*{2cm} \cdot
   \lceil
   \log_2(\max\{K_j^{(i)},q_j^{(i)}\}+1)
   \rceil + \Big\lceil \log_4\Big(M_{j,i}^{2p_j^{(i)}}\Big)\Big\rceil\Big)$ 
\item[(ii)] $r \geq 2 \sum_{t=1}^{l-1} \tilde{N}_t  + 2d +
132 \cdot 2^{K_{max}} \cdot   \lceil e^{K_{max}}\rceil
\cdot \binom{{K_{max}}+ \lceil p_{max} \rceil}{{K_{max}}} \cdot
\\
\hspace*{7cm} \max\{\lceil p_{max}\rceil +1, K_{max}^2\}
$
\end{itemize} 
 hold. Then there exists a neural network $t_2$ of the network class $\mathcal{F}\left(L, r\right)$ with the property that
 \eqref{th3a} holds with $t_1$ replaced by $t_2$.
\end{theorem}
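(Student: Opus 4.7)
The plan is to proceed by induction on the level $l$, assembling the tree-structured computation $m=h_1^{(l)}$ out of the building-block networks supplied by Theorem \ref{th2}. Each internal node $g_j^{(i)}$ is $(p_j^{(i)},C)$-smooth on $K_j^{(i)}$ inputs, so Theorem \ref{th2} (part a or b respectively) yields a network $\tilde g_j^{(i)}$ approximating $g_j^{(i)}$ on a cube $[-a_i,a_i]^{K_j^{(i)}}$ with supremum error of order $a_i^{4(p_j^{(i)}+1)}\cdot M_{j,i}^{-2p_j^{(i)}}$, where $a_i$ denotes an a-priori bound on the vectors that will be fed into level $i$ after approximation at lower levels. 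The remaining tasks are (i) to wire these blocks together into a single fully connected network of the claimed shape, (ii) to track the propagation of error through the composition, and (iii) to verify the depth/width bookkeeping.

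For part a, the $\tilde N_i$ level-$i$ blocks would be placed side by side as disjoint columns of neurons in a common group of layers, the depth inside the group padded up to the maximum over $j$ by identity passes of the form $x=\sigma(x)-\sigma(-x)$. The $l$ such level-groups are then stacked vertically. This produces depth equal to $l$ times a uniform per-block bound coming from Theorem \ref{th2} a) and layer width equal to $\max_i\sum_{j=1}^{\tilde N_i}r_{j,i}$, matching conditions (i) and (ii). For part b each $\tilde g_j^{(i)}$ has constant width, so I would instead run the blocks sequentially in depth, one after another, and thread through every layer a constant-width memory bus carrying two ReLU copies of each of the $d$ coordinates of $\bold{x}$ together with two copies of each of the $\sum_{t<l}\tilde N_t$ previously computed intermediate outputs; these copies are needed because $g_j^{(i)}$ may consume outputs from any earlier level and because each level-1 function consumes coordinates of $\bold{x}$ directly. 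This memory bus contributes precisely the summand $2\sum_{t=1}^{l-1}\tilde N_t+2d$ in the width bound, while the depth is the sum of all individual Theorem \ref{th2} b) depths.

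For the approximation error, let $E_i$ denote the maximum over $j$ of the uniform error between $h_j^{(i)}$ and the assembled sub-network realizing it. Lipschitz continuity of $g_j^{(i)}$ with constant $C_{Lip}$ together with the triangle inequality yields the recursion
\begin{equation*}
E_i \,\leq\, C_{Lip}\sqrt{K_{\max}}\cdot E_{i-1}\;+\;c\cdot a_i^{4(p_{\max}+1)}\cdot\max_j M_{j,i}^{-2p_j^{(i)}},
\end{equation*}
with $E_0=0$. Unrolling $l$ times and absorbing the geometric factor $(C_{Lip}\sqrt{K_{\max}})^{l}$ into the constant yields \eqref{th3a}.

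The main obstacle I expect is the coupling between the propagated error and the domain bounds $a_i$: after approximation at level $i-1$ the inputs to $g_j^{(i)}$ may fall slightly outside the natural range dictated by the uniform bound $c_{20}$ on the true $g$'s, so $a_i$ must be enlarged by the accumulated error, which in turn increases the prefactor $a_i^{4(p_{\max}+1)}$ in the next level's bound and could cause the recursion above to blow up multiplicatively in $l$. The hypothesis $\min_{j,i}M_{j,i}^{2}>c_{21}\cdot a^{4(p_{\max}+1)}/(2^{l}K_{\max}C_{Lip})^{l}$ is calibrated exactly so that the per-level overflow stays uniformly small, allowing one to take $a_i\leq c\cdot a$ for all $i$ and close the induction with a final constant $c_{22}$ independent of $a$, of $n$, and of the particular $M_{j,i}$.
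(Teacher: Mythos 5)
Your proposal follows essentially the same route as the paper's own proof: for part~a) the block networks from Theorem~2~a) are placed in parallel within each level-group and the $l$ groups stacked, with identity padding to equalize depth; for part~b) the block networks from Theorem~2~b) are run sequentially in depth while a constant-width ``memory bus'' of $2\sum_{t=1}^{l-1}\tilde N_t+2d$ ReLU-pair neurons carries forward the $d$ input coordinates and all previously computed $\hat h$ values. Your error recursion via the triangle inequality and Lipschitz continuity of the $g_j^{(i)}$, unrolled over levels with the geometric factor absorbed into $c_{22}$, is also the paper's argument (the paper uses $K_j^{(i)}C_{Lip}$ in place of your $C_{Lip}\sqrt{K_{\max}}$, an inessential difference). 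Your identification of the domain-bound issue — that approximate intermediate outputs may drift outside the range of the true $h$'s, and that the lower bound on $\min_{j,i}M_{j,i}$ is what keeps $|\hat h_j^{(i)}|\leq 2g_{\max}$ so that Theorem~2 remains applicable — is precisely the point the paper settles by the inductive bound $|\hat h_j^{(i-1)}(\bold x)|\leq 2 g_{\max}$.
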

In the construction of our network we will compose smaller subnetworks to successively build the final network. For two networks $f \in \mathcal{F}(L_f, r_f)$ and $g \in \mathcal{F}(L_g, r_g)$ with $L_f, L_g, r_f, r_g \in \N$ the \textit{composed} neural network $f \circ g$ is contained in the function class $\mathcal{F}(L_f+L_g, \max\{r_f, r_g\})$. 
In the literature (see e.g. \cite{Sch17}) the composition of two networks is often defined by $f \circ \sigma(g)$. Thus for every composition
an additional layer is added. We follow a different approach. Instead of using an additional layer, we "melt" the weights of both networks $f$ and $g$ to define $f \circ g$. The following example clarifies our idea: Let 
\begin{align*}
f(x) = \beta_f \cdot \sigma(\alpha_f \cdot x) \ \mbox{and} \ g(x) = \beta_g \cdot \sigma(\alpha_g \cdot x), \quad \mbox{for} \ \alpha_f, \alpha_g, \beta_f, \beta_g \in \R, 
\end{align*}
then we have
\begin{align*}
f \circ g = f(g(x)) = \beta_f \cdot \sigma(\alpha_f \cdot \beta_g \cdot \sigma(\alpha_g \cdot x)).
\end{align*}

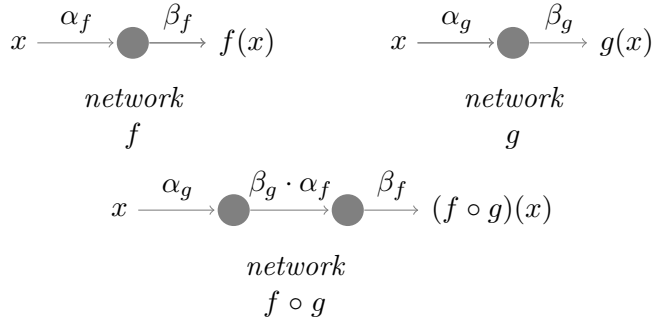
\begin{figure}[h]
\centering
\def\layersep{1.5cm}

\begin{tikzpicture}[shorten >=1pt,->,draw=black!50, node distance=\layersep]
    \tikzstyle{every pin edge}=[<-,shorten <=1pt]
    \tikzstyle{neuron}=[circle,fill=black!25,minimum size=12pt,inner sep=0pt]
    \tikzstyle{input neuron}=[neuron, fill=gray];
    \tikzstyle{output neuron}=[neuron, fill= gray];
    \tikzstyle{hidden neuron}=[neuron, fill=gray];
    \tikzstyle{annot} = [text width=4em, text centered]
    
     \node[] (I) at (0,-1) {$x$};
     \node[hidden neuron] (H) at (\layersep,-1 cm) {};
       \node[right of=H] (O) {$f(x)$};
       \node [annot, below of=H, node distance=1cm] (text) {\textit{network $f$}};
        \path (I) edge  node[above,midway] {$\alpha_f$}   (H);
         \path (H) edge  node[above,midway] {$\beta_f$}   (O);
        \path (H) edge (O);

 \node[] (I1) at (5,-1) {$x$};
     \node[hidden neuron] (H1) at (6.5,-1 cm) {};
       \node[right of=H1] (O1) {$g(x)$};
       \node [annot, below of=H1, node distance=1cm] (text) {\textit{network $g$}};
       
        \path (I1) edge  (H1);
        \path (H1) edge node[above,midway] {$\beta_g$}  (O1);
         \path (I1) edge node[above,midway] {$\alpha_g$}  (H1);

\end{tikzpicture}

\begin{tikzpicture}[shorten >=1pt,->,draw=black!50, node distance=\layersep]
    \tikzstyle{every pin edge}=[<-,shorten <=1pt]
    \tikzstyle{neuron}=[circle,fill=black!25,minimum size=12pt,inner sep=0pt]
    \tikzstyle{input neuron}=[neuron, fill=blue];
    \tikzstyle{output neuron}=[neuron, fill= gray];
    \tikzstyle{hidden neuron}=[neuron, fill=gray];
    \tikzstyle{annot} = [text width=4em, text centered]

     \node[] (I) at (2,-1) {$x$};
     \node[hidden neuron] (H) at (3.5,-1 cm) {};
       \node[hidden neuron, right of=H] (H2) {};
         \node[ right of=H2, xshift=0.4cm] (O) {$(f \circ g)(x)$};
       \node [annot, below of=H, xshift= 0.8cm, node distance=1cm] (text) {\textit{network $f \circ g$}};
        \path (I) edge node[above,midway] {$\alpha_g$} (H);
        \path (H) edge node[above,midway] {$\beta_g \cdot \alpha_f$}    (H2);  
        \path (H2) edge node[above,midway] {$\beta_f$}   (O);  
    \end{tikzpicture}
  \caption{Illustration of the composed network $f \circ g$}
  \label{f3}
\end{figure}
\hyperref[fprod]{Fig.\ref*{f3}} illustrates our idea by the network representation as an acyclic graph. This clearly shows, why we do not need an additional layer in our composed network.
\begin{proof}
a) 
The computation of the function $m(\bold{x})=h_1^{(l)}(\bold{x})$ can be recursively described as in \eqref{hji} and \eqref{hj1}.
The basic idea of the proof is to define a composed network, which approximately computes the functions $h_1^{(1)}, \dots, h_{\tilde{N}_1}^{(1)}, h_1^{(2)}, \dots, h_{\tilde{N}_2}^{(2)}, \dots, h_1^{(l)}$.
\\
\\
For the approximation of $g_j^{(i)}$
%
 we will use the networks
\begin{equation*}
  f_{net, wide, g_{j}^{(i)}} \in \mathcal{F}(L_0, r_j^{(i)})
  \end{equation*}
  described in \autoref{th2} a), where 
 \begin{align*}
 L_0 &= 5+\left\lceil \log_{4}\left(\max_{j,i} M_{j,i}^{2p_j^{(i)}}\right)\right\rceil \cdot  \left(\lceil \log_2(\max\{K_{\max},p_{\max}\}+1)\rceil+1\right)
 \end{align*}
 and
 \begin{align*}
 r_j^{(i)} = & 2^{K_j^{(i)}} \cdot 64 \cdot \binom{K_j^{(i)}+q_j^{(i)}}{K_j^{(i)}} \cdot (K_j^{(i)})^2 \cdot (q_j^{(i)}+1) \cdot M_{j,i}^{K_j^{(i)}}
 \end{align*}
for $j \in \{1, \dots, \tilde{N}_i\}$ and $i \in \{1, \dots, l\}$. 
 \\
 \\
 To compute the values of $h_1^{(1)}, \dots, h_{\tilde{N}_1}^{(1)}$ we use the networks 
 \begin{align*}
 \hat{h}_1^{(1)}(\bold{x})&=   f_{net, wide, g_{1}^{(1)}}\left(x^{(\pi(1))}, \dots, x^{(\pi(K_1^{(1)}))}\right)\\
 & \quad \vdots\\
 \hat{h}_{\tilde{N}_1}^{(1)}(\bold{x})& =  f_{net, wide, g_{\tilde{N}_1}^{(1)}}\left(x^{(\pi(\sum_{t=1}^{\tilde{N}_1-1} K_t^{(1)} +1))}, \dots, x^{(\pi(\sum_{t=1}^{\tilde{N}_1} K_t^{(1)}))}\right).
 \end{align*}
 
 To compute the values of $h_1^{(i)}, \dots, h_{\tilde{N}_i}^{(i)}$ $(i \in \{2, \dots, l\})$ we use the networks
 \begin{align*}
 \hat{h}_j^{(i)}(\bold{x}) = f_{net, wide, g_{j}^{(i)}}\left(\hat{h}_{\sum_{t=1}^{j-1} K_t^{(i)}+1}^{(i-1)}(\bold{x}), \dots, \hat{h}_{\sum_{t=1}^{j} K_t^{(i)}}^{(i-1)}(\bold{x})\right)
 \end{align*}
 for $j \in \{1, \dots, \tilde{N}_i\}$. Finally we set
 \begin{align*}
 t_1(\bold{x}) = \hat{h}_1^{(l)}(\bold{x}).
 \end{align*}

\begin{figure}
\centering
\tikzstyle{line} = [draw, -latex']
 \tikzstyle{annot} = [text width=4em, text centered]
 \tikzstyle{mycirc} = [circle,fill=white, minimum size=0.005cm]
\footnotesize{
\begin{tikzpicture}[node distance = 3cm, auto]
    
    \node [] (x1) {\scriptsize $x^{(1)}$};
    \node [below of=x1, node distance =1cm] (x2) {\scriptsize $x^{(2)}$};
    \node [below of=x2, node distance = 1cm] (dots) {\scriptsize $\vdots$};
    \node [below of=dots, node distance = 1cm] (xd) {\scriptsize $x^{(d)}$};
    \node [annot, above of=x1, node distance=2cm] (text) {\textit{Input}};
    \node [right of=x1, above of=x1, node distance=1.5cm] (fnet1) {\scriptsize $f_{net, wide, g_1^{(1)}}$};
      \node [below of=fnet1, node distance=3cm] (fnet2) {\scriptsize $\vdots$};
        \node [right of = xd, below of =xd,  node distance=1.5cm] (fnetg1) {\scriptsize $f_{net, wide, g_{\tilde{N}_1}^{(1)}}$};
           \node [annot, above of=fnet1, node distance=0.5cm] (text) {\textit{Level 1}};
    
     \node [right of=fnet1, below of = fnet1, node distance=1.5cm] (fnet4) {\scriptsize $f_{net, wide, g_1^{(2)}}$};
     \node [below of=fnet4, node distance=1.5cm] (fnet2) {\scriptsize $\vdots$};
       \node [right of=fnetg1, above of= fnetg1, node distance=1.5cm] (fnet5) {\scriptsize $f_{net, wide, g_{\tilde{N}_2}^{(2)}}$};
       \node [annot, above of=fnet4, node distance=2cm] (text) {\textit{Level 2}};
       
        \node [right of=fnet4, node distance=1.5cm] (dots1) {\scriptsize $\dots$};
         \node [right of=fnet5, node distance=1.5cm] (dots2) {\scriptsize $\dots$};
          \node [annot, above of=dots1, node distance=2cm] (text) {$\dots$};

         \node [right of=dots1, below of=dots1,  node distance=1.5cm] (fnet6) {$f_{net, wide, g_{1}^{(l)}}$};
         \node [right of= fnet6, node distance=1.9cm] (t1) {$t_1(\bold{x})$};
         \node [annot, above of=fnet6, node distance=3.5cm] (text) {\textit{Level l}};
    \path [line] (x1) -- (fnet1);
    \path [line] (x2) -- (fnet1);
    \path [line] (xd) -- (fnet1);
    
    \path [line] (x1) -- (fnetg1);
    \path [line] (x2) -- (fnetg1);
    \path [line] (xd) -- (fnetg1);
    
    \path [line] (fnet6) -- (t1);
    
     \path [line] (fnet1) -- (fnet4);
     \path [line] (fnetg1) -- (fnet4);
        \path [line] (fnet1) -- (fnet5);
     \path [line] (fnetg1) -- (fnet5);
     
        \path [line] (fnet4) -- (dots1);  
         \path [line] (fnet5) -- (dots2);
         
      \path [line] (dots1) -- (fnet6);
      \path [line] (dots2) -- (fnet6);    
      
\end{tikzpicture}}
\caption{Illustration of the neural network $t_1$}
\label{h1}
\end{figure}
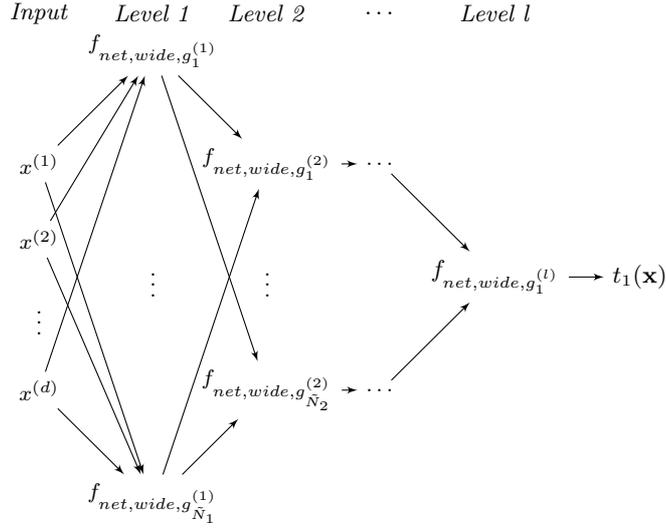

\hyperref[fprod]{Fig.\ref*{h1}} illustrates the computation of the network $t_1(\bold{x})$. It is easy to see that $t_1(\bold{x})$ forms a composed network, where the networks $\hat{h}_1^{(i)}, \dots, \hat{h}_{\tilde{N}_i}^{(i)}$ are computed in parallel (i.e., in the same layers) for $i \in \{1, \dots, l\}$, respectively. Since each $\hat{h}_j^{(i)}$ $(j \in \{1, \dots, \tilde{N}_i\})$ needs $L_0$ layers and $r_j^{(i)}$ neurons per layer, 
 this network is contained in the class
 \begin{align*}
   \mathcal{F}\left(l \cdot L_0, \max_{i \in \{1, \dots, l\}} \sum_{j=1}^{\tilde{N}_i} r_j^{(i)}\right)
   \subseteq
   \mathcal{F}\left( L,r \right)
   .
 \end{align*}
Using induction on $i$ it is easy to see that $t_1$ satisfies 
\begin{align}
\|t_1 - m \|_{\infty, [-a,a]^d} \leq c_{23} \cdot a^{4 \cdot (p_{\max} +1)} \cdot \max_{j,i} M_{j,i}^{-2p_j^{(i)}}.
\end{align}
A complete proof can be found in Supplement A. By successively applying $f_{id}$ to the output of the network $t_1$, we can easily enlarge the number of hidden layers in the network. 
This shows the assertion of the theorem.
\\
\\
b) 
Denote $h_1^{(1)}, \dots, h_{\tilde{N}_1}^{(1)}, \dots, h_1^{(l-1)}, \dots, h_{\tilde{N}_{l-1}}^{(l-1)}, h_1^{(l)}$ by $h_1, h_2, \dots, h_{\sum_{t=1}^{l} \tilde{N}_t}$, such that
\begin{equation*}
h_j^{(i)}(\bold{x}) = h_{N_j^{(i)}}(\bold{x}),
\end{equation*}
where
\begin{equation*}
N_j^{(i)}= \sum_{t=1}^{i-1} \tilde{N}_t +j 
\end{equation*}
for $i \in \{1, \dots, l\}$ and $j \in \{1, \dots, \tilde{N}_i\}$. Then we have 
\begin{equation}
\label{th3beq1}
h_j(\bold{x}) =g_j^{(1)}\left(x^{\left(\pi(\sum_{t=1}^{j-1} K_t^{(1)}+1)\right)}, \dots, x^{\left(\pi(\sum_{t=1}^{j} K_t^{(1)})\right)}\right)
\end{equation}
for $j \in \{1, \dots, \tilde{N}_1\}$
and 
\begin{equation}
\label{th3beq2}
h_{N_j^{(i)}}(\bold{x}) = g_{j}^{(i)}\left(h_{N^{(i-1)}_{\sum_{t=1}^{j-1} K_t^{(i)}+1}}(\bold{x}), \dots, h_{N^{(i-1)}_{\sum_{t=1}^{j} K_t^{(i)}}}(\bold{x})\right)
\end{equation}
for $j \in \{1, \dots, \tilde{N}_i\}$ and $i \in \{2, \dots, l\}$. 
\newline
In our neural network we will compute $h_1, h_2, \dots, h_{\sum_{t=1}^{l} \tilde{N}_t}$ successively. In the construction of the network each $g_j^{(i)}$ will be approximated by a network
\begin{equation*}
  f_{net, deep, g_{j}^{(i)}} \in \mathcal{F}(L_j^{(i)}, r_0)
  \end{equation*}
  described in \autoref{th2} b), where 
 \begin{align*}
 L_j^{(i)} &= 5M_{j,i}^{K_j^{(i)}}+\left\lceil \log_{4}\left(M_{j,i}^{2p_j^{(i)}+4\cdot K_j^{(i)} \cdot (q_j^{(i)}+1)} \cdot e^{4 \cdot (q_j^{(i)} +1) \cdot (M_{j,i}^{K_j^{(i)}}-1)}\right)\right\rceil\\
& \hspace*{2cm} \cdot  \left(\lceil \log_2(\max\{K_j^{(i)},q_j^{(i)}\}+1)\rceil\right)+\lceil \log_4(M_{j,i}^{2p_j^{(i)}})\rceil
 \end{align*}
 and
 \begin{align*}
   r_0 = &
132 \cdot 2^{K_{max}} \cdot   \lceil e^{K_{max}}\rceil
\cdot \binom{{K_{max}}+ \lceil p_{max} \rceil}{{K_{max}}} \cdot
 \max\{\lceil p_{max}\rceil +1, K_{max}^2\}
 \end{align*}
 with $M_{j,i} \in \N$ sufficiently large. Furthermore we use the identity network
 \begin{align*}
 f_{id}(z) = \sigma(z)-\sigma(-z) = z
 \end{align*}
 with
\begin{align*}
f_{id}^{0} (z) &= z, \quad &z \in \R, \notag\\
f_{id}^{t+1} (z) &= f_{id}\left(f_{id}^t(z)\right) = z, \quad &z \in \R, t \in \N_0
\end{align*}
and 
\begin{align*}
f_{id}^t (x^{(1)}, \dots, x^{(d)}) = (f_{id}^t(x^{(1)}), \dots, f_{id}^t(x^{(d)}))=(x^{(1)}, \dots, &x^{(d)}), \\
& x^{(1)}, \dots, x^{(d)} \in \R
\end{align*}
to shift some values or vectors in the next hidden layers, respectively. We set
\begin{align*}
\tilde{L}_{N_j^{(i)}} = L_j^{(i)}
\end{align*}
for $i \in \{1, \dots, l\}$ and $j \in \{1, \dots, \tilde{N}_i\}$. 

 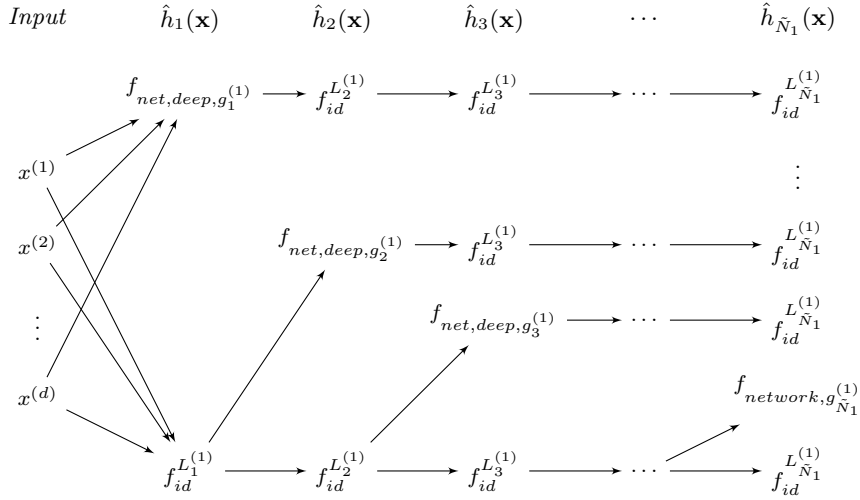
\begin{figure}[h!]
\centering
\pagestyle{empty}

\tikzstyle{line} = [draw, -latex']
 \tikzstyle{annot} = [text width=4em, text centered]
 \tikzstyle{mycirc} = [circle,fill=white, minimum size=0.005cm]
\footnotesize{
\begin{tikzpicture}[node distance = 2cm, auto]
    
    \node [] (x1) {\scriptsize $x^{(1)}$};
    \node [below of=x1, node distance =1cm] (x2) {\scriptsize $x^{(2)}$};
    \node [below of=x2, node distance = 1cm] (dots) {\scriptsize $\vdots$};
    \node [below of=dots, node distance = 1cm] (xd) {\scriptsize $x^{(d)}$};
    \node [annot, above of=x1, node distance=2cm] (text) {\textit{Input}};
    \node [right of=dots, below of=dots, node distance=2cm] (fid1) {\scriptsize $f_{id}^{L_1^{(1)}}$};
        \node [right of = x2, above of = x2, node distance=2cm] (fnetg1) {\scriptsize $f_{net, deep, g_1^{(1)}}$};
      \node [annot, right of=text, node distance=2cm] (text1) {$\hat{h}_1(\bold{x})$};
      
    \path [line] (x1) -- (fid1);
    \path [line] (x2) -- (fid1);
    \path [line] (xd) -- (fid1);
    
    \path [line] (x1) -- (fnetg1);
    \path [line] (x2) -- (fnetg1);
    \path [line] (xd) -- (fnetg1);
        
    \node [right of=fid1] (fid2) {\scriptsize $f^{L_2^{(1)}}_{id}$};  
    \node [ above of=fid2, node distance=3cm] (fnetg2) {\scriptsize $f_{net, deep, g_2^{(1)}}$};
     \node [ right of=fnetg1, node distance=2cm] (fid3) {\scriptsize $f^{L_2^{(1)}}_{id}$};
     \node [annot, right of=text1,  node distance=2cm] (text2) {$\hat{h}_2(\bold{x})$};
     
      \path [line] (fnetg1) -- (fid3);
       \path [line] (fid1) -- (fnetg2);
        \path [line] (fid1) -- (fid2);
        
     \node [right of=fid3] (fid4) {\scriptsize $f^{L_3^{(1)}}_{id}$};  
     \node [below of=fid4] (fid5) {\scriptsize $f^{L_3^{(1)}}_{id}$};  
     \node [right of=fid2] (fid6) {\scriptsize $f^{L_3^{(1)}}_{id}$};  
     \node [ above of=fid6, node distance=2cm] (fnetg3) {\scriptsize $f_{net, deep, g_3^{(1)}}$};
	\node [annot, right of=text2,  node distance=2cm] (text3) {$\hat{h}_3(\bold{x})$};

      \path [line] (fid3) -- (fid4);
      \path [line] (fnetg2) -- (fid5);
      \path [line] (fid2) -- (fnetg3);
      \path [line] (fid2) -- (fid6);
      
     \node [right of=fid4] (dots) {\scriptsize $\dots$};  
     \node [below of=dots] (dots1) {\scriptsize $\dots$};  
     \node [right of=fnetg3] (dots2) {\scriptsize $\dots$};  
     \node [below of=dots2] (dots3) {\scriptsize $\dots$};
     
      \path [line] (fid4) -- (dots);
       \path [line] (fid5) -- (dots1);
        \path [line] (fnetg3) -- (dots2);
         \path [line] (fid6) -- (dots3);
         	\node [annot, right of=text3,  node distance=2cm] (text4) {$\dots$};
         
     \node [right of=dots] (fid7) {\scriptsize $f^{L_{\tilde{N}_1}^{(1)}}_{id}$};  
     \node [below of=fid7, node distance=1cm] (vdots) {\scriptsize $\vdots$};  
     \node [right of=dots1] (fid8) {\scriptsize $f^{L_{\tilde{N}_1}^{(1)}}_{id}$}; 
     \node [right of=dots2] (fid9) {\scriptsize $f^{L_{\tilde{N}_1}^{(1)}}_{id}$}; 
       \node [right of=dots3] (fid10) {\scriptsize $f^{L_{\tilde{N}_1}^{(1)}}_{id}$};  
     \node [ above of=fid10, node distance=1cm] (fnetgd) {\scriptsize $f_{network, g_{\tilde{N}_1}^{(1)}}$};
     	\node [annot, right of=text3, right of=text3, node distance=2cm] (text3) {$\hat{h}_{\tilde{N}_1}(\bold{x})$};
      
            \path [line] (dots) -- (fid7);
       \path [line] (dots1) -- (fid8);
        \path [line] (dots2) -- (fid9);
         \path [line] (dots3) -- (fnetgd);
          \path [line] (dots3) -- (fid10);

\end{tikzpicture}}
\caption{Illustration of the neural network, which computes $h_1, \dots, h_{\tilde{N}_1}$}
\label{fprod}
\end{figure}
 \hyperref[fprod]{Fig.\ref*{fprod} } illustrates how the functions $h_1, \dots, h_{\tilde{N}_1}$ are computed by our network and gives an idea of how the smaller networks are stacked on top of each other. The main idea is, that we successively apply the network $f_{network, g_j^{(i)}}$ in consecutive layers. Here we make use of the identity network $f_{id}$, which enables us to shift the input value as well as every already computed function in the next hidden layers without an error. As described in \eqref{th3beq1} and \eqref{th3beq2} our network successively computes 
%
%
\begin{align*}
&\hat{h}_i(\bold{x})=\hat{g}_i^{(1)}(\bold{x}) := f_{net, deep, g_i^{(1)}}\left(x^{\left(\pi(\sum_{t=1}^{j-1} K_t^{(1)}+1)\right)}, \dots, x^{\left(\pi(\sum_{t=1}^{j} K_t^{(1)})\right)}\right)
\end{align*}
for $i \in \{1, \dots, \tilde{N}_1\}$
and 

\begin{align*}
 \hat{h}_{N_j^{(i)}}(\bold{x}) = f_{net, deep, g_j^{(i)}}\left(\hat{h}_{N^{(i-1)}_{\sum_{t=1}^{j-1} K_t^{(i)} +1}}(\bold{x}), \dots, \hat{h}_{N_{\sum_{t=1}^{j} K_t^{(i)}}^{(i-1)}}(\bold{x})\right)
  \end{align*}
  for $i \in \{2, \dots, l\}$ and $j \in \{1, \dots, \tilde{N}_i\}$
Finally we set
\begin{align*}
t_2(\bold{x}) = \hat{h}_{N_{1}^{(l)}}(\bold{x})= \hat{h}_{\sum_{t=1}^{l} \tilde{N}_t}(\bold{x}).
\end{align*}
Remark that for notational simplicity we have substituted every network $f_{id}$ in the input of the functions $\hat{h}_j$ by the real value (since $f_{id}$ computes this value without an error). 
Since each network $\hat{h}_j$ for $j \in \{1, \dots, \sum_{t=1}^{l} \tilde{N}_t\}$ needs $\tilde{L}_j$ layers and $r_0$ neurons per layer and we further need $2d$ neurons per layer to successively apply $f_{id}$ to the input $\bold{x}$ and $2$ neurons per layer to apply $f_{id}$ to the at most $\sum_{t=1}^{l-1} \tilde{N}_t$ already computed functions in our network the final network $t_2$ is contained in the class
\begin{align*}
\mathcal{F}\left(\sum_{j=1}^{\sum_{t=1}^{l} \tilde{N}_t} \tilde{L}_j, 2 \cdot \sum_{t=1}^{l-1} \tilde{N}_t+2d+r_0\right).
\end{align*}
Using induction on $i$, it is easy to see that $t_2$ satisfies
\begin{align}
\|t_2 - m \|_{\infty, [-a,a]^d} \leq c_{24} \cdot a^{4(p_{\max}+1)} \cdot \max_{j,i} M_{j,i}^{-2p_j^{(i)}}.
\end{align}
A complete proof can be found in Supplement A. As described in part a) we can easily enlarge the number of hidden layers by successively apply $f_{id}$ to the output of the network $t_2$. 
\end{proof}

\section{Proof of Theorem 1}
\label{se5}
a) Standard bounds of empirical process theory (cf. Lemma 18 in Supplement B) lead to
\begin{align*}
 & \EXP \int |m_n(\bold{x}) - m(\bold{x})|^2 {\PROB}_{\bold{X}} (d\bold{x})\\
   &\leq \frac{c_{25} (\log n)^2 \left(\sup_{\bold{x}_1^n \in (\Rd)^n}\log\left(
\mathcal{N}_1 \left(\frac{1}{n \cdot c_3  \cdot \log n}, T_{c_{3} \cdot \log(n)} \mathcal{F}(L_n,r_n), \bold{x}_1^n\right)
\right)+1\right)}{n}\\
&\quad + 2  \inf_{f \in \mathcal{F}(L_n, r_n)} \int |f(\bold{x})-m(\bold{x})|^2 {\PROB}_{\bold{X}} (d\bold{x}).
\end{align*}
Set
  \begin{align*}
  (\bar{p},\bar{K}) \in \mathcal{P} \ \mbox{such that} \ (\bar{p}, \bar{K}) = \arg \min_{(p,K) \in \mathcal{P}} \frac{p}{K}.
  \end{align*}
  The fact that  $1/n^{c_{26}} \leq 1/(n  \cdot c_3 \cdot \log(n)) \leq c_{3} \cdot \log(n)/8$, $L_n \leq c_{27} \cdot \log(n)$ and \linebreak
  $r_n \leq c_{28} \cdot  n^{\frac{1}{2(2\bar{p}/\bar{K} + 1)}}$
holds for $c_{26},  c_{27}, c_{28} >0$, 
allows us to apply Lemma 19 in Supplement B to bound the first summand by 
\begin{align}
\label{th1eq1}
&\frac{c_{25} \cdot (\log(n))^2  \cdot c_{29}  \cdot (\log(n))^3 \cdot \log\left(c_{29} \cdot \log(n) \cdot  n^{\frac{2}{(2(2\bar{p}/\bar{K} + 1))}} \right) \cdot c_{29} \cdot n^{\frac{1}{(2\bar{p}/\bar{K} + 1)}}}{n}\notag\\
\leq & \frac{c_{30} \cdot (\log(n))^6 \cdot n^{\frac{1}{2\bar{p}/\bar{K} +1}}}{n}
\leq  c_{30} \cdot  (\log(n))^6 \cdot  n^{-\frac{2\bar{p}}{2\bar{p} +\bar{K}}}
\end{align}
for a sufficiently large $n$. 
Regarding the second summand we apply \autoref{th3} a), where we choose 
\begin{align*}
M_{j,i} = \bigg\lceil n^{\frac{1}{2(2p_j^{(i)}+K_j^{(i)})}}\bigg\rceil.
\end{align*}
Set
\[
a_n = (\log n)^{\frac{1}{4 \cdot (p_{max}+1)}}.
\]
W.l.o.g. we assume $\rm{supp}(\bold{X}) \subseteq$ $[-a_n,a_n]^d$. \autoref{th3} a) allows us to bound
\[
\inf_{f \in \mathcal{F}(L_n, r_n)} \int |f(\bold{x})-m(\bold{x})|^2 {\PROB}_{\bold{X}} (d\bold{x})
\]
by
\begin{align*}
  &c_{31} \cdot \left(a_n^{4(p_{\max}+1)}\right)^2  \cdot
  \max_{j,i} M_{j,i}^{-4p_j^{(i)}} =  c_{31} \cdot (\log(n))^2 \cdot \max_{j,i} n^{-\frac{2p_j^{(i)}}{2p_j^{(i)} + K_j^{(i)}}}.
\end{align*}
This together with \eqref{th1eq1} and the fact that 
\begin{align*}
  \max_{(p,K) \in \P} n^{- \frac{2p}{2p+K}}
  =
 n^{-\frac{2\bar{p}}{2\bar{p} +\bar{K}}} = \max_{j,i} n^{-\frac{2p_j^{(i)}}{2p_j^{(i)} + K_j^{(i)}}}
\end{align*}
implies the assertion.
\\
\\
Part b) follows by a slight modification of the proof of \autoref{th1} a), where we use \autoref{th3} b) instead of a) to bound the approximation error.


\section*{Acknowledgement}
The authors are grateful to the many comments und suggestions that were brought up by the AE and four referees improving an early version of this manuscript.

\bigskip
\begin{center}
{\large\bf SUPPLEMENTARY MATERIAL}
\end{center}
Supplement description:
\begin{description}

\item[Section A: Network Approximation of Smooth Functions:] This section contains the long and rather technical proof of \autoref{th2} and the induction proofs of \autoref{th3}, that show the accuracy of the networks.
\item[Section B: Auxiliary Results and Further Proofs:]  This section contains the auxiliary results and further proofs of all lemmata, that follow in a straightforward modification from earlier results.
\end{description}

\bibliographystyle{abbrv}
\bibliography{Literatur}

\appendix

\section{APPENDIX: NETWORK APPROXIMATION OF SMOOTH FUNCTIONS}
\subsection{Proof of Theorem 2}
In this section we prove Theorem 2. The main idea is to construct the $(p,C)$-smooth function $f$ by piecewise Taylorpolynomials on a partition of cubes of $[-a,a]^d$. We first introduce some further notations.
\subsubsection{Notation}

Beside the notations of our main article we will use the following: If $C$ is a cube we denote the "bottom left" corner of $C$
by $\bold{C}_{left}$. Therefore, each half-open cube $C$
with side length $s$
can be written as a polytope defined by 
\begin{align*}
-x^{(j)} + C_{left}^{(j)} \leq 0 \ \mbox{and} \ x^{(j)} - C_{left}^{(j)}-s < 0 \quad (j \in \{1, \dots, d\}).
\end{align*}
Furthermore, we describe by $C_{\delta}^0 \subset C$ the cube, which contains all $\bold{x} \in C$ that lie with a distance of at least $\delta$ to the boundaries of $C$, i.e. a polytope defined by
\begin{align*}
-x^{(j)} + C_{left}^{(j)} \leq - \delta \ \mbox{and} \ x^{(j)} - C_{left}^{(j)}-s < -\delta \quad (j \in \{1, \dots, d\}).
\end{align*}
If $\P$ is a partition of cubes of $[-a,a)^d$
and $\bold{x} \in [-a,a)^d$, then we denote the cube $C \in \P$, 
which satisfies $\bold{x} \in C$, by $C_\P (\bold{x})$.

\subsubsection{An auxiliary result}
In the proof of Theorem 2 we will use the following lemma, which shows that every $(p,C)$-smooth function can be approximated by a Taylor polynomial.
\begin{lemma}
\label{le1a}
Let $p=q+s$ for some $q \in \N_0$ and $s \in (0,1]$, and let $C > 0$. Let $f: \Rd \to \R$ be a $(p,C)$-smooth function, let $\bold{x}_0 \in \Rd$ and let $T_{f,q,\bold{x}_0}$ be the Taylor polynomial of total degree $q$ around $\bold{x}_0$ defined by
  \begin{eqnarray*}
T_{f,q,\bold{x}_0}(\bold{x}) &=& \sum_{\substack{j \in \N_0^d: \|\bold{j}\|_1 \leq q}} (\partial^{\bold{j}} f) ( \bold{x}_0)\cdot \frac{\left(\bold{x} - \bold{x}_0\right)^{\bold{j}}}{\bold{j}!}
  \end{eqnarray*}
Then for any $\bold{x} \in \Rd$ 
\begin{align*}
\left|f(\bold{x}) - T_{f,q,\bold{x}_0}(\bold{x})\right| \leq c_{32} \cdot C \cdot \Vert \bold{x} - \bold{x}_0 \Vert^p
\end{align*}
holds for a constant $c_{32}=c_{32}(q,d)$ depending only on $q$ and $d$.
\end{lemma}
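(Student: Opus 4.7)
The plan is to derive this from Taylor's theorem with integral remainder, exploiting the $(p,C)$-smoothness to control the residual term involving the order-$q$ derivatives.

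First I would apply the one-dimensional Taylor expansion to the function $\varphi(t) := f(\bold{x}_0 + t(\bold{x}-\bold{x}_0))$ on $[0,1]$ at order $q$ with integral remainder, then translate back via the multivariate chain rule. This yields the identity
\begin{equation*}
f(\bold{x}) = T_{f,q-1,\bold{x}_0}(\bold{x}) + \sum_{\|\bold{j}\|_1=q} \frac{q}{\bold{j}!}(\bold{x}-\bold{x}_0)^{\bold{j}} \int_0^1 (1-t)^{q-1} (\partial^{\bold{j}} f)(\bold{x}_0 + t(\bold{x}-\bold{x}_0))\, dt.
\end{equation*}
Here I use that $f \in C^q(\Rd)$, which is guaranteed by the $(p,C)$-smoothness hypothesis for $p = q + s$. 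Since $q \int_0^1 (1-t)^{q-1}\, dt = 1$, I can rewrite the order-$q$ term of $T_{f,q,\bold{x}_0}(\bold{x})$ in the same integral form with integrand $(\partial^{\bold{j}} f)(\bold{x}_0)$, and subtract.

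Next I would subtract $T_{f,q,\bold{x}_0}(\bold{x})$ from both sides to obtain
\begin{equation*}
f(\bold{x}) - T_{f,q,\bold{x}_0}(\bold{x}) = \sum_{\|\bold{j}\|_1=q} \frac{q\,(\bold{x}-\bold{x}_0)^{\bold{j}}}{\bold{j}!} \int_0^1 (1-t)^{q-1} \bigl[(\partial^{\bold{j}} f)(\bold{x}_0 + t(\bold{x}-\bold{x}_0)) - (\partial^{\bold{j}} f)(\bold{x}_0)\bigr]\, dt.
\end{equation*}
Now I would invoke the $(p,C)$-smoothness of $f$: for every multi-index $\bold{j}$ with $\|\bold{j}\|_1 = q$, the Hölder condition of \autoref{intde2} gives
\begin{equation*}
\bigl|(\partial^{\bold{j}} f)(\bold{x}_0 + t(\bold{x}-\bold{x}_0)) - (\partial^{\bold{j}} f)(\bold{x}_0)\bigr| \leq C \,\|t(\bold{x}-\bold{x}_0)\|^s = C\, t^s \|\bold{x}-\bold{x}_0\|^s.
\end{equation*}
I would combine this with the elementary bound $|(\bold{x}-\bold{x}_0)^{\bold{j}}| \leq \|\bold{x}-\bold{x}_0\|^{\|\bold{j}\|_1} = \|\bold{x}-\bold{x}_0\|^q$ (which follows from $|x^{(k)} - x_0^{(k)}| \leq \|\bold{x}-\bold{x}_0\|$ coordinatewise).

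Finally I would assemble the estimate and evaluate the beta-function integral:
\begin{equation*}
|f(\bold{x}) - T_{f,q,\bold{x}_0}(\bold{x})| \leq C\, \|\bold{x}-\bold{x}_0\|^{q+s} \cdot \sum_{\|\bold{j}\|_1=q} \frac{q}{\bold{j}!} \int_0^1 (1-t)^{q-1} t^s\, dt.
\end{equation*}
The inner integral equals the beta value $B(s+1,q) = \Gamma(s+1)\Gamma(q)/\Gamma(s+q+1)$, and the multinomial identity $\sum_{\|\bold{j}\|_1=q} q!/\bold{j}! = d^q$ bounds the outer sum. Collecting everything into a single constant $c_{32} = c_{32}(q,d)$ depending only on $q$ and $d$, and recalling $p = q+s$, gives the claimed bound. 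The only mild subtlety is the case $q = 0$: then $T_{f,q,\bold{x}_0}(\bold{x}) = f(\bold{x}_0)$ and the claim reduces directly to the Hölder condition $|f(\bold{x}) - f(\bold{x}_0)| \leq C\|\bold{x}-\bold{x}_0\|^s$, so no integral form is needed and the constant can be taken to be $1$.
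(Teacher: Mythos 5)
Your proof is correct and complete. The paper itself does not reproduce an argument here --- it simply cites Lemma~1 of Kohler (2014) --- so there is nothing literal to compare against, but your integral-remainder approach is the standard one and almost certainly coincides with the cited proof. All the ingredients check out: the one-dimensional Taylor expansion of $\varphi(t)=f(\bold{x}_0+t(\bold{x}-\bold{x}_0))$ with integral remainder translated via the multinomial derivative formula; rewriting the degree-$q$ piece of $T_{f,q,\bold{x}_0}$ using $q\int_0^1(1-t)^{q-1}\,dt=1$; the H\"older bound on $\partial^{\bold{j}}f$ from the definition of $(p,C)$-smoothness; the coordinatewise bound $|(\bold{x}-\bold{x}_0)^{\bold{j}}|\le\|\bold{x}-\bold{x}_0\|^{q}$; and the final beta integral. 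One small point worth making explicit is that the constant you obtain, $\tfrac{q\,d^{q}}{q!}\,B(s+1,q)$, appears to depend on $s$, but since $t^{s}\le 1$ on $[0,1]$ you have $B(s+1,q)\le 1/q$ uniformly in $s\in(0,1]$, so the constant can indeed be chosen to depend only on $q$ and $d$ as the lemma requires; and your separate treatment of $q=0$ closes the case where the integral form is vacuous.
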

\begin{proof}
See Lemma 1 in \cite{Ko14}.
\end{proof}

\subsubsection{Idea of the proof of Theorem 2 a)}
In the proof of Theorem 2 a) we use \autoref{le1a} and approximate our function by a piecewise Taylor polynomial. To define this piecewise Taylor polynomial, we partition  $[-a,a)^d$
  into $M^d$ and $M^{2d}$ half-open
  equivolume cubes of the form
  \[
[\bm{\alpha},\bm{\beta})=[\bm{\alpha}^{(1)},\bm{\beta}^{(1)}) \times \dots \times [\bm{\alpha}^{(d)},\bm{\beta}^{(d)}), \quad \bm{\alpha}, \bm{\beta} \in \Rd,
  \]
respectively.
Let 
\begin{align}
\label{partition}
\mathcal{P}_1=\{C_{k,1}\}_{k \in \{1, \dots, M^d\}} \ \mbox{and} \ \mathcal{P}_2=\{C_{j,2}\}_{j \in \{1, \dots, M^{2d}\}}
\end{align}
be the corresponding partitions. 
We denote for each $i \in \{1, \dots, M^d\}$
those cubes of $\mathcal{P}_2$ that are contained in $C_{i,1}$
by $\tilde{C}_{1, i}, \dots, \tilde{C}_{M^d, i}$.
Here we order the cubes in such a way that
\begin{align}
\label{tildec}
(\bold{\tilde{C}}_{k, i})_{left} = (\bold{C}_{i,1})_{left} + \bold{v}_k,
\end{align}
holds for all $k \in \{1, \dots, M^d\},i \in \{1, \dots, M^d\}$ and for some vector $\bold{v}_k$ with entries in $\{0, 2a/M^2, \dots, (M-1) \cdot 2a/M^2\}$. The vector $\bold{v}_k$
describes the position of
$(\bold{\tilde{C}}_{k, i})_{left}$ relative to $(\bold{C}_{i,1})_{left}$,
and we order the above cubes such that
this position is independent of $i$. Now it is easy to see, that the partition $\P_2$ can be represented by the cubes $\tilde{C}_{k,i}$. In particular, we have
\begin{align*}
\P_2=\{\tilde{C}_{k,i}\}_{k \in \{1, \dots, M^d\}, i \in \{1, \dots, M^d\}}.
\end{align*}
The Taylor expansion in \autoref{le1a} can then be computed by the piecewise Taylor polynomial defined on $\P_2$. In particular, we have
\[
T_{f,q,(\bold{C}_{\P_2}(\bold{x}))_{left}}(\bold{x})
=
\sum_{k \in \{1, \dots, M^d\}, i \in \{1, \dots, M^d\} } T_{f,q,(\bold{\tilde{C}}_{k,i})_{left}}(\bold{x})
\cdot \mathds{1}_{\tilde{C}_{k,i}}(\bold{x})
\]
and this piecewise Taylor polynomial satisfies
\[\sup_{\bold{x} \in [-a,a)^d}
  \left|
f(\bold{x}) - T_{f,q,(\bold{C}_{\P_2}(\bold{x}))_{left}}(\bold{x})
\right|
\leq
c_{32} \cdot (2 \cdot a \cdot d)^{2p} \cdot C \cdot \frac{1}{M^{2p}}.
\]

To approximate $f(\bold{x})$ by neural networks our proof follows 
\textit{four} key steps:
\begin{enumerate}
\item[1.] Compute $T_{f,q,(\bold{C}_{\P_2}(\bold{x}))_{left}}(\bold{x})$ by using recursively defined functions.
\item[2.] Approximate the recursive functions by neural networks. The resulting network is a good approximation for $f(\bold{x})$ in case that
\[\bold{x} \in \bigcup_{k \in \{1, \dots, M^{2d}\}} (C_{k,2})_{1/M^{2p+2}}^0.
\]
\item[3.] Construct a neural network to approximate $w_{\P_2}(\bold{x}) \cdot f(\bold{x})$, where
\begin{equation*}
w_{\P_2}(\bold{x}) = \prod_{j=1}^d \left(1- \frac{M^2}{a} \cdot \left|(C_{\mathcal{P}_{2}}(\bold{x}))_{left}^{(j)} + \frac{a}{M^2} - x^{(j)}\right|\right)_+
\end{equation*}
is a linear tensorproduct B-spline
which takes its maximum value at the center of $C_{\P_{2}}(\bold{x})$, which
is nonzero in the inner part of $C_{\P_{2}}(\bold{x})$ and which
vanishes
outside of $C_{\P_{2}}(\bold{x})$. 
\item[4.] Apply those networks to $2^d$ slightly shifted partitions of $\P_2$ to approximate $f(\bold{x})$ in supremum norm.
\end{enumerate}

\subsubsection{Key step 1: A recursive definition of $T_{f,q,(\bold{C}_{\P_2}(\bold{x}))_{left}}(\bold{x})$}
The following recursive definition of the piecewise Taylor polynomial will later help us to define a neural network approximating the function $f$. Let $i \in \{1, \dots, M^d\}$ and $C_{\P_1}(\bold{x}) = C_{i,1}$. The recursion follows \textit{two} steps. In a first step we compute the value of $(\bold{C}_{\P_1}(\bold{x}))_{left}=(\bold{C}_{i,1})_{left}$ and the values of $(\partial^{\bold{l}}f) ((\bold{\tilde{C}}_{j,i})_{left})$ for $j \in \{1, \dots, M^d\}$ and $\bll \in \N_0^d$ with $\|\bll\|_1 \leq q$. This can be done by computing the indicator function $\mathds{1}_{C_{i,1}}$ multiplied by  $(\bold{C}_{i,1})_{left}$ or $(\partial^{\bold{l}} f)((\bold{\tilde{C}}_{j,i})_{left})$ for each $i \in \{1, \dots, M^d\}$, respectively. Furthermore we need the value of the input $\bold{x}$ in the further recursive definition, such that we shift this value by applying the identity function. 
We set
\begin{align*}
\bm{\phi}_{1,1} = (\phi_{1,1}^{(1)}, \dots, \phi_{1,1}^{(d)}) = \bold{x}, 
\end{align*}
\begin{align*}
\bm{\phi}_{2,1} = (\phi_{2,1}^{(1)}, \dots, \phi_{2,1}^{(d)}) = \sum_{i \in \{1, \dots, M^d\}} (\bold{C}_{i,1})_{left} \cdot \mathds{1}_{C_{i,1}}(\mathbf{x})
\end{align*}
and
\begin{align*}
\phi_{3, 1}^{(\bll,j)} &= \sum_{i \in \{1, \dots, M^d\}} (\partial^{\bold{l}} f)\left((\bold{\tilde{C}}_{j,i})_{left}\right) \cdot \mathds{1}_{C_{i,1}}(\mathbf{x}), 
\end{align*}
for $j \in \{1, \dots, M^d\}$ and  $\mathbf{l} \in \N_0^d$ with $\|\bold{l}\|_1\leq q$.
\\
\\
Let $i, j \in \{1, \dots, M^d\}$ and $(\bold{C}_{\P_2}(\bold{x}))_{left}=(\bold{\tilde{C}}_{j,i})_{left}$. In a second step of the recursion we compute the value of  $(\bold{C}_{\P_2}(\bold{x}))_{left}=(\bold{\tilde{C}}_{j,i})_{left}$ and the values of $(\partial^{\bold{l}} f)\left((\bold{C}_{\P_2}(\bold{x}))_{left}\right)$ for $\bll \in \N_0^d$ with $\|\bll\|_1 \leq q$. It is easy to see that each cube $\tilde{C}_{j,i}$ can be defined by 
\begin{align}
\label{Aj}
\mathcal{A}^{(j)} = &\left\{\mathbf{x} \in \Rd: -x^{(k)} + \phi_{2,1}^{(k)} + v_j^{(k)} \leq 0 \right. \notag\\
 & \hspace*{1.8cm} \left. \mbox{and} \ x^{(k)} - \phi_{2,1}^{(k)} - v_j^{(k)} - \frac{2a}{M^2} < 0 \ \mbox{for all} \ k \in \{1, \dots, d\}\right\}.
\end{align}
Thus in our recursion we compute for each $j \in \{1, \dots, M^d\}$ the indicator function $\mathds{1}_{\mathcal{A}^{(j)}}$ multiplied by $\bm{\phi}_{2,1} + \bold{v}_j$ or $\phi_{3, 1}^{(\bll, j)}$ for $\bll \in \N_0^d$ with $\|\bll\|_1 \leq q$. Again we shift the value of $\bold{x}$ by applying the identity function.
We set
\begin{align*}
\bm{\phi}_{1,2} = (\phi_{1,2}^{(1)}, \dots, \phi_{1,2}^{(d)})^T= \bm{\phi}_{1,1},
\end{align*}
\begin{align*}
\bm{\phi}_{2,2}= (\phi_{2,2}^{(1)}, \dots, \phi_{2,2}^{(d)})=\sum_{j=1}^{M^d} (\bm{\phi}_{2,1}+\bold{v}_j) \cdot \mathds{1}_{\mathcal{A}^{(j)}} \left(\bm{\phi}_{1,1}\right)
\end{align*}
and
\begin{align*}
\phi_{3,2}^{(\bll)} = \sum_{j=1}^{M^d} \phi_{3, 1}^{(\bll, j)} \cdot \mathds{1}_{\mathcal{A}^{(j)}} \left(\bm{\phi}_{1,1}\right)
\end{align*}
for $\bll \in \N_0^d$ with $\|\bll\|_1 \leq q$.
In a last step we compute the Taylor polynomial by
\begin{align*}
\phi_{1,3} = &\sum_{\substack{\bj \in \N_0: \|\bj\|_1 \leq q}} \frac{\phi_{3, 2}^{(\bj)}}{\bj!} \cdot \left(\bm{\phi}_{1,2} - \bm{\phi}_{2,2}\right)^{\bj}.
\end{align*}
Our next lemma shows that this recursion computes our piecewise Taylor polynomial.


\begin{lemma}
\label{supple3}
  Let $p=q+s$ for some $q \in \N_0$ and $s \in (0,1]$, let $C > 0$ and \linebreak $\bold{x} \in [-a,a)^d$. Let $f: \Rd \to \R$ be a $(p,C)$-smooth function and let $T_{f,q,(\bold{C}_{\mathcal{P}_2}(\bold{x}))_{left}}$ be the Taylor polynomial of total degree $q$ around $(\bold{C}_{\mathcal{P}_2}(\bold{x}))_{left}$. Define $\phi_{1,3}$ recursively as above. Then we have
  \[
\phi_{1,3}=T_{f,q,(\bold{C}_{\mathcal{P}_2}(\bold{x}))_{left}}(\bold{x}).
  \]
\end{lemma}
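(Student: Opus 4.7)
The plan is to trace through the recursion carefully, exploiting the fact that $\mathcal{P}_1$ and $\mathcal{P}_2$ are partitions, so the indicator-function sums collapse to a single term. Let $i_0 \in \{1,\dots,M^d\}$ be the unique index with $\mathbf{x} \in C_{i_0,1}$, and let $j_0 \in \{1,\dots,M^d\}$ be the unique index with $\mathbf{x} \in \tilde{C}_{j_0,i_0}$. Then $C_{\mathcal{P}_1}(\mathbf{x}) = C_{i_0,1}$ and $C_{\mathcal{P}_2}(\mathbf{x}) = \tilde{C}_{j_0,i_0}$.

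First I would evaluate the level-one recursion. Since $\mathds{1}_{C_{i,1}}(\mathbf{x}) = \delta_{i,i_0}$, the definitions immediately give
\[
\boldsymbol{\phi}_{2,1} = (\mathbf{C}_{i_0,1})_{left}
\quad\text{and}\quad
\phi_{3,1}^{(\boldsymbol{\ell},j)} = (\partial^{\boldsymbol{\ell}} f)\bigl((\tilde{\mathbf{C}}_{j,i_0})_{left}\bigr)
\]
for every multi-index $\boldsymbol{\ell}$ with $\|\boldsymbol{\ell}\|_1 \leq q$ and every $j\in\{1,\dots,M^d\}$, while $\boldsymbol{\phi}_{1,1} = \mathbf{x}$ is kept unchanged.

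The crucial step is to show that $\mathds{1}_{\mathcal{A}^{(j)}}(\boldsymbol{\phi}_{1,1}) = \delta_{j,j_0}$. By the relation \eqref{tildec}, $(\tilde{\mathbf{C}}_{j,i_0})_{left} = (\mathbf{C}_{i_0,1})_{left} + \mathbf{v}_j = \boldsymbol{\phi}_{2,1} + \mathbf{v}_j$, so plugging $\boldsymbol{\phi}_{2,1}$ into the defining inequalities of $\mathcal{A}^{(j)}$ in \eqref{Aj} turns them into
\[
-x^{(k)} + (\tilde{C}_{j,i_0})_{left}^{(k)} \leq 0,
\qquad
x^{(k)} - (\tilde{C}_{j,i_0})_{left}^{(k)} - \tfrac{2a}{M^2} < 0,
\]
for all $k\in\{1,\dots,d\}$; i.e., $\mathcal{A}^{(j)}$ coincides precisely with the half-open cube $\tilde{C}_{j,i_0}$. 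Since the cubes $\{\tilde{C}_{j,i_0}\}_{j=1}^{M^d}$ partition $C_{i_0,1}$, exactly $j=j_0$ contributes.

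Consequently $\boldsymbol{\phi}_{2,2} = \boldsymbol{\phi}_{2,1} + \mathbf{v}_{j_0} = (\tilde{\mathbf{C}}_{j_0,i_0})_{left} = (\mathbf{C}_{\mathcal{P}_2}(\mathbf{x}))_{left}$ and $\phi_{3,2}^{(\boldsymbol{\ell})} = \phi_{3,1}^{(\boldsymbol{\ell},j_0)} = (\partial^{\boldsymbol{\ell}} f)\bigl((\mathbf{C}_{\mathcal{P}_2}(\mathbf{x}))_{left}\bigr)$, while $\boldsymbol{\phi}_{1,2} = \mathbf{x}$. Substituting these into the definition of $\phi_{1,3}$ yields
\[
\phi_{1,3} = \sum_{\|\mathbf{j}\|_1 \leq q} (\partial^{\mathbf{j}} f)\bigl((\mathbf{C}_{\mathcal{P}_2}(\mathbf{x}))_{left}\bigr)
\cdot \frac{\bigl(\mathbf{x} - (\mathbf{C}_{\mathcal{P}_2}(\mathbf{x}))_{left}\bigr)^{\mathbf{j}}}{\mathbf{j}!}
= T_{f,q,(\mathbf{C}_{\mathcal{P}_2}(\mathbf{x}))_{left}}(\mathbf{x}),
\]
which is the claim. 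The only non-routine step is the identification $\mathcal{A}^{(j)} = \tilde{C}_{j,i_0}$, and this is precisely why the cubes were ordered in \eqref{tildec} so that the offset $\mathbf{v}_j$ does not depend on $i$; without that ordering the level-two polytope description would not line up with the correct sub-cube.
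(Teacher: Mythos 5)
Your proposal is correct and follows essentially the same route as the paper's proof: evaluate the level-one quantities by observing the partition indicators collapse to a single term, identify $\mathcal{A}^{(j)}$ with the sub-cube $\tilde{C}_{j,i}$ via the ordering in \eqref{tildec}, and substitute into the definition of $\phi_{1,3}$. You spell out the key step (why $\mathcal{A}^{(j)}=\tilde{C}_{j,i_0}$ and why the offset $\mathbf{v}_j$ must be $i$-independent) a bit more explicitly than the paper does, but the argument is identical in structure.
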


\begin{proof}
  Let $j, i \in \{1, \dots, M^d\}$ and $\bold{x} \in \tilde{C}_{j,i}$, which implies $C_{\P_2}(\bold{x})=\tilde{C}_{j,i}$ and $\bold{x} \in C_{i,1}$. 
  Now it is easy to see that 
\begin{align*}
\bm{\phi}_{2,1} = (\bold{C}_{i,1})_{left}
\end{align*}
and 
\begin{align*}
\phi_{3,1} ^{(\bll, j)}= (\partial^{\bll} f)((\bold{\tilde{C}}_{j,i})_{left}), \quad j \in \{1, \dots, M^d\}.
\end{align*}
According to \eqref{tildec} we then have  
\begin{align*}
\bm{\phi}_{2,1} + \bold{v}_j = (\bold{\tilde{C}}_{j, i})_{left}
\end{align*} 
and $\mathcal{A}^{(j)}$ describes the cube $\tilde{C}_{j, i}$.  Together with $\bm{\phi}_{1,1} = \bold{x}$ it follows, that
\begin{align*}
\bm{\phi}_{2,2} = (\bold{\tilde{C}}_{j, i})_{left}
\end{align*}
and 
\begin{align*}
\phi_{3, 2}^{(\bll)}=  (\partial^{\bll} f)((\bold{\tilde{C}}_{j,i})_{left}),
\end{align*}
which implies
\begin{align*}
\phi_{1,3} = T_{f,q, (\bold{\tilde{C}}_{j,i})_{left}}(\bold{x})= T_{f,q, (\bold{C}_{\mathcal{P}_2}(\bold{x}))_{left}}(\bold{x}).
\end{align*}
\end{proof}

\subsubsection{Key step 2: Approximating $\phi_{1,3}$ by neural networks}
The basic idea of the proof is to define a composed neural network,
which approximately computes the functions in the definition
of $\bm{\phi}_{1,1}$, $\bm{\phi}_{2,1}$, $\phi_{3,1}^{(\bll, j)}$, 
 $\bm{\phi}_{1,2}$, $\bm{\phi}_{2,2}$, $\phi_{3, 2}^{(\bll)}$, 
$\phi_{1,3}$ for $j \in \{1, \dots, M^d\}$ and $\bll \in \N_0^d$ with $\|\bll\|_1 \leq q$. 
We will show, that this neural network
is a good approximation for $f(\bold{x})$ in case that $\bold{x}$ does 
not lie close to the boundary of any cube of $\P_2$, i.e. for
\begin{align*}
\bold{x} \in \bigcup_{i \in \{1, \dots, M^{2d}\}}(C_{i,2})_{1/M^{2p+2}}^0.
\end{align*}

\begin{lemma}
\label{le5}
Let $\sigma:\R \to \R$ be the ReLU activation function $\sigma(x) = \max\{x,0\}$. Let $\mathcal{P}_2$ be defined as in \eqref{partition}. Let $p = q+s$ for some $q \in \N_0$ and $s \in (0,1]$, and let $C >0$. Let $f: \Rd \to \R$ be a $(p,C)$-smooth function.
    Let $1 \leq a < \infty$. Then there exists for $M \in \N$ sufficiently large (independent of the size of $a$, but 
    \begin{align*}
    M^{2p} \geq c_{32} \cdot \max\left\{\left(\max\left\{2a, \|f\|_{C^q([-a,a]^d)}\right\}\right)^{4(q+1)}, (2 \cdot a \cdot d)^{2p} \cdot C\right\}
    \end{align*}
     must hold), a neural network
$f_{net, \P_2}(\bold{x}) \in \mathcal{F}(L,r)$ with
\begin{itemize}
\item[(i)] $L= 4+\lceil \log_4(M^{2p})\rceil \cdot \lceil \log_2(\max\{q+1,2 \})\rceil$
\item[(ii)] $r=\max\left\{\left(\binom{d+q}{d} + d\right) \cdot M^d \cdot 2 \cdot (2+2d)+2d, 18 \cdot (q+1) \cdot \binom{d+q}{d}\right\}$
\end{itemize}
such that 
\begin{align*}
&|f_{net, \mathcal{P}_2}(\bold{x}) - f(\bold{x})| \leq c_{33} \cdot \left(\max\left\{2a, \|f\|_{C^q([-a,a]^d)}\right\}\right)^{4(q+1)} \cdot \frac{1}{M^{2p}}
\end{align*}
holds for all $\bold{x} \in \bigcup_{j \in \{1, \dots, M^{2d}\}} \left(C_{j,2}\right)_{1/M^{2p+2}}^0$. The network value is bounded by 
\begin{align*}
|f_{net, \mathcal{P}_2}(\bold{x})| \leq 2 \cdot e^{2ad} \cdot \max\left\{\|f\|_{C^q([-a,a]^d)}, 1\right\}
\end{align*}
for all $\bold{x} \in [-a,a)^d$.
\end{lemma}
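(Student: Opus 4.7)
The plan is to translate the recursion of \autoref{supple3} into an explicit ReLU network and then control the propagation of approximation errors at each stage. Two elementary building blocks will do most of the work. First I need a multiplication gadget $f_{\mathrm{mult}}(u,v)\approx uv$ on a suitable bounded box with accuracy $M^{-2p}$; this can be built from the classical sawtooth approximation of $x\mapsto x^2$ together with the polarization identity $uv=\tfrac14((u+v)^2-(u-v)^2)$, giving depth $\lceil\log_4(M^{2p})\rceil$ and constant width. Second I need a smoothed half-space indicator built from the ReLU clip $\sigma(1+M^{2p+2}(t-x))-\sigma(M^{2p+2}(t-x))$, which equals $1$ for $x\leq t-1/M^{2p+2}$, equals $0$ for $x\geq t$, and is affine in between. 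Multiplying $2d$ such clips via $f_{\mathrm{mult}}$ yields an approximate cube indicator $\mathds{1}_C^{\mathrm{net}}$ that is \emph{exact} whenever $\bold{x}$ lies in $C^0_{1/M^{2p+2}}$ or in $[-a,a]^d\setminus C$, which is precisely the configuration guaranteed by the hypothesis $\bold{x}\in\bigcup_{j} (C_{j,2})^0_{1/M^{2p+2}}$.

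Next I would assemble the network following \autoref{supple3}. In the first stage the $M^d$ indicators $\mathds{1}_{C_{i,1}}^{\mathrm{net}}$ directly realize $\bm{\phi}_{2,1}^{\mathrm{net}}$ and all $\binom{d+q}{d} M^d$ scalar channels $\phi_{3,1}^{(\bll,j),\mathrm{net}}$ as plain weighted sums, because each target is (constant)$\cdot \mathds{1}_{C_{i,1}}^{\mathrm{net}}$ and the constants can be absorbed into the outgoing weights. In parallel, $\bold{x}$ is forwarded through later layers by the identity gadget $z=\sigma(z)-\sigma(-z)$ at the cost of $2d$ neurons per layer. The second stage produces the polytope indicators $\mathds{1}_{\mathcal{A}^{(j)}}^{\mathrm{net}}$ in exactly the same manner but applied to the affine combinations $\phi_{2,1}^{(k),\mathrm{net}}+v_j^{(k)}-x^{(k)}$ and $x^{(k)}-\phi_{2,1}^{(k),\mathrm{net}}-v_j^{(k)}-2a/M^2$; these are then multiplied by $\bm{\phi}_{2,1}^{\mathrm{net}}+\bold{v}_j$ and by the channels $\phi_{3,1}^{(\bll,j),\mathrm{net}}$ via $f_{\mathrm{mult}}$ and summed over $j$ to yield $\bm{\phi}_{2,2}^{\mathrm{net}}$ and $\phi_{3,2}^{(\bll),\mathrm{net}}$. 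Finally the Taylor polynomial is evaluated: each monomial $(\bm{\phi}_{1,2}-\bm{\phi}_{2,2})^{\bj}$ of total degree $\leq q$ is formed by at most $q$ nested calls of $f_{\mathrm{mult}}$ arranged in a balanced binary tree of depth $\lceil\log_2(\max\{q+1,2\})\rceil$, and the $\binom{d+q}{d}$ results are linearly combined with weights $\phi_{3,2}^{(\bj),\mathrm{net}}/\bj!$.

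For the error bound I exploit the crucial point that on $(C_{j,2})^0_{1/M^{2p+2}}$ both sets of indicator gadgets return the exact values, so $\bm{\phi}_{2,1}^{\mathrm{net}},\phi_{3,1}^{(\bll,j),\mathrm{net}},\bm{\phi}_{2,2}^{\mathrm{net}},\phi_{3,2}^{(\bll),\mathrm{net}}$ coincide with their true recursive values. The only approximation error comes from the multiplications used to form $\bm{\phi}_{2,2}$ and $\phi_{3,2}^{(\bll)}$ and from the Taylor evaluation; using the a priori bounds $\|\bm{\phi}_{2,2}\|_\infty\leq a$ and $|\phi_{3,2}^{(\bj)}|\leq \|f\|_{C^q([-a,a]^d)}$, a direct induction over the nested multiplications yields an accumulated error bounded by $c\cdot(\max\{2a,\|f\|_{C^q([-a,a]^d)}\})^{4(q+1)}\cdot M^{-2p}$. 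Combined with \autoref{le1a}, which controls $|f(\bold{x})-T_{f,q,(\bold{C}_{\mathcal{P}_2}(\bold{x}))_{left}}(\bold{x})|$ by $c_{32}(2ad)^{2p}C\cdot M^{-2p}$, this gives the desired bound. The uniform sup-norm bound is then enforced by a final ReLU clip at threshold $2e^{2ad}\max\{\|f\|_{C^q([-a,a]^d)},1\}$, which does not degrade the approximation on the interior region.

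The main obstacle will be the error bookkeeping inside the Taylor stage: each of the $q$ nested multiplications can amplify relative errors by factors of order $\max\{2a,\|f\|_{C^q([-a,a]^d)}\}$, so an overall error of $O(M^{-2p})$ forces the per-multiplication accuracy to be $M^{-2p}\cdot(\max\{2a,\|f\|_{C^q([-a,a]^d)}\})^{-4(q+1)}$. This is exactly what the depth budget $\lceil\log_4(M^{2p})\rceil\cdot\lceil\log_2(\max\{q+1,2\})\rceil$ buys, which explains the precise form of (i). Checking that the $M^d$ parallel indicator circuits and $2d$ identity channels in the first stage and the $\binom{d+q}{d}$ monomial circuits in the Taylor stage simultaneously fit within the stated width (ii) is then a routine combinatorial verification.
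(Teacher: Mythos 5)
Your outline gets the high-level architecture right (two-scale selection of the cube, piecewise Taylor polynomial, parallel channels, identity forwarding), but the device you propose for the indicator stages would not fit the depth budget and does not deliver the exactness you claim.

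Concretely, you propose to obtain the cube indicator by forming $2d$ one-sided ReLU clips and then multiplying them pairwise via $f_{\mathrm{mult}}$, and likewise to gate the values $\bm{\phi}_{2,1}+\bold{v}_j$ and $\phi_{3,1}^{(\bll,j)}$ by multiplying them with the indicator via $f_{\mathrm{mult}}$. Two problems arise. First, $f_{\mathrm{mult}}$ at accuracy $M^{-2p}$ requires depth $\lceil\log_4(M^{2p})\rceil$, so a binary product tree over $2d$ clips costs depth of order $\lceil\log_2(2d)\rceil\cdot\lceil\log_4(M^{2p})\rceil$ \emph{per indicator stage}, and the gating multiplication costs another $\lceil\log_4(M^{2p})\rceil$; none of this appears in the stated $L=4+\lceil\log_4(M^{2p})\rceil\cdot\lceil\log_2(\max\{q+1,2\})\rceil$, where the constant $4$ is precisely the total cost of both indicator stages and the only $\lceil\log_4(M^{2p})\rceil$-factor is reserved for the Taylor-polynomial evaluation. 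Second, $f_{\mathrm{mult}}$ is an approximation, so the product of clips is \emph{not} exact on $(C_{j,2})^0_{1/M^{2p+2}}$ contrary to your assertion; the inputs being $0$ or $1$ does not make $f_{\mathrm{mult}}$ return the true product in general. The paper avoids both issues with the constructions $f_{ind}$ and $f_{test}$ from Lemma~\ref{le4}: a single saturation $\sigma\bigl(1-R\sum_i(\sigma(a^{(i)}+1/R-x^{(i)})+\sigma(x^{(i)}-b^{(i)}+1/R))\bigr)$ gives the cube indicator in $2$ hidden layers and width $2d$, and the variant $f_{test}$ computes $s\cdot\mathds{1}_{[\bold{a},\bold{b})}(\bold{x})$ in $2$ hidden layers without any call to $f_{\mathrm{mult}}$, both \emph{exactly} whenever $\bold{x}$ is not in the $1/R$-shell. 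That is the step your plan is missing; with it, the only stage that needs $\lceil\log_4(M^{2p})\rceil$-deep multiplications is the polynomial evaluation, and the error bookkeeping is as you describe via Lemma~\ref{le3} and Lemma~\ref{le1a}. A minor additional point: the sup-norm bound on $|f_{net,\mathcal{P}_2}|$ is obtained in the paper by a direct estimate on the network's output (using boundedness of the parallel channels and the polynomial bound), not by appending a clipping layer.
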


\autoref{le5} shows already that neural networks with depth $c_{34} \cdot \log_4(M)$ and width $c_{35} \cdot M^d$ approximate $f(\bold{x})$ with a rate of order $1/M^{2p}$ in case that $\bold{x} \in \bigcup_{j \in \{1, \dots, M^{2d}\}} \left(C_{j,2}\right)_{1/M^{2p+2}}^0$. To proof this result we need some auxiliary networks, i.e. neural networks for simpler function classes. 

\subsubsection*{Auxiliary neural networks} To shift the input value in the next hidden layer or to synchronize the number of hidden layers for two networks we use the network $f_{id}: \R \to \R$,
\begin{align*}
f_{id}(z) = \sigma(z) - \sigma(-z) = z, \quad z \in \R 
\end{align*}
and 
\begin{align*}
f_{id}(\bold{x}) = \left(f_{id}\left(x^{(1)}\right), \dots, f_{id}\left(x^{(d)}\right)\right) = \left(x^{(1)}, \dots, x^{(d)}\right), \quad \bold{x} \in \Rd.
\end{align*}
Here we will use the abbreviations
\begin{align*}
&f_{id}^0(\bold{x}) = \bold{x}, \quad \bold{x} \in \R^d\\
&f_{id}^{t+1}(\bold{x}) = f_{id}\left(f_{id}^t(\bold{x})\right) = \bold{x}, \quad t \in \N_0, \bold{x} \in \R^d.
\end{align*}

The next lemma presents a network which returns approximately $xy$ for given input $x$ and $y$.
\begin{lemma}\label{le2}
Let $\sigma: \mathbb{R} \to \R$ be the ReLU activation function $\sigma(x) = \max\{x,0\}$. Then for any $R \in \N$ and any $a \geq 1$ a neural network
\begin{equation*}
f_{mult}(x,y) \in \mathcal{F}(R,18)
\end{equation*}
 exists such that
\begin{equation*}
|f_{mult}(x,y) - x \cdot y| \leq 2 \cdot a^2 \cdot 4^{-R}
\end{equation*}
holds for all $x, y \in [-a, a]$.
\end{lemma}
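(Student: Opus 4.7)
The approach is the now classical Yarotsky construction: first approximate the squaring map $t \mapsto t^2$ by a narrow deep ReLU network based on iterated sawtooth functions, then obtain multiplication from squaring by the polarization identity
\[
xy \;=\; \tfrac14\bigl((x+y)^2 - (x-y)^2\bigr).
\]
Rescaling $u = (x+y)/(4a), v = (x-y)/(4a) \in [-\tfrac12,\tfrac12]$ reduces the problem to approximating $s \mapsto s^2$ on a bounded interval, and the factor $a^2$ in the final error will come from undoing this rescaling.

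\textbf{Squaring subnetwork.} Define the hat function $g(s) = 2\sigma(s) - 4\sigma(s - \tfrac12) + 2\sigma(s-1)$, which equals $\min(2s, 2-2s)$ on $[0,1]$ and is computable by three ReLU units in a single layer. Let $g_1 = g$ and $g_{k+1} = g \circ g_k$; the function
\[
\phi_R(s) \;:=\; s - \sum_{k=1}^{R} \frac{g_k(s)}{2^{2k}}
\]
is the piecewise linear interpolation of $s^2$ at the dyadic nodes $i/2^R$, $i=0,\dots,2^R$, and convexity yields $\|\phi_R - s^2\|_{\infty,[0,1]} \le 2^{-2R-2}$. A ReLU network of depth $R$ and constant width realizes $\phi_R$ by carrying three channels through the layers: the current iterate $g_k(s)$ (updated via three units implementing $g$), the running partial sum, and an identity channel preserving $s$ (using $s = \sigma(s) - \sigma(-s)$).

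\textbf{Assembling $f_{mult}$.} I would run \emph{two} squaring towers in parallel, one on $|u|$ and one on $|v|$ (absolute values are produced in one layer via $\sigma(\cdot) + \sigma(-\cdot)$). Their outputs are combined with coefficients $\pm 4a^2$ as a final linear readout, giving
\[
f_{mult}(x,y) \;=\; 4a^2\bigl(\phi_R(|u|) - \phi_R(|v|)\bigr).
\]
Each tower needs a handful of channels (three ReLU units per layer to compute $g$, one for the accumulator, two for the identity pass-through), and the initial linear combinations forming $u,v$ and the final combination are absorbed in the affine maps between layers. A conservative count fits the whole construction into $18$ neurons per hidden layer and depth $R$. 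The error bound follows:
\[
|f_{mult}(x,y) - xy|
\;\le\; 4a^2 \bigl(|\phi_R(|u|) - u^2| + |\phi_R(|v|) - v^2|\bigr)
\;\le\; 4a^2 \cdot 2 \cdot 2^{-2R-2}
\;=\; 2a^2 \cdot 4^{-R}.
\]

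\textbf{Main obstacle.} The only nontrivial part is the bookkeeping for the width budget: one must lay out the two squaring towers, the identity lines for the accumulated partial sums, and the pass-through channels for $s$ inside each tower, so that the maximum number of neurons used in any single hidden layer is at most $18$. This comes down to an explicit design of one layer of the recursion, after which the depth-$R$ network is obtained by repeating that building block $R$ times and composing with the initial affine encoding (forming $|u|, |v|$) and the final affine readout (combining the two squaring outputs with coefficients $\pm 4a^2$).
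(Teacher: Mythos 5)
Your approach is the same as the paper's: the polarization identity $xy = \tfrac14\bigl((x+y)^2 - (x-y)^2\bigr)$ reduces multiplication to squaring, and squaring is handled by the Yarotsky iterated-sawtooth construction. The paper builds a squaring network $f_{sq}\in\mathcal{F}(R,9)$, applies it in parallel to $x+y$ and $x-y$ (each in $[-2a,2a]$), and combines by $\tfrac14\bigl(f_{sq}(x+y)-f_{sq}(x-y)\bigr)$, obtaining width $18$ and depth $R$; to reduce to $[0,1]$ it uses the \emph{affine} shift $z\mapsto z/(4a)+\tfrac12$, which costs no hidden layer, and compensates the resulting cross term by carrying an extra identity channel for $z$.

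You instead take absolute values. This avoids the cross term and its identity channel, which is a small simplification, but there is a gap in your depth count: you describe the formation of $|u|,|v|$ as an ``initial affine encoding,'' yet $|u|=\sigma(u)+\sigma(-u)$ is \emph{not} affine — it consumes a ReLU layer. As literally written, your architecture therefore has $R+1$ hidden layers, not $R$; and if you instead shrank the tower to $\phi_{R-1}$ to stay within $R$ layers, the interpolation error would become $2^{-2(R-1)-2}=4^{-R}$ and the final bound would be $8a^2\cdot 4^{-R}$, worse than claimed by a factor of $4$. The gap is fixable: since $|u|\le\tfrac12$, the first sawtooth iterate satisfies $g(|u|)=2|u|=2\sigma(u)+2\sigma(-u)$, so the absolute-value layer can double as the first iterate layer and the total depth is genuinely $R$ — but this observation needs to be made explicit, and it is exactly the kind of subtlety the paper sidesteps by using a layer-free affine reduction rather than absolute value. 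Your width accounting and the final error arithmetic ($4a^2\cdot 2\cdot 2^{-2R-2}=2a^2\cdot 4^{-R}$) are correct.
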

\begin{proof}
A similar result can be found in Lemma A.2 in the Supplement of \cite{Sch17}. A proof of our result is given in Supplement B.
\end{proof}


Let $\mathcal{P}_N$ be the linear span of all monomials of the form 
\begin{align*}
\prod_{k=1}^d \left(x^{(k)}\right)^{r_k}
\end{align*}
for some $r_1, \dots, r_d \in \N_0$, $r_1+\dots+r_d \leq N$. Then, $\mathcal{P}_N$ is a linear vector space of functions of dimension 
\begin{align*}
dim \ \mathcal{P}_N = \left|\left\{(r_0, \dots, r_d) \in \N_0^{d+1}: r_0+\dots+r_d = N \right\}\right| = \binom{d+N}{d}.
\end{align*}
In the next lemma, we construct a neural network that approximates functions of the class $\mathcal{P}_N$ multiplied by an additional factor. This modified form of polynomials is later needed in the construction of our network of the main result.
\begin{lemma}
\label{le3}
Let $m_1, \dots, m_{\binom{d+N}{d}}$ denote all monomials in $\mathcal{P}_N$ for some $N \in \N$. Let $r_1, \dots, r_{\binom{d+N}{d}} \in \R$, define 
\begin{align*}
p\left(\bold{x}, y_1, \dots, y_{\binom{d+N}{d}}\right) = \sum_{i=1}^{\binom{d+N}{d}} r_i \cdot y_i \cdot m_i(\bold{x}), \quad \bold{x} \in [-a,a]^d, y_i \in [-a,a]
\end{align*}
and set $\bar{r}(p) = \max_{i \in \left\{1, \dots, \binom{d+N}{d}\right\}} |r_i|$. Let $\sigma: \mathbb{R} \to \R$ be the ReLU activation function $\sigma(x) = \max\{x,0\}$. Then for any $a \geq 1$ and 
\begin{equation}
\label{le3eq1}
R \geq \log_4 (2 \cdot 4^{2 \cdot (N+1)} \cdot a^{2 \cdot (N+1)})
\end{equation}
 a neural network 
\begin{align*}
f_{p}\left(\bold{x}, y_1, \dots, y_{\binom{d+N}{d}}\right) \in \mathcal{F}(L,r)
\end{align*}
with $L=R \cdot \lceil\log_2(N+1)\rceil$ and $r =18 \cdot (N+1) \cdot \binom{d+N}{d}$ exists, such that
\begin{align*}
\left|f_{p}\left(\bold{x}, y_1, \dots, y_{\binom{d+N}{d}}\right) - p\left(\bold{x}, y_1, \dots, y_{\binom{d+N}{d}}\right) \right| \leq c_{36} \cdot \bar{r}(p) \cdot a^{4(N+1)} \cdot 4^{-R}
\end{align*}
for all $\bold{x} \in [-a,a]^d$, $y_1, \dots, y_{\binom{d+N}{d}} \in [-a,a]$, where $c_{36}$ depends on $d$ and $N$.
\end{lemma}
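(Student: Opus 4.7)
The plan is to represent each summand $r_i \cdot y_i \cdot m_i(\bold{x})$ as an ordered product of exactly $N+1$ factors drawn from $\{x^{(1)},\ldots,x^{(d)},y_i,1\}$ (padding with $1$'s whenever the degree of $m_i$ is strictly smaller than $N$), and to approximate each such product by a balanced binary tree of pairwise multiplications built from the network $f_{mult}$ of \autoref{le2}. The tree has depth $\lceil \log_2(N+1)\rceil$; at each level, every pending pair of operands is fed into a copy of $f_{mult}$ (of depth $R$ and width $18$), while the odd operand (if any) is forwarded unchanged via $f_{id}$. Hence a single summand is computed by a subnetwork of depth $R\cdot\lceil\log_2(N+1)\rceil$ and width at most $18\cdot\lceil(N+1)/2\rceil \leq 18\cdot(N+1)$. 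Running the $\binom{d+N}{d}$ subnetworks in parallel and forming the linear combination $\sum_i r_i\,(\cdot)$ in the output affine map gives a network in $\mathcal{F}(L,r)$ with the stated $L$ and $r$; note that the output layer is affine and requires no extra depth.

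The key work is the error analysis, propagated up the binary tree. Denote by $\tilde P_k$ the network's output at a node whose true value is the product $P_k$ of $k \leq N+1$ factors, each in $[-a,a]$, so that $|P_k|\leq a^k$. Using the inductive hypothesis $|\tilde P_k - P_k|\leq \varepsilon_k$ together with the multiplication bound $|f_{mult}(u,v)-uv|\leq 2\max(|u|,|v|)^2\cdot 4^{-R}$ from \autoref{le2}, I would show by the standard identity
\[
\tilde P_k\tilde P_{k'} - P_kP_{k'} = (\tilde P_k-P_k)\tilde P_{k'} + P_k(\tilde P_{k'} - P_{k'})
\]
that, provided $\varepsilon_k+\varepsilon_{k'}\leq a^{N+1}$ (so intermediate approximants stay in $[-2a^{N+1},2a^{N+1}]$),
\[
\varepsilon_{k+k'} \leq \varepsilon_k\cdot(a^{k'}+\varepsilon_{k'}) + a^k\varepsilon_{k'} + 2\cdot(2a^{N+1})^2\cdot 4^{-R}.
\]
Unrolling this recursion across the $\lceil\log_2(N+1)\rceil$ levels of the tree and using the hypothesis $R\geq \log_4(2\cdot 4^{2(N+1)}a^{2(N+1)})$ (which forces $8a^{2(N+1)}\cdot 4^{-R}\leq 1$ and therefore keeps the boundedness condition valid inductively) yields a product-approximation error of order $a^{4(N+1)}\cdot 4^{-R}$, with a constant depending only on $d$ and $N$ (through $N+1$ and the number of levels).

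Finally, summing $\binom{d+N}{d}$ such terms with coefficients $r_i$ in the output layer multiplies the pointwise error by at most $\bar r(p)\cdot\binom{d+N}{d}$, which is absorbed into $c_{36}=c_{36}(d,N)$ and produces the claimed bound
\[
\bigl|f_p(\bold{x},y_1,\ldots,y_{\binom{d+N}{d}}) - p(\bold{x},y_1,\ldots,y_{\binom{d+N}{d}})\bigr|
\leq c_{36}\cdot \bar r(p)\cdot a^{4(N+1)}\cdot 4^{-R}.
\]
The main obstacle is the book-keeping of the two-sided error recursion: one has to verify simultaneously that (i) the intermediate approximants remain within a region where the bound of \autoref{le2} is meaningful (forcing the condition on $R$) and (ii) the accumulated error after $\lceil\log_2(N+1)\rceil$ doublings does not pick up a factor worse than $a^{4(N+1)}$. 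Everything else (parallelizing the $\binom{d+N}{d}$ subnetworks, synchronizing their depths with $f_{id}$, and folding the coefficients $r_i$ into the final affine layer) is mechanical and gives exactly the depth $L$ and width $r$ claimed in the statement.
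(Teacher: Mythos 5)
Your proposal is correct and follows essentially the same route as the paper: for each monomial you pad to $N+1$ factors and build a balanced binary tree of pairwise multiplications (depth $R\cdot\lceil\log_2(N+1)\rceil$, width $18(N+1)$ per summand), run the $\binom{d+N}{d}$ subnetworks in parallel, and fold the coefficients $r_i$ into the output affine layer; the paper does exactly this, just packaging the tree and its error recursion into the separate product-network lemma (the one giving $f_{mult,d}$, applied with $d$ replaced by $N+1$) instead of inlining it. Your error recursion and invariants (tight signal bounds $a^k,a^{k'}$ at each node, with the coarse $2a^{N+1}$ only for the $f_{mult}$ error) do telescope to $O_N\bigl(a^{4(N+1)}4^{-R}\bigr)$ under the stated lower bound on $R$, matching the paper's calculation.
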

\begin{proof}
A similar result can be found in Lemma A.4 in the Supplement of \cite{Sch17}. For sake of completeness we provide a proof of our result in Supplement B.
\end{proof}
In the next lemma, we describe how to build a network that approximates the multidimensional indicator function and the multidimensional indicator function multiplied by an additional factor. In those networks we use the advantage of ReLU activation function, that it is zero in case of negative input. For instance, we use that for $R \in \N$
\begin{align*}
\sigma\left(1-R\cdot \sigma\left(x\right)\right) =\begin{cases} 
0\ &\mbox{for} \ x \geq \frac{1}{R}\\
1 \ &\mbox{for} \ x \leq 0.
\end{cases} 
\end{align*}
Analogously we have for any $|s| \leq R$ that
\begin{align*}
\sigma\left(f_{id}(s)-R^2\cdot \sigma\left(x\right)\right)+\sigma\left(-f_{id}(s)-R^2\cdot \sigma\left(x\right)\right) =\begin{cases} 
0\ &\mbox{for} \ x \geq \frac{1}{R}\\
s \ &\mbox{for} \ x \leq 0.
\end{cases} 
\end{align*}
\begin{lemma}
\label{le4}
Let $\sigma: \R \to \R$ be the ReLU activation function $\sigma(x) = \max\{x,0\}$. Let $R \in \N$. Let $\mathbf{a}, \mathbf{b} \in \Rd$ with
\begin{align*}
b^{(i)} - a^{(i)} \geq \frac{2}{R} \ \mbox{for all} \ i \in \{1, \dots, d\}
\end{align*}
and let
\begin{align*}
&K_{1/R} = \big\{\bold{x} \in \Rd: x^{(i)} \notin [a^{(i)}, a^{(i)}+1/R) \cup (b^{(i)} - 1/R, b^{(i)})\\
& \hspace*{8cm}  \mbox{for all} \ i \in \{1, \dots, d\}\big\}.
\end{align*}
a) Then the network
\begin{align*}
f_{ind, [\bold{a}, \bold{b})}(\bold{x}) &= \sigma\bigg(1-R \cdot \sum_{i=1}^d \left(\sigma\left(a^{(i)} + \frac{1}{R} - x^{(i)}\right)\right.\\
& \hspace*{3cm} \left. + \sigma\left(x^{(i)} - b^{(i)} + \frac{1}{R}\right) \right)\bigg)
\end{align*}
of the class $\mathcal{F}(2, 2d)$ satisfies
\begin{align*}
f_{ind, [\bold{a}, \bold{b})}(\bold{x}) = \mathds{1}_{ [\bold{a}, \bold{b})}(\bold{x})
\end{align*}
for $\bold{x} \in K_{1/R}$ and 
\begin{align*}
\left|f_{ind, [\bold{a}, \bold{b})}(\bold{x}) - \mathds{1}_{[\bold{a}, \bold{b})}(\mathbf{x})\right| \leq 1
\end{align*}
for $\bold{x} \in \Rd$.
\\
b) Let $|s| \leq R$. Then the network
\begin{align*}
f_{test}(\bold{x}, \mathbf{a}, \mathbf{b}, s) &= \sigma\bigg(f_{id}(s)-R^2 \cdot \sum_{i=1}^d \left(\sigma\left(a^{(i)} + \frac{1}{R} - x^{(i)}\right)\right.\\
& \left. \hspace*{4cm} + \sigma\left(x^{(i)} - b^{(i)} + \frac{1}{R}\right)\right)\bigg)\\
& \quad - \sigma\bigg(-f_{id}(s)-R^2 \cdot \sum_{i=1}^d \left(\sigma\left(a^{(i)} + \frac{1}{R} - x^{(i)}\right)\right.\\
& \left. \hspace*{4cm} + \sigma\left(x^{(i)} - b^{(i)} + \frac{1}{R}\right)\right)\bigg)
\end{align*}
of the class $\mathcal{F}(2, 2 \cdot (2d+2))$
satisfies
\begin{align*}
  f_{test}(\bold{x}, \mathbf{a}, \mathbf{b}, s)
  =
  s \cdot \mathds{1}_{[\bold{a}, \bold{b})}(\bold{x})
\end{align*}
for $\bold{x} \in K_{1/R}$ and 
\begin{align*}
  \left|f_{test}(\bold{x}, \mathbf{a}, \mathbf{b}, s) -
  s \cdot \mathds{1}_{[\bold{a}, \bold{b})}(\bold{x})\right| \leq |s|
\end{align*}
for $\bold{x} \in \Rd$.
\end{lemma}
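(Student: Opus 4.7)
The statement is essentially a direct verification of explicit formulas, so the plan is case analysis rather than anything deep. Both networks have the form ``one or two ReLU gates that vanish identically outside a narrow boundary strip but penalize heavily outside the cube'', and the key arithmetic fact is simply that each inner $\sigma$-term is $0$ inside $[\bold{a},\bold{b})$ and at least $1/R$ outside it (for inputs avoiding the boundary strip of width $1/R$).

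For part (a), I would start by observing that for $\bold{x}\in K_{1/R}$, each coordinate $x^{(i)}$ falls into exactly one of three regions: strictly below $a^{(i)}$, in $[a^{(i)}+1/R,\,b^{(i)}-1/R]$, or at least $b^{(i)}$. In the middle region both $\sigma(a^{(i)}+1/R-x^{(i)})$ and $\sigma(x^{(i)}-b^{(i)}+1/R)$ vanish; in the first region the first term exceeds $1/R$, and in the third region the second term is at least $1/R$. Using $b^{(i)}-a^{(i)}\ge 2/R$, at most one of the two terms is ever nonzero for a given $i$. Hence if $\bold{x}\in K_{1/R}\cap[\bold{a},\bold{b})$ the inner double sum is $0$ and $f_{ind}=\sigma(1)=1$, while if $\bold{x}\in K_{1/R}\setminus[\bold{a},\bold{b})$ the sum is $\ge 1/R$, so $R$ times it is $\ge 1$ and $f_{ind}=\sigma(\text{nonpositive})=0$. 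For arbitrary $\bold{x}\in\Rd$, since $f_{ind}\in[0,1]$ and $\mathds{1}_{[\bold{a},\bold{b})}\in\{0,1\}$, the crude bound $|f_{ind}-\mathds{1}_{[\bold{a},\bold{b})}|\le 1$ is immediate. Architecture check: hidden layer $1$ computes the $2d$ inner ReLUs as affine functions of $\bold{x}$, and hidden layer $2$ is a single ReLU of an affine combination of layer-$1$ outputs, padded by zero weights to match width $2d$, so $f_{ind}\in\mathcal{F}(2,2d)$.

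For part (b), let $T(\bold{x})$ denote the same inner double sum as in (a), so $T\ge 0$ always, $T=0$ on $K_{1/R}\cap[\bold{a},\bold{b})$, and $T\ge 1/R$ on $K_{1/R}\setminus[\bold{a},\bold{b})$. I would then use the identity $\sigma(u)-\sigma(-u)=u$: on $K_{1/R}\cap[\bold{a},\bold{b})$ the formula becomes $\sigma(f_{id}(s))-\sigma(-f_{id}(s))=s$, and on $K_{1/R}\setminus[\bold{a},\bold{b})$ we have $R^2T\ge R\ge|s|$, so both arguments $s-R^2T$ and $-s-R^2T$ are non-positive, both ReLUs vanish, and $f_{test}=0$. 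For the global bound, the main observation is that for any $s\in\R$ and $T\ge 0$ the value $\sigma(s-R^2T)-\sigma(-s-R^2T)$ always has the same sign as $s$ (or is zero) and has absolute value at most $|s|$: if $s\ge 0$ then $\sigma(-s-R^2T)=0$ and $\sigma(s-R^2T)\in[0,s]$; if $s<0$ the roles flip. Consequently on $[\bold{a},\bold{b})$ one has $|f_{test}-s|\le|s|$ and off $[\bold{a},\bold{b})$ one has $|f_{test}|\le|s|$, which is the claimed bound. For the architecture, hidden layer $1$ uses the $2d$ sigmas from (a) together with $\sigma(s)$ and $\sigma(-s)$ (so $2d+2$ neurons); hidden layer $2$ has the two outer ReLUs; the output is the difference. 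This fits comfortably into $\mathcal{F}(2, 2(2d+2))$.

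The only non-mechanical point is the book-keeping that ensures no two ``penalty'' terms for the same coordinate can simultaneously activate, which is where the hypothesis $b^{(i)}-a^{(i)}\ge 2/R$ is used; after that, everything reduces to checking which ReLU arguments are non-positive in each case. There is no real obstacle, just a careful verification of three sign cases per coordinate and a clean statement of the identity trick $\sigma(u)-\sigma(-u)=u$ combined with the bound $R^2T\ge|s|$ whenever $T\ge 1/R$ and $|s|\le R$.
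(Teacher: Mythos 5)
Your proof is correct and follows the paper's own argument essentially verbatim: on $K_{1/R}$ the inner ReLU sum either vanishes (when $\mathbf{x}\in[\mathbf{a},\mathbf{b})$, forcing $f_{ind,[\mathbf{a},\mathbf{b})}=1$ and $f_{test}=s$) or is at least $1/R$ (when $\mathbf{x}\notin[\mathbf{a},\mathbf{b})$, forcing the outer ReLU arguments to be nonpositive), and the global bounds reduce to $f_{ind,[\mathbf{a},\mathbf{b})}\in[0,1]$ and a sign check on the two outer ReLUs of $f_{test}$. Your closing remark that $b^{(i)}-a^{(i)}\ge 2/R$ prevents the two penalty ReLUs for a fixed $i$ from activating simultaneously is true but not actually used anywhere in the argument (it goes through equally well if both were active); the real role of that hypothesis is only to make the shrunken cube $[\mathbf{a}+\tfrac{1}{R}\mathbf{1},\mathbf{b}-\tfrac{1}{R}\mathbf{1}]$ nonempty.
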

\begin{proof}
a) For $\bold{x} \in [\bold{a}+1/R \cdot \bold{1}, \bold{b}-1/R \cdot \bold{1}]$ we have
\begin{align*}
a^{(i)} + \frac{1}{R} - x^{(i)} \leq 0 \ \mbox{and} \ x^{(i)} - b^{(i)} + \frac{1}{R} \leq 0 \ \mbox{for all} \ i \in \{1, \dots, d\},
\end{align*}
which implies
\begin{align*}
\sum_{i=1}^d \left(\sigma\left(a^{(i)} + \frac{1}{R} - x^{(i)}\right)+ \sigma\left(x^{(i)} - b^{(i)} + \frac{1}{R}\right)\right) =0
\end{align*}
and 
\begin{align*}
  f_{ind, [\bold{a}, \bold{b})}(\bold{x})
  = \sigma(1-0) = 1 = \mathds{1}_{ [\bold{a}, \bold{b})}(\bold{x}).
\end{align*}
For $\bold{x} \notin [\bold{a}, \bold{b})$ we know that there is a $j \in \{1, \dots, d\}$ which satisfies
\begin{align*}
x^{(j)} \leq a^{(j)} \ \mbox{or} \ x^{(j)} \geq b^{(j)}.
\end{align*}
This leads to 
\begin{align*}
1-R \cdot \sum_{i=1}^d \left(\sigma\left(a^{(i)} + \frac{1}{R} - x^{(i)}\right)+ \sigma\left(x^{(i)} - b^{(i)} + \frac{1}{R}\right)\right) \leq 0
\end{align*}
and therefore we have
\begin{align*}
f_{ind, [\bold{a}, \bold{b})}(\bold{x}) =0= \mathds{1}_{[\bold{a}, \bold{b})}(\bold{x}).
\end{align*}
For $\bold{x} \in \Rd$ we have 
%
\begin{align*}
\mathds{1}_{[\bold{a}, \bold{b})}(\bold{x}) \in \{0,1\}
\end{align*}
and
\begin{align*}
  0 \leq f_{ind, [\bold{a}, \bold{b})}(\bold{x})
  \leq 1.
\end{align*}
\\
b) For $\bold{x} \in [\bold{a} +1/R \cdot \bold{1}, \bold{b}-1/R \cdot \bold{1}]$ we have
\begin{align*}
a^{(i)} + \frac{1}{R} - x^{(i)} \leq 0 \ \mbox{and} \ x^{(i)} - b^{(i)} + \frac{1}{R} \leq 0 \ \mbox{for all} \ i \in \{1, \dots, d\},
\end{align*}
which implies
\begin{align*}
\sum_{i=1}^d \left(\sigma\left(a^{(i)} + \frac{1}{R} - x^{(i)}\right)+ \sigma\left(x^{(i)} - b^{(i)} + \frac{1}{R}\right)\right) =0
\end{align*}
and 
\begin{align*}
  f_{test}(\bold{x}, \mathbf{a}, \mathbf{b}, s)
= \sigma(f_{id}(s))- \sigma(- f_{id}(s)) = s 
  = s \cdot \mathds{1}_{[\bold{a}, \bold{b})}(\bold{x}).
\end{align*}
For $\bold{x} \notin [\bold{a}, \bold{b})$ we know that there is a $j \in \{1, \dots, d\}$ which satisfies
\begin{align*}
x^{(j)} \leq a^{(j)} \ \mbox{or} \ x^{(j)} \geq b^{(j)}. 
\end{align*}
In case $0 \leq s \leq R$
this leads to 
\begin{align*}
f_{id}(s)-R^2 \cdot \sum_{i=1}^d \left(\sigma\left(a^{(i)} + \frac{1}{R} - x^{(i)}\right)+ \sigma\left(x^{(i)} - b^{(i)} + \frac{1}{R}\right)\right) \leq 0
\end{align*}
and therefore we have
\begin{align*}
f_{test}(\bold{x}, \mathbf{a}, \mathbf{b}, s) =0= s \cdot \mathds{1}_{[\bold{a}, \bold{b})}(\bold{x}).
\end{align*}
Similarly the assertion follows in case $-R \leq s < 0$.
\\
\\
For $\bold{x} \in \Rd$ and $s \geq 0$ we have
\begin{align*}
f_{test}(\bold{x}, \mathbf{a}, \mathbf{b}, s) \in [0,s]
\end{align*}
and
\[
s \cdot \mathds{1}_{[\bold{a}, \bold{b})}(\bold{x}) \in \{0,s\},
  \]
  which implies
  \[
|  f_{test}(\bold{x}, \mathbf{a}, \mathbf{b}, s)
  -
  s \cdot \mathds{1}_{[\bold{a}, \bold{b})}(\bold{x})| \leq |s|.
  \]
  Similarly the assertion follows in case $\bold{x} \in \Rd$ and $s<0$.
\end{proof}
Using the networks of  \autoref{le3} and \autoref{le4} helps us to construct the recursive functions of $\phi_{1,3}$.
\begin{proof}[Proof of \autoref{le5}]
%
In a \textit{first step of the proof} we describe how the recursively defined function $\phi_{1,3}$ of \autoref{supple3} can be approximated by neural networks. In the construction we will use the network
\begin{align*}
f_{ind, [\bold{a}, \bold{b})}(\bold{x}) \in \mathcal{F}(2, 2d)
\end{align*}
of \autoref{le4}, which approximates the indicator function $\mathds{1}_{[\bold{a}, \bold{b})}(\bold{x})$ for some $\mathbf{a}, \mathbf{b} \in \Rd$ and $B_M \in \N$ with
\begin{align*}
b^{(i)} - a^{(i)} \geq \frac{2}{B_M} \ \mbox{for all} \ i \in \{1, \dots, d\}
\end{align*}
and the network
\begin{align*}
f_{test}(\bold{x}, \mathbf{a}, \mathbf{b}, s) \in \mathcal{F}(2, 2 \cdot (2d+2))
\end{align*}
of \autoref{le4}, which approximates
\begin{align*}
s \cdot \mathds{1}_{[\bold{a}, \bold{b})}(\bold{x}).
\end{align*}
 Observe that for $B_M \in \N$, $|s| \leq B_M$ and
\begin{align*}
x^{(i)} \notin \Big[a^{(i)}, a^{(i)} + \frac{1}{B_M}\Big) \cup \Big(b^{(i)} - \frac{1}{B_M}, b^{(i)} \Big) \ \mbox{for all} \ i \in \{1, \dots, d\}
\end{align*}
we have
\begin{align*}
f_{ind, [\bold{a}, \bold{b})}(\bold{x}) = \mathds{1}_{[\bold{a}, \bold{b})}(\bold{x})
\end{align*}
and
\begin{align*}
f_{test}(\bold{x}, \mathbf{a}, \mathbf{b}, s)(\bold{x}) = s \cdot \mathds{1}_{[\bold{a}, \bold{b})}(\bold{x}).
\end{align*}
Here we treat $B_M$ as $R$ in \autoref{le4}.
For some vector $\bold{v} \in \R^d$ it follows
\begin{align*}
&\bold{v} \cdot f_{ind, [\bold{a}, \bold{b})}(\bold{x}) = \left(v^{(1)} \cdot f_{ind, [\bold{a}, \bold{b})}(\bold{x}), \dots, v^{(d)} \cdot f_{ind, [\bold{a}, \bold{b})}(\bold{x})\right).
\end{align*}
To compute the final Taylor polynomial in $\phi_{1,3}$ we use the network
\begin{align*}
f_{p}\left(\bold{z}, y_1, \dots, y_{\binom{d+q}{d}}\right) \in \mathcal{F}\left(B_{M,p} \cdot \lceil \log_2(\max\{q+1, 2\}) \rceil, 18 \cdot (q+1) \cdot \binom{d+q}{d}\right)
\end{align*} 
from \autoref{le3} satisfying 
\begin{align}
\label{fpeq}
&\left|f_{p}\left(\bold{z}, y_1, \dots, y_{\binom{d+q}{q}}\right) - p\left(\bold{z}, y_1, \dots, y_{\binom{d+q}{q}}\right)\right| \notag\\
& \leq c_{36} \cdot \bar{r}(p)
\cdot \left(\max\left\{2a, \|f\|_{C^q([-a,a]^d)}\right\}\right)^{4(q+1)} \cdot 4^{-B_{M,p}}
\end{align}
for all $z^{(1)}, \dots, z^{(d)}, y_1, \dots, y_{\binom{d+q}{d}}$ contained in
\begin{align*}
\left[-\max\left\{2a, \|f\|_{C^q([-a,a]^d)}\right\}, \max\left\{2a, \|f\|_{C^q([-a,a]^d)}\right\}\right],
\end{align*}
where $B_{M,p} \in \N$ satisfying
\begin{align*}
B_{M,p} \geq \log_4 \left(2 \cdot 4^{2 \cdot (q+1)} \cdot \left(\max\left\{2a, \|f\|_{C^q([-a,a]^d)}\right\}\right)^{2 \cdot (q+1)}\right)
\end{align*}
is properly chosen (cf., (\ref{le3eq1})). Here we treat $B_{M,p}$ as $R$ in \autoref{le3}. In case that $q=0$ we use a polynomial of degree $1$ where the $r_i$'s of all coefficients greater than zero are chosen as zero. That is why we changed $\log_2(q+1)$ to $\log_2(\max\{q+1,2\})$ in the definition of $L$ in \autoref{le3}.
\\
\\
Each network of the recursion of $\phi_{1,3}$ is now computed by a neural network. To compute the values of $\bm{\phi}_{1,1}$, $\bm{\phi}_{2,1}$ and $\phi_{3, 1}^{(\bll, j)}$ we use for $j \in \{1, \dots, M^d\}$, $\mathbf{l} \in \N_0^d$ with $\|\bll\|_1 \leq q$ and $i \in \{1, \dots, d\}$ the networks
\[
\bm{\hat{\phi}}_{1,1} = \left(\hat{\phi}_{1,1}^{(1)}, \dots, \hat{\phi}_{1,1}^{(d)}\right) = f_{id}^2 (\bold{x}),
\]
\[
\bm{\hat{\phi}}_{2,1} = \left(\hat{\phi}_{2,1}^{(1)}, \dots, \hat{\phi}_{2,1}^{(d)}\right)=  \sum_{\bi \in \{1, \dots, M^d\}} (\bold{C}_{i,1})_{left} \cdot f_{ind, {C_{i,1}}}(\bold{x})
\]
and
\[
\hat{\phi}_{3,1}^{(\mathbf{l},j)} = \sum_{i \in \{1, \dots, M^d\}} (\partial^{\bll} f)\left((\bold{\tilde{C}}_{j,i})_{left}\right) \cdot f_{ind,{C_{i,1}}}(\bold{x}).
\]
To compute $\bm{\phi}_{1,2}$, $\bm{\phi}_{2,2}$ and $\phi_{3, 2}^{(\bll)}$
we use the networks
\[
\bm{\hat{\phi}}_{1,2}= \left(\hat{\phi}_{1,2}^{(1)}, \dots, \hat{\phi}_{1,2}^{(d)}\right) = f_{id}^{2} (\bm{\hat{\phi}}_{1,1}),
\]
\[
\hat{\phi}_{2,2}^{(i)} = \sum_{j=1}^{M^d} f_{test}\left(\bm{\hat{\phi}}_{1,1}, \bm{\hat{\phi}}_{2,1} + \bold{v}_j, \bm{\hat{\phi}}_{2,1} + \bold{v}_j+\frac{2a}{M^2} \cdot \mathbf{1}, \hat{\phi}_{2,1}^{(i)} + v_j^{(i)}\right)
\]
for $i \in \{1, \dots, d\}$ and
\begin{align*}
\bm{\hat{\phi}}_{2,2}=(\hat{\phi}_{2,2}^{(1)}, \dots, \hat{\phi}_{2,2}^{(d)})
\end{align*}
and
\begin{align}
\label{neur32}
&\hat{\phi}_{3, 2}^{(\bll)} = \sum_{j=1}^{M^d} f_{test}\left(\bm{\hat{\phi}}_{1,1}, \bm{\hat{\phi}}_{2,1} + \bold{v}_j, \bm{\hat{\phi}}_{2,1} + \bold{v}_j+\frac{2a}{M^2} \cdot \mathbf{1}, \hat{\phi}_{3, 1}^{(\bll, j)}\right).
\end{align}
%
Choose $\bll_1, \dots, \bll_{\binom{d+q}{d}}$ such that
\begin{align*}
\left\{\bll_1, \dots, \bll_{\binom{d+q}{d}}\right\} = \left\{(s_1, \dots, s_d) \in \N_0^d: s_1+\dots+s_d \leq q \right\}
\end{align*}
holds. 
The value of $\phi_{1,3}$ can then be computed by 
\begin{align}
\label{fp}
\hat{\phi}_{1,3} = f_p\left(\bold{z}, y_1, \dots, y_{\binom{d+q}{d}}\right),
\end{align}
where 
\begin{align*}
\bold{z}= \bm{\hat{\phi}}_{1,2} - \bm{\hat{\phi}}_{2,2}
\end{align*}
and 
\begin{align*}
y_v = \hat{\phi}_{3, 2}^{(\bll_v)} 
\end{align*}
for $v \in \left\{1, \dots, \binom{d+q}{d}\right\}$. The coefficients $r_1, \dots, r_{\binom{d+q}{d}}$ in \autoref{le3} are chosen as 
\begin{align*}
r_i = \frac{1}{\bll_i!}, \quad i \in \left\{1, \dots, \binom{d+q}{d}\right\}.
\end{align*}

It is easy to see, that the network $\hat{\phi}_{1,3}$ forms a composed network, where the networks $\bm{\hat{\phi}}_{1,1}$, $\bm{\hat{\phi}}_{2,1}$, $\hat{\phi}_{3, 1}^{(\bll_v, 1)}$, \dots, $\hat{\phi}_{3,1}^{(\bll_v, M^d)}$ and the networks $\bm{\hat{\phi}}_{1,2}, \bm{\hat{\phi}}_{2,2}, \hat{\phi}_{3, 2}^{(\bll_v)}$ $(v \in \{1, \dots, \binom{d+q}{d}\})$ are computed in parallel (i.e., in the same layers),
respectively. Thus we can conclude, that
\begin{align*}
(\bm{\hat{\phi}}_{1,1}, \bm{\hat{\phi}}_{2,1}, \hat{\phi}_{3, 1}^{(\bll_v,1)}, \dots, \hat{\phi}_{3, 1}^{(\bll_v, M^d)}) 
\end{align*}
needs $L_1=2$ hidden layers and $r_1=2d+d \cdot M^d \cdot 2d+ M^{d} \cdot \binom{d+q}{d} \cdot 2d$ neurons per layer in total. \hyperref[fprod]{Fig.\ref*{fneur1} } illustrates the described computation in an acyclic graph. Here one sees why we only need $c_{37} \cdot M^d$ instead of $c_{38} \cdot M^{2d}$ many neurons per layer to compute this network. In particular, we use that neural networks with width $M^d$ have $M^{2d}$ many connections between two neighboring layers. Thus every derivative of $f$ for every cube of $\P_2$ can be computed by choosing the derivatives as the weights in our network.
 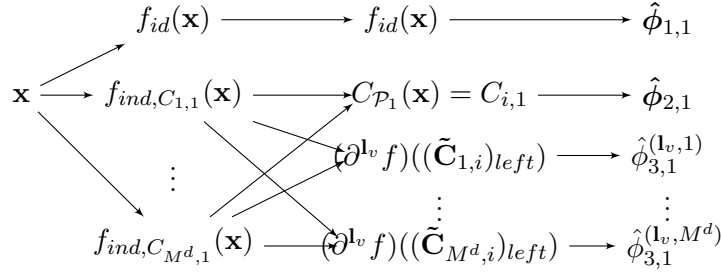
\begin{figure}[h!]
\centering
\pagestyle{empty}
\tikzstyle{line} = [draw, -latex']
 \tikzstyle{annot} = [text width=4em, text centered]

\begin{tikzpicture}[node distance = 2cm, auto]
    
    \node [] (x1a) {$\bold{x}$};
    
    \node [ right of=x1a, above of= x1a, yshift=-1cm] (fid) {$f_{id}(\bold{x})$};
        \node [ right of= x1a] (find1) {$f_{ind, C_{1,1}}(\bold{x})$};
                \node [ right of= x1a, below of = x1a, yshift=1cm] (find2) {$\vdots$};
         \node [ below of = find2, yshift=1cm] (find3) {$f_{ind, C_{M^d,1}}(\bold{x})$};
  \node [ right of=fid, xshift=1cm] (fid2) {$f_{id}(\bold{x})$};
  \node [ right of=find1, xshift=1.5cm] (C) {$C_{\P_1}(\bold{x})=C_{i,1}$};
    \node [ right of=find1, xshift=0.5cm] (C1) {};
  \node [below of=C, yshift=1.2cm,] (f1) {$(\partial^{\bll_v} f)((\bold{\tilde{C}}_{1,i})_{left})$};
   \node [below of=C, yshift=1.2cm, xshift=-1.1cm] (test1) {};
  \node [below of=f1, yshift=1.4cm] (f2) {$\vdots$};
   \node [right of = find3, xshift=0.3cm] (test2) {};
   \node [right of= find3, xshift=1.5cm] (f3) {$(\partial^{\bll_v} f)((\bold{\tilde{C}}_{M^d,i})_{left})$};
   
   \node[right of =fid2, xshift=1.5cm](neur1){$\bm{\hat{\phi}}_{1,1}$};
   \node[right of =C, xshift=1cm] (neur2){$\bm{\hat{\phi}}_{2,1}$};
    \node[right of =f1, xshift=1cm] (neur3){$\hat{\phi}_{3,1}^{(\bll_v, 1)}$};
    \node[right of =f2, xshift=1cm] (neur4){$\vdots$};
    \node[right of =f3, xshift=1.1cm] (neur5){$\hat{\phi}_{3,1}^{(\bll_v, M^d)}$};
%
       \path [line] (x1a) -- (fid);
        \path [line] (x1a) -- (find1);
         \path [line] (x1a) -- (find3);
        \path [line] (fid) -- (fid2);     
        \path [line] (find1) -- (C1);     
            \path [line] (find1) -- (test1);     
                \path [line] (find1) -- (test2);     
        \path [line] (find3) -- (C1);     
            \path [line] (find3) -- (test1);     
                \path [line] (find3) -- (test2);     
               \path [line] (fid2) -- (neur1);       
                 \path [line] (C) -- (neur2);       
                \path [line] (f1) -- (neur3);    
                        \path [line] (f3) -- (neur5);       
\end{tikzpicture}
\caption{Computation of $(\bm{\hat{\phi}}_{1,1}, \bm{\hat{\phi}}_{2,1}, \hat{\phi}_{3, 1}^{(\bll_v, 1)}, \dots, \hat{\phi}_{3, 1}^{(\bll_v, M^d)})$}
\label{fneur1}
\end{figure}
Furthermore we can conclude that
\begin{align*}
(\bm{\hat{\phi}}_{1,2}, \bm{\hat{\phi}}_{2,2}, \hat{\phi}_{3, 2}^{(\bll_v)}) 
\end{align*}
needs $L_2=L_1+2=4$ hidden layers and 
\begin{align*}
r_2&=\max\{r_1, 2d+d \cdot M^d \cdot 2 \cdot (2d+2)+\binom{d+q}{d} \cdot M^d \cdot 2 \cdot (2d+2)\}\\
&= 2d+\left(d +\binom{d+q}{d}\right)\cdot M^d \cdot 2 \cdot (2d+2).
\end{align*}
neurons per layer.
Finally we have that $\hat{\phi}_{1,3}$ lies in the class
\begin{align*}
\mathcal{F}\left(4+B_{M,p} \cdot \lceil \log_2(\max\{q+1, 2\}) \rceil, r\right)
\end{align*}
with 
\begin{align*}
r=\max\left\{r_2, 18 \cdot (q+1) \cdot \binom{d+q}{d}\right\}.
\end{align*}
Here we have used, that
\begin{align*}
\mathcal{F}(L, r') \subseteq \mathcal{F}(L, r)
\end{align*}
for $r' \leq r$.
We set
\begin{align*}
f_{net, \mathcal{P}_2}(\bold{x}) = \hat{\phi}_{1,3}.
\end{align*}
In a \textit{second step of the proof} we analyze the error of the network $f_{net, \mathcal{P}_2}(\bold{x})$ in case that 
\begin{align*}
B_M \geq M^{2p+2} 
\end{align*}
and 
\begin{align*}
\bold{x} \in \bigcup_{k \in \{1, \dots, M^{2d}\}} \left(C_{k,2}\right)_{1/M^{2p+2}}^0.
\end{align*}
From \autoref{le4} we can conclude that the networks $\bm{\hat{\phi}}_{1,1}$, $\bm{\hat{\phi}}_{2,1}$, $\hat{\phi}_{3, 1}^{(\bll_v, 1)}$ \dots, $\hat{\phi}_{3, 1}^{(\bll_v, M^d)}$ and the networks $\bm{\hat{\phi}}_{1,2}, \bm{\hat{\phi}}_{2,2}, \hat{\phi}_{3, 2}^{(\bll_v)}$ $(v \in \{1, \dots, \binom{d+q}{d}\})$ compute the corresponding functions $\bm{\phi}_{1,1}$, $\bm{\phi}_{2,1}$, $\phi_{3, 1}^{(\bll_v, 1)}$,\dots, $\phi_{3,1}^{(\bll_v, M^d)}$ and $\bm{\phi}_{1,2}, \bm{\phi}_{2,2}, \phi_{3,2}^{(\bll_v)}$ $(v \in \{1, \dots, \binom{d+q}{d}\})$ without an error. Thus it follows that 
\begin{align*}
\left|\bm{\hat{\phi}}_{1,2} - \bm{\hat{\phi}}_{2,2}\right| = \left|\bold{x}-\bm{\phi}_{2,2}\right| \leq 2a
\end{align*}
and 
\begin{align*}
\left|\hat{\phi}_{3, 2}^{(\bll_v)}\right| = \left|\phi_{3, 2}^{(\bll_v)}\right| \leq \|f\|_{C^{q}([-a,a]^d}.
\end{align*}
Therefore the input of $f_p$ in \eqref{fp} is contained in the interval where \eqref{fpeq} holds. By choosing
\begin{align*}
B_{M,p} = \lceil \log_4\left(M^{2p}\right)\rceil
\end{align*}
we get
\begin{align*}
  &
  \left|f_{net, \mathcal{P}_2}(\bold{x}) - T_{f,q,(\bold{C}_{\mathcal{P}_2}(\bold{x}))_{left}}(\bold{x})\right|
  =
  \left|\hat{\phi}_{1,3} - \phi_{1,3}\right|
   \\
& \leq c_{36} \cdot \left(\max\left\{2a, \|f\|_{C^{q}([-a,a]^d)}\right\}\right)^{4(q+1)} \cdot \frac{1}{M^{2p}}, 
\end{align*}
where we have used $\bar{r}(p) \leq 1$. 
This together with \autoref{le1a}
implies the first assertion of the lemma. The value of the network is then bounded by
\begin{align*}
\left|f_{net, \mathcal{P}_2}(\bold{x})\right| \leq &\left|f_{net, \mathcal{P}_2}(\bold{x}) - T_{f,q,(\bold{C}_{\mathcal{P}_2}(\bold{x}))_{left}}(\bold{x})\right| + \left|T_{f,q,(\bold{C}_{\mathcal{P}_2}(\bold{x}))_{left}}(\bold{x}) - f(\bold{x}) \right| 
\\&+ \left|f(\bold{x})\right| \\
\leq & 2 \cdot \max\left\{\sup_{\bold{x} \in [-a,a]^d} \left|f(\bold{x})\right|,1\right\},
\end{align*}
where we have used that
\begin{align*}
M^{2p} \geq c_{36} \cdot \left(\max\left\{2a,  \|f\|_{C^{q}([-a,a]^d)}\right\}\right)^{4(q+1)}
\end{align*}
and $M^{2p} \geq c_{32} \cdot (2 \cdot a \cdot d)^{2p} \cdot C$.

In a \textit{last step of the proof} we analyze the bound of $f_{net, \mathcal{P}_2}(\bold{x})$ in case that
\begin{align*}
\bold{x} \in \bigcup_{k \in \{1, \dots, M^{2d}\}} C_{k,2} \textbackslash (C_{k,2})_{1/M^{2p+2}}^0.
\end{align*}
Then the networks $f_{test}$ and $f_{ind, C_{i,1}}$ $(i \in \{1, \dots, M^d\})$ are not exact (see \autoref{le4}). For $\bold{x} \in C_{i,1}$ $(i \in \{1, \dots, M^d\})$ this implies
\begin{align*}
\left|\hat{\phi}_{3, 1}^{(\bll, j)}\right| \leq \left|(\partial^{\bll}f) \left((\bold{\tilde{C}}_{j, i})_{left}\right)\right| \quad (j \in \{1, \dots, M^d\})
\end{align*}
and 
\begin{align*}
\left|\bm{\hat{\phi}}_{2,1}^{(s)}\right| \leq a  \quad (s \in \{1, \dots, d\}).
\end{align*}
Since $f_{test}$ produces for at most one summand in \eqref{neur32} a value not equal to zero, this leads to 
\begin{align*}
\left|\hat{\phi}_{3, 2}^{(\bll)}\right| \leq \|f\|_{C^{q}([-a,a]^d)}
\end{align*}
and 
\begin{align*}
\left|\hat{\phi}_{2,2}^{(s)}\right| \leq a, \quad (s \in \{1, \dots, d\}).
\end{align*}
We conclude
\begin{eqnarray*}
\left|f_{net, \mathcal{P}_2}(\bold{x})\right| &\leq &\left|f_p\left(\bold{z},y_1, \dots, y_{\binom{d+q}{d}}\right) - p\left(\bold{z},y_1, \dots, y_{\binom{d+q}{d}}\right)\right|\\
&&+ \left|p\left(\bold{z},y_1, \dots, y_{\binom{d+q}{d}}\right)\right|\\
&\leq & 1 + \sum_{0 \leq \|\bll\|_1 \leq q} \frac{1}{\bll!} \cdot \|f\|_{C^{q}([-a,a]^d)} \cdot \left(2a\right)^{\|\bll\|_1}\\
&
\leq&
1 +
\|f\|_{C^{q}([-a,a]^d)}
\cdot
\left(
\sum_{l=0}^\infty \frac{(2a)^l}{l!}
\right)^d
\\
&
=&  1+ e^{2ad} \cdot \|f\|_{C^{q}([-a,a]^d)}.
\end{eqnarray*}

\end{proof}

\subsubsection{Key step 3: Approximating of $w_{\P_2}(\bold{x}) \cdot f(\bold{x})$ by wide neural networks }
In order to approximate $f(\bold{x})$ in supremum norm, we will use the
neural network $f_{net, \P_2}$ of \autoref{le5} to construct a network which approximates
\[
w_{\P_2}(\bold{x}) \cdot f(\bold{x}),
\]
where
\begin{equation}
  \label{w_vb}
w_{\P_2}(\bold{x}) = \prod_{j=1}^d \left(1- \frac{M^2}{a} \cdot \left|(C_{\mathcal{P}_{2}}(\bold{x}))_{left}^{(j)} + \frac{a}{M^2} - x^{(j)}\right|\right)_+
\end{equation}
is a linear tensorproduct B-spline
which takes its maximum value at the center of $C_{\P_{2}}(\bold{x})$, which
is nonzero in the inner part of $C_{\P_{2}}(\bold{x})$ and which
vanishes
outside of $C_{\P_{2}}(\bold{x})$.
It is easy to see that
$w_{\P_2}(\bold{x})$
is less than or equal to $1/M^{2p}$ in case that $\bold{x}$ is contained in
\[
\bigcup_{k \in \{1, \dots, M^{2d}\}}
C_{k, 2} \setminus
(C_{k, 2})_{1/M^{2p+2}}^0
.
\]

Since $w_{\P_2}(\bold{x})$ is close to zero close to the boundary
of $C_{\P_2}(\bold{x})$ it will be possible to construct this neural network
such that it approximates $w_{\P_2}(\bold{x}) \cdot f(\bold{x})$ in supremum norm.

\begin{lemma}
\label{le10}
Let $\sigma: \R \to \R$ be the ReLU activation function $\sigma(x) = \max\{x,0\}$. Let $1 \leq a < \infty$ and $M \in \N_0$ sufficiently large (independent of the size of $a$, but 
    \begin{align*}
      M^{2p} \geq
      \max\{c_{36}, c_{33}\} \cdot \left(\max\left\{2a, \|f\|_{C^q([-a,a]^d)}\right\}\right)^{4(q+1)}
    \end{align*}
    and
    \[
M^{2p} \geq c_{32} \cdot (2 \cdot a \cdot d)^{2p} \cdot C
    \]
    must hold).
Let $p=q+s$ for some $q \in \N_0$, $s \in (0,1]$ and let $C>0$.
    Let $f: \Rd \to \R$ be a $(p,C)$-smooth function and let $w_{\P_2}$ be defined as in
\eqref{w_vb}. Then there exists a network
\begin{align*}
f_{net} \in \mathcal{F}\left(L, r\right)
\end{align*}
with
\begin{align*}
L=5+\lceil \log_4(M^{2p})\rceil \cdot \left(\lceil \log_2(\max\{q, d\}+1\})\rceil+1\right)
\end{align*}
and
\begin{align*}
  r=& 64 \cdot \binom{d+q}{d} \cdot d^2 \cdot (q+1) \cdot M^d
\end{align*}
such that
\begin{align*}
&\left|f_{net}(\bold{x}) - w_{\P_2}(\bold{x}) \cdot f(\bold{x})\right| \leq c_{39} \cdot \left(\max\left\{2a,  \|f\|_{C^q([-a,a]^d)}\right\}\right)^{4(q+1)} \cdot \frac{1}{M^{2p}}
\end{align*}
holds for $\bold{x} \in [-a,a)^d$. 
\end{lemma}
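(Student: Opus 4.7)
The plan is to build $f_{net}$ from three sub-blocks, the crucial one being an \emph{exact} ReLU realization of $w_{\P_2}$ --- this is the factor that is small exactly where the \autoref{le5} approximation of $f$ is poor, so exactness of $w_{\P_2}$ will make the product small on the boundary strip without further work. Block (i) reuses the 4-layer recursive construction of \autoref{le5} to produce $\bm{\hat\phi}_{2,2}$ and the derivatives $\{\hat\phi_{3,2}^{(\bll)}\}_{\|\bll\|_1\leq q}$, which on the interior set $G := \bigcup_k (C_{k,2})_{1/M^{2p+2}}^0$ agree exactly with $(\bold{C}_{\P_2}(\bold{x}))_{left}$ and $(\partial^{\bll}f)((\bold{C}_{\P_2}(\bold{x}))_{left})$ and remain uniformly bounded elsewhere. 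Block (ii) computes exactly the one-dimensional B-spline factors $W(x^{(j)})$ of $w_{\P_2}(\bold{x}) = \prod_j W(x^{(j)})$ via the disjoint-support tent-sum
\[
W(t) \;=\; \sum_{k=1}^{M^2}\sigma\!\left(1 - \tfrac{M^2}{a}\sigma(t-c_k) - \tfrac{M^2}{a}\sigma(c_k - t)\right),
\]
where $c_k$ is the centre of the $k$-th cube of $\P_2$ in $[-a,a)$; this uses two hidden layers and is propagated through the remaining layers by $f_{id}$. Block (iii) combines blocks (i) and (ii) by a binary-tree polynomial approximation network patterned after \autoref{le3}, applied to
\[
P(\bold{y},\bold{u},\bm\eta) \;=\; \prod_{j=1}^d u_j \cdot \sum_{\bll:\|\bll\|_1\leq q}\frac{\eta_\bll}{\bll!}\,\bold{y}^{\bll},
\]
with inputs $\bold{y}=\bold{x}-\bm{\hat\phi}_{2,2}$, $u_j=W(x^{(j)})$, $\eta_\bll=\hat\phi_{3,2}^{(\bll)}$. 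Each monomial of $P$ has at most $q+d+1$ factors, so a multiplication tree of depth $\lceil\log_4(M^{2p})\rceil\cdot(\lceil\log_2(\max\{q,d\}+1)\rceil+1)$ built from the pairwise multipliers of \autoref{le2} produces an error of order $(\max\{2a,\|f\|_{C^q([-a,a]^d)}\})^{4(q+1)}M^{-2p}$ on the bounded input domain. Summing the layer counts matches the claimed depth $L$, and the widths of the three blocks collect to the claimed $r$ (dominated by block (i)'s $M^d$-scaling inherited from \autoref{le5}).

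To finish the error estimate one splits $[-a,a)^d$ into $G$ and its complement. On $G$ every recursive quantity of block (i) is exact, and the triangle inequality combined with $|w_{\P_2}|\leq 1$, \autoref{le1a} applied to the Taylor remainder, and the tree-approximation bound yield the claimed rate. On $G^c$ the exactness of block (ii) is decisive: if $\bold{x}$ lies within $1/M^{2p+2}$ of a $\P_2$-face in some coordinate $j_0$, then $W(x^{(j_0)})\leq 1/(aM^{2p})$, whence $w_{\P_2}(\bold{x})\leq 1/(aM^{2p})$; because $W(x^{(j_0)})$ enters the multiplication tree as one factor and the other inputs are uniformly bounded, the network output is itself $O(M^{-2p})$, matching the target $|w_{\P_2}(\bold{x})f(\bold{x})|$.

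The main obstacle is exactly this boundary-strip argument. A direct product of the \autoref{le5} network $f_{net,\P_2}$ with any \emph{indicator-based} approximation of $w_{\P_2}$ built from $\bm{\hat\phi}_{2,2}$ would fail: near $\P_2$-faces the cube-selection subnetwork of \autoref{le5} returns convex combinations of neighbouring left corners, so the induced hat value $(1 - \tfrac{M^2}{a}|\hat\phi_{2,2}^{(j_0)} + a/M^2 - x^{(j_0)}|)_+$ can remain of order one on $G^c$ and does not damp the Taylor sub-network. Replacing the piecewise definition of $w_{\P_2}$ by the disjoint-support tent-sum of block (ii), whose teeth coincide with the $\P_2$-grid and hence do not require the cube selector at all, makes $w_{\P_2}$ point-wise exact; its genuine smallness on the boundary strip is then inherited automatically by the entire network output, and no further ad hoc argument is needed.
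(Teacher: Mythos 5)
Your central idea --- replacing the paper's \emph{approximate} realization of $w_{\P_2}$ by an \emph{exact} ReLU realization, so that the genuine smallness of $w_{\P_2}$ on the boundary strip automatically damps the output --- is appealing, and if it fit within the width budget it would indeed dispense with the paper's $f_{check}$ truncation and the resulting three-region case analysis. However, the exactness costs more width than the lemma allows. The projection of $\P_2$ onto a single coordinate axis consists of $M^2$ intervals, so the disjoint-support tent sum in your block (ii),
\[
W(t) = \sum_{k=1}^{M^2}\sigma\left(1 - \frac{M^2}{a}\sigma(t-c_k) - \frac{M^2}{a}\sigma(c_k-t)\right),
\]
has $M^2$ summands and uses on the order of $M^2$ neurons per layer per coordinate, hence $\Theta(dM^2)$ overall. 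The lemma's width budget is $r = 64\binom{d+q}{d}d^2(q+1)M^d$, which scales only as $M^d$; for $d=1$ this is $64(q+1)^2M = O(M)$, so your block (ii) alone overflows the budget by a factor of $M$. Your assertion that the total width is ``dominated by block (i)'s $M^d$-scaling inherited from Lemma~\ref{le5}'' is therefore false for $d=1$ (and only equal in order for $d=2$). There is no way to compress the tent sum to $O(M^d)$ without re-introducing the two-scale cube selector of \autoref{le5}, which is precisely the inexact component your construction was designed to avoid near $\P_2$-faces.

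The paper's proof resolves this tension differently. It uses an approximate tent network $f_{w_{\P_2}}$ (\autoref{le8}) built on the same $O(M^d)$-width two-scale selector, accepts that it is wrong near $\P_2$-faces, and compensates with a separate ``check'' network $f_{check,\P_2}$ (\autoref{le9}) that detects the boundary strip. The Taylor subnetwork $f_{net,\P_2}$ is then passed through $\sigma(\,\cdot\, - B_{true}\,f_{check,\P_2})$-type truncations to form $f_{net,\P_2,true}$, which is forced to zero exactly where the tent approximation is unreliable; the final $f_{mult}(f_{w_{\P_2}}, f_{net,\P_2,true})$ is then uniformly accurate after a three-way case analysis (interior, deep boundary strip, transition shell), and every sub-block stays at $O(M^d)$ width. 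Unless you can realize the one-dimensional B-spline exactly within $O(M^d)$ width for every $d\geq 1$ --- which the flat $M^2$-term tent sum cannot --- the boundary-strip control must come from a check network rather than from exactness of $w_{\P_2}$.
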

To show this result we need again some auxiliary lemmata. \autoref{le8} below shows that each weight $w_{\P_2}(\bold{x})$ (see \eqref{w_vb}) can be approximated by a neural network in case that $\bold{x}$ does not lie close to the boundaries of a cube of $\P_2$. The values of $(C_{\mathcal{P}_{2}}(\bold{x}))_{left}^{(j)}$ are computed as described in $\bm{\hat{\phi}}_{2,2}$ in the proof of \autoref{le5}. Furthermore we use that 
\begin{eqnarray*}
&&\left(1-\frac{M^2}{a} \cdot \left|(C_{\mathcal{P}_{2}}(\bold{x}))_{left}^{(j)} + \frac{a}{M^2} -x^{(j)}\right| \right)_+\\
  &&=
  \left(
  \frac{M^2}{a} \cdot
  \left(
x^{(j)} - (C_{\mathcal{P}_{2}}(\bold{x}))_{left}^{(j)} 
  \right)
  \right)_+
  \\
  &&
  \quad
  -
  2 \cdot \left(
  \frac{M^2}{a} \cdot
  \left(
  x^{(j)} - (C_{\mathcal{P}_{2}}(\bold{x}))_{left}^{(j)}
  - \frac{a}{M^2}
  \right)
  \right)_+
 \\
  &&
  \quad
  +
  \left(
  \frac{M^2}{a} \cdot
  \left(
  x^{(j)} - (C_{\mathcal{P}_{2}}(\bold{x}))_{left}^{(j)}
  - \frac{2 \cdot a}{M^2}
  \right)
  \right)_+, \quad j \in \{1, \dots, d\}.
\end{eqnarray*}
Thus every factor can be easily computed by applying the ReLU activation function. The final product is approximated by using the following lemma. 

\begin{lemma}
\label{nle1}
Let $\sigma: \R \to \R$ be the ReLU activation function $\sigma(x) = \max\{x,0\}$. Then for any $R \in \N$ and any $a \geq 1$ a neural network 
\begin{align*}
f_{mult, d} \in \mathcal{F}(R \cdot \lceil \log_2(d) \rceil, 18d)
\end{align*}
exists such that
\begin{align*}
\left|f_{mult, d}(\bold{x}) - \prod_{i=1}^dx^{(i)}\right| \leq 4^{4d+1}\cdot a^{4d} \cdot d \cdot 4^{-R}
\end{align*}
holds for all $\bold{x} \in [-a,a]^d$. 
\end{lemma}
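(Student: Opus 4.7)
The plan is to realize $f_{mult,d}$ as a \emph{binary tree} of pairwise multiplication networks $f_{mult}$ from \autoref{le2}, organized in $L = \lceil \log_2 d \rceil$ levels of depth $R$ each. If $d$ is not a power of two, I would pad the input with $2^L - d$ copies of the constant $1$ (produced via bias terms in the first layer); since $1 \cdot z = z$, the product is unchanged. At level $k \in \{1,\dots,L\}$, the network consists of $\lceil d/2^k\rceil$ parallel copies of $f_{mult}$, each of width $18$, so the total width at level $k$ is at most $18 \lceil d/2^k\rceil \leq 18d$, and the depth of the entire construction is exactly $R \cdot \lceil \log_2 d\rceil$, placing $f_{mult,d}$ in $\mathcal{F}(R \cdot \lceil \log_2 d\rceil, 18d)$.

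To control the error, I would first track the bound $M_k$ on the absolute values of the network outputs at level $k$. Setting $M_0 = a$ and $M_k = 2^{2^k - 1} a^{2^k}$, one checks by induction (using \autoref{le2} with the parameter ``$a$'' set to $M_{k-1}$ at level $k$) that the level-$k$ outputs lie in $[-M_k, M_k]$, and each pairwise multiplication at level $k$ incurs an error of at most $2 M_{k-1}^2 \cdot 4^{-R} = M_k \cdot 4^{-R}$. Note that Lemma~\ref{le2} allows a different ``$a$'' parameter at each level without altering the depth $R$ or width $18$ of the sub-network, so this is cost-free.

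Next, letting $\epsilon_k$ be the maximum error at level $k$, the decomposition
\begin{equation*}
|f_{mult}(y_1,y_2) - \tilde y_1 \tilde y_2| \leq |f_{mult}(y_1,y_2) - y_1 y_2| + |y_1||\, y_2 - \tilde y_2| + |\tilde y_2|\,|y_1 - \tilde y_1|
\end{equation*}
(where $y_i$ are network values and $\tilde y_i$ the corresponding true products) yields the recursion
\begin{equation*}
\epsilon_{k+1} \leq M_{k+1} \cdot 4^{-R} + 2 M_k \cdot \epsilon_k.
\end{equation*}
A quick induction, exploiting $M_{k+1} = 2 M_k^2$, gives $\epsilon_k \leq k \cdot M_k \cdot 4^{-R}$. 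Plugging in $k = L$ and using $2^L \leq 2d$ produces
\begin{equation*}
\epsilon_L \leq L \cdot 2^{2d-1} \cdot a^{2d} \cdot 4^{-R} \leq d \cdot 4^{4d+1} \cdot a^{4d} \cdot 4^{-R},
\end{equation*}
where the last inequality uses $\log_2 d \leq d$, $2^{2d-1} = 4^d/2 \leq 4^{4d+1}$, and $a^{2d} \leq a^{4d}$ for $a \geq 1$. This matches the claimed bound.

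The main bookkeeping obstacle is ensuring that the growing magnitude bound $M_k$ remains consistent with the error bound used when invoking \autoref{le2} at each level; this is resolved by instantiating the ``$a$'' parameter of $f_{mult}$ level-by-level as $M_{k-1}$, rather than using a single global $a$, which would blow up the error. The padding to a power of two and the width bookkeeping are straightforward.
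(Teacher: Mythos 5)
Your proof is correct and follows essentially the same approach as the paper: pad the input to $2^{\lceil \log_2 d\rceil}$ coordinates with constant $1$'s, arrange $R$-deep pairwise multipliers $f_{mult}$ in a balanced binary tree, track a magnitude bound per level, and unroll the same error recursion $\epsilon_{k+1} \leq (\text{new } f_{mult}\text{ error}) + 2 (\text{magnitude}) \cdot \epsilon_k$. The final constant-chasing is also the same in spirit.

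The one genuine difference worth noting is how the range parameter of $f_{mult}$ is handled. The paper uses a \emph{single} instance of $f_{mult}$ built with the global range parameter $4^d a^d$ at every level, and then verifies inductively that intermediate outputs stay inside $[-4^d a^d, 4^d a^d]$; this verification needs the per-step error to be at most $1$, which the paper obtains from the implicit assumption $R \geq \log_4(2\cdot 4^{2d} a^{2d})$. You instead instantiate a fresh $f_{mult}$ at level $k$ with range parameter $M_{k-1} = 2^{2^{k-1}-1}a^{2^{k-1}}$, which is the tightest bound consistent with the level-$(k-1)$ outputs. Since Lemma~\ref{le2} gives the same architecture $\mathcal{F}(R,18)$ for every choice of its range parameter, this costs nothing, and it makes the magnitude induction go through for every $R \geq 1$ rather than only for $R$ above a $d$- and $a$-dependent threshold. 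So your argument is slightly cleaner as a proof of the lemma as literally stated (``for any $R \in \N$''), though the distinction is immaterial in the paper's application, where $R = \lceil \log_4(M^{2p})\rceil$ is always large. As a side effect your magnitude bound $M_k$ is tighter than the paper's $(4^{2^k}-1)a^{2^k}$, and you end up proving the sharper estimate $d\cdot 2^{2d-1}a^{2d}\cdot 4^{-R}$ before relaxing to match the stated constant.

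Two small bookkeeping remarks, neither a gap: (i) in the step $|f_{mult}(y_1,y_2)| \leq M_{k-1}^2(1+2\cdot 4^{-R}) \leq 2M_{k-1}^2 = M_k$ you are tacitly using $R \geq 1$, which is fine since $R \in \N$, but it is worth saying; (ii) your claim $L = \lceil\log_2 d\rceil \leq d$ fails only for $d = 1$, where the lemma is vacuous ($L=0$, no multiplications, identity network), so that edge case should be noted in passing.
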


\begin{proof}
A similar result can be found in Lemma A.3 in the Supplement of \cite{Sch17}. For sake of completeness we provide a proof of our result in Supplement B.
\end{proof}
\begin{lemma}
\label{le8}
Let $\sigma: \R \to \R$ be the ReLU activation function $\sigma(x) = \max\{x,0\}$. Let $1 \leq a < \infty$ and $M \geq 4^{4d+1} \cdot d$. Let $\mathcal{P}_{2}$
be the partition defined in \eqref{partition} and let $w_{\P_2}(\bold{x})$ be the corresponding weight defined by \eqref{w_vb}. Then there exists a neural network
\begin{align*}
f_{w_{\P_2}}(\bold{x}) \in \mathcal{F}\left(5+\lceil \log_4(M^{2p})\rceil \cdot \lceil \log_2(d)\rceil, r \right),
\end{align*}
with\begin{align*}
r=\max\left\{18d, 2d+d \cdot M^d \cdot 2 \cdot (2+2d)\right\}
\end{align*}
such that
\begin{align*}
\left|f_{w_{\P_2}}(\bold{x}) - w_{\P_2}(\bold{x})\right| \leq 4^{4d+1} \cdot d \cdot \frac{1}{M^{2p}}
\end{align*}
for $\bold{x} \in \bigcup_{i \in \{1, \dots, M^{2d}\}} (C_{i,2})_{1/M^{2p+2}}^0$ and 
\begin{align*}
|f_{w_{\P_2}}(\bold{x})| \leq 2
\end{align*}
for $\bold{x} \in [-a,a)^d$.
\end{lemma}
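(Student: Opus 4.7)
The plan is to mirror the recursive construction used in the proof of \autoref{le5}, but specialized so that only the ``position'' quantities $\bm{\hat\phi}_{1,2}$ and $\bm{\hat\phi}_{2,2}$ are kept (we do not need any derivative values here), and then to plug these into a short subnetwork that realizes each tent-shaped factor of $w_{\mathcal{P}_2}(\bold{x})$ followed by the multiplication network $f_{mult,d}$ from \autoref{nle1}.

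First, I would build the first four hidden layers exactly as in \autoref{le5}: two layers of indicator networks $f_{ind,C_{i,1}}$ from \autoref{le4} (combined with the identity propagation of $\bold{x}$) produce $\bm{\hat\phi}_{1,1}$ and $\bm{\hat\phi}_{2,1}$, and two further layers of $f_{test}$-blocks produce $\bm{\hat\phi}_{1,2}=\bold{x}$ and $\bm{\hat\phi}_{2,2}$. This uses $2d+d\cdot M^d\cdot 2\cdot(2+2d)$ neurons per layer and, for $\bold{x}\in\bigcup_i(C_{i,2})^0_{1/M^{2p+2}}$, reproduces $(\bold{C}_{\mathcal{P}_2}(\bold{x}))_{left}$ exactly by \autoref{le4}. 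In general (near the boundary of the partition) we only have $|\hat\phi_{2,2}^{(j)}|\le a$, which is what we need below.

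Next, I would use the algebraic identity
\[
\bigl(1-\tfrac{M^2}{a}|z-\tfrac{a}{M^2}|\bigr)_+ \;=\; \sigma\bigl(\tfrac{M^2}{a}z\bigr)-2\sigma\bigl(\tfrac{M^2}{a}(z-\tfrac{a}{M^2})\bigr)+\sigma\bigl(\tfrac{M^2}{a}(z-\tfrac{2a}{M^2})\bigr)
\]
applied with $z=x^{(j)}-\hat\phi_{2,2}^{(j)}$. This computes each tent-shaped factor
\[
F_j(\bold{x})=\bigl(1-\tfrac{M^2}{a}|\hat\phi_{2,2}^{(j)}+\tfrac{a}{M^2}-x^{(j)}|\bigr)_+
\]
in one additional hidden layer using $3d$ ReLU units. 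Crucially, the right-hand side is, as a function of $z$, a nonnegative triangle function on $[0,2a/M^2]$ and zero elsewhere, so $F_j(\bold{x})\in[0,1]$ for \emph{every} $\bold{x}\in[-a,a)^d$ regardless of whether $\hat\phi_{2,2}^{(j)}$ coincides with $(C_{\mathcal{P}_2}(\bold{x}))_{left}^{(j)}$. On the good set where $\bm{\hat\phi}_{2,2}$ is exact, $\prod_{j=1}^d F_j(\bold{x})=w_{\mathcal{P}_2}(\bold{x})$.

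Finally, I would feed $(F_1(\bold{x}),\dots,F_d(\bold{x}))\in[0,1]^d$ into $f_{mult,d}$ from \autoref{nle1}, chosen with parameter $R=\lceil\log_4(M^{2p})\rceil$ and with the domain bound $a=1$. This contributes $R\cdot\lceil\log_2 d\rceil$ further hidden layers and width $18d$, giving the total depth $5+\lceil\log_4(M^{2p})\rceil\cdot\lceil\log_2 d\rceil$ and the claimed width $\max\{18d,\,2d+d\cdot M^d\cdot 2\cdot(2+2d)\}$. By \autoref{nle1} the multiplication error is bounded by $4^{4d+1}\cdot d\cdot 4^{-R}\le 4^{4d+1}\cdot d\cdot M^{-2p}$, which on the good set yields the asserted approximation bound. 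For the global bound $|f_{w_{\mathcal{P}_2}}(\bold{x})|\le 2$, I would combine $\prod_{j=1}^d F_j(\bold{x})\in[0,1]$ with the same multiplication error, noting that for $M$ large enough the error is at most $1$.

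The main obstacle is the last point: verifying that the factors $F_j$ remain in $[0,1]$ near the cube boundaries, where $\bm{\hat\phi}_{2,2}$ is a convex combination of neighbouring left corners and hence not an exact cube corner. The algebraic triangle identity above settles this cleanly, because it produces a valid tent function centred at $\hat\phi_{2,2}^{(j)}+a/M^2$ for any real value of $\hat\phi_{2,2}^{(j)}$; otherwise one would have to insert an extra $\sigma(\cdot)$ clamp and an extra layer.
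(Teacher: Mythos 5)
Your proposal is correct and follows essentially the same construction as the paper's proof: reuse the first four layers of the \autoref{le5} construction to obtain $\bm{\hat\phi}_{1,2}$ and $\bm{\hat\phi}_{2,2}$, realize each tent factor via the ReLU triangle identity in a fifth layer, and multiply the factors with $f_{mult,d}$ from \autoref{nle1} using $R=\lceil\log_4(M^{2p})\rceil$ and domain bound $1$. Your observation that each $F_j\in[0,1]$ for every $\bold{x}\in[-a,a)^d$ is also precisely how the paper derives the global bound $|f_{w_{\P_2}}(\bold{x})|\le 2$ via the triangle inequality together with $M^{2p}\ge 4^{4d+1}\cdot d$.
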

\begin{proof}
The first four hidden layers of $f_{w_{\P_2}}$ compute the value of 
\begin{align*}
(\bold{C}_{\mathcal{P}_{2}}(\bold{x}))_{left}
\end{align*}
and shift the value of $\bold{x}$ in the next hidden layer, respectively. This can be done as described in $\bm{\hat{\phi}}_{1,2}$ and $\bm{\hat{\phi}}_{2,2}$ in the proof of \autoref{le5} with $2d+d \cdot M^d \cdot 2 \cdot (2+2d)$ neurons per layer.
The fifth hidden layer then computes the functions
\begin{eqnarray*}
&&\left(1-\frac{M^2}{a} \cdot \left|(C_{\mathcal{P}_{2}}(\bold{x}))_{left}^{(j)} + \frac{a}{M^2} -x^{(j)}\right| \right)_+\\
  &&=
  \left(
  \frac{M^2}{a} \cdot
  \left(
x^{(j)} - (C_{\mathcal{P}_{2}}(\bold{x}))_{left}^{(j)} 
  \right)
  \right)_+
  \\
  &&
  \quad
  -
  2 \cdot \left(
  \frac{M^2}{a} \cdot
  \left(
  x^{(j)} - (C_{\mathcal{P}_{2}}(\bold{x}))_{left}^{(j)}
  - \frac{a}{M^2}
  \right)
  \right)_+
 \\
  &&
  \quad
  +
  \left(
  \frac{M^2}{a} \cdot
  \left(
  x^{(j)} - (C_{\mathcal{P}_{2}}(\bold{x}))_{left}^{(j)}
  - \frac{2 \cdot a}{M^2}
  \right)
  \right)_+, \quad j \in \{1, \dots, d\},
\end{eqnarray*}
 using the networks
\begin{align*}
  f_{w_{{\P_2},j}}(\bold{x}) &= \sigma\left(
  \frac{M^2}{a} \cdot
  \left(
\hat{\phi}_{1,2}^{(j)} - \hat{\phi}_{2,2}^{(j)} 
  \right)
  \right)\\
  & \quad -2 \cdot \sigma\left(
\frac{M^2}{a} \cdot
  \left(
  \hat{\phi}_{1,2}^{(j)} - \hat{\phi}_{2,2}^{(j)}
  - \frac{a}{M^2}
  \right) 
  \right)\\
  & \quad + \sigma\left(
  \frac{M^2}{a} \cdot
  \left(
  \hat{\phi}_{1,2}^{(j)} - \hat{\phi}_{2,2}^{(j)}
  - \frac{2 \cdot a}{M^2}
  \right)
  \right)
\end{align*}
with $3d$ neurons. The product of $w_{\P_2,j}(\bold{x})$ $(j \in \{1, \dots, d\})$
can then be computed by the network $f_{mult,d}$ of \autoref{nle1}, where we choose $x^{(j)} = f_{w_{{\P_2},j}}(\bold{x})$. Finally we set
\begin{align*}
f_{w_{\P_2}}(\bold{x}) = f_{mult, d}\left(f_{w_{{\P_2},1}}(\bold{x}), \dots, f_{w_{{\P_2},d}}(\bold{x})\right).
\end{align*}
By choosing $R= \lceil \log_4(M^{2p})\rceil$ in \autoref{nle1}, this network lies in the class
\begin{align*}
\mathcal{F}\left(4+1+\lceil \log_4(M^{2p})\rceil \cdot \lceil\log_2(d)\rceil, \max\left\{18d,2d+ d \cdot M^d \cdot 2 \cdot (2+2d), 3d\right\}\right),
\end{align*}
and
according to \autoref{nle1} (where we set $a=1$) it
approximates $w_{\P_2}(\bold{x})$ with an error of size
\begin{align*}
4^{4d+1} \cdot d \cdot \frac{1}{M^{2p}}
\end{align*}
in case that $\bold{x}$ is contained in $\bigcup_{i \in \{1, \dots, M^{2d}\}} (C_{i,2})_{1/M^{2p+2}}^0$. Since \linebreak $|f_{w_{\P_2},j}(\bold{x})| \leq 1$ for $j \in \{1, \dots, d\}$ we can bound the value of the network using triangle inequality by
\begin{align*}
|f_{w_{\P_2}}(\bold{x})| \leq \left|f_{w_{\P_2}}(\bold{x}) - \prod_{j=1}^d f_{w_{{\P_2},j}}(\bold{x})\right| + \left|\prod_{j=1}^d f_{w_{{\P_2},j}}(\bold{x})\right| \leq 2
\end{align*}
for $\bold{x} \in [-a,a)^d$, where we have used that
\begin{align*}
M^{2p} \geq 4^{4d+1} \cdot d.
\end{align*}
\end{proof}

The networks $f_{net, \P_2}(\bold{x})$ of \autoref{le5} and $f_{w_{\P_2}}(\bold{x})$ of \autoref{le8} are only good approximation of $f(\bold{x})$ and $w_{\P_2}(\bold{x})$, respectively, in case that 
\begin{align*}
\bold{x} \in \bigcup_{k \in \{1, \dots, M^{2d}\}} (C_{k,2})_{1/M^{2p+2}}^0.
\end{align*}
The following network helps us to control the "large" approximation error in case that 
\begin{align}
\label{eq100}
\bold{x} \in \bigcup_{k \in \{1, \dots, M^{2d}\}} C_{k,2} \setminus (C_{k,2})_{1/M^{2p+2}}^0.
\end{align}
For instance, this network will be $1$ in case that $\bold{x}$ is contained in \eqref{eq100} and $0$ if $\bold{x}$ is contained in
\begin{align*}
\bigcup_{k \in \{1, \dots, M^{2d}\}} (C_{k,2})_{2/M^{2p+2}}^0.
\end{align*}
Thus we say that this network \textit{checks} the position of our input $\bold{x}$.
\\
\\
A straightforward way to approximate 
\begin{align*}
\mathds{1}_{\bigcup_{k \in \{1, \dots, M^{2d}\}} C_{k,2} \setminus (C_{k,2})_{1/M^{2p+2}}^0}(\bold{x})=1-\sum_{k \in \{1, \dots, M^{2d}\}} \mathds{1}_{(C_{k,2})^0_{1/M^{2p+2}}}(\bold{x})
\end{align*}
by a neural network would be to approximate each of the $M^{2d}$ indicator functions by networks, respectively. In this case the overall number of neurons per layer would be of order $M^{2d}$. That is why we again use a two scale approximation. In a first step we compute the position of $(C_{\P_1}(\bold{x}))_{left}$ as described in $\bm{\hat{\phi}}_{2,1}$ in \autoref{le5}. Let $i \in \{1, \dots, M^d\}$ such that $(C_{\P_1}(\bold{x})) = C_{i,1}$. In a second step we then only need to approximate
\begin{align*}
\mathds{1}_{\bigcup_{j \in \{1, \dots, M^d\}} \tilde{C}_{j,i} \setminus (\tilde{C}_{j,i})_{1/M^{2p+2}}^0}(\bold{x}) = 1- \sum_{j \in \{1, \dots, M^d\}} \mathds{1}_{(\tilde{C}_{j,i})_{1/M^{2p+2}}^0}(\bold{x})
\end{align*}
which can be done by $c_{40} \cdot M^d$ neurons per layer. \\
\\
Since $\bm{\hat{\phi}}_{2,1}$ is only a good approximation in case that
\begin{align*}
\bold{x} \notin \bigcup_{k \in \{1, \dots, M^d\}} C_{k,1} \setminus  (C_{k,1})_{1/M^{2p+2}}^0
\end{align*}
we further need to check whether $\bold{x}$ is close to the boundaries of the coarse grid of $\P_1$, i.e.
\begin{align*}
\bold{x} \in \bigcup_{k \in \{1, \dots, M^d\}} C_{k,1} \setminus  (C_{k,1})_{1/M^{2p+2}}^0.
\end{align*}
This can be done by computing 
\begin{align}
\mathds{1}_{\bigcup_{k \in \{1, \dots, M^d\}} C_{k,1} \setminus  (C_{k,1})_{1/M^{2p+2}}^0} = 1-\sum_{k \in \{1, \dots, M^d\}} \mathds{1}_{(C_{k,1})_{1/M^{2p+2}}^0}(\bold{x}),
\end{align}
with the networks of \autoref{le4} a). Combining this, our final network then computes
\begin{align}
\label{eq101}
1-\sigma(1-\mathds{1}_{\bigcup_{j \in \{1, \dots, M^{d}\}} \tilde{C}_{j,i}/(\tilde{C}_{j,i})^0_{1/M^{2p+2}}}(\bold{x}) - \mathds{1}_{\bigcup_{k \in \{1, \dots, M^{d}\}} C_{k,1}/(C_{k,1})^0_{1/M^{2p+2}}}(\bold{x})) \nonumber \\
\end{align}
where we will exploit the fact of ReLU activation function, that it is zero in case of negativ input. In particular the second indicator function in 
\eqref{eq101} is computed by using the networks $f_{ind}$ of \autoref{le4} a), while the first indicator function is approximated by using the networks $f_{test}$ of \autoref{le4} b).
\begin{lemma}
\label{le9}
Let $\sigma: \R \to \R$ be the ReLU activation function $\sigma(x) = \max\{x,0\}$. Let $1 \leq a < \infty$. Let
$\mathcal{P}_{1}$ and $\mathcal{P}_{2}$
be the partitions defined in \eqref{partition} and let $M \in \N$. Then there exists a neural network 
\begin{align*}
f_{check, \mathcal{P}_{2}}(\bold{x}) \in \mathcal{F}\left(5, 2d + (4d^2+4d) \cdot M^d\right)
\end{align*}
satisfying
\begin{align*}
  f_{check, \mathcal{P}_{2}}(\bold{x}) = \mathds{1}_{
    \bigcup_{i \in \{1, \dots, M^{2d}\}}
    C_{i,2} \setminus (C_{i,2})_{1/M^{2p+2}}^0
}(\bold{x})
\end{align*}
for $\bold{x} \notin \bigcup_{i \in \{1, \dots, M^{2d}\}} (C_{i,2})_{1/M^{2p+2}}^0 \textbackslash (C_{i,2})_{2/M^{2p+2}}^0$ and 
\begin{align*}
f_{check, \mathcal{P}_{2}}(\bold{x}) \in [0,1]
\end{align*}
for $\bold{x} \in [-a,a)^d$. 
\end{lemma}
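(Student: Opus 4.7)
The plan is to realise the two-scale checking scheme sketched just before the statement by running three sub-networks in parallel over four hidden layers and merging them in a fifth. The first two layers will (i) reuse the coarse locator $\bm{\hat{\phi}}_{2,1}$ from the proof of \autoref{le5}, which returns $(\bold{C}_{\mathcal{P}_{1}}(\bold{x}))_{left}$ exactly whenever $\bold{x}$ is $1/M^{2p+2}$--deep in its coarse cube, (ii) carry $\bold{x}$ forward via $2d$ identity neurons, and (iii) in parallel compute, for each $k \in \{1,\dots,M^d\}$, the indicator network $f_{ind,(C_{k,1})^0_{1/M^{2p+2}}}(\bold{x})$ of \autoref{le4}\textbf{a)} with $R=M^{2p+2}$. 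Summing these I obtain a coarse-scale boundary detector
\[
g_1(\bold{x}) \;=\; 1 - \sum_{k=1}^{M^d} f_{ind,(C_{k,1})^0_{1/M^{2p+2}}}(\bold{x}) \;\in\; [0,1],
\]
which by \autoref{le4}\textbf{a)} equals $0$ on $\bigcup_k (C_{k,1})^0_{2/M^{2p+2}}$ and $1$ on the complement of $\bigcup_k (C_{k,1})^0_{1/M^{2p+2}}$, with an intermediate value only on the thin transition strip. I then propagate $g_1$ through layers 3--4 with $f_{id}$.

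In layers 3--4 I will run the fine-scale check: for each $j\in\{1,\dots,M^d\}$, plug $\bold{x}$ and $\bm{\hat{\phi}}_{2,1}$ into $f_{test}(\bold{x},\bold{a}_j,\bold{b}_j,1)$ from \autoref{le4}\textbf{b)} with
\[
\bold{a}_j = \bm{\hat{\phi}}_{2,1} + \bold{v}_j + \tfrac{1}{M^{2p+2}}\bold{1},\qquad \bold{b}_j = \bm{\hat{\phi}}_{2,1} + \bold{v}_j + \bigl(\tfrac{2a}{M^2}-\tfrac{1}{M^{2p+2}}\bigr)\bold{1},
\]
and define $g_2(\bold{x}) = 1 - \sum_{j=1}^{M^d} f_{test}(\bold{x},\bold{a}_j,\bold{b}_j,1) \in [0,1]$. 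Whenever $\bm{\hat{\phi}}_{2,1}$ is exact (i.e., $\bold{x}\in (C_{i,1})^0_{1/M^{2p+2}}$ for some $i$), disjointness of the $\tilde{C}_{j,i}$ forces at most one summand to be nonzero, so $g_2=0$ on $(\tilde{C}_{j,i})^0_{2/M^{2p+2}}$ and $g_2=1$ outside $\bigcup_j(\tilde{C}_{j,i})^0_{1/M^{2p+2}}$. Finally, in layer~5 the output is
\[
f_{check,\mathcal{P}_{2}}(\bold{x}) \;=\; 1 - \sigma\bigl(1 - g_1(\bold{x}) - g_2(\bold{x})\bigr) \;\in\; [0,1].
\]

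The case check that closes the argument is: if $\bold{x}\in (C_{i,2})^0_{2/M^{2p+2}}$, then $\bold{x}$ is simultaneously $2/M^{2p+2}$--deep in its coarse and fine cube, so $g_1=g_2=0$ and the output is $0$; if $\bold{x}$ lies outside every $(C_{i,2})^0_{1/M^{2p+2}}$, then either $\bold{x}$ sits outside every $(C_{k,1})^0_{1/M^{2p+2}}$ (whence $g_1=1$), or $\bold{x}$ sits inside some $(C_{i,1})^0_{1/M^{2p+2}}$ (so $\bm{\hat{\phi}}_{2,1}$ is exact) but outside every $(\tilde{C}_{j,i})^0_{1/M^{2p+2}}$ (whence $g_2=1$); in either case the output is $1$. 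On the permitted transition band the outer $\sigma$--clip keeps the value in $[0,1]$, which is all that is required.

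The main obstacle will be the ``boundary-of-boundary'' edge case: when $\bold{x}$ sits in the coarse transition band, $\bm{\hat{\phi}}_{2,1}$ is no longer exact, so Stream C's output $g_2$ is unreliable. The trick is that in precisely this regime Stream B already forces $g_1 = 1$, whence $\sigma(1-g_1-g_2) = \sigma(-g_2) = 0$ (since $g_2\geq 0$) and the final output is $1$, regardless of what $g_2$ reports. A secondary, routine obstacle is bookkeeping widths and depths: the coarse locator contributes $2d + 2d^2 M^d$ in layers 1--2, the coarse check adds $2dM^d$ in the same layers, the fine check consumes $(4d+4)M^d$ in layers 3--4, and the identity channels for $\bold{x}$ and for $g_1$ are absorbed into the stated bound $2d+(4d^2+4d)M^d$, all within the five-layer budget.
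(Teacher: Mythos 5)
Your construction is the same as the paper's: the same two parallel indicator streams (coarse $\hat f_1$ via $f_{ind}$, fine $\hat f_2$ via $f_{test}$ fed by $\bm{\hat\phi}_{2,1}$), the same output layer $1-\sigma(1-g_1-g_2)$, the same three-case analysis (your A, B1, B2 correspond to the paper's cases 2, 1, 3), and the same width accounting. The only point you leave implicit is why $g_2 \geq 0$ in case B1: you establish that at most one $f_{test}$ summand is positive only ``whenever $\bm{\hat\phi}_{2,1}$ is exact,'' yet in B1 it need not be exact; the needed fact is that by \autoref{le4}b) each $f_{test}(\cdot,\bold{a},\bold{b},1)$ vanishes outside $[\bold{a},\bold{b})$ for \emph{any} $\bold{a},\bold{b}$, and the $M^d$ rectangles $[\bold{a}_j,\bold{b}_j)$ are always pairwise disjoint (being translates by distinct lattice vectors $\bold{v}_j$), so $\sum_j f_{test} \leq 1$ unconditionally — which is exactly the remark the paper makes in its first case.
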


\begin{proof}
Throughout the proof we assume that $i \in \{1, \dots, M^d\}$ satisfies $C_{\P_1}(\bold{x}) = C_{i,1}$. 
The above described two scale approximation proceeds as follows: In the first part of the network we check whether $\bold{x}$ is contained in
\begin{align*}
\bigcup_{k \in \{1, \dots, M^d\}} C_{k,1}\setminus (C_{k,1})_{1/M^{2p+2}}^0.
\end{align*}
Therefore our network approximates in the first two hidden layers the function
\begin{align*}
f_1(\bold{x}) &= \mathds{1}_{\bigcup_{i \in \{1, \dots, M^d\}} C_{i,1}\setminus (C_{i,1})_{1/M^{2p+2}}^0}(\bold{x})=1-\sum_{i \in \{1, \dots, M^d\}} \mathds{1}_{(C_{i,1})_{1/M^{2p+2}}^0}(\bold{x})
\end{align*}
by 
\begin{align*}
\hat{f}_1(\bold{x})= 1-\sum_{k \in \{1, \dots, M^d\}} f_{ind, (C_{k,1})_{1/M^{2p+2}}^0}(\bold{x}),
\end{align*}
where $f_{ind, (C_{k,1})_{1/M^{2p+2}}^0}(\bold{x})$ are the networks of \autoref{le4}, which need $2d$ neurons per layer, respectively. To approximate the indicator functions on the partition $\P_2$ only for the cubes $C_{k,2} \subset C_{\P_1}(\bold{x})$, we further need to compute the position of $(C_{\P_1}(\bold{x}))_{left}$. This can be done as described by the network $\bm{\hat{\phi}}_{2,1}$ in the proof of \autoref{le5} with $d \cdot M^d \cdot 2d$ neurons. To shift the value of $\bold{x}$ in the next hidden layers we further apply the network $f^2_{id}$,  which needs $2d$ neurons per layer. 
Analogous to \eqref{Aj} we can describe the cubes $(\tilde{C}_{j, i})_{1/M^{2p+2}}^0$ $(j \in \{1, \dots, M^d\})$ that are contained in the cube $C_{i,1}$, by 
\begin{align*}
&(\mathcal{A}^{(j)})_{1/M^{2p+2}}^0 = \left\{\bold{x} \in \Rd: -x^{(k)} + \phi_{2,1}^{(k)} + v_j^{(k)} +\frac{1}{M^{2p+2}}\leq 0 \right.\\
 & \left. \mbox{and} \ x^{(k)} - \phi_{2,1}^{(k)} - v_j^{(k)} - \frac{2a}{M^2} +\frac{1}{M^{2p+2}} < 0 \ \mbox{for all} \ k \in \{1, \dots, d\}\right\}.
\end{align*}
Then the function 
\begin{align*}
f_2(\bold{x}) = \mathds{1}_{\bigcup_{j \in \{1, \dots, M^{d}\}} \tilde{C}_{j,i} \setminus (\tilde{C}_{j,i})_{1/M^{2p+2}}^0}(\bold{x})=1-\sum_{j \in \{1, \dots, M^d\}} \mathds{1}_{(\tilde{C}_{j,i})_{1/M^{2p+2}}^0}(\bold{x})
\end{align*}
can be approximated by 
\begin{align*}
\hat{f}_2(\bold{x}) &= 1- \sum_{j \in \{1, \dots, M^d\}}f_{test}\left(f_{id}^2(\bold{x}), \bm{\hat{\phi}}_{2,1} +\bold{v}_j +\frac{1}{M^{2p+2}}\cdot \mathbf{1}, \right.\\
& \hspace*{2cm} \left. \bm{\hat{\phi}}_{2,1} + \bold{v}_j+\frac{2a}{M^2} \cdot \mathbf{1}-\frac{1}{M^{2p+2}}\cdot \mathbf{1}, 1\right),
\end{align*}
where $f_{test}$ is the network of \autoref{le4} b), which needs $2$ hidden layers and $2 \cdot (2d+2)$ neurons per layer.
Combining the networks $\hat{f}_1$ and $\hat{f}_2$ and using the characteristics of ReLU activation function, that is zero in case of negative input, finally let us approximate 
\[
\mathds{1}_{
    \bigcup_{k \in \{1, \dots, M^{2d}\}}
    C_{k,2} \setminus (C_{k,2})_{1/M^{2p+2}}^0
}(\bold{x})
\]
by
\begin{align*}
f_{check, \mathcal{P}_{2}}(\bold{x}) &= 1-\sigma\left(1-\hat{f}_2(\bold{x}) - f_{id}^2\left(\hat{f}_1(\bold{x})\right)\right)\\
&= 1-\sigma\left(\sum_{j \in \{1, \dots, M^d\}}f_{test}\left(f_{id}^2(\bold{x}), \bm{\hat{\phi}}_{2,1} +\bold{v}_j +\frac{1}{M^{2p+2}}\cdot \mathbf{1}, \right.\right.\\
& \hspace*{2cm} \left. \bm{\hat{\phi}}_{2,1} + \bold{v}_j+\frac{2a}{M^2} \cdot \mathbf{1}-\frac{1}{M^{2p+2}}\cdot \mathbf{1}, 1\right) \\
& \left. \hspace*{1.2cm} - f_{id}^2\left(1-\sum_{k \in \{1, \dots, M^d\}} f_{ind, (C_{k,1})_{1/M^{2p+2}}^0}(\bold{x})\right)\right).
\end{align*}
Now it is easy to see, that our whole network is contained in the network class
\begin{align*}
\mathcal{F}(5, r)
\end{align*}
with
\begin{align*}
r&=\max\{2d+d \cdot M^d \cdot 2d+M^d \cdot 2d, M^d \cdot 2 \cdot (2+2d)+2\}\\
&\leq 2d + (4d^2+4d) \cdot M^d.
\end{align*}
In the following we show that we have
\begin{align*}
f_{check, \P_2}(\bold{x})= \mathds{1}_{\bigcup_{k \in \{1, \dots, M^{2d}\}} C_{k,2} \setminus (C_{k,2})_{1/M^{2p+2}}^0}(\bold{x})
\end{align*}
for $\bold{x} \notin \bigcup_{k \in \{1, \dots, M^{2d}\}} (C_{k,2})_{1/M^{2p+2}}^0 \textbackslash (C_{k,2})_{2/M^{2p+2}}^0$.
Here we distinguish between \textit{three} cases. In our first case we assume that 
\begin{align*}
\bold{x} \notin \bigcup_{k \in \{1, \dots, M^d\}} (C_{k,1})_{1/M^{2p+2}}^0, 
\end{align*}
which also implies that
\begin{align*}
\bold{x} \notin \bigcup_{k \in \{1, \dots, M^{2d}\}} (C_{k, 2})_{1/M^{2p+2}}^0.
\end{align*}

%
Furthermore we get from 
\autoref{le4} that
$\hat{f}_1(\bold{x})=1$ from which we can conclude
\begin{align*}
&1-\hat{f}_2(\bold{x}) - f_{id}^2\left(\hat{f}_1(\bold{x})\right) \\
&=\sum_{j \in \{1, \dots, M^d\}}f_{test}\left(f_{id}^2(\bold{x}), \bm{\hat{\phi}}_{2,1} +\bold{v}_j +\frac{1}{M^{2p+2}}\cdot \mathbf{1}, \right.\\
& \hspace*{3cm} \left. \bm{\hat{\phi}}_{2,1} + \bold{v}_j+\frac{2a}{M^2} \cdot \mathbf{1}-\frac{1}{M^{2p+2}}\cdot \mathbf{1}, 1\right) -1\\
&\leq 0.
\end{align*}
Here we have used that each $f_{test}$ is contained in $[0,1]$ and that at most one $f_{test}$ in the sum is larger than $0$. Finally we get
\[
f_{check, \P_2}(\bold{x})=1-0=1=\mathds{1}_{
    \bigcup_{k \in \{1, \dots, M^{2d}\}}
    C_{k,2} \setminus (C_{k,2})_{1/M^{2p+2}}^0
}(\bold{x}).
\]
In our second case we assume that 
\begin{align}
\label{eq300}
\bold{x} \in \bigcup_{k \in \{1, \dots, M^d\}} (C_{k,1})_{1/M^{2p+2}}^0.
\end{align}
and 
\begin{align*}
\bold{x} \in \bigcup_{k \in \{1, \dots, M^{2d}\}} (C_{k,2})_{2/M^{2p+2}}^0.
\end{align*} 

 Then we have $\bm{\hat{\phi}}_{2,1} = (\bold{C}_{\P_1}(\bold{x}))_{left} = (\bold{C}_{i,1})_{left}$. 
  Furthermore we can conclude that
 \begin{align*}
   (\mathcal{A}^{(j)})_{1/M^{2p+2}}^0 &= \Bigg\{\bold{x} \in \Rd: -\hat{\phi}_{1,1}^{(k)} + \hat{\phi}_{2,1}^{(k)} + v_j^{(k)} +\frac{1}{M^{2p+2}}\leq 0 \\
 & \quad \quad \quad \mbox{and} \ \hat{\phi}_{1,1}^{(k)} - \hat{\phi}_{2,1}^{(k)} - v_j^{(k)} - \frac{2a}{M^2} +\frac{1}{M^{2p+2}} < 0 \\
&  \quad \quad \quad \mbox{for all} \ k \in \{1, \dots, d\}\Bigg\}\\
 &= (\tilde{C}_{j, i})_{1/M^{2p+2}}^0
\end{align*}
 for $j \in \{1, \dots, M^d\}$. Since we only have to show our assumption for  
\begin{align*}
\bold{x} \notin \bigcup_{k \in \{1, \dots, M^{2d}\}} (C_{k,2})_{1/M^{2p+2}}^0 \textbackslash (C_{k,2})_{2/M^{2p+2}}^0, 
\end{align*} 
%
%
%
%
we can conclude by \autoref{le4} that
\begin{eqnarray*}
  &&
  f_{test}\Big(\bm{\hat{\phi}}_{1,1}, \bm{\hat{\phi}}_{2,1} +\bold{v}_j +\frac{1}{M^{2p+2}} \cdot \mathbf{1}, \\
  &&
  \hspace*{2cm}
\bm{\hat{\phi}}_{2,1} + \bold{v}_j+\frac{2a}{M^2} \cdot \mathbf{1} -\frac{1}{M^{2p+2}}\cdot \mathbf{1}, 1 \Big)
\\
&&
=
\mathds{1}_{ (\tilde{C}_{j,i})_{1/M^{2p+2}}^0}(\bold{x})
\end{eqnarray*}
for all $j \in \{1, \dots, M^d \}$.
This implies
\[
\hat{f}_2(\bold{x}) = f_2(\bold{x}).
\]
Since
\begin{align*}
\bold{x} \in \bigcup_{k \in \{1, \dots, M^{2d}\}} (C_{k,2})_{2/M^{2p+2}}^0
\end{align*} 
we can further conclude that
\begin{align*}
\bold{x} \in \bigcup_{k \in \{1, \dots, M^{d}\}} (C_{k,1})_{2/M^{2p+2}}^0
\end{align*}
and it follows by \autoref{le4} that
\begin{align*}
\hat{f}_1(\bold{x}) = f_1(\bold{x})=0.
\end{align*}
Thus we have
\begin{align*}
1-\hat{f}_2(\bold{x}) -f_{id}^2(\hat{f}_1(\bold{x})) = 1-f_2(\bold{x}) = 1-0 = 1
\end{align*}
and 
\begin{align*}
f_{check, \P_2}(\bold{x}) = 1-1=0= \mathds{1}_{\bigcup_{k \in \{1, \dots, M^{2d}\}} C_{k,2} \setminus (C_{k,2})_{1/M^{2p+2}}^0}(\bold{x}).
\end{align*}
In our third case we assume \eqref{eq300}, but 
\begin{align*}
\bold{x} \in \bigcup_{k \in \{1, \dots, M^{2d}\}} (C_{k,2}) \textbackslash (C_{k,2})^0_{1/M^{2p+2}}, 
\end{align*}
which means that
\begin{align*}
\bold{x} \notin \bigcup_{k \in \{1, \dots, M^{2d}\}} (C_{k,2})^0_{1/M^{2p+2}}.
\end{align*}
In this case the approximation $\hat{f}_1(\bold{x})$ is not exact. By \autoref{le4} all values of $f_{ind, (C_{k,1})_{1/M^{2p+2}}^0}$ $(k \in \{1, \dots, M^d\})$ in the definition of $\hat{f}_1$ are contained in $[0,1]$. Thus we have
 \begin{align*}
 \hat{f}_1(\bold{x}) \in [0,1].
 \end{align*}
 Since \eqref{eq300} holds we further have
 \begin{align*}
 \hat{f}_2(\bold{x}) = f_2(\bold{x})
 \end{align*}
 as shown in the second case. Summarizing this we can conclude that
\begin{align*}
1-f_2(\bold{x}) - f_{id}^2(\hat{f}_1(\bold{x})) &= \sum_{j \in \{1, \dots, M^d\}} \mathds{1}_{(\tilde{C}_{j,i})_{1/M^{2p+2}}^0}(\bold{x}) - f_{id}^2(\hat{f}_1(\bold{x}))\\
&\leq 0-0 = 0.
\end{align*}
This implies
\begin{align*}
f_{check, \P_2}(\bold{x}) = 1-0 =1 =\mathds{1}_{\bigcup_{k \in \{1, \dots, M^{2d}\}} C_{k,2} \setminus (C_{k,2})_{1/M^{2p+2}}^0}(\bold{x}).
\end{align*}
By construction of the network 
\begin{align*}
f_{check, \mathcal{P}_{2}}(\bold{x}) \in [0,1]
\end{align*}
holds for $\bold{x} \in [-a,a)^d$. 
\end{proof}
Combining the networks of \autoref{le5}, \autoref{le8} and \autoref{le9} finally leads to the network which approximates $w_{\P_2}(\bold{x}) \cdot f(\bold{x})$ in supremum norm. The main idea is, that we can define a network
\begin{eqnarray*}
  f_{net, \mathcal{P}_{2}, true}(\bold{x}) &=& \sigma\left(f_{net, \mathcal{P}_{2}}(\bold{x}) - B_{true} \cdot f_{check, \mathcal{P}_{2}}(\bold{x})\right)\\
  &&
  - \sigma\left(-f_{net, \mathcal{P}_{2}}(\bold{x}) - B_{true} \cdot f_{check, \mathcal{P}_{2}}(\bold{x})\right), 
\end{eqnarray*}
 which will be equal to $f(\bold{x})$ as long as $\bold{x} \in \bigcup_{k \in \{1, \dots, M^{2d}\}} (C_{k,2})_{2/M^{2p+2}}^0$ and which will be $0$ for $\bold{x} \in  \bigcup_{k \in \{1, \dots, M^{2d}\}} C_{k,2} \textbackslash(C_{k,2})_{1/M^{2p+2}}^0$. Here we use \autoref{le9} and the bound of $f_{net, \P_2}(\bold{x})$ given in \autoref{le5} as value of $B_{true}$ in the following. Again we exploit the properties of ReLU activation function, that is zero in case of negative input. In particular, in case that $\bold{x}$ is close to the boundaries, the network $f_{check, \P_2}$ is $1$, thus $ f_{net, \mathcal{P}_{2}, true}(\bold{x})=0$.
Otherwise $f_{check, \mathcal{P}_{2}}(\bold{x})$ is zero and 
\begin{align*}
 f_{net, \mathcal{P}_{2}, true}(\bold{x}) = f_{net, \mathcal{P}_{2}}(\bold{x}).
\end{align*}
Finally we multiply this network by the network $f_{w_{\P_2}}(\bold{x})$ of \autoref{le8}.

\begin{proof}[Proof of \autoref{le10}]
  Let $f_{net, \mathcal{P}_{2}}$ be the network of \autoref{le5}
  and let $f_{check, \mathcal{P}_{2}}$ be the network of \autoref{le9}. By successively applying $f_{id}$ to the output of 
  one of these networks, we can achieve that both networks have the same number of hidden layers, i.e.
  \begin{align*}
  L=4+\max\left\{\lceil \log_4(M^{2p})\rceil \cdot \lceil\log_2(\max\{q+1, 2\})\rceil, 1\right\}.
  \end{align*}
  We set
\begin{eqnarray*}
  f_{net, \mathcal{P}_{2}, true}(\bold{x}) &=& \sigma\left(f_{net, \mathcal{P}_{2}}(\bold{x}) - B_{true} \cdot f_{check, \mathcal{P}_{2}}(\bold{x})\right)\\
  &&
  - \sigma\left(-f_{net, \mathcal{P}_{2}}(\bold{x}) - B_{true} \cdot f_{check, \mathcal{P}_{2}}(\bold{x})\right),
\end{eqnarray*}
where 
\begin{align*}
B_{true} = 2 \cdot e^{2ad} \cdot \max \left\{\|f\|_{C^q([-a,a]^d)},1\right\}.
\end{align*}
This network is contained in den network class $\mathcal{F}(L,r)$ with
\begin{align*}
L= 5+\lceil \log_4(M^{2p})\rceil \cdot \lceil \log_2(\max\{q+1, 2\})\rceil
\end{align*}
and
\begin{align*}
r=&\max\left\{\left(\binom{d+q}{d} + d\right) \cdot M^d \cdot 2 \cdot (2+2d)+2d, 18 \cdot (q+1) \cdot \binom{d+q}{d}\right\}\\
&+2d+(4d^2+4d) \cdot M^d.
\end{align*}
Due to the fact, that the value of $f_{net, \mathcal{P}_{2}}$ is bounded by $B_{true}$ according to \autoref{le5} and that $f_{check, \mathcal{P}_{2}}(\bold{x})$ is $1$ in case that $\bold{x}$ lies in
\begin{align}
\label{noset100}
\bigcup_{i \in \{1, \dots, M^{2d}\}}
    C_{i,2} \setminus (C_{i,2})_{1/M^{2p+2}}^0,
\end{align} 
the properties of ReLU activation function imply that the value of $f_{net, \mathcal{P}_{2}, true}(\bold{x})$ is zero in case that $\bold{x}$ is contained in \eqref{noset100}. 
Let $f_{w_{\P_2}}$ be the network of \autoref{le8}. To multiply the network $f_{net, \mathcal{P}_{2},true}(\bold{x})$ by $f_{w_{\P_2}}(\bold{x})$ we use the network 
\begin{align*}
f_{mult}(x,y) \in \mathcal{F}(\lceil\log_4(M^{2p})\rceil, 18)
\end{align*}
of \autoref{le2}, which satisfies 
\begin{align}
\label{appfmult}
\left|f_{mult}(x,y) - xy\right| \leq 8 \cdot \left(\max\left\{\|f\|_{\infty, [-a,a]^d} ,1\right\}\right)^2 \cdot \frac{1}{M^{2p}}
\end{align}
for all $x,y$ contained in
\begin{align*}
&\left[-2 \cdot \max\left\{\|f\|_{\infty, [-a,a]^d} ,1\right\},  2\cdot \max\left\{\|f\|_{\infty, [-a,a]^d},1\right\}\right].
\end{align*}
Here we have chosen $R=\lceil\log_4(M^{2p})\rceil$ in \autoref{le2}. 
\\
\\
By successively applying $f_{id}$ to the outputs of the networks $f_{w_{\P_2}}$ and $f_{net, \mathcal{P}_{2}, true}$, we can synchronize their depth such that both networks have 
\begin{align*}
L=5+\lceil \log_4(M^{2p})\rceil \cdot \left(\lceil \log_2(\max\{q, d\}+1\})\rceil\right)
\end{align*}
hidden layers. 
\\
\\
The final network is given by
\begin{align*}
f_{net}(\bold{x}) = f_{mult}\left(f_{w_{\P_2}}(\bold{x}), f_{net, \mathcal{P}_{2}, true}(\bold{x})\right) \quad (v \in \{1, \dots, 2^d\}).
\end{align*}
and the network is contained in the network class $\mathcal{F}(L,r)$ with
\begin{align*}
L=5+\lceil \log_4(M^{2p})\rceil \cdot \left(\lceil \log_2(\max\{q, d\}+1\})\rceil+1\right) 
\end{align*}
and
\begin{align*}
r=&\max\left\{\left(\binom{d+q}{d} + d\right) \cdot M^d \cdot 2 \cdot (2+2d)+2d, 18 \cdot (q+1) \cdot \binom{d+q}{d}\right\}\\
&+8M^d+2+\max\left\{18d, 2d+d \cdot M^d \cdot 2 \cdot (2+2d)\right\}\\
\leq & 64 \cdot \binom{d+q}{d} \cdot d^2 \cdot (q+1) \cdot M^d.
\end{align*}
In case that
\begin{align*}
\bold{x} \in \bigcup_{i \in \{1, \dots, M^{2d}\}} \left(C_{i,2}\right)_{2/M^{2p+2}}^0,
\end{align*}
the value of $\bold{x}$ is neither contained in
\begin{align}
\label{noset1}
\bigcup_{i \in \{1, \dots, M^{2d}\}}
C_{i,2} \setminus (C_{i,2})^0_{1/M^{2p+2}}
\end{align}
nor contained in
\begin{align}
\label{noset2}
\bigcup_{\bi \in \{1, \dots, M^{2d}\}}
(C_{i,2})^0_{1/M^{2p+2}} \setminus (C_{i,2})^0_{2/M^{2p+2}}
.
\end{align}
Thus the network $f_{w_{\P_2}}(\bold{x})$ approximates $w_{\P_2}(\bold{x})$ according to \autoref{le8} with an error of size
\begin{align}
\label{appwv}
4^{d+1} \cdot d \cdot \frac{1}{M^{2p}}
\end{align}
and $f_{net, \mathcal{P}_{2}}(\bold{x})$ approximates $f(\bold{x})$ according to \autoref{le5} with an error of size
\begin{align}
\label{200}
c_{33} \cdot \left(\max\left\{2a, \|f\|_{C^q([-a,a]^d)}\right\}\right)^{4(q+1)} \cdot \frac{1}{M^{2p}}.
\end{align}
Since $f_{check, \mathcal{P}_{2}}(\bold{x}) =0$, we have
\begin{align*}
  f_{net, \mathcal{P}_{2}, true}(\bold{x})
  =
  \sigma (f_{net, \mathcal{P}_{2}}(\bold{x})) - \sigma( - f_{net, \mathcal{P}_{2}}(\bold{x}))
  = f_{net, \mathcal{P}_{2}}(\bold{x}).
\end{align*}
Since $M^{2p} \geq 4^{d+1} \cdot d$, we can bound the value of $f_{w_{\P_2}}(\bold{x})$ using triangle inequality by
\begin{align*}
|f_{w_{\P_2}}(\bold{x})| \leq |f_{w_{\P_2}}(\bold{x}) - w_{\P_2}(\bold{x})| + |w_{\P_2}(\bold{x})| \leq 2.
\end{align*}
Furthermore we can bound 
\begin{align*}
|f_{net, \mathcal{P}_{2}}(\bold{x})| \leq |f_{net, \mathcal{P}_{2}}(\bold{x})-f(\bold{x})| + |f(\bold{x})| \leq 2 \cdot \|f\|_{\infty, [-a,a]^d}, 
\end{align*}
where we used $M^{2p} \geq c_{33} \cdot \left(\max\left\{2a, \|f\|_{C^q([-a,a]^d)}\right\}\right)^{4(q+1)}$. Thus both networks are contained in the interval, where \eqref{appfmult} holds. Using triangle inequality, this implies
\begin{align*}
&\left|f_{mult}\left(f_{w_{\P_2}}(\bold{x}), f_{net, \mathcal{P}_{2}, true}(\bold{x})\right) - w_{\P_2}(\bold{x}) \cdot f(\bold{x})\right|\\
& \leq \left|f_{mult}\left(f_{w_{\P_2}}(\bold{x}), f_{net, \mathcal{P}_{2}, true}(\bold{x})\right) - f_{w_{\P_2}}(\bold{x}) \cdot f_{net, \mathcal{P}_{2}}(\bold{x})\right|\\
& \quad + \left|f_{w_{\P_2}}(\bold{x}) \cdot f_{net, \mathcal{P}_{2}}(\bold{x}) - w_{\mathcal{P}_{2}}(\bold{x}) \cdot f_{net, \mathcal{P}_{2}}(\bold{x})\right|\\
  & \quad +
  \left|
  w_{\P_2}(\bold{x}) \cdot f_{net, \mathcal{P}_{2}}(\bold{x}) - w_{\P_2}(\bold{x}) \cdot f(\bold{x})
  \right|\\
& \leq c_{41} \cdot \left(\max\left\{2a, \|f\|_{C^q([-a,a]^d)}\right\}\right)^{4(q+1)} \cdot \frac{1}{M^{2p}}.
\end{align*}
In case that $\bold{x}$ is contained in \eqref{noset1} the approximation error of $f_{net, \mathcal{P}_{2}}$ is not of size $1/M^{2p}$. But the value of $f_{check, \mathcal{P}_{2}}(\bold{x})$ is $1$, such that  $f_{net, \mathcal{P}_{2}, true}$ is zero. Furthermore we have 
\begin{align*}
\left|f_{w_{\P_2}}(\bold{x})\right| &\leq 2.
\end{align*}
 Thus $f_{w_{\P_2}}(\bold{x})$ and $f_{net, \mathcal{P}_{2}, true}(\bold{x})$ are contained in the interval, where \eqref{appfmult} holds. Together with 
\begin{align*}
w_{\P_2}(\bold{x}) \leq \frac{1}{a \cdot M^{2p}}
\end{align*}
and the triangle inequality it follows 
\begin{align*}
&\left|f_{mult}\left(f_{w_{\P_2}}(\bold{x}), f_{net, \mathcal{P}_{2}, true}(\bold{x})\right) - w_{\P_2}(\bold{x}) \cdot f(\bold{x})\right|\\
& \leq \left|f_{mult}\left(f_{w_{\P_2}}(\bold{x}), f_{net, \mathcal{P}_{2}, true}(\bold{x})\right) - f_{w_{\P_2}}(\bold{x}) f_{net, \mathcal{P}_{2}, true}(\bold{x})\right|\\
& \quad + \left|f_{w_{\P_2}}(\bold{x}) \cdot f_{net, \mathcal{P}_{2}, true}(\bold{x}) - w_{\P_2}(\bold{x}) \cdot  f_{net, \mathcal{P}_{2}, true}(\bold{x})\right|\\
& \quad + \left|w_{\P_2}(\bold{x}) \cdot  f_{net, \mathcal{P}_{2}, true}(\bold{x}) - w_{\P_2}(\bold{x}) \cdot f(\bold{x})\right|\\ 
& \leq c_{42} \cdot \left(\max\{\|f\|_{\infty, [-a,a]^d} , 1\}\right)^2 \cdot \frac{1}{M^{2p}}.
\end{align*}
In case that $\bold{x}$ is in \eqref{noset2} but not in \eqref{noset1} the network $f_{net, \mathcal{P}_{2}}(\bold{x})$ approximates $f(\bold{x})$ with an error as in \eqref{200}. Furthermore, $f_{w_{\P_2}}(\bold{x}) \in [-2,2]$
approximates $w_{\P_2}(\bold{x})$ with an error as in \eqref{appwv}.
The value of $f_{check, \mathcal{P}_{2}}(\bold{x})$ is contained in the interval $[0,1]$, such that
\begin{align*}
\left|f_{net, \mathcal{P}_{2},true}(\bold{x})\right| \leq \left|f_{net, \mathcal{P}_{2}}(\bold{x})\right| \leq 2 \cdot \max\left\{\|f\|_{\infty, [-a,a]^d}, 1\right\}.
\end{align*}
Hence $f_{w_{\P_2}}(\bold{x})$ and $f_{net, \mathcal{P}_{2}, true}(\bold{x})$ are contained in the interval, where \eqref{appfmult} holds. Together with 
\begin{align*}
w_{\P_2}(\bold{x}) \leq \frac{2}{a \cdot M^{2p}}
\end{align*}
and the triangle inequality it follows again
\begin{align*}
&\left|f_{mult}\left(f_{w_{\P_2}}(\bold{x}), f_{net, \mathcal{P}_{2}, true}(\bold{x})\right) - w_{\P_2}(\bold{x}) \cdot f(\bold{x})\right|\\
& \leq \left|f_{mult}\left(f_{w_{\P_2}}(\bold{x}), f_{net, \mathcal{P}_{2}, true}(\bold{x})\right) - f_{w_{\P_2}}(\bold{x}) \cdot f_{net, \mathcal{P}_{2}, true}(\bold{x})\right|\\
& \quad + \left| f_{w_{\P_2}}(\bold{x}) \cdot f_{net, \mathcal{P}_{2}, true}(\bold{x}) - w_{\P_2}(\bold{x}) \cdot  f_{net, \mathcal{P}_{2}, true}(\bold{x})\right|\\
& \quad + \left|w_{\P_2}(\bold{x}) \cdot  f_{net, \mathcal{P}_{2}, true}(\bold{x})\right|\\
& \leq c_{43} \cdot \left(\max\{\|f\|_{\infty, [-a,a]^d} , 1\}\right)^2 \cdot \frac{1}{M^{2p}}.
\end{align*} 
\end{proof}

\subsubsection{Key step 4: Applying $f_{net}$ to slightly shifted partitions}
%
%
%
Finally we will use a finite sum of those networks of \autoref{le10} constructed to
$2^d$ slightly shifted versions of $\P_2$ in order to approximate
$f(\bold{x})$. This shows Theorem 2 a). 
\begin{proof}[Proof of Theorem 2 a)]
By increasing $a$, if necessary, it suffices to show that there exists a network $f_{net, wide}$ satisfying
\begin{align*}
&\sup_{x \in [-a/2,a/2]^d} \left|f(\bold{x}) - f_{net, wide}(\bold{x})\right|\\ 
&\leq c_{44} \cdot \left(\max\left\{2a, \|f\|_{C^q([-a,a]^d)}\right\}\right)^{4(q+1)} \cdot \frac{1}{M^{2p}}.
\end{align*}

Let $\mathcal{P}_1$ and $\mathcal{P}_2$ be the partitions defined as in \eqref{partition}. We set
\begin{align*}
\mathcal{P}_{1,1} = \mathcal{P}_1 \ \mbox{and} \ \mathcal{P}_{2,1} = \mathcal{P}_2
\end{align*}
and define for each $v \in \{2, \dots, 2^d\}$ partitions $\mathcal{P}_{1,v}$ and $\mathcal{P}_{2,v}$, which are modifications of $\mathcal{P}_{1,1}$ and $\mathcal{P}_{2,1}$ where at least one of the components it shifted by $a/M^2$. \begin{figure}[h!]
\centering
\begin{minipage}{0.20\textwidth}
\begin{tikzpicture}
\draw[step=0.5cm,color=gray, thick] (-1,-1) grid (1,1);
\end{tikzpicture}
\end{minipage}
\begin{minipage}{0.22\textwidth}
\begin{tikzpicture}
\draw[step=0.5cm,color=gray, thick] (-1,-1) grid (1,1);
\draw[step=0.5cm,color=gray, thick, xshift=0.25cm, dashed] (-1,-1) grid (1,1);
%
\end{tikzpicture}
\end{minipage}
\begin{minipage}[c]{0.20\textwidth}
\begin{tikzpicture}
\draw[step=0.5cm,color=gray, thick, yshift=0.5cm] (-1,-1) grid (1,1);
\draw[step=0.5cm,color=gray, thick, yshift=0.75cm, dashed] (-1,-1) grid (1,1);
%
\end{tikzpicture}
\end{minipage}
\begin{minipage}{0.20\textwidth}
\begin{tikzpicture}
\draw[step=0.5cm,color=gray, thick, yshift=0.25cm] (-1,-1) grid (1,1);
\draw[step=0.5cm,color=gray, thick, yshift=0.5cm, xshift=0.25cm, dashed] (-1,-1) grid (1,1);
\end{tikzpicture}
\end{minipage}
\caption{$2^2$ different partitions in the case $d=2$}
\label{fig8}
\end{figure}
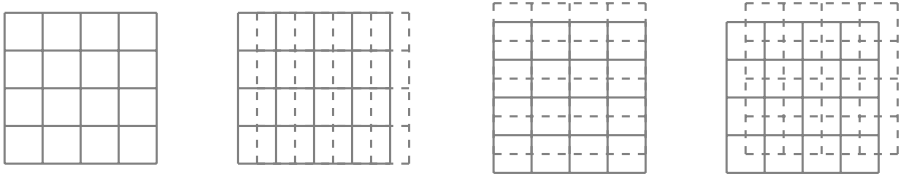
The idea is illustrated for the case $d=2$ in \hyperref[fig8]{Fig.\ref*{fig8} }. Here one sees, that
for $d=2$
there exist $2^2=4$ different partitions, if we shift our partition along at least one component by the same additional distance. We denote by $C_{k, 2,v}$ the corresponding cubes of the partition $\mathcal{P}_{2,v}$ $(k \in \{1, \dots, M^{2d}\})$.
\\
\\
The idea of the proof of Theorem 2 a) is to compute a linear combination of networks $f_{net, \mathcal{P}_{2,1}}, \dots, f_{net, \mathcal{P}_{2,2^d}}$ of \autoref{le5} (where the $\mathcal{P}_{2,v}$ are treated as $\mathcal{P}_2$ in \autoref{le5}, respectively). 
To avoid that the approximation error of the networks increases close to the boundaries of some cube of the partitions, we multiply each value of $f_{net, \mathcal{P}_{2,v}}$ with a weight 
\begin{align}
\label{w_v}
w_v(\bold{x}) = \prod_{j=1}^d \left(1- \frac{M^2}{a} \cdot \left|(C_{\mathcal{P}_{2,v}}(\bold{x}))_{left}^{(j)} + \frac{a}{M^2} - x^{(j)}\right|\right)_+.
\end{align}
It is easy to see that $w_v(\bold{x})$ is a linear tensorproduct B-spline
which takes its maximum value at the center of $C_{\P_{2,v}}(\bold{x})$, which
is nonzero in the inner part of $C_{\P_{2,v}}(\bold{x})$ and which
vanishes
outside of $C_{\P_{2,v}}(\bold{x})$. Consequently
we have $w_1(\bold{x})+ \dots + w_{2^d}(\bold{x}) = 1$
for $x \in [-a/2,a/2]^d$ (since $a/M^2 \leq a/2$).
Let $f_{net,1}, \dots, f_{net, 2^d}$ be the networks of \autoref{le10}
corresponding to the partitions
$\mathcal{P}_{1,v}$ and $\mathcal{P}_{2,v}$ $(v \in \{1, \dots, 2^d\})$, respectively. 
Since $[-a/2, a/2]^d \subset [-a+a/M^2, a)^d$ each $\P_{1,v}$ and $\P_{2,v}$ form a partition of a set which contains $[-a/2,a/2]^d$ and the error bounds of \autoref{le10} hold for each network $f_{net, v}$ on $[-a/2,a/2]^d$.
We set
\begin{align*}
f_{net, wide}(\bold{x}) = \sum_{v=1}^{2^d} f_{net,v}(\bold{x}).
\end{align*}
Using \autoref{le10} it is easy to see that this network is contained in the network class $\mathcal{F}(L, r)$ with
\begin{align*}
L=5+\lceil \log_4(M^{2p})\rceil \cdot \left(\lceil \log_2(\max\{q,d\} +1)\rceil+1\right) 
\end{align*}
and 
\begin{align*}
r=2^d \cdot 64 \cdot \binom{d+q}{d} \cdot d^2 \cdot (q+1) \cdot M^d.
\end{align*}
Since
\begin{align*}
f(\bold{x}) = \sum_{v=1}^{2^d} w_v(\bold{x}) \cdot f(\bold{x})
\end{align*}
it follows directly by \autoref{le10}
\begin{align*}
  &
  \left|f_{net, wide}(\bold{x}) - f(\bold{x})\right| \\
  &= \left|\sum_{v=1}^{2^d} f_{mult}\left(f_{w_v}(\bold{x}), f_{net, \mathcal{P}_{2,v}, true}(\bold{x})\right) - \sum_{v=1}^{2^d} w_v(\bold{x}) \cdot f(\bold{x})\right|\\
& \leq \sum_{v=1}^{2^d} \left|f_{mult}\left(f_{w_v}(\bold{x}), f_{net, \mathcal{P}_{2,v}, true}(\bold{x})\right) -  w_v(\bold{x}) \cdot f(\bold{x})\right|\\
& \leq c_{45} \cdot \left(\max\left\{2a, \|f\|_{C^q([-a,a]^d)}\right\}\right)^{4(q+1)} \cdot \frac{1}{M^{2p}}.
\end{align*}
\end{proof}
\subsubsection{Idea of the proof of Theorem 2 b)}
As in the proof of Theorem 2 a) the approximation of a piecewise Taylor polynomial is essential. This follows again by \autoref{le1a} since 
\begin{align*}
  \sup_{\bold{x} \in [-a,a)^d} \left|f(\bold{x}) - T_{f,q,((\bold{C}_{\P_2}(\bold{x}))_{left}}(\bold{x})\right| \leq
    c_{32} \cdot C \cdot (2 \cdot a \cdot d)^p \cdot \frac{1}{M^{2p}}.
\end{align*}
Again we denote by $\tilde{C}_{1,i}, \dots, \tilde{C}_{M^d,i}$ those cubes of $\P_2$ that are contained in $C_{i,1}$. But this time we order the cubes in such a way that we have $(\bold{\tilde{C}}_{1,i})_{left}=(\bold{C}_{i,1})_{left}$ and that
\begin{align}
\label{tv}
(\bold{\tilde{C}}_{k,i})_{left} = (\bold{\tilde{C}}_{k-1,i})_{left}+\bold{\tilde{v}}_k
\end{align}
holds for all $k \in \{2, \dots, M^d\}, i \in \{1, \dots, M^d\}$ and some vector $\bold{\tilde{v}}_k$ with entries in $\{0, 2a/M^2\}$ where exactly one entry is different to zero. Here the vector $\bold{\tilde{v}}_k$ describes the position of $(\bold{C}_{k,i})_{left}$ relative to $(\bold{C}_{k-1, i})_{left}$ and again we order the cubes in such a way that the position is independent of $i$.
\\
\\
To compute $T_{f,q,((\bold{C}_{\P_2}(\bold{x}))_{left}}(\bold{x})$ the very deep neural network of Theorem 2 b) proceeds in two steps: In a first step it computes
$(\bold{C}_{\P_1}(\bold{x}))_{left}$ and
the values of
\begin{align*}
(\partial^{\bll} f)((\bold{C}_{i,1})_{left})
\end{align*}
for each $\bll \in \N_0^d$ with $\|\bll\|_1 \leq q$ and suitably defined numbers 
\begin{align*}
b_{k,i}^{(\bll)} \in \Z, \quad |b_{k,i}^{(\bll)}| \leq e^d+1 \quad (k \in \{1, \dots, M^d\}),
\end{align*}
which depend on $C_{i,1}$ for $i \in \{1, \dots, M^d\}$.
Assume that $\bold{x} \in C_{i,1}$ for some $i \in \{1, \dots, M^d\}$. In the second step the neural network successively computes approximations 
\begin{align*}
(\partial^{\bll} \hat{f})((\bold{\tilde{C}}_{k,i})_{left}), \quad k \in \{1, \dots, M^d\}
\end{align*}
of 
\begin{align*}
(\partial^{\bll} f)((\bold{\tilde{C}}_{k,i})_{left})
\end{align*}
for each $\bll \in \N_0^d$ with $\|\bll\|_1 \leq q$. To do this we start with
\begin{align*}
(\partial^{\bll} \hat{f})((\bold{\tilde{C}}_{1,i})_{left}) = (\partial^{\bll} f)((\bold{C}_{\P_1}(\bold{x}))_{left}).
\end{align*} 
By construction of the first step and since $(\bold{\tilde{C}}_{1,i})_{left} = (\bold{C}_{\P_1}(\bold{x}))_{left}$ these estimates have error zero. As soon as we have computed the above estimates for some $k \in \{1, \dots, M^d-1\}$ we use the Taylor polynomials with these coefficients around $(\tilde{C}_{k,i})_{left}$ in order to compute 
\begin{align*}
&\sum_{\substack{\bj \in \N_0^d:\\ \|\bj\|_1 \leq q-\|\bll\|_1}} \frac{(\partial^{\bll+\bj} \hat{f})((\bold{\tilde{C}}_{k,i})_{left})}{\bj!} \cdot \left((\bold{\tilde{C}}_{k+1,i})_{left} - (\bold{\tilde{C}}_{k,i})_{left}\right)^{\bj}
\end{align*}
for $\bll \in \N_0^d$ with $\|\bll\|_1 \leq q$ and we define
\begin{align*}
(\partial^{\bll} \hat{f})((\bold{\tilde{C}}_{k+1,i})_{left}) =&\sum_{\substack{\bj \in \N_0^d:\\ \|\bj\|_1\leq q-\|\bll\|_1}} \frac{(\partial^{\bll+\bj}\hat{f})((\bold{\tilde{C}}_{k,i})_{left})}{\bj!} \cdot \left((\bold{\tilde{C}}_{k+1,i})_{left} - (\bold{\tilde{C}}_{k,i})_{left}\right)^{\bj}\\
&+b_{k,i}^{(\bll)} \cdot c_{46} \cdot \left(\frac{2a}{M^2}\right)^{p-\|\bll\|_1}
\end{align*}
where
\[
c_{46}= C \cdot d^p \cdot \max\{
c_{32}(q,d),c_{32}(q-1,d), \dots, c_{32}(0,d)
\}
\]
(and $c_{32}$ is the constant of Lemma \ref{le1a}).
Assume that
\begin{align*}
\left|(\partial^{\bll}\hat{f})((\bold{\tilde{C}}_{k,i})_{left}) - (\partial^{\bll} f) ((\bold{\tilde{C}}_{k,i})_{left})\right| \leq c_{46} \cdot \left(\frac{2a}{M^2}\right)^{p-\|\bll\|_1}
\end{align*}
holds for all $\bll \in \N_0^d$ with $\|\bll\|_1 \leq q$ (which holds by construction for $k=1$). Then 
\begin{align*}
&\Bigg|\sum_{\substack{\mathbf{j}\in \N_0^d:\\ 
\|\bj\|_1\leq q-\|\bll\|_1}} \frac{(\partial^{\bll+\bj} \hat{f})\left((\bold{\tilde{C}}_{k,i})_{left} \right)}{\bj!} \cdot  \left((\bold{\tilde{C}}_{k+1,i})_{left} - (\bold{\tilde{C}}_{k,i})_{left}\right)^{\bj}\\
& \quad  - (\partial^{\bll} f)\left((\bold{C}_{k+1,i})_{left}\right)\Bigg|\\
\leq & \Bigg|\sum_{\substack{\mathbf{j}\in \N_0^d:\\
\|\bj\|_1\leq q-\|\bll\|_1}} \frac{(\partial^{\bll+\bj} \hat{f})((\bold{\tilde{C}}_{k,i})_{left}) }{\bj!} \cdot \left((\bold{\tilde{C}}_{k+1,i})_{left} - (\bold{\tilde{C}}_{k,i})_{left}\right)^{\bj}
\\
& - \sum_{\substack{\mathbf{j}\in \N_0^d:\\ 
\|\bj\|_1\leq q-\|\bll\|_1}} \frac{(\partial^{\bll+\bj} f)((\bold{\tilde{C}}_{k,i})_{left})}{\bj!} \cdot \left((\bold{\tilde{C}}_{k+1,i})_{left} - (\bold{\tilde{C}}_{k,i})_{left}\right)^{\bj}\Bigg|
\\
&+ \Bigg| \sum_{\substack{\mathbf{j}\in \N_0^d:\\ 
\|\bj\|_1\leq q-\|\bll\|_1}} \frac{(\partial^{\bll+\bj} f)(\bold{\tilde{C}}_{k,i})_{left}) }{\bj!} \cdot \left((\bold{\tilde{C}}_{k+1,i})_{left} - (\bold{\tilde{C}}_{k,i})_{left}\right)^{\bj}\\
& \quad - (\partial^{\bll} f)((\bold{\tilde{C}}_{k+1,i})_{left})\Bigg|\\
\leq & \sum_{\substack{\mathbf{j}\in \N_0^d:\\ 
\|\bj\|_1\leq q-\|\bll\|_1}} \frac{1}{\bj!} \cdot c_{46} \cdot \left(\frac{2a}{M^2}\right)^{p-\|\bll+\bj\|_1} \cdot  \left(\frac{2a}{M^2}\right)^{\|\bj\|_1} + c_{46} \cdot \left(\frac{2a}{M^2}\right)^{p-\|\bll\|_1}\\
\leq & (c_{46} \cdot e^d + c_{46}) \cdot \left(\frac{2a}{M^2}\right)^{p-\|\bll\|_1}.
\end{align*}
This implies that we can choose $b_{k,i}^{(\bll)} \in \Z$ such that
\begin{align*}
|b_{k,i}^{(\bll)}| \leq e^d +1
\end{align*}
and
\begin{align*}
\left|(\partial^{\bll} \hat{f})((\bold{\tilde{C}}_{k+1,i})_{left}) - (\partial^{\bll} f)((\bold{\tilde{C}}_{k+1,i})_{left})\right| \leq c_{46} \cdot \left(\frac{2a}{M^2}\right)^{p-\|\bll\|_1}.
\end{align*}
Observe that in this way we have defined the coefficients $b_{k,i}^{(\bll)}$ for each cube $C_{i,1}$. We will encode these coefficients for each $i \in \{1, \dots, M^d\}$ and each $\bll \in \N_0^d$ with $\|\bll\|_1\leq q$ in the single number
\begin{align*}
b_i^{(\bll)} = \sum_{k=1}^{M^d-1}\left(b_{k,i}^{(\bll)} + \lceil e^d \rceil +2\right) \cdot (4+2 \lceil e^d \rceil)^{-k} \in [0,1].
\end{align*}

In a last step the neural network then computes
\begin{align}
\label{hatT}
\hat{T}_{f,q,(\bold{C}_{\P_2}(\bold{x}))_{left}}(\bold{x}) := \sum_{\substack{\bll \in \N_0^d:\\ \|\bll\|_1 \leq q}} \frac{(\partial^{\bll} \hat{f})((\bold{C}_{\P_2}(\bold{x}))_{left})}{\bll!} \cdot \left(\bold{x} - (\bold{C}_{\P_2}(\bold{x}))_{left}\right)^{\bll},
\end{align}
where we use that by construction we have $C_{\P_2}(\bold{x}) = \tilde{C}_{k,i}$ for some $k \in \{1, \dots, M^d\}$. Since 
\begin{align}
\label{2bhatT}
&\left|\hat{T}_{f,q,(\bold{C}_{\P_2}(\bold{x}))_{left}}(\bold{x})  - T_{f,q,(\bold{C}_{\P_2}(\bold{x}))_{left}}(\bold{x})\right|\notag\\
&\leq \sum_{\substack{\bll \in \N_0^d: \\ \|\bll\|_1 \leq q}} \frac{\left|(\partial^{\bll} \hat{f}-\partial^{\bll} f)((\bold{C}_{\P_2}(\bold{x}))_{left})\right|}{\bll!} \cdot \left|\bold{x} - (\bold{C}_{\P_2}(\bold{x}))_{left}\right|^{\bll} \notag\\
& \leq e^d \cdot c_{46} \cdot \left(\frac{2a}{M^2}\right)^p
\end{align}
the network approximating $\hat{T}_{f,q,(\bold{C}_{\P_2}(\bold{x}))_{left}}(\bold{x})$ is also a good approximation for $T_{f,q,(\bold{C}_{\P_2}(\bold{x}))_{left}}(\bold{x})$. 
\\
\\
As in the proof of Theorem 2 a) we can fomulate the following \textit{four} key steps for the proof of Theorem 2 b):
\begin{enumerate}
\item[1.] Compute $\hat{T}_{f,q,(\bold{C}_{\P_2}(\bold{x}))_{left}}(\bold{x})$ by recursively defined functions. 
\item[2.] Approximate the recursive functions by neural networks. The resulting network is a good approximation for $f(\bold{x})$ in case that
\[
\bold{x} \in \bigcup_{k \in \{1, \dots, M^{2d}\}} (C_{k,2})_{1/M^{2p+2}}^0.
\]
\item[3.] Approximate the function $w_{\P_2}(\bold{x}) \cdot f(\bold{x})$ by deep neural networks. 
\item[4.] Apply those networks to $2^d$ slightly shifted partitions of $\P_2$ to approximate $f(\bold{x})$ in supremum norm. 
\end{enumerate}

\subsubsection{Key step 1: A recursive definition of $\hat{T}_{f,q,(\bold{C}_{\P_2}(\bold{x}))_{left}}(\bold{x})$}
As in the proof of Theorem 2 a) we will use that we can compute $\hat{T}_{f,q,(\bold{C}_{\P_2}(\bold{x}))_{left}}(\bold{x})$ recursively. To do this, we set
\begin{align*}
&\bm{\phi}_{1,0} = \left(\phi_{1,0}^{(1)}, \dots, \phi_{1,0}^{(d)}\right) =\bold{x}\\
&\bm{\phi}_{2,0} = \left(\phi_{2,0}^{(1)}, \dots, \phi_{2,0}^{(d)}\right) =\mathbf{0}
\end{align*}
and 
\begin{align*}
\phi_{3, 0}^{(\bll)}=0 \ \mbox{and} \ \phi_{4, 0}^{(\bll)}=0
\end{align*}
for each $\bll \in \N_0^d$ with $\|\bll\|_1 \leq q$. For $j \in \{1, \dots, M^d\}$ set
\begin{align*}
\bm{\phi}_{1, j} = \bm{\phi}_{1, j-1},
\end{align*}
\begin{align*}
\bm{\phi}_{2, j} = (\bold{C}_{j,1})_{left} \cdot \mathds{1}_{C_{j,1}}(\bm{\phi}_{1, j-1}) + \bm{\phi}_{2, j-1},
\end{align*}
\begin{align*}
\phi_{3, j}^{(\bll)} = (\partial^{\bll} f)((\bold{C}_{j,1})_{left}) \cdot \mathds{1}_{C_{j,1}}(\bm{\phi}_{1, j-1}) +\phi_{3, j-1}^{(\bll)}
\end{align*}
and
\begin{align*}
\phi_{4, j}^{(\bll)} = b_j^{(\bll)} \cdot \mathds{1}_{C_{j,1}}(\bm{\phi}_{1, j-1}) + \phi_{4, j-1}^{(\bll)}.
\end{align*}
Furthermore set
\begin{align*}
\bm{\phi}_{1, M^d+j} = \bm{\phi}_{1, M^d+j-1}, \quad j \in \{1, \dots, M^d\},
\end{align*}
\begin{align*}
\bm{\phi}_{2, M^d+j} = \bm{\phi}_{2, M^d+j-1}+\bold{\tilde{v}}_{j+1}, 
\end{align*}
\begin{eqnarray*}
  \phi_{3,M^d+j}^{(\bll)}=&&
  \sum_{\substack{\mathbf{s} \in \N_0^d\\ \|\mathbf{s}\|_1 \leq q-\|\bll\|_1}} \frac{\phi_{3,M^d+j-1}^{(\bll+\mathbf{s})}}{\mathbf{s}!} \cdot
  \left(\bold{\tilde{v}}_{j+1}\right)^{\mathbf{s}}
  \\
  &&
  +
  \left(
  \lfloor
  (4+2 \cdot \lceil e^d \rceil) \cdot \phi_{4,M^d+j-1}^{(\bll)}
  \rfloor
- \lceil e^d \rceil-2
  \right)
  \cdot c_{46} \cdot \left(
\frac{2a}{M^2}
  \right)^{p-\|\bll\|_1},
\end{eqnarray*}
\[
\phi_{4, M^d+j}^{(\bll)}=  (4+2 \cdot \lceil e^d\rceil) \cdot \phi_{4, M^d+j-1}^{(\bll)}
-
\lfloor
  (4+2 \cdot \lceil e^d \rceil) \cdot \phi_{4, M^d+j-1}^{(\bll)}
  \rfloor
\]
for $j \in \{1, \dots, M^d-1\}$ and each $\bll \in \N_0^d$ with $\|\bll\|_1 \leq q$ and
\begin{align*}
\bm{\phi}_{5, M^d+j} = \mathds{1}_{\mathcal{A}^{(j)}}(\bm{\phi}_{1, M^d+j-1}) \cdot \bm{\phi}_{2, M^d+j-1} + \bm{\phi}_{5, M^d+j-1}
\end{align*}
and 
\begin{align*}
\phi_{6, M^d+j}^{(\bll)} = \mathds{1}_{\mathcal{A}^{(j)}}(\phi_{1, M^d+j-1}) \cdot \phi_{3, M^d+j-1}^{(\bll)} + \phi_{6, M^d+j-1}^{(\bll)}
\end{align*}
for $j \in \{1, \dots, M^d\}$, where 
\begin{align*}
\bm{\phi}_{5, M^d} = \left(\phi_{5, M^d}^{(1)}, \dots, \phi_{5, M^d}^{(d)}\right)=\mathbf{0},  \ \phi_{6, M^d}^{(\bll)} =0
\end{align*}
and
\begin{align*}
&\mathcal{A}^{(j)}=\left\{\bold{x} \in \R^d: -x^{(k)} + \phi_{2, M^d+j-1}^{(k)} \leq 0 \ \right. \notag\\
&
\quad \left.\mbox{und} \ x^{(k)} - \phi_{2, M^d+j-1}^{(k)}- \frac{2a}{M^2} < 0 \  \text{for all} \ k \in \{1, \dots, d\}\right\}.
\end{align*}
Finally define
\begin{align*}
\bm{\phi}_{1, 2M^d+1} = \sum_{\substack{\bll \in \N_0^d:\\ \|\bll\|_1\leq q}} &\frac{\phi_{6, 2M^d}^{(\bll)}}{\bll!} \cdot \left(\bm{\phi}_{1, 2M^d} - \bm{\phi}_{5, 2M^d}\right)^{\bll}.
\end{align*}
Our next lemma shows that this recursion computes $\hat{T}_{f,q,(C_{\P_2}(\bold{x}))_{left}}(\bold{x})$.
\begin{lemma}
\label{supple11}
  Let $p=q+s$ for some $q \in \N_0$ and $s \in (0,1]$, let $C > 0$ and $\bold{x} \in [-a,a)^d$. Let $f: \Rd \to \R$ be a $(p,C)$-smooth function and let $\hat{T}_{f,q,(C_{\mathcal{P}_2}(\bold{x}))_{left}}$ be defined as in \eqref{hatT}. Define $\bm{\phi}_{1, 2M^d+1}$ recursively as above. Then we have
  \[
\phi_{1, 2M^d+1}=\hat{T}_{f,q,(C_{\mathcal{P}_2}(\bold{x}))_{left}}(\bold{x}).
  \]
\end{lemma}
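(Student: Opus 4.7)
The strategy is to trace the recursion in three stages and show that at each stage every quantity equals exactly the value advertised in the construction sketched before the lemma. Fix $\bold{x} \in [-a,a)^d$ and let $i \in \{1,\dots,M^d\}$ be the unique index with $\bold{x} \in C_{i,1}$, and let $k \in \{1,\dots,M^d\}$ be the unique index with $\bold{x} \in \tilde{C}_{k,i}$, so that $C_{\mathcal{P}_2}(\bold{x}) = \tilde{C}_{k,i}$. The plan is to show successively that (i) at step $j=M^d$ the accumulators correctly store the information attached to the coarse cube $C_{i,1}$, (ii) between steps $M^d$ and $2M^d$ the recursion slides along the chain $\tilde{C}_{1,i}, \tilde{C}_{2,i}, \dots, \tilde{C}_{M^d,i}$ while propagating Taylor‐consistent derivative estimates, and (iii) the indicator selectors $\mathds{1}_{\mathcal{A}^{(j)}}$ together with the final polynomial evaluation produce $\hat{T}_{f,q,(\bold{C}_{\mathcal{P}_2}(\bold{x}))_{left}}(\bold{x})$.

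For stage (i), since the cubes $C_{1,1},\dots,C_{M^d,1}$ form a partition of $[-a,a)^d$, exactly one indicator $\mathds{1}_{C_{j,1}}(\bold{x})$ is nonzero as $j$ runs from $1$ to $M^d$, and it equals one precisely for $j=i$. Thus a straightforward induction on $j$ gives $\bm{\phi}_{1,M^d}=\bold{x}$, $\bm{\phi}_{2,M^d}=(\bold{C}_{i,1})_{left}$, $\phi_{3,M^d}^{(\bll)}=(\partial^{\bll}f)((\bold{C}_{i,1})_{left})$ and $\phi_{4,M^d}^{(\bll)}=b_i^{(\bll)}$ for every $\bll \in \N_0^d$ with $\|\bll\|_1 \leq q$. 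By the ordering \eqref{tildec} adapted in \eqref{tv}, we have $(\bold{\tilde{C}}_{1,i})_{left}=(\bold{C}_{i,1})_{left}$, so these initial values are exactly the derivative approximations $(\partial^{\bll}\hat f)((\bold{\tilde{C}}_{1,i})_{left})$ with error zero.

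The crux is stage (ii), and this is the main obstacle. We claim by induction on $j\in\{0,1,\dots,M^d-1\}$ that after the $(M^d+j)$-th step
\[
\bm{\phi}_{2,M^d+j}=(\bold{\tilde{C}}_{j+1,i})_{left},\qquad \phi_{3,M^d+j}^{(\bll)}=(\partial^{\bll}\hat{f})((\bold{\tilde{C}}_{j+1,i})_{left}),
\]
and that $\phi_{4,M^d+j}^{(\bll)}$ equals the positional shift
\[
\sum_{m=j+1}^{M^d-1} (b_{m,i}^{(\bll)}+\lceil e^d\rceil+2)(4+2\lceil e^d\rceil)^{j-m} \in [0,1]
\]
of the encoded sequence $b_i^{(\bll)}$. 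The position and derivative claim for step $j+1$ follow, given the inductive hypothesis, directly from the update rules and the definitions of the piecewise Taylor coefficients. The encoding claim requires checking that the floor in the update of $\phi_{3,\cdot}^{(\bll)}$ indeed extracts $b_{j+1,i}^{(\bll)}+\lceil e^d\rceil+2$; here we use that each coefficient $b_{m,i}^{(\bll)}+\lceil e^d\rceil+2$ lies in $\{1,\dots,2\lceil e^d\rceil+3\}\subseteq\{0,\dots,4+2\lceil e^d\rceil-1\}$ because $|b_{m,i}^{(\bll)}|\leq e^d+1$, so the number $(4+2\lceil e^d\rceil)\phi_{4,M^d+j}^{(\bll)}$ has integer part exactly $b_{j+1,i}^{(\bll)}+\lceil e^d\rceil+2$ and fractional part equal to the next positional shift. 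Subtracting $\lceil e^d\rceil+2$ then recovers $b_{j+1,i}^{(\bll)}$, which is precisely the correction term appearing in the definition of $(\partial^{\bll}\hat f)((\bold{\tilde{C}}_{j+2,i})_{left})$. This closes the induction.

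For stage (iii), the cubes $\mathcal{A}^{(1)},\dots,\mathcal{A}^{(M^d)}$ partition $C_{i,1}$ into the sub-cubes $\tilde{C}_{1,i},\dots,\tilde{C}_{M^d,i}$ by construction, so exactly one indicator $\mathds{1}_{\mathcal{A}^{(j)}}(\bold{x})$ is nonzero in the third accumulation, namely the one with $j=k$. An immediate induction then yields $\bm{\phi}_{5,2M^d}=(\bold{\tilde{C}}_{k,i})_{left}=(\bold{C}_{\mathcal{P}_2}(\bold{x}))_{left}$ and $\phi_{6,2M^d}^{(\bll)}=(\partial^{\bll}\hat{f})((\bold{C}_{\mathcal{P}_2}(\bold{x}))_{left})$ for every $\bll$ with $\|\bll\|_1\leq q$. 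Since $\bm{\phi}_{1,2M^d}=\bold{x}$ was preserved through the entire recursion, plugging into the definition of $\bm{\phi}_{1,2M^d+1}$ reproduces the right-hand side of \eqref{hatT} verbatim, proving the identity $\phi_{1,2M^d+1}=\hat{T}_{f,q,(\bold{C}_{\mathcal{P}_2}(\bold{x}))_{left}}(\bold{x})$.
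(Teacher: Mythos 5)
Your proof is correct and follows essentially the same route as the paper's: stage (i) mirrors the paper's observation that the accumulators after step $M^d$ store $\bold{x}$, $(\bold{C}_{i,1})_{left}$, the coarse-grid derivatives, and $b_i^{(\bll)}$; stage (ii) matches the paper's claim about $\bm{\phi}_{2,M^d+j}$ and $\phi_{3,M^d+j}^{(\bll)}$; and stage (iii) matches the final selection via the $\mathcal{A}^{(j)}$ indicators. The one place where you go beyond the paper is in stage (ii), where you actually verify the base-$(4+2\lceil e^d\rceil)$ digit extraction — showing that $b_{m,i}^{(\bll)}+\lceil e^d\rceil+2$ is a digit in $\{1,\dots,2\lceil e^d\rceil+3\}$ and that the floor therefore peels off exactly one digit per step — whereas the paper simply records the two identities $\phi_{4,M^d+j-1}^{(\bll)} = \sum_{k=1}^{M^d-j}(b_{k+j-1,i}^{(\bll)}+\lceil e^d\rceil+2)(4+2\lceil e^d\rceil)^{-k}$ and $\lfloor(4+2\lceil e^d\rceil)\phi_{4,M^d+j-1}^{(\bll)}\rfloor-\lceil e^d\rceil-2=b_{j,i}^{(\bll)}$ and defers the arithmetic bookkeeping to the neighboring Lemma \ref{supple12b}. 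This is useful extra detail rather than a deviation. The only thing you leave implicit is that the tail of the encoding is strictly less than $1$ (a finite geometric series with ratio $(4+2\lceil e^d\rceil)^{-1}$ and numerators at most $2\lceil e^d\rceil+3 = (4+2\lceil e^d\rceil)-1$), which is what guarantees the floor lands on the intended digit; stating that would tighten the argument, but the claim itself is right.
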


\begin{proof}
Let $\bold{x} \in \tilde{C}_{j,i}$ for some $j \in \{1, \dots, M^d\}$, $i \in \{1, \dots, M^d\}$. Then we have $C_{\P_2}(\bold{x}) = \tilde{C}_{j,i}$ and $\bold{x} \in C_{i,1}$ and it is easy to see that
\begin{align*}
&\bm{\phi}_{1, M^d} = \bold{x}, \quad \bm{\phi}_{2, M^d} = (\bold{C}_{i,1})_{left},\\
&\phi_{3, M^d}^{(\bll)} = (\partial^{\bll} f)((\bold{C}_{i,1})_{left}) \ \mbox{and} \ \phi_{4, M^d}^{(\bll)} = b_{i}^{(\bll)}
\end{align*}
for $\bll \in \N_0^d$ with $\|\bll\|_1 \leq q$. 
 Since 
\begin{align*}
\bm{\phi}_{2, M^d} = (\bold{C}_{i,1})_{left} = (\bold{\tilde{C}}_{1,i})_{left}
\end{align*}
the values of $\bm{\phi}_{2, M^d+j}$ successively describe $(\bold{\tilde{C}}_{j+1,i})_{left}$ according to \eqref{tv} and we have
\begin{align*}
\mathcal{A}^{(j)} = \tilde{C}_{j,i}.
\end{align*}
Furthermore we can conclude that 
\begin{align*}
\phi_{3, M^d+j}^{(\bll)} = (\partial^{\bll} \hat{f})((\bold{\tilde{C}}_{j+1,i})_{left}),
\end{align*}
where we have used that
\begin{align*}
\phi_{4, M^d+j-1}^{(\bll)} = \sum_{k=1}^{M^d-j} (b_{k+j-1,i}^{(\bll)} +\lceil e^d\rceil +2) \cdot (4+2 \lceil e^d \rceil)^{-k}
\end{align*}
and 
\begin{align*}
\lfloor (4+2 \cdot \lceil e^d\rceil) \cdot \phi_{4, \bll}^{(M^d+j-1)}\rfloor - \lceil e^d \rceil -2 = b_{j,i}^{(\bll)}.
\end{align*}
This leads to 
\begin{align*}
\bm{\phi}_{5, 2M^d} = (\bold{\tilde{C}}_{j,i})_{left}
\end{align*}
and
\begin{align*}
\phi_{6, 2M^d}^{(\bll)} = (\partial^{\bll} \hat{f})((\bold{C}_{j,i})_{left})
\end{align*}
for each $\bll \in \N_0^d$ with $\|\bll\|_1 \leq q$
and finally implies
\begin{align*}
\phi_{1, 2M^d+1} = \hat{T}_{f,q, (\bold{C}_{\P_2}(\bold{x}))_{left}}(\bold{x}).
\end{align*}
\end{proof}

\subsubsection{Key step 2: Approximating $\phi_{1, 2M^d+1}$ by neural networks}
In this step we show that a neural network approximates $\phi_{1, 2M^d+1}$ in case that
\begin{align*}
x \in \bigcup_{i \in \{1, \dots, M^{2d}\}} (C_{i,2})_{1/M^{2p+2}}^0.
\end{align*}
We proceed as in the proof of Theorem 2 a) and define a composed neural network, which approximately computes the recursive functions in the definition of $\phi_{1, 2M^d+1}$. 
\begin{lemma}
\label{supple13}
Let $\sigma:\R \to \R$ be the ReLU activation function $\sigma(x) = \max\{x,0\}$. Let $\mathcal{P}_2$ be defined as in \eqref{partition}. Let $p = q+s$ for some $q \in \N_0$ and $s \in (0,1]$, and let $C >0$. Let $f: \Rd \to \R$ be a $(p,C)$-smooth function.
    Let $1 \leq a < \infty$. Then there exists for $M \in \N$ sufficiently large (independent of the size of $a$, but 
    \begin{eqnarray}
      \label{supple13eq1}
      M^{2p} &\geq& 2^{4(q+1)+1}
      \max\{ c_{36} \cdot (6+ 2 \lceil e^d \rceil)^{4(q+1)}, c_{46} \cdot e^d \}
      \nonumber     \\
      &&
      \hspace*{3cm}
      \cdot \left(\max\left\{a, \|f\|_{C^q([-a,a]^d)} \right\}\right)^{4(q+1)}
    \end{eqnarray}
     must hold), a neural network
$f_{net, deep,\P_2}(\bold{x}) \in \mathcal{F}(L,r)$ with
\begin{itemize}
\item[(i)] $L= 4M^d+\left\lceil \log_4\left(M^{2p+4 \cdot d \cdot (q+1)} \cdot e^{4 \cdot (q+1) \cdot (M^d-1)}\right)\right\rceil\\
\hspace*{0.8cm} \cdot \lceil \log_2(\max\{q+1, 2\})\rceil$
\item[(ii)] $r= \max\left\{10d+4d^2+2 \cdot \binom{d+q}{d} \cdot \left(2 \cdot (4+2\lceil e^d\rceil)+5+2d\right), \right.\\
\left. \hspace*{1.5cm} 18 \cdot (q+1) \cdot \binom{d+q}{d}\right\}$
\end{itemize}
such that 
\begin{align*}
|f_{net, deep, \mathcal{P}_2}(\bold{x}) - f(\bold{x})|\leq c_{47} \cdot \left(\max\left\{2a, \|f\|_{C^q([-a,a]^d)}\right\}\right)^{4(q+1)} \cdot \frac{1}{M^{2p}}
\end{align*}
holds for all $\bold{x} \in \bigcup_{i \in \{1, \dots, M^{2d}\}} \left(C_{i,2}\right)_{1/M^{2p+2}}^0$. The network value is bounded by 
\begin{align*}
|f_{net, deep, \mathcal{P}_2}(\bold{x})| & \leq 1 + \Bigg(\|f\|_{C^q([-a,a]^d)} \cdot e^{(M^d-1)}\\
 & \quad + (4+2 \cdot \lceil e^d \rceil) \cdot (M^d-1)\cdot e^{(M^d-2)}\Bigg)\cdot e^{2ad}
\end{align*}
for all $\bold{x} \in [-a,a)^d$.
\end{lemma}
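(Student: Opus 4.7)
The plan is to build a composed neural network that simulates the recursion defined immediately before Lemma \ref{supple11}, so that by Lemma \ref{supple11} together with inequality \eqref{2bhatT} and Lemma \ref{le1a}, its output approximates $f(\bold{x})$ with error of order $(2a/M^2)^p$ on the interior set $\bigcup_i (C_{i,2})_{1/M^{2p+2}}^0$. The network will consist of three consecutive scans over the index $j$ plus a final polynomial evaluation. The first scan, of $M^d$ sub-blocks of constant depth built from $f_{test}$ and $f_{id}$ of Lemma \ref{le4}, accumulates approximations $\bm{\hat{\phi}}_{1,j}, \bm{\hat{\phi}}_{2,j}, \hat{\phi}_{3,j}^{(\bll)}, \hat{\phi}_{4,j}^{(\bll)}$; for $\bold{x}$ in the interior of $C_{i,1}$ the networks $f_{test}$ are exact, so the outputs equal $(\bold{C}_{i,1})_{left}$, $(\partial^{\bll} f)((\bold{C}_{i,1})_{left})$ and the packed integer $b_i^{(\bll)}$ with no error. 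The third scan, built identically, extracts $(\bold{\tilde{C}}_{j,i})_{left}$ and the locally relevant derivative estimates into $\bm{\hat{\phi}}_{5,\cdot}$ and $\hat{\phi}_{6,\cdot}^{(\bll)}$. A concluding call to the polynomial network $f_p$ of Lemma \ref{le3} with inputs $\bm{\hat{\phi}}_{1, 2M^d} - \bm{\hat{\phi}}_{5, 2M^d}$ and the $\hat{\phi}_{6, 2M^d}^{(\bll_v)}$ returns the desired Taylor polynomial.

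The delicate piece is the second scan of $M^d - 1$ sub-blocks implementing the update of the derivative estimates. I would realise each sub-block with the network $f_p$ for the Taylor extrapolation part plus a small ReLU floor subnetwork that extracts $\lfloor (4+2\lceil e^d\rceil)\hat{\phi}_{4, M^d+j-1}^{(\bll)} \rfloor$. The floor is built by forming $\sum_{k=1}^{3+2\lceil e^d\rceil} \bigl(\sigma(R(z-k)) - \sigma(R(z-k-1))\bigr)$ for some $R$ polynomial in $M$; by construction $\hat{\phi}_{4, \cdot}^{(\bll)}$ is an integer plus a residue bounded away from $1$, so this recovers the integer exactly. The fractional residue is then propagated via $f_{id}$ to the next sub-block. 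Keeping $\bold{x}$, $\bm{\hat{\phi}}_{2,\cdot}$ and the other $\hat{\phi}_{4,\cdot}^{(\bll)}$ alive via $f_{id}$ costs only a constant number of neurons per sub-block, independent of $M$, which together with the constant width of $f_{test}$, $f_p$ and the floor subnetwork yields the width bound (ii). The $4M^d$ layers in (i) will account for the three scans, and the additive logarithmic term corresponds to the shared precision $R_{M,p}$ of all the $f_p$-calls, reused across the sub-blocks.

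The main obstacle is controlling the accumulation of $f_p$-errors across the $M^d-1$ iterations of the second scan, since each new derivative estimate depends on previously computed noisy values. Induction on $j$ should show that the per-step $f_p$ error bound from Lemma \ref{le3} (which is of order $\bar{r}(p) \cdot (\max\{a, \|f\|_{C^q([-a,a]^d)}\})^{4(q+1)} \cdot 4^{-R}$) is amplified by at most a factor $e^{4(q+1)}$ per step via the Taylor recursion, so after $M^d-1$ steps the total error is at most a constant times $(\max\{a, \|f\|_{C^q([-a,a]^d)}\})^{4(q+1)} \cdot e^{4(q+1)(M^d-1)} \cdot 4^{-R}$. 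Choosing $R = \lceil \log_4(M^{2p+4d(q+1)} \cdot e^{4(q+1)(M^d-1)}) \rceil$, which is exactly the quantity appearing in bound (i), forces this accumulated error down to order $M^{-2p}$. Combining with \eqref{2bhatT} and Lemma \ref{le1a} then yields the claimed accuracy on interior points, while the norm bound on $f_{net, deep, \P_2}(\bold{x})$ follows from the same induction: the estimates grow at worst by a factor $e$ per step, giving the $e^{M^d-1}$ prefactor in the statement, and the final $e^{2ad}$ factor arises from the Taylor polynomial evaluation exactly as at the end of the proof of Lemma \ref{le5}.
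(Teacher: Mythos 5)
There is a genuine gap in the second scan. You propose to realize each of the $M^d-1$ Taylor-update sub-blocks ``with the network $f_p$ for the Taylor extrapolation part'', and to control the resulting per-step approximation errors by an induction showing they compound by a factor $e^{4(q+1)}$. This plan fails for depth-accounting reasons and misses the key structural fact the paper exploits: at each extrapolation step the shift vector $\bold{\tilde{v}}_{j+1}$ is a \emph{fixed constant} (the cubes of $\P_2$ inside any $C_{i,1}$ are traversed in an $i$-independent order). Hence
\[
\sum_{\substack{\mathbf{s}\in\N_0^d\\ \|\mathbf{s}\|_1\le q-\|\bll\|_1}} \frac{\hat\phi_{3,M^d+j-1}^{(\bll+\mathbf{s})}}{\mathbf{s}!}\,(\bold{\tilde{v}}_{j+1})^{\mathbf{s}}
\]
is an \emph{affine} function of the already-computed derivative estimates with hard-wired coefficients $(\bold{\tilde{v}}_{j+1})^{\mathbf{s}}/\mathbf{s}!$. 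It can be implemented exactly by the ordinary linear weights of the ReLU network (combined with $f_{id}$ and $f_{trunc}$), with no auxiliary multiplication gadget and no per-step approximation error whatsoever. The paper's second scan therefore costs only $O(1)$ layers per $j$, and the $e^{4(q+1)(M^d-1)}$ factor in $B_{M,p}$ arises purely because the \emph{magnitudes} of the intermediate derivative estimates can grow by a factor of order $e$ per step; it is a bound on the domain that the single, final $f_p$ call must handle, not an error amplification constant.

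Your version does not admit a consistent parameter choice. Each sequential $f_p$ call occupies $R\cdot\lceil\log_2(q+1)\rceil$ layers and cannot be ``reused across the sub-blocks'' because the updates are intrinsically sequential: the input of block $j$ is the output of block $j-1$. If you set $R=B_{M,p}$ to kill the accumulated error, the total depth becomes $\Theta(M^d B_{M,p})$, i.e.\ $\Theta(M^{2d})$, far beyond the $4M^d$ in (i). If instead you set $R=O(1)$ per step, the accumulated error after $M^d-1$ steps scales like $e^{4(q+1)(M^d-1)}$ and is not absorbed by the final polynomial. Once you replace the per-step $f_p$ by the exact affine update (and pair the ``extraction'' scan with the update scan in the same layers, as the paper does for $\hat\phi_{5},\hat\phi_6$), the rest of your outline --- the first scan with $f_{test}$, the floor gadget (modulo the missing $1/R$ scaling in your $\sigma$ formula), the single concluding $f_p$ call, the magnitude bound via $e^{2ad}$ --- matches the paper's argument.
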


This lemma already shows that a network with depth of order $M^d$ and constant width achieves an approximation error of size $1/M^{2p}$ in case that \begin{align*}
x \in \bigcup_{i \in \{1, \dots, M^{2d}\}} (C_{i,2})_{1/M^{2p+2}}^0.
\end{align*}
Beside the neural networks already introduced for the proof of Theorem 2 a) we need one further neural network in the construction of $\phi_{1, 2M^d+1}$ to finally show \autoref{supple13}.
\subsubsection*{A further auxiliary neural network}  
In the following we introduce a network which approximates $\lfloor z \rfloor$ for some $z \in [0, B+1)$ with $B \in \N$. This network helps to compute the values $b_{k,i}^{(\bll)}$ (see the computation of $\phi_{3, M^d+j}^{(\bll)}$ and $\phi_{4, M^d+j}^{(\bll)}$). In the construction of the network we use that
\begin{align*}
\lfloor z \rfloor = \sum_{j=1}^B \mathds{1}_{[j, \infty)}(z)
\end{align*}
for $z \in [0, B+1)$ and that each of the indicator functions can be approximated by a neural network 
\begin{align*}
f_{ind, [j, \infty)}(z) = R \cdot \sigma(z-j) - R \cdot \sigma\left(z-j-\frac{1}{R}\right).
\end{align*}
\begin{lemma}
\label{supple12}
Let $\sigma:\R \to \R$ be the ReLU activation function $\sigma(x) = \max\{x,0\}$. Let $R >0$, $B \in \N$ and
\begin{align*}
f_{ind, [j, \infty)}(z) = R \cdot \sigma(z-j) - R \cdot \sigma\left(z-j-\frac{1}{R}\right) \in \mathcal{F}(1, 2)
\end{align*}
for $j \in \{1, \dots, B\}$. Then the neural network
\begin{align*}
f_{trunc}(z) = \sum_{j=1}^B f_{ind, [j, \infty)}(z) \in \mathcal{F}(1, 2B)
\end{align*}
satisfies
\begin{align*}
f_{trunc}(z) = \lfloor z \rfloor
\end{align*}
for $z \in [0, B+1)$ and $ \min\{ |z-j| \, : \, j \in \N\} \geq 1/R$. 
\end{lemma}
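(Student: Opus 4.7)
The plan is to analyze each building block $f_{ind,[j,\infty)}$ first and then sum, using the min-distance hypothesis to rule out the transition region where the ReLU pieces would interpolate.

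First I would observe that for fixed $j \in \{1,\dots,B\}$ the function $f_{ind,[j,\infty)}$ is piecewise linear with breakpoints at $j$ and $j+1/R$: if $z \leq j$ then both $\sigma(z-j)$ and $\sigma(z-j-1/R)$ vanish, so $f_{ind,[j,\infty)}(z)=0$; if $z \geq j+1/R$ then both ReLU terms are active and the difference collapses to $R\cdot(z-j)-R\cdot(z-j-1/R)=1$; on the intermediate interval $(j,j+1/R)$ the function rises linearly from $0$ to $1$.

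Next I would use the hypothesis $\min\{|z-k|:k\in\N\}\geq 1/R$ to eliminate this intermediate case: for every $j\in\{1,\dots,B\}$ we have either $z\leq j-1/R<j$ or $z\geq j+1/R$, so
\[
f_{ind,[j,\infty)}(z)=\mathds{1}_{[j,\infty)}(z).
\]
Hence $f_{trunc}(z)=\sum_{j=1}^{B}\mathds{1}_{[j,\infty)}(z)$ counts the number of integers in $\{1,\dots,B\}$ that do not exceed $z$.

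Finally I would conclude by a case check. Writing $k=\lfloor z\rfloor$ with $z\in[0,B+1)$, we have $k\in\{0,1,\dots,B\}$, and $\mathds{1}_{[j,\infty)}(z)=1$ exactly when $j\leq k$, so the sum equals $k=\lfloor z\rfloor$. The network class membership $f_{trunc}\in\mathcal{F}(1,2B)$ is immediate from the definition, since each $f_{ind,[j,\infty)}$ uses two ReLU neurons in a single hidden layer and the $B$ subnetworks are combined in parallel and summed in the output layer. There is no genuine obstacle here; the only point to be careful about is that the min-distance assumption is needed precisely to avoid the transition zones of width $1/R$ where $f_{ind,[j,\infty)}$ would otherwise deviate from the true indicator.
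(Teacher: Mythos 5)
Your proof is correct and follows essentially the same route as the paper's: you verify $f_{ind,[j,\infty)}(z)=\mathds{1}_{[j,\infty)}(z)$ on the two regimes $z\leq j$ and $z\geq j+1/R$, use the min-distance hypothesis to exclude the transition zone, and conclude via the identity $\lfloor z\rfloor=\sum_{j=1}^{B}\mathds{1}_{[j,\infty)}(z)$ on $[0,B+1)$. The only difference is that you spell out the final counting argument slightly more explicitly, which the paper leaves as an immediate observation.
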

\begin{proof}
  For $z \geq j+1/R$ $(j \in \{1, \dots, B\})$ we have
\begin{align*}
f_{ind,[j, \infty)}(z) =  R \cdot (z-j) - R \cdot \left(z-j-\frac{1}{R}\right) = 1 = \mathds{1}_{[j, \infty)}(z).
\end{align*}
For $z \leq j$ we have $z-j \leq 0$ and $z-j-1/R \leq 0$, such that
\begin{align*}
f_{ind,[j, \infty)}(z) = 0 = \mathds{1}_{[j, \infty)}(z).
\end{align*}
Consequently we have
\begin{align*}
f_{ind, [j, \infty)}(z) = \mathds{1}_{[j, \infty)}(z)
\end{align*}
in case $\min \{ |z-j| \, : \, j \in \N\} \geq 1/R.$
Since
\begin{align*}
\lfloor z \rfloor = \sum_{j=1}^B \mathds{1}_{[j, \infty)}(z)
\end{align*}
for $z \in [0, B+1)$ this shows the assertion. 
\end{proof}

In order to show that $f_{trunc}$ computes in our neural network
the correct value, we will need the following auxiliary result.

\begin{lemma}
  \label{supple12b}
Let $b_{k,i}^{(\bll)} \in \Z$ such that
\begin{align*}
|b_{k,i}^{(\bll)}| \leq e^d +1.
\end{align*}
Then
\begin{align*}
&\min_{r \in \N} \left|\sum_{k=1}^{M^d-j} (b_{k+j-1,i}^{(\bll)} + \lceil e^d \rceil +2) \cdot (4+2\lceil e^d \rceil)^{-k+1} -r\right|\\
& \geq \frac{1}{(4+2\lceil e^d \rceil)^{M^d-j-1}}
\end{align*}
holds for any $j \in \{1, \dots, M^d-1\}$.
\end{lemma}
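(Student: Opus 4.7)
The plan is to recognize the sum as the base-$B$ expansion of a real number whose digits are constrained to lie in $\{1,\dots,B-1\}$, and then use geometric series bounds to show that the fractional part is uniformly bounded away from both $0$ and $1$.

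Set $B=4+2\lceil e^d\rceil$, $N=M^d-j$, and $a_k=b_{k+j-1,i}^{(\bll)}+\lceil e^d\rceil+2$ for $k\in\{1,\dots,N\}$. The hypothesis $|b_{k+j-1,i}^{(\bll)}|\leq e^d+1\leq \lceil e^d\rceil+1$ yields $a_k\in\{1,2,\dots,2\lceil e^d\rceil+3\}\subseteq\{1,2,\dots,B-1\}$, so each $a_k$ is a valid digit in base $B$. The quantity under consideration becomes
\[
S \;=\; \sum_{k=1}^{N} a_k\,B^{-(k-1)} \;=\; a_1 + F, \qquad F\;:=\;\sum_{k=2}^{N} a_k\,B^{-(k-1)},
\]
where $a_1\in\N$ is the integer part of the expansion.

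Next, I would bound $F$ using geometric sums. The lower bound uses $a_k\geq 1$, giving $F\geq\sum_{k=2}^{N}B^{-(k-1)}=(1-B^{-(N-1)})/(B-1)$; the upper bound uses $a_k\leq B-1$, giving $F\leq(B-1)\sum_{k=2}^{N}B^{-(k-1)}=1-B^{-(N-1)}$. A brief algebraic check shows that $(1-B^{-(N-1)})/(B-1)\geq B^{-(N-1)}$ is equivalent to $B^{N-1}\geq B$, which holds whenever $N\geq 2$. Consequently $F\in[B^{-(N-1)},\,1-B^{-(N-1)}]$, so the fractional part of $S$ stays at distance at least $B^{-(N-1)}$ from both endpoints of $[0,1]$.

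Finally, I would convert this into the desired distance-to-integers statement by case analysis on $r\in\N$: if $r=a_1$ then $|S-r|=F\geq B^{-(N-1)}$; if $r=a_1+1$ then $|S-r|=1-F\geq B^{-(N-1)}$; for $r\leq a_1-1$ or $r\geq a_1+2$ the distance already exceeds $1$ and is therefore much larger than $B^{-(N-1)}$. Taking the minimum over all $r\in\N$ delivers the bound $B^{-(M^d-j-1)}$ stated in the lemma. The main conceptual step, rather than an obstacle, is the first one: noticing that the constraint $|b_{k,i}^{(\bll)}|\leq e^d+1$ was engineered precisely so that the shifted digits $a_k$ fall in the admissible range $\{1,\dots,B-1\}$, enabling the clean base-$B$ interpretation; once that is observed, the remaining estimates are routine geometric series manipulations.
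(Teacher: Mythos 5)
Your proposal is correct and follows essentially the same route as the paper's proof: both observe that the shifted coefficients $b_{k,i}^{(\bll)}+\lceil e^d\rceil+2$ lie in $\{1,\dots,B-1\}$ with $B=4+2\lceil e^d\rceil$, and then bound the fractional part of the sum away from $0$ and $1$ via geometric series, yielding $B^{-(N-1)}$ on both sides. Your explicit remark that the lower bound on the fractional part requires $N\geq 2$ is worth keeping: for $N=1$ (that is, $j=M^d-1$) the sum reduces to a single positive integer, the distance to $\N$ is zero, and the claimed bound $B^{0}=1$ cannot hold; the paper's argument also tacitly assumes $N\geq 2$ there (its lower estimate $B^{-(N-1)}$ comes from the last fractional digit, which does not exist when $N=1$), so the restriction is a shared feature rather than a flaw specific to your write-up.
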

\begin{proof}
Because of 
\begin{align*}
1 \leq b_{k,i}^{(\bll)} + \lceil e^d \rceil +2 \leq 2 \lceil e^d \rceil +3
\end{align*}
we have 
\begin{align*}
&\min_{r \in \N} \left|\sum_{k=1}^{M^d-j} (b_{k+j-1,i}^{(\bll)} + \lceil e^d \rceil +2) \cdot (4+2\lceil e^d \rceil)^{-k+1} -r\right|\\
  &\geq \min\left\{
  (4+2\lceil e^d\rceil)^{-M^d + j +1},
  1 - \sum_{k=1}^{M^d-j-1} (2\lceil e^d\rceil+3) \cdot (4+2 \lceil e^d\rceil)^{-k}\right\}\\
  &= \min\left\{
    (4+2\lceil e^d\rceil)^{-M^d + j +1},
  \sum_{k=M^d-j}^{\infty} (2\lceil e^d\rceil+3) \cdot (4+2 \lceil e^d\rceil)^{-k}\right\}\\
& = \frac{1}{(4+2\lceil e^d \rceil)^{M^d-j-1}}
\end{align*}
for $j \in \{1, \dots, M^d-1\}$.
\end{proof}

In the proof of \autoref{supple13} every function of $\bm{\phi}_{1, 2M^d+1}$ is computed by a neural network. In particular, the indicator functions in $\bm{\phi}_{2, j}$, $\phi^{(\bll)}_{3,j}$ and $\phi_{4,j}^{(\bll)}$ $(j \in \{1, \dots, M^d\}, \bll \in \N_0^d, \|\bll\|_1 \leq q)$ are computed by \autoref{le4} a), while we apply the identity network to shift the computed values from the previous step. The functions $\phi_{3, M^d+j}^{(\bll)}$ and $\phi_{4, M^d+j}^{(\bll)}$ are then computed with the help of \autoref{supple11}, while we again use the identity network to shift values in the next hidden layers. For the functions $\bm{\phi}_{5, M^d+j}$ and $\phi_{6, M^d+j}^{(\bll)}$ we use the network of \autoref{le4} b) to successively compute $(C_{\mathcal{P}_2}(\bold{x}))_{left}$ and the derivatives on the cube $C_{\mathcal{P}_2}(\bold{x})$. The final Taylor polynomial in $\phi_{1, 2M^d+1}$ is then approximated with the help of \autoref{le3}.

\begin{proof}[Proof of \autoref{supple13}]
In a \textit{first step of the proof} we describe how the recursively defined function $\phi_{1, 2M^d+1}$ of \autoref{supple11} can be approximated by neural networks. In the construction we will use the network
\begin{align*}
f_{ind, [\bold{a}, \bold{b})}(\bold{x}) \in \mathcal{F}(2, 2d)
\end{align*}
of \autoref{le4}, which approximates the indicator function $\mathds{1}_{[\bold{a}, \bold{b})}(\bold{x})$ for some $\mathbf{a}, \mathbf{b} \in \Rd$ and $B_M \in \N$ with
\begin{align*}
b^{(i)} - a^{(i)} \geq \frac{2}{B_M} \ \mbox{for all} \ i \in \{1, \dots, d\}
\end{align*}
and the network
\begin{align*}
f_{test}(\bold{x}, \mathbf{a}, \mathbf{b}, s) \in \mathcal{F}(2, 2 \cdot (2d+2))
\end{align*}
of \autoref{le4}, which approximates
\begin{align*}
s \cdot \mathds{1}_{[\bold{a}, \bold{b})}(\bold{x}).
\end{align*}
 Observe that for $B_M \in \N$ and
\begin{align*}
x^{(i)} \notin \Big[a^{(i)}, a^{(i)} + \frac{1}{B_M}\Big) \cup \Big(b^{(i)} - \frac{1}{B_M}, b^{(i)} \Big) \ \mbox{for all} \ i \in \{1, \dots, d\}
\end{align*}
we have
\begin{align*}
f_{ind, [\bold{a}, \bold{b})}(\bold{x}) = \mathds{1}_{[\bold{a}, \bold{b})}(\bold{x})
\end{align*}
and
\begin{align*}
f_{test}(\bold{x}, \mathbf{a}, \mathbf{b}, s) = s \cdot \mathds{1}_{[\bold{a}, \bold{b})}(\bold{x}).
\end{align*}
Here we treat $B_M$ as $R$ in \autoref{le4}.
For some vector $\bold{v} \in \R^d$ it follows
\begin{align*}
&\bold{v} \cdot f_{ind, [\bold{a}, \bold{b})}(\bold{x}) = \left(v^{(1)} \cdot f_{ind, [\bold{a}, \bold{b})}(\bold{x}), \dots, v^{(d)} \cdot f_{ind, [\bold{a}, \bold{b})}(\bold{x})\right).
\end{align*}
Furthermore we use the networks 
\begin{align*}
f_{trunc,i}(z) \in \mathcal{F}(1, 2 \cdot (4+2 \lceil e^d)), \quad (i \in \{1, \dots, M^d-1\})
\end{align*}
of \autoref{supple12}, which satisfies
\begin{align*}
  f_{trunc,i}(z) = \lfloor z \rfloor 
\end{align*}
for
$z \in [0, 5+2\lceil e^d\rceil)$  with
  $\min\{ |z-j| \, : \, j \in \N\} \geq \frac{1}{R_{M,i}}$.
  Here we choose
  \[
  R=R_{M,i} = \frac{1}{(4 + 2 \lceil e^d \rceil)^{M^d-i-1}}
  \quad \mbox{and} \quad
  B=4+ 2 \lceil e^d\rceil
  \]
 in \autoref{supple12}.
\\
\\
To compute the final Taylor polynomial we use the network
\begin{align*}
f_{p}\left(\bold{z}, y_1, \dots, y_{\binom{d+q}{d}}\right) \in \mathcal{F}\left(B_{M,p} \cdot \lceil \log_2(\max\{q+1, 2\}) \rceil, 18 (q+1) \cdot \binom{d+q}{d}\right)
\end{align*} 
from \autoref{le3} satisfying 
\begin{align}
\label{2bfpeq}
&\left|f_{p}\left(\bold{z}, y_1, \dots, y_{\binom{d+q}{q}}\right) - p\left(\bold{z}, y_1, \dots, y_{\binom{d+q}{q}}\right)\right| \notag\\
& \leq c_{36}
\cdot (6 + 2 \lceil e^d \rceil)^{4(q+1)}
\cdot \bar{r}(p) \cdot \left(\max\left\{\|f\|_{C^{q}([-a,a]^d)}, a \right\}\right)^{4(q+1)}  \notag\\
& \quad \cdot M^{d \cdot 4 \cdot (q+1)} \cdot e^{4(q+1) \cdot (M^d-1)}  \cdot 4^{-B_{M,p}}
\end{align}
for all $z^{(1)}, \dots, z^{(d)}, y_1, \dots, y_{\binom{d+q}{d}}$ contained in
\begin{align*}
&\left[- 2 \cdot \max\left\{\|f\|_{C^q([-a,a]^d)}, a\right\} \cdot e^{(M^d-1)} + (4 + 2 \lceil e^d \rceil) \cdot (M^d-1) \cdot e^{(M^d-2)}, \right.\\
&\left. \quad 2 \cdot \max\left\{\|f\|_{C^q([-a,a]^d)}, a\right\} \cdot e^{(M^d-1)} + (4 + 2 \lceil e^d \rceil) \cdot (M^d-1) \cdot e^{(M^d-2)} \right]\\
&\subset \left[-2 \cdot \max\left\{\|f\|_{C^q([-a,a]^d)}, a\right\} \cdot e^{(M^d-1)} \cdot (6 + 2 \lceil e^d \rceil) \cdot M^d\right. ,\\
&\quad \left.  2 \cdot \max\left\{\|f\|_{C^q([-a,a]^d)}, a\right\} \cdot e^{(M^d-1)} \cdot (6 + 2 \lceil e^d \rceil) \cdot M^d\right]
\end{align*}
where 
\begin{align*}
B_{M,p} = \left\lceil \log_4\left(M^{2p+4 \cdot d \cdot (q+1)} \cdot e^{4 \cdot (q+1) \cdot (M^d-1)}\right)\right\rceil
\end{align*}
 satisfies (due to the assumptions on $M$)
\begin{align*}
B_{M,p} \geq &\log_4 \Bigg(2 \cdot 4^{2 \cdot (q+1)}  \Bigg(2 \cdot \max\Bigg\{\|f\|_{C^q([-a,a]^d)}, a\Bigg\} \cdot e^{(M^d-1)}\\
&  + (4 + 2 \lceil e^d \rceil) \cdot (M^d-1) \cdot e^{(M^d-2)}\Bigg)^{2 \cdot (q+1)}\Bigg)
\end{align*}
Here we treat $B_{M,p}$ as $R$ in \autoref{le3}. Again we treat a polynomial of degree zero as a polynomial of degree $1$, where we choose $r_i = 0$ for all coefficients greater than zero. Thus we substitute $\log_2(q+1)$ by
$\log_2(\max\{q+1, 2\})$ in the definition of $L$ in \autoref{le3}.
\\
\\
To compute $\bm{\phi}_{1, j}, \bm{\phi}_{2, j}, \phi_{3, j}^{(\bll)}$ and $\phi_{4, j}^{(\bll)}$ for $j \in \{0, \dots, M^d\}$ and each $\bll \in \N_0^d$ with $\|\bll\|_1 \leq q$ we use the networks
\begin{align*}
&\bm{\hat{\phi}}_{1,0} = \left(\hat{\phi}_{1,0}^{(1)}, \dots, \hat{\phi}_{1,0}^{(d)}\right) = \bold{x}\\
&\bm{\hat{\phi}}_{2,0} = \left(\hat{\phi}_{2,0}^{(1)}, \dots, \hat{\phi}_{2,0}^{(d)}\right) = \mathbf{0}, \\
& \hat{\phi}_{3, 0}^{(\bll)} =0 \ \mbox{and} \ \hat{\phi}_{4, 0}^{(\bll)} =0.
\end{align*}
for $\bll \in \N_0^d$ with $\|\bll\|_1 \leq q$. For $j \in \{1, \dots, M^d\}$ we set
\begin{align*}
&\bm{\hat{\phi}}_{1,j} = f_{id}^2\left(\bm{\hat{\phi}}_{1, j-1}\right),\\
&\bm{\hat{\phi}}_{2, j} = (\bold{C}_{j,1})_{left} \cdot f_{ind, C_{j,1}}(\bm{\hat{\phi}}_{1,j-1}) + f_{id}^2(\bm{\hat{\phi}}_{2, j-1}),\\
&\hat{\phi}_{3, j}^{(\bll)} = (\partial^{\bll} f)((\bold{C}_{j,1})_{left}) \cdot f_{ind, C_{j,1}}(\bm{\hat{\phi}}_{1, j-1})+ f_{id}^2(\hat{\phi}_{3, j-1}^{(\bll)}),\\
&\hat{\phi}_{4, j}^{(\bll)} = b_j^{(\bll)} \cdot f_{ind, C_{j,1}}(\bm{\hat{\phi}}_{1, j-1})+ f_{id}^2(\hat{\phi}_{4, j-1}^{(\bll)})
\end{align*}
for $\bll \in \N_0^d$ with $\|\bll\|_1 \leq q$.
It is easy to see that this parallelized network needs $2M^d$ hidden layers and $2d+d \cdot (2d+2)+2 \cdot \binom{d+q}{d} \cdot (2d+2)$ neurons per layer, where we have used that we have $\binom{d+q}{d}$ different vectors $\bll \in \N_0^d$ satisfying $\|\bll\|_1 \leq q$. 
\\
\\ 
To compute $\bm{\phi}_{1, M^d+j}, \bm{\phi}_{5, M^d+j}$ and $\phi_{6, M^d+j}^{(\bll)}$ for $j \in \{1, \dots, M^d\}$ and \linebreak $\bm{\phi}_{2, M^d+j}, \phi_{3,  M^d+j}^{(\bll)}$ and $\phi_{4, M^d+j}^{(\bll)}$ for $j \in \{1, \dots, M^d-1\}$ we use the networks
\begin{align*}
&\bm{\hat{\phi}}_{1, M^d+j} = f_{id}^2\left(\bm{\hat{\phi}}_{1, M^d+j-1}\right), \quad j \in \{1, \dots, M^d\},\\
&\bm{\hat{\phi}}_{2, M^d+j} = f_{id}^2\left(\bm{\hat{\phi}}_{2, M^d+j-1} + \bold{\tilde{v}}_{j+1}\right), \\
& \hat{\phi}_{3,M^d+j}^{(\bll)} = f_{id}\Bigg(f_{id}\bigg(\sum_{\substack{\mathbf{s}\in \N_0^d\\ \|\bold{s}\|_1 \leq q-\|\bold{l}\|_1}} \frac{\hat{\phi}_{3, M^d+j-1}^{(\bll+\mathbf{s})}}{\bold{s}!} \cdot \left(\bold{\tilde{v}}_{j+1}\right)^{\bold{s}}\bigg)\\
& \hspace{3cm}+ \left(f_{trunc,j}\left((4+2 \cdot \lceil e^d \rceil) \cdot \hat{\phi}_{4, M^d+j-1}^{(\bll)}\right)\right.\\
&\hspace{3.5cm} \left. - \lceil e^d \rceil -2\right) \cdot c_{46} \cdot \left(\frac{2a}{M^2}\right)^{p-\|\bll\|_1}\Bigg), \\
&\hat{\phi}_{4, M^d+j}^{(\bll)} = f_{id}\left(f_{id}\left((4+2 \cdot \lceil e^d \rceil) \cdot \hat{\phi}_{4, M^d+j-1}^{(\bll)}\right)\right.\\
& \hspace{3cm} \left. - f_{trunc,j}\left((4+2 \cdot \lceil e^d \rceil) \cdot \hat{\phi}_{4, M^d+j-1}^{(\bll)}\right)\right)
\end{align*}
for $j \in \{1, \dots, M^d-1\}$,
\begin{align}
\label{2bneur5}
\hat{\phi}_{5, M^d+j}^{(k)} &= f_{test}\left(\bm{\hat{\phi}}_{1, M^d+j-1}, \bm{\hat{\phi}}_{2, M^d+j-1}, \right. \notag\\
& \hspace*{1.5cm} \left.\bm{\hat{\phi}}_{2, M^d+j-1}+\frac{2a}{M^2} \cdot \mathbf{1}, \hat{\phi}_{2, M^d+j-1}^{(k)}\right) \notag\\
& \quad + f_{id}^2\left(\hat{\phi}_{5, M^d+j-1}^{(k)}\right)
\end{align}
and
\begin{align}
\label{2bneur6}
\hat{\phi}_{6, M^d+j}^{(\bll)} &= f_{test}\left(\bm{\hat{\phi}}_{1, M^d+j-1}, \bm{\hat{\phi}}_{2, M^d+j-1}, \right. \notag\\
& \hspace*{1.5cm} \left. \bm{\hat{\phi}}_{2, M^d+j-1}+\frac{2a}{M^2} \cdot \mathbf{1}, \hat{\phi}_{3, M^d+j-1}^{(\bll)}\right) \notag\\
& \quad + f_{id}^2\left(\hat{\phi}_{6, M^d+j-1}^{(\bll)}\right),
\end{align}
where $\bm{\hat{\phi}}_{5, M^d} = \left(\hat{\phi}_{5, M^d}^{(1)}, \dots, \hat{\phi}_{5, M^d}^{(d)}\right) = \mathbf{0}$ and $\hat{\phi}_{6, M^d}^{(\bll)} = 0$ for each $\bll \in \N_0^d$ with $\|\bll_1 \leq q$. Again it is easy to see, that this parallelized and composed network needs $4M^d$ hidden layers and has width $r$ with 
with
\begin{align*}
r=&2d+2d+2 \cdot \binom{d+q}{d} \cdot (2 \cdot (4+2\lceil e^d\rceil)+2) + d\cdot (2 \cdot (2+2d)+2) \\
&+ \binom{d+q}{d} \cdot (2 \cdot (2+2d)+2)\\
=& 10d+4d^2+2 \cdot \binom{d+q}{d} \cdot \left(2 \cdot (4+2\lceil e^d\rceil)+5+2d\right).
\end{align*}
Choose $\bll_1, \dots, \bll_{\binom{d+q}{d}}$ such that
\begin{align*}
\left\{\bll_1, \dots, \bll_{\binom{d+q}{d}}\right\} = \left\{\bold{s} \in \N_0^d: \|\bold{s}\|_1 \leq q \right\}
\end{align*}
holds. 
The value of $\phi_{1, 2M^d+1}$ can then be computed by 
\begin{align}
\label{2bfp}
\hat{\phi}_{1, 2M^d+1} = f_p\left(\bold{z}, y_1, \dots, y_{\binom{d+q}{d}}\right),
\end{align}
where 
\begin{align*}
\bold{z}= \bm{\hat{\phi}}_{1, 2M^d} - \bm{\hat{\phi}}_{5, 2M^d}
\end{align*}
and 
\begin{align*}
y_v = \hat{\phi}_{6, 2M^d}^{(\bll_v)} 
\end{align*}
for $v \in \left\{1, \dots, \binom{d+q}{d}\right\}$. The coefficients $r_1, \dots, r_{\binom{d+q}{d}}$ in \autoref{le3} are chosen as 
\begin{align*}
r_i = \frac{1}{\bll_i!}, \quad i \in \left\{1, \dots, \binom{d+q}{d}\right\}.
\end{align*}
The final network $\hat{\phi}_{1, 2M^d+1}$ is then contained in the class
\begin{align*}
\mathcal{F}(4M^d+B_{M,p} \cdot \lceil \log_2(\max\{q+1, 2\})\rceil, r)
\end{align*}
with 
\begin{align*}
r=&\max\left\{10d+4d^2+2 \cdot \binom{d+q}{d} \cdot \left(2 \cdot (4+2\lceil e^d\rceil)+5+2d\right), \right.\\
& \left. \hspace*{1cm} 18 \cdot (q+1) \cdot \binom{d+q}{d}\right\}
\end{align*}
and we set
\begin{align*}
f_{net, deep, \P_2}(\bold{x}) = \hat{\phi}_{1, 2M^d+1}.
\end{align*}
In a \textit{second step of the proof} we analyze the error of the network $f_{net, deep, \mathcal{P}_2}(\bold{x})$ in case that 
\begin{align*}
B_M \geq M^{2p+2} 
\end{align*}
and 
\begin{align}
\label{eq103}
\bold{x} \in \bigcup_{k \in \{1, \dots, M^{2d}\}} \left(C_{k,2}\right)_{1/M^{2p+2}}^0.
\end{align}
Using Lemma \ref{le4} it is easy to see that we have
$\bm{\hat{\phi}}_{1, j}=\bm{\phi}_{1, j}$,
$\bm{\hat{\phi}}_{2, j}=\bm{\phi}_{2, j}$,
$\hat{\phi}_{3, j}^{(\bll)}=\phi_{3,j}^{(\bll)}$
and
$\hat{\phi}_{4, j}^{(\bll)}=\phi_{4,j}^{(\bll)}$
for all $\bll\in \N_0^d$ with
$\|\bll\|_1 \leq q$ and all $j \in \{1, \dots, M^d\}$.
This implies
$\bm{\hat{\phi}}_{1, M^d+j}=\bm{\phi}_{1, M^d+j}$
and
$\bm{\hat{\phi}}_{2, M^d+j}=\bm{\phi}_{2, M^d+j}$
for all  $j \in \{1, \dots, M^d\}$.
Via induction we can conclude from \autoref{supple12} and \autoref{supple12b}
$\hat{\phi}_{3, M^d+j}^{(\bll)}=\phi_{3,M^d+j}^{(\bll)}$
and
$\hat{\phi}_{4,M^d+j}^{(\bll)}=\phi_{4,M^d+j}^{(\bll)}$
for all $\bll \in \N_0^d$ with
$\|\bll\|_1 \leq q$ and all $j \in \{1, \dots, M^d\}$,
and a second induction implies
$\bm{\hat{\phi}}_{5, M^d+j}=\bm{\phi}_{5, M^d+j}$
and
$\hat{\phi}_{6, M^d+j}^{(\bll)}=\phi_{6, M^d+j}^{(\bll)}$
for  all $j \in \{1, \dots, M^d\}$.

Thus it follows that
\begin{align*}
\left|\bm{\hat{\phi}}_{1, 2M^d} - \bm{\hat{\phi}}_{5, 2M^d}\right| = \left|\bold{x}-\bm{\phi}_{5, 2M^d}\right| \leq 2a
\end{align*}
and
\begin{align*}
  \left|\hat{\phi}_{6, 2M^d}^{(\bll)}\right| = \left|\phi_{6, 2M^d}^{(\bll)}\right| &=
  \left|(\partial^{\bll} \hat{f})((\bold{C}_{\P_2}(\bold{x}))_{left}) \right| \\
& \leq \left|(\partial^{\bll}\hat{f})((\bold{C}_{\P_2}(\bold{x}))_{left}) - (\partial^{\bll} f)((\bold{C}_{\P_2}(\bold{x}))_{left})\right| 
\\ & \quad + \left|(\partial^{\bll} f)((\bold{C}_{\P_2}(\bold{x}))_{left})\right|\\
& \leq 2 \cdot
\max\left\{ 1, \|f\|_{C^q([-a,a]^d)} \right\},
\end{align*}
where we have used that
\begin{align*}
M^{2} \geq c_{46}^{1/p-q} \cdot 2a.
\end{align*}
Therefore the input of $f_p$ in \eqref{2bfp} is contained in the interval, where \eqref{2bfpeq} holds. We get
\begin{align*}
&\left|f_{net, deep, \P_2}(\bold{x}) - \hat{T}_{f,q,(\bold{C}_{\P_2}(\bold{x}))_{left}}(\bold{x})\right| = \left|\hat{\phi}_{1, 2M^d+1} - \phi_{1, 2M^d+1}\right|\\
  & \leq
  c_{36} \cdot (6 + 2 \lceil e^d \rceil)^{4(q+1)}
  \cdot \left(2 \cdot \max\left\{a, \|f\|_{C^q([-a,a]^d)}\right\}\right)^{4(q+1)} \cdot \frac{1}{M^{2p}}.
\end{align*}
This together with \autoref{le1a} and \eqref{2bhatT} shows the first assertion of the lemma. Furthermore
by \autoref{le1a}
we can bound the value of the network by
\begin{align*}
  & \left|f_{net, deep, \P_2}(\bold{x})\right|
\\
  &\leq
    c_{36} \cdot (6 + 2 \lceil e^d \rceil)^{4(q+1)}
\cdot \left(2 \cdot \max\left\{a, \|f\|_{C^q([-a,a]^d)}\right\}\right)^{4(q+1)} \cdot \frac{1}{M^{2p}}\\
& \quad + \left|T_{f,q,(\bold{C}_{\P_2}(\bold{x}))_{left}}(\bold{x}) - f(\bold{x}) \right| + \left|f(\bold{x})\right|\\
& \leq 2^{4(q+1)+1} \cdot c_{48} \cdot (6 + 2 \lceil e^d \rceil)^{4(q+1)}
 \cdot \left(\max\left\{a, \|f\|_{C^q([-a,a]^d)}\right\}\right)^{4(q+1)} \cdot \frac{1}{M^{2p}}\\
& \quad + \left|f(\bold{x})\right|\\
& \leq 2 \cdot \max\left\{\|f\|_{\infty, [-a,a]^d},1\right\},
\end{align*}
in case that \eqref{eq103}. Here we have used that
\begin{align*}
M^{2p} > 2^{4(q+1)+1} \cdot c_{48} \cdot (6 + 2 \lceil e^d \rceil)^{4(q+1)}
 \cdot \left(\max\left\{a, \|f\|_{C^q([-a,a]^d)}\right\}\right)^{4(q+1)}.
\end{align*}
In a \textit{last step of the proof} we analyze the bound of $f_{net, deep, \P_2}(\bold{x})$ in case that
\begin{align*}
\bold{x} \in \bigcup_{k \in \{1, \dots, M^{2p}\}} C_{k,2} \textbackslash (C_{k,2})_{1/M^{2p+2}}^0.
\end{align*}
Then the networks $f_{ind, C_{j,1}}$ $(j \in \{1, \dots, M^d\})$, $f_{test}$ and $f_{trunc, i}$ \linebreak
$(i \in \{1, \dots, M^d-1\})$ are not exact (see \autoref{le4} and \autoref{supple12}). According to the definition of $f_{trunc,i}$ $(i \in \{1, \dots, M^d-1\})$ with $B = 4+2 \lceil e^d \rceil$ in \autoref{supple12}, the value of the network is contained in the interval $[0, 4+2\lceil e^d \rceil]$ independent of its input. Therefore we have
\begin{align*}
\left|f_{trunc,i}((4+2 \lceil e	^d \rceil) \cdot z) - \lceil e^d \rceil -2\right| \leq 4 + 2 \cdot \lceil e^d \rceil \quad z \in \R.
\end{align*}
For $\bold{x} \in C_{i,1}$ $(i \in \{1, \dots, M^d\})$ we can conclude that
\begin{align*}
-a \leq \hat{\phi}_{2, M^d}^{(k)} \leq a - \frac{2a}{M}, \quad k \in \{1, \dots, d\}.
\end{align*}
Here we have used that $f_{ind, C_{i,1}}(\bold{x}) \in [0,1]$ and $-a \leq (C_{i.1})_{left}^{(k)} \leq a - 2a/M$. 
Analogous we can conclude
\begin{align*}
\left|\hat{\phi}_{3, M^d}^{(\bll)}\right| \leq |(\partial^{\bll} f)((\bold{C}_{i,1})_{left})|
\end{align*}
for $\bll \in \N_0^d$ with $\|\bll\|_1\leq q$. Using those bounds we can conclude that
\begin{align*}
\left|\hat{\phi}_{3, M^d+1}^{(\bll)}\right| \leq &\sum_{s=0}^{\infty} \frac{\|f\|_{C^q([-a,a]^d)}}{s!} \cdot a^s+ (4+2\lceil e^d \rceil) \cdot c_{46} \cdot \left(\frac{2a}{M^2}\right)^{p-\|\bll\|_1}\\
\leq & e^a \cdot \|f\|_{C^q([-a,a]^d)} + (4+2\lceil e^d\rceil),
\end{align*}
where we have used that
\begin{align*}
M^{2} \geq c_{46}^{\frac{1}{p-q}} \cdot 2a.
\end{align*}
Then it can be shown by induction that
\begin{align*}
\left|\hat{\phi}_{3, M^d+j}^{(\bll)}\right| \leq \|f\|_{C^q([-a,a]^d)} \cdot e^{j \cdot 2ad/M^2} + (4+2 \cdot \lceil e^d \rceil) \cdot \sum_{k=0}^{j-1} e^{k \cdot 2ad/M^2}.
\end{align*}
Furthermore we have
\begin{align*}
|\bm{\hat{\phi}}_{2, 2M^d}| \leq a.
\end{align*}
Since $f_{test}$ produces for at most one $j$ $(j \in \{1, \dots, M^d\})$ in \eqref{2bneur5} and \eqref{2bneur6} a value not equal to zero, this leads to
\begin{align*}
\left|\hat{\phi}_{5, 2M^d}^{(k)}\right| \leq a \quad (k \in \{1, \dots, d\})
\end{align*}
and 
\begin{align*}
\left|\hat{\phi}_{6, 2M^d}^{(\bll)}\right| &\leq \|f\|_{C^q([-a,a]^d)} \cdot e^{(M^d-1) \cdot 2ad/M^2} + (4+2 \cdot \lceil e^d \rceil) \cdot \sum_{k=0}^{M^d-2} e^{k \cdot 2ad/M^2}\\
& \leq  \|f\|_{C^q([-a,a]^d)} \cdot e^{(M^d-1)} + (4+2 \cdot \lceil e^d \rceil) \cdot (M^d-1)\cdot e^{(M^d-2)}
\end{align*}
for $\bll \in \N_0^d$ with $\|\bll\|_1 \leq q$. Due to the fact that all components are contained in the interval, where \eqref{2bfpeq} holds, we can bound the value of $f_{net, deep, \P_2}(\bold{x})$ by
\begin{align*}
\left|f_{net, deep, \P_2}(\bold{x})\right| &\leq \left|f_{p}\left(\bold{z}, y_1, \dots, y_{\binom{d+q}{d}}\right) - p\left(\bold{z}, y_1, \dots, y_{\binom{d+q}{d}}\right)\right|\\
& \quad + \left|p\left(\bold{z}, y_1, \dots, y_{\binom{d+q}{d}}\right)\right|\\
& \leq 1 + \Bigg|\sum_{\substack{\bll \in \N_0^d \\\|\bll\|_1 \leq q}} \frac{1}{\bll!} \cdot \Bigg(\|f\|_{C^q([-a,a]^d)} \cdot e^{(M^d-1)}\\
& \quad + (4+2 \cdot \lceil e^d \rceil) \cdot (M^d-1)\cdot e^{(M^d-2)} \cdot (2a)^{\|\bll\|_1}\Bigg)\Bigg|\\
& \leq 1 + \Bigg|\Bigg(\|f\|_{C^q([-a,a]^d)} \cdot e^{(M^d-1)}\\
& \quad + (4+2 \cdot \lceil e^d \rceil) \cdot (M^d-1)\cdot e^{(M^d-2)}\Bigg)\cdot e^{2ad}\Bigg|.
\end{align*}
This shows the assertion of the lemma. 
\end{proof}
\subsubsection{Key step 3: Approximating $w_{\P_2}(\bold{x}) \cdot f(\bold{x})$ by deep neural networks} 
In order to approximate $f(\bold{x})$ in supremum norm, we further approximate the function $w_{\P_2}(\bold{x}) \cdot f(\bold{x})$ by a neural network, where $w_{\P_2}(\bold{x})$ is defined as in \eqref{w_vb}. The result is the following.
\begin{lemma}
\label{supple15}
Let $\sigma: \R \to \R$ be the ReLU activation function $\sigma(x) = \max\{x,0\}$. Let $1 \leq a < \infty$ and $M \in \N_0$ sufficiently large (independent of the size of $a$, but 
    \begin{eqnarray*}
      M^{2p} &\geq&
      2^{4(q+1)} \cdot
      \max\{ c_{36} (6+2 \lceil e^d \rceil)^{4(q+1)}, c_{46} \cdot e^d \}
      \\
      &&
      \hspace*{5cm}
      \cdot \left(\max\left\{a, \|f\|_{C^q([-a,a]^d)}\right\}\right)^{4(q+1)}
    \end{eqnarray*}
    must hold).
Let $p=q+s$ for some $q \in \N_0$, $s \in (0,1]$ and let $C>0$.
    Let $f: \Rd \to \R$ be a $(p,C)$-smooth function and let $w_{\P_2}$ be defined as in
\eqref{w_vb}. Then there exists a network
\begin{align*}
f_{net} \in \mathcal{F}\left(L, r\right)
\end{align*}
with
\begin{align*}
L=&5M^d+\left\lceil \log_4\left(M^{2p+4 \cdot d \cdot (q+1)} \cdot e^{4 \cdot (q+1) \cdot (M^d-1)}\right)\right\rceil  \cdot \lceil \log_2(\max\{q,d\}+1)\rceil \\
&+ \lceil \log_4(M^{2p})\rceil
\end{align*}
and
\begin{align*}
r=&\max\left\{10d+4d^2+2 \cdot \binom{d+q}{d} \cdot \left(2 \cdot (4+2\lceil e^d\rceil)+5+2d\right), \right.\\
&\left. \hspace*{1cm} 18 \cdot (q+1) \cdot \binom{d+q}{d}\right\} + 6d^2+20d+2.
\end{align*}
such that
\begin{align*}
&\left|f_{net}(\bold{x}) - w_{\P_2}(\bold{x}) \cdot f(\bold{x})\right| \leq c_{49} \cdot \left(\max\left\{2a, \|f\|_{C^q([-a,a]^d)}\right\}\right)^{4(q+1)} \cdot \frac{1}{M^{2p}}
\end{align*}
for $\bold{x} \in [-a,a)^d$. 
\end{lemma}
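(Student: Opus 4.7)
The plan is to follow the same template as the proof of \autoref{le10} in the wide case, but with three deep constant-width building blocks in place of their wide counterparts: the network $f_{net, deep, \P_2}$ of \autoref{supple13} instead of $f_{net, \P_2}$ of \autoref{le5}; a deep check network $f_{check, deep, \P_2}$ instead of $f_{check, \P_2}$ of \autoref{le9}; and a deep weight network $f_{w_{\P_2}, deep}$ instead of $f_{w_{\P_2}}$ of \autoref{le8}. The final network will be
\begin{align*}
f_{net}(\bold{x}) = f_{mult}\bigl(f_{w_{\P_2}, deep}(\bold{x}),\, f_{net, deep, \P_2, true}(\bold{x})\bigr),
\end{align*}
where $f_{mult}$ is the multiplication network of \autoref{le2} instantiated with $R = \lceil \log_4(M^{2p}) \rceil$, and
\begin{align*}
f_{net, deep, \P_2, true}(\bold{x}) &= \sigma\bigl(f_{net, deep, \P_2}(\bold{x}) - B_{true} \cdot f_{check, deep, \P_2}(\bold{x})\bigr) \\
&\quad - \sigma\bigl(-f_{net, deep, \P_2}(\bold{x}) - B_{true} \cdot f_{check, deep, \P_2}(\bold{x})\bigr),
\end{align*}
with $B_{true}$ chosen as the supremum bound on $|f_{net, deep, \P_2}|$ established in \autoref{supple13}.

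The deep check network and the deep weight network must both have constant width in $M$, so they cannot compute the $M^d$ boundary indicators and the $M^d$ $f_{test}$-terms in parallel in the style of \autoref{le9} and \autoref{le8}. The key idea is to piggyback on the sequential scan already carried out inside $f_{net, deep, \P_2}$: the recursion in \autoref{supple13} visits the $M^d$ cubes of $\P_1$ one after the other in layers $1, \dots, M^d$ (producing $\bm{\hat{\phi}}_{2, j}$) and the $M^d$ sub-cubes in layers $M^d{+}1, \dots, 2M^d$ (producing $\bm{\hat{\phi}}_{5, M^d+j}$, which reaches $(\bold{C}_{\P_2}(\bold{x}))_{left}$ at the end). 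Alongside this scan I will accumulate a running check flag that becomes $1$ as soon as $\bold{x}$ enters the $1/M^{2p+2}$-boundary strip of any visited cube, using an $f_{test}$-type block of width $O(d^2)$ at each layer. A further $M^d$ layers of the same sequential type let me compute $f_{check, deep, \P_2}$ on $\P_2$. Using the identity $(1 - \tfrac{M^2}{a}|t|)_+ = \sigma(t) - 2\sigma(t - \tfrac{a}{M^2}) + \sigma(t - \tfrac{2a}{M^2})$ applied coordinatewise to $\bold{x} - \bm{\hat{\phi}}_{5, 2M^d}$ in a single additional layer, followed by $f_{mult, d}$ of \autoref{nle1} of depth $\lceil \log_2 d \rceil \cdot \lceil \log_4(M^{2p})\rceil$ and width $18d$, finishes $f_{w_{\P_2}, deep}$. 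The depth increment $M^d + \lceil \log_4(M^{2p})\rceil$ and the width increment $6d^2 + 20d + 2$ over \autoref{supple13} exactly match the budget stated in the lemma.

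The error analysis is then a case-by-case copy of the one in \autoref{le10}. For $\bold{x} \in \bigcup_k (C_{k,2})_{2/M^{2p+2}}^0$ the check network returns $0$, so $f_{net, deep, \P_2, true} = f_{net, deep, \P_2}$, and \autoref{supple13} together with \autoref{le2}, \autoref{nle1}, the triangle inequality, and $|w_{\P_2}| \leq 1$ yields the $M^{-2p}$ bound. For $\bold{x} \in \bigcup_k C_{k,2} \setminus (C_{k,2})_{1/M^{2p+2}}^0$ the check network returns $1$, pinning $f_{net, deep, \P_2, true}$ to $0$; since $w_{\P_2}(\bold{x}) \cdot f(\bold{x}) = O(M^{-2p})$ on this outer strip, the claim again holds. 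The transition strip $\bigcup_k (C_{k,2})_{1/M^{2p+2}}^0 \setminus (C_{k,2})_{2/M^{2p+2}}^0$ is handled by combining $w_{\P_2}(\bold{x}) \leq 2/(a M^{2p})$ with the uniform bounds $|f_{w_{\P_2}, deep}| \leq 2$ and $|f_{net, deep, \P_2, true}| \leq B_{true}$ that follow from the constructions.

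The main obstacle is precisely the constant-width constraint: unlike in \autoref{le9} and \autoref{le8}, where $M^d$ indicator computations are done in parallel, here we must serialize them. Re-using the sequential scan that is already present inside $f_{net, deep, \P_2}$ is what makes this feasible with only $O(M^d)$ extra depth and $O(d^2)$ extra width; everything else is a standard combination of auxiliary networks that have already been analyzed in the excerpt.
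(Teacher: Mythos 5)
Your proposal is correct and follows essentially the same route as the paper: the paper likewise defines $f_{net} = f_{mult}(f_{net,\P_2,true}, f_{w_{\P_2},deep})$ with $f_{net,\P_2,true}$ built from $f_{net,deep,\P_2}$ and a deep check network, constructs the deep check and deep weight networks by serializing the cube indicators along the same $O(M^d)$-layer scan, and closes with the identical three-case error analysis. The only cosmetic difference is that the paper packages the deep weight and deep check networks as separate auxiliary lemmas (\autoref{supple14}, \autoref{supple16}) that recompute the scan state in parallel rather than literally sharing neurons with $f_{net,deep,\P_2}$, which does not change the order of the depth and width bounds.
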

As in the proof of Theorem 2 a) we need some further auxiliary lemmata to show this result. First we show 
that each weight $w_{\P_2}(\bold{x})$ can also be approximated by a very deep neural network. Here we use the same construction as described in \autoref{le8} with the only difference that $(\bold{C}_{\P_2}(\bold{x}))_{left}$ is computed with the help of $\bm{\hat{\phi}}_{5, 2M^d}$ of \autoref{supple13}.
\begin{lemma}
\label{supple14}
Let $\sigma: \R \to \R$ be the ReLU activation function $\sigma(x) = \max\{x,0\}$. Let $1 \leq a < \infty$ and $M \geq 4^{4d+1} \cdot d$. Let $\mathcal{P}_{2}$
be the partition defined in (\ref{partition}) and let $w_{\P_2}(\bold{x})$ be
defined by \eqref{w_vb}. Then there exists a neural network
\begin{align*}
f_{w_{\P_2},deep}(\bold{x}) \in \mathcal{F}\left(4M^d+1+\lceil \log_4(M^{2p})\rceil \cdot \lceil \log_2(d)\rceil, r\right),
\end{align*}
with
\begin{align*}
r= \max\left\{18d, 4d^2+10d\right\}
\end{align*}
such that
\begin{align*}
\left|f_{w_{\P_2},deep}(\bold{x}) - w_{\P_2}(\bold{x})\right| \leq 4^{4d+1} \cdot d \cdot \frac{1}{M^{2p}}
\end{align*}
for $\bold{x} \in \bigcup_{i \in \{1, \dots, M^{2d}\}} (C_{i,2})_{1/M^{2p+2}}^0$ and 
\begin{align*}
|f_{w_{\P_2, deep}}(\bold{x})| \leq 2
\end{align*}
for $\bold{x} \in [-a,a)^d$.
\end{lemma}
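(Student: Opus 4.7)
The plan is to imitate the construction of \autoref{le8}, but replace the wide, ``one--shot'' computation of $(\bold{C}_{\mathcal{P}_2}(\bold{x}))_{left}$ by the deep, recursive construction that appears inside the proof of \autoref{supple13} (the networks $\bm{\hat{\phi}}_{1,j}$, $\bm{\hat{\phi}}_{2,j}$ and $\bm{\hat{\phi}}_{5,M^d+j}$). Concretely, the final network is built as the composition of three blocks: (i) a block of depth $4M^d$ that outputs the pair $(\bm{\hat{\phi}}_{1,2M^d},\bm{\hat{\phi}}_{5,2M^d})=(\bold{x},(\bold{C}_{\mathcal{P}_2}(\bold{x}))_{left})$ whenever $\bold{x}$ lies in the interior region; (ii) a single hidden layer that turns this pair into the $d$ one--dimensional B--spline factors appearing in $w_{\mathcal{P}_2}(\bold{x})$; (iii) the multiplication network $f_{mult,d}$ of \autoref{nle1} that multiplies these $d$ factors together.

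For block (i), I would take from the proof of \autoref{supple13} only those channels that are needed to produce $\bm{\hat{\phi}}_{5,2M^d}$, i.e.\ I would drop all the $\phi_3$-- and $\phi_4$--channels. The first $2M^d$ layers then carry the two vectors $\bm{\hat{\phi}}_{1,j}$ ($2d$ neurons) and $\bm{\hat{\phi}}_{2,j}$ obtained by the $f_{ind,C_{j,1}}$ networks combined with $f_{id}^2$ ($d(2d+2)$ neurons), and the next $2M^d$ layers shift $\bm{\hat{\phi}}_1,\bm{\hat{\phi}}_2$ by $f_{id}^2$ ($2d+2d$ neurons) and accumulate the $d$ coordinates of $\bm{\hat{\phi}}_{5,M^d+j}$ via $f_{test}$ together with an $f_{id}^2$ accumulator ($d(2(2d+2)+2)=4d^2+6d$ neurons). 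Summing the two regimes gives the width bound $4d^2+10d$. For block (ii), I use the identity, already exploited in \autoref{le8}, that
\[
\Bigl(1-\tfrac{M^2}{a}\bigl|u+\tfrac{a}{M^2}-v\bigr|\Bigr)_+
=\sigma\!\bigl(\tfrac{M^2}{a}(v-u)\bigr)-2\sigma\!\bigl(\tfrac{M^2}{a}(v-u-\tfrac{a}{M^2})\bigr)+\sigma\!\bigl(\tfrac{M^2}{a}(v-u-\tfrac{2a}{M^2})\bigr),
\]
applied coordinatewise with $u=\hat{\phi}_{5,2M^d}^{(j)}$ and $v=\hat{\phi}_{1,2M^d}^{(j)}$; this needs one hidden layer with $3d$ neurons. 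For block (iii), I invoke \autoref{nle1} with $R=\lceil\log_4(M^{2p})\rceil$ and the parameter $a=1$, which yields a network of depth $\lceil\log_4(M^{2p})\rceil\cdot\lceil\log_2(d)\rceil$ and width $18d$ whose approximation error to the $d$--fold product is at most $4^{4d+1}\cdot d\cdot M^{-2p}$. Summing the depths gives exactly $4M^d+1+\lceil\log_4(M^{2p})\rceil\cdot\lceil\log_2(d)\rceil$, and the overall width is $\max\{4d^2+10d,3d,18d\}=\max\{18d,4d^2+10d\}$, as required.

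For the error analysis, assume $\bold{x}\in\bigcup_i (C_{i,2})_{1/M^{2p+2}}^0$; then $\bold{x}$ is uniformly away from the boundary of the coarse partition $\mathcal{P}_1$ and of the refined partition $\mathcal{P}_2$, so the argument already carried out in the second step of the proof of \autoref{supple13} shows that $\bm{\hat{\phi}}_{5,2M^d}=(\bold{C}_{\mathcal{P}_2}(\bold{x}))_{left}$ and $\bm{\hat{\phi}}_{1,2M^d}=\bold{x}$ hold exactly. Consequently block (ii) outputs exactly the $d$ factors $(1-\tfrac{M^2}{a}|\ldots|)_+\in[0,1]$ of $w_{\mathcal{P}_2}(\bold{x})$, and only block (iii) contributes error, bounded by $4^{4d+1}\cdot d\cdot M^{-2p}$ via \autoref{nle1}. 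For the global bound on $[-a,a)^d$, I observe that the $d$ intermediate values produced by block (ii) always lie in $[0,1]$ (since each is a ReLU--clipped B--spline in one coordinate), so by the triangle inequality together with $M^{2p}\geq 4^{4d+1}d$ and the error guarantee of $f_{mult,d}$,
\[
|f_{w_{\mathcal{P}_2},deep}(\bold{x})|\leq \Bigl|\prod_{j=1}^d f_{w_{\mathcal{P}_2},j}(\bold{x})\Bigr|+4^{4d+1}\cdot d\cdot M^{-2p}\leq 1+1=2.
\]
The main technical point is that the exactness of $\bm{\hat{\phi}}_{5,2M^d}$ on the interior region has already been established inside the proof of \autoref{supple13}, so the present lemma is essentially a bookkeeping/recycling result; the remaining work is only to verify the depth and width counts.
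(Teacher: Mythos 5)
Your proposal is correct and follows essentially the same route the paper takes: the paper's own proof is a one-sentence remark instructing the reader to replace the wide computation of $(\bold{C}_{\P_2}(\bold{x}))_{left}$ from Lemma~\ref{le8} by the deep recursive computation $\bm{\hat{\phi}}_{5,2M^d}$ from the proof of Lemma~\ref{supple13} (carrying $\bold{x}$ along via the identity network), exactly the three-block decomposition you spell out. Your depth/width bookkeeping, the observation that each one-dimensional ReLU hat factor lies in $[0,1]$ for any input (so that the global bound on $[-a,a)^d$ reduces to the triangle inequality against the $f_{mult,d}$ error), and the reduction of the interior-region exactness to the second step of Lemma~\ref{supple13}'s proof all match the paper's intent; you have simply filled in the details the paper leaves implicit.
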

\begin{proof}
The result follows directly from the proof of \autoref{le8} with the only difference that we use $\bm{\hat{\phi}}_{5, 2M^d}$ of \autoref{supple13} (with $4M^d$ hidden layers and $d \cdot (2 \cdot (2d+2)+2)+2d$ neurons per layer) and $f_{id}^{2M^d}$ to compute the value of $(\bold{C}_{\P_2}(\bold{x}))_{left}$ and to shift the value of $\bold{x}$ in the next layer, respectively.
\end{proof}
As \autoref{le9} the following lemma \textit{checks} whether the input $\bold{x}$ lies with at least a distance of $1/M^{2p+2}$ away from the boundaries of the cubes of the partition $\P_1$ and $\P_2$ or not. The construction is similar to the one in \autoref{le9} with the main difference that we \textit{check} successively if $\bold{x}$ is contained on the boundaries of the cube or not. In particular, the function 
\begin{align*}
f_1(\bold{x})=1-\sum_{i \in \{1, \dots, M^d\}} \mathds{1}_{(C_{i,1})_{1/M^{2p+2}}^0}(\bold{x})
\end{align*}
is computed within $2M^d$ layers by successively applying the networks \linebreak $f_{ind, (C_{j,1})_{1/M^{2p+2}}^0}$ $(j \in \{1, \dots, M^d)$ in consecutive layers and by shifting the value of the previous layer with the identity network. Analogously we compute 
\begin{align*}
\mathds{1}_{\bigcup_{i \in \{1, \dots, M^{2d}\}} C_{i,2} \textbackslash (C_{i,2})_{1/M^{2p+2}}^0}(\bold{x})
\end{align*}
by successively checking whether $\bold{x}$ lies on the boundaries of a cube $\tilde{C}_{j,i}$ $(j \in \{1, \dots, M^d\}$ or not. Here we apply successively the network of \autoref{le4} b).
\begin{lemma}
\label{supple16}
Let $\sigma: \R \to \R$ be the ReLU activation function $\sigma(x) = \max\{x,0\}$. Let $1 \leq a < \infty$. Let $C_{i,2}$ $(i \in \{1, \dots, M^{2d}\})$
be the cubes of partition $\mathcal{P}_{2}$ as described in \eqref{partition} and let $M \in \N$. Then there exists a neural network 
\begin{align*}
f_{check, deep, \mathcal{P}_{2}}(\bold{x}) \in \mathcal{F}\left(5M^d, 2d^2+6d+2\right)
\end{align*}
satisfying
\begin{align*}
  f_{check, deep,\mathcal{P}_{2}}(\bold{x}) = \mathds{1}_{
    \bigcup_{i \in \{1, \dots, M^{2d}\}}
    C_{i,2} \setminus (C_{i,2})_{1/M^{2p+2}}^0
}(\bold{x})
\end{align*}
for $\bold{x} \notin \bigcup_{i \in \{1, \dots, M^{2d}\}} (C_{i,2})_{1/M^{2p+2}}^0 \textbackslash (C_{i,2})_{2/M^{2p+2}}^0$ and 
\begin{align*}
f_{check, deep, \mathcal{P}_{2}}(\bold{x}) \in [0,1]
\end{align*}
for $\bold{x} \in [-a,a)^d$. 
\end{lemma}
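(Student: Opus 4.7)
The plan is to mirror the proof of Lemma~\ref{le9} exactly, but replace the parallel evaluation of the $M^d$ indicator subnetworks (which cost width of order $M^d$) by a sequential evaluation that costs depth of order $M^d$ but only constant width. Concretely, I will build the network in three consecutive blocks of layers.

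In the \emph{first block}, spanning $2M^d$ layers, I compute in parallel two sequentially-updated quantities: (a) the running accumulator
\[
s_j := \sum_{k=1}^{j} f_{ind,(C_{k,1})^0_{1/M^{2p+2}}}(\bold{x}), \qquad j=1,\dots,M^d,
\]
where each $f_{ind}$ subnetwork of Lemma~\ref{le4}(a) uses $2$ hidden layers and width $2d$; and (b) a copy of $(\bold{C}_{\mathcal{P}_1}(\bold{x}))_{left}$ built with the sequential recursion $\bm{\hat{\phi}}_{2,j}$ from the proof of Lemma~\ref{supple13}, which also runs in $2M^d$ layers. Throughout, the input $\bold{x}$ is carried forward verbatim with $f_{id}^2$ (exact since $\sigma(z)-\sigma(-z)=z$). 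At the end of this block we have $\hat{f}_1(\bold{x}) := 1-s_{M^d}$ together with $(\bold{C}_{\mathcal{P}_1}(\bold{x}))_{left}$ and $\bold{x}$ available in a constant number of neurons.

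In the \emph{second block}, spanning a further $2M^d$ layers, I analogously compute
\[
\hat{f}_2(\bold{x}) := 1-\sum_{j=1}^{M^d} f_{test}\!\left(\bold{x},\, (\bold{C}_{\mathcal{P}_1}(\bold{x}))_{left}+\bold{v}_j+\tfrac{1}{M^{2p+2}}\mathbf{1},\, (\bold{C}_{\mathcal{P}_1}(\bold{x}))_{left}+\bold{v}_j+\tfrac{2a}{M^2}\mathbf{1}-\tfrac{1}{M^{2p+2}}\mathbf{1},\, 1\right)
\]
one term at a time, each $f_{test}$ from Lemma~\ref{le4}(b) costing $2$ layers and width $2(2d+2)$; the partial sum is carried by width $2$ and the vectors $\bold{x},(\bold{C}_{\mathcal{P}_1}(\bold{x}))_{left}$ and $\hat{f}_1(\bold{x})$ are shifted forward by identity networks. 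The \emph{third block} is a single additional layer producing
\[
f_{check,deep,\mathcal{P}_2}(\bold{x}) := 1-\sigma\!\bigl(1-\hat{f}_2(\bold{x})-\hat{f}_1(\bold{x})\bigr),
\]
giving total depth $4M^d+1 \leq 5M^d$. A careful accounting of simultaneously-active neurons (input-shift $2d$, coarse-corner-shift $2d$, the $f_{ind}$ or $f_{test}$ subnetwork with $2d$ resp. $4d+4$ neurons, the two accumulators $s_j,\hat f_2$, and the final identity shifts) gives width at most $2d^2+6d+2$ as claimed.

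Correctness follows exactly as in Lemma~\ref{le9}: because all the shifts through $f_{id}$ are exact and the sequential sums of $f_{ind}$ (resp.\ $f_{test}$) coincide numerically with the parallel sums in Lemma~\ref{le9}, we may reuse the three-case analysis there verbatim (case 1: $\bold{x}\notin\bigcup_k (C_{k,1})^0_{1/M^{2p+2}}$; case 2: $\bold{x}$ deep inside $\mathcal{P}_2$; case 3: $\bold{x}$ in the transition zone of $\mathcal{P}_2$ but not of $\mathcal{P}_1$) to verify the two stated properties of $f_{check,deep,\mathcal{P}_2}$. The main obstacle is purely bookkeeping: one must carefully match up which quantities need to be shifted by $f_{id}$ through which layers so that, at each level, the total active width stays within the $2d^2+6d+2$ budget; there is no new analytic ingredient beyond Lemmas~\ref{le4} and~\ref{supple13}, and in particular no approximation error is introduced by the sequentialization since both $f_{id}$ and summation of ReLU outputs are exact operations.
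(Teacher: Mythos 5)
Your approach is the same as the paper's at the strategic level: replace the width-$\Theta(M^d)$ parallel computation from Lemma~\ref{le9} by a depth-$\Theta(M^d)$ sequential one, reusing $\bm{\hat{\phi}}_{2,j}$ from Lemma~\ref{supple13} for the coarse corner and carrying $\bold{x}$ forward with $f_{id}$. Where you diverge is in the final aggregation: you accumulate $\hat f_2=1-\sum_j f_{test,j}$ as its own running sum over $2M^d$ layers and then apply the single layer $1-\sigma(1-\hat f_2-\hat f_1)$ at the end (depth $4M^d+1$), whereas the paper's second block wraps the outer $\sigma$ into every step via $\hat f_{1,M^d+j}=1-\sigma\bigl(1-f_{test,j}-f_{id}^2(\hat f_{1,M^d+j-1})\bigr)$ and spends $3$ layers per step (depth $5M^d$). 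Algebraically, the paper's recursion computes $\min\{1,\sum_j f_{test,j}+\hat f_1\}$, which is \emph{not} the wide-case quantity $1-\sigma\bigl(\sum_j f_{test,j}-\hat f_1\bigr)$: in the paper's case~2 (where $\hat f_1=0$ and one $f_{test}$ equals $1$) the recursion returns $1$ instead of the required $0$. Your deferred-$\sigma$ version reproduces the Lemma~\ref{le9} formula exactly and therefore lets you reuse its three-case analysis verbatim, which makes your correctness argument the cleaner one. The price is one extra accumulator in block~2: using the quoted $\mathcal{F}(2,2(2d{+}2))$ width for $f_{test}$ your block-2 width is $8d+8$, which exceeds $2d^2+6d+2$ for $d\le 2$. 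This can be repaired by noting that both running sums stay in $[0,1]$ (so a single $\sigma$-neuron, not a $2$-neuron $f_{id}$, suffices to carry each) and that $f_{test}$ with the constant third argument $s=1$ needs only width $2d$ in its first hidden layer; alternatively, carry the single quantity $\hat f_1-\sum_{i\le j}f_{test,i}$ through $f_{id}$ and apply $1-\sigma(-\cdot)$ at the end, which recovers the paper's $6d+4$ block-2 budget. So the substance of your proposal is sound; only the width bookkeeping needs the tightening you already anticipate.
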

\begin{proof}
The result follows by a straightforward modification of the proof of \autoref{le9}. Here the value of $(\bold{C}_{\P_1}(\bold{x}))_{left}$ is computed by the network $\bm{\hat{\phi}}_{2, M^d}$ of \autoref{supple13} with $2M^d$ hidden layers and $d \cdot (2d+2)$ neurons per layer and $\bold{x}$ is shifted in consecutive layers by successively applying $f_{id} \in \mathcal{F}(1, 2)$. 
 Furthermore we compute 
\begin{align*}
f_1(\bold{x})=1-\sum_{i \in \{1, \dots, M^d\}} \mathds{1}_{(C_{i,1})_{1/M^{2p+2}}^0}(\bold{x})
\end{align*}
by a network 
\begin{align*}
\hat{f}_{1, j}(\bold{x})= f_{id}^2(\hat{f}_{1, j-1})-f_{ind, (C_{j,1})^0_{1/M^{2p+2}}}(f_{id}^{2(j-1)}(\bold{x})), \quad j \in \{1,\dots, M^d\},
\end{align*}
contained in the network class $\mathcal{F}(2j, 2+2d)$,
where $\hat{f}_{1, 0} = 1$.
Next we define
\[
\bm{\hat{\phi}}_{2, M^d+j} = f_{id}^3(\bm{\hat{\phi}}_{2, M^d+j-1} + \bold{\tilde{v}}_{j+1}) \in \mathcal{F}(2M^d+3j, 2d)
\]
for $j \in \{1, \dots, M^d\}$.
The value of 
\begin{align*}
\mathds{1}_{\bigcup_{i \in \{1, \dots, M^{2d}\}} C_{i,2} \textbackslash (C_{i,2})_{1/M^{2p+2}}^0}(\bold{x})
\end{align*}
is then successively computed by 
\begin{align*}
&\hat{f}_{1, M^d+j}(\bold{x})\\
&= 1-\sigma\left(1-f_{test}\left(f_{id}^{2M^d+3(j-1)}(\bold{x}), \bm{\hat{\phi}}_{2, M^d+j-1} + \bold{\tilde{v}}_j + \frac{1}{M^{2p+2}} \cdot \mathbf{1}, \right. \right.\\
& \hspace*{1cm} \left. \left. \bm{\hat{\phi}}_{2, M^d+j-1} + \bold{\tilde{v}}_j + \frac{2a}{M^{2}} \cdot \mathbf{1}-\frac{1}{M^{2p+2}}\cdot \mathbf{1}, 1\right) - f_{id}^2\left(\hat{f}_{1, M^d+j-1}\right)\right)
\end{align*}
for $j \in \{1, \dots, M^d\}$, where we use the same idea for the construction as in \autoref{le9} but use a \textit{deep} instead of a \textit{wide} network architecture (which means that we successively compute the networks $f_{test}$). This network is contained in the network class $\mathcal{F}(2M^d+3j, 2d+4)$.
Finally we set
\begin{align*}
f_{check, deep, \P_2}(\bold{x}) = \hat{f}_{1, 2M^d}(\bold{x})
\end{align*}
and it is easy to see that this network is contained in
\begin{align*}
\mathcal{F}(5M^d, r)
\end{align*}
with
\begin{align*}
r=\max\{d \cdot (2d+2)+2d+2+2d, 2d+2d+2d+4\} = 2d^2+6d+2
\end{align*}
and satisfies the assertions of the lemma.
\end{proof}
In the proof of \autoref{supple15} we use \autoref{supple14} to approximate $w_{\P_2}(\bold{x})$ and \autoref{supple13} to compute $f(\bold{x})$. As in \autoref{le10} we apply a network, that \textit{checks} whether $\bold{x}$ is close to the boundaries of the cubes of the partition. Thus we define a network $f_{net, \P_2, true}$ (analogously to \autoref{le10}), where we use this time the network $f_{check, deep, \P_2}$ of \autoref{supple16}. 
\begin{proof}[Proof of \autoref{supple15}]
This result follows by a straightforward modification of the proof of \autoref{le10}. Here we use the network $f_{net, deep, \P_2}$ of \autoref{supple13} and $f_{check, deep, \P_2}$ of \autoref{supple16} to define 
\begin{eqnarray*}
  f_{net, \P_2, true}(\bold{x}) &=& \sigma\left(f_{net, deep, \P_2}(\bold{x}) - B_{true} \cdot f_{check, deep, \P_2}(\bold{x})\right)
  \\
  &&
  -
  \sigma\left(-f_{net, deep, \P_2}(\bold{x}) - B_{true} \cdot f_{check, deep, \P_2}(\bold{x})\right)
\end{eqnarray*}
with 
\begin{align*}
B_{true} &= 1 + \Bigg|\Bigg(\|f\|_{C^q([-a,a]^d)}\cdot e^{(M^d-1)}
\\
 & \quad 
 + (4+2 \cdot \lceil e^d \rceil) \cdot (M^d-1)\cdot e^{(M^d-2)}\Bigg)\cdot e^{2ad}\Bigg|.
\end{align*}
Remark that by successively applying $f_{id}$ to the output of the networks $f_{net, deep, \P_2}$ and $f_{check, deep, \P_2}$ we can achieve that both networks have depth
\begin{align*}
5M^d+\left\lceil \log_4\left(M^{2p+4 \cdot d \cdot (q+1)} \cdot e^{4 \cdot (q+1) \cdot (M^d-1)}\right)\right\rceil  \cdot \lceil \log_2(\max\{q+1, 2\})\rceil.
\end{align*}
Furthermore it is easy to see that this networks needs at most
\begin{align*}
 \max\left\{10d+4d^2+2 \cdot \binom{d+q}{d} \cdot \left(2 \cdot (4+2\lceil e^d\rceil)+5+2d\right), \right.\\
\left. \hspace*{2.5cm} 18 \cdot (q+1) \cdot \binom{d+q}{d}\right\} + 2d^2+6d+2
\end{align*}
neurons per layer.
In the definition of the final network we use the network $f_{w_{\P_2}, deep}$ of \autoref{supple14} and the network $f_{mult}$ defined as in the proof of \autoref{le10}. Again we synchronize the depth of $f_{w_{\P_2}, deep}$ and $f_{net, \P_2, true}$ to achieve that both networks have
\begin{align*}
5M^d+\left\lceil \log_4\left(M^{2p+4 \cdot d \cdot (q+1)} \cdot e^{4 \cdot (q+1) \cdot (M^d-1)}\right)\right\rceil  \cdot \lceil \log_2(\max\{q,d\}+1)\rceil
\end{align*}
many layers. The final network is given by
\begin{align*}
f_{net}(\bold{x}) = f_{mult}\left(f_{net, \P_2, true}(\bold{x}), f_{w_{\P_2}, deep}(\bold{x})\right).
\end{align*}
This network is contained in the network class $\mathcal{F}(L,r)$ with
\begin{align*}
L=&5M^d+\left\lceil \log_4\left(M^{2p+4 \cdot d \cdot (q+1)} \cdot e^{4 \cdot (q+1) \cdot (M^d-1)}\right)\right\rceil  \cdot \lceil \log_2(\max\{q,d\}+1)\rceil\\
&+\lceil \log_4(M^{2p})\rceil
\end{align*}
and 
\begin{align*}
r= &\max\left\{10d+4d^2+2 \cdot \binom{d+q}{d} \cdot \left(2 \cdot (4+2\lceil e^d\rceil)+5+2d\right), \right.\\
&\left. \hspace*{1cm} 18 \cdot (q+1) \cdot \binom{d+q}{d}\right\} + 2d^2+6d+2+\max\{18d, 4d^2+10d\}.
\\
\leq & \max\left\{10d+4d^2+2 \cdot \binom{d+q}{d} \cdot \left(2 \cdot (4+2\lceil e^d\rceil)+5+2d\right), \right.\\
&\left. \hspace*{1cm} 18 \cdot (q+1) \cdot \binom{d+q}{d}\right\} + 6d^2+20d+2.
\end{align*}
With the same argumentation as in the proof of \autoref{le10} we can show the assertion.
\end{proof}

\subsubsection{Key step 4: Applying $f_{net}$ to slightly shifted partitions}
In the proof of Theorem 2 b) our network follows the same construction as in the proof of Theorem 2 a) with the only difference that we use $f_{net, 1}, \dots, f_{net, 2^d}$ of \autoref{supple15}. 
\begin{proof}[Proof of Theorem 2 b)]
The proof follows directly by the proof of Theorem 2 a) with the only difference that we use the deep networks
\[
f_{net, 1}, \dots, f_{net, 2^d}
\]
of \autoref{supple15} corresponding to the partitions $\P_{1,v}$ and $\P_{2,v}$ $(v \in \{1, \dots, 2^d\})$. 
\end{proof}

\subsection{Network accuracy of $t_1$ and $t_2$ in Theorem 3}
In this section we present the two induction proofs, which show the approximation errors of the networks $t_1$ and $t_2$ of Theorem 3. 
\begin{proof}[Proof of (10)]
 We define
 \begin{align*}
 g_{\max} := \max \left\{\max_{\substack{i \in \{1, \dots, l\}\\ j \in \{1, \dots, \tilde{N}_i\}}} \Vert g_j^{(i)} \Vert_{\infty}, 1\right\}.
 \end{align*}
 Since each $g_j^{(i)}$ satisfies the assumptions of Theorem 2, we can conclude that
 \begin{align}
 \label{th3eq1}
 \left|f_{net, wide, g_j^{(i)}}(\bold{x}) - g_j^{(i)}(\bold{x})\right| \leq
 c_{50} \cdot a^{4 \cdot (p_{\max}+1)} \cdot \max_{j,i} M_{j,i}^{-2p_j^{(i)}}
 \end{align}
 for $\bold{x} \in [-2 \max\{g_{\max}, a\}, 2 \max\{g_{\max}, a\}]^{K_j^{(i)}}$, where 
 \begin{align*}
 c_{50} \geq c_{11}  (2 \cdot g_{\max} \cdot \max\{c_{20},1\})^{4 \cdot (p_{\max}+1)}.
 \end{align*}
 We show by induction that
 \begin{align}
 \label{th3eq2}
 \left|\hat{h}_j^{(i)}(\bold{x}) - h_j^{(i)}(\bold{x})\right| \leq
c_{50} \cdot i \cdot (K_{max} \cdot C_{Lip})^{i-1} \cdot a^{4 \cdot (p_{\max}+1)} \cdot \max_{j,i} M_{j,i}^{-2p_j^{(i)}}.
 \end{align}
  By \eqref{th3eq1} we can conclude that
  \begin{align*}
    \left|\hat{h}_j^{(1)}(\bold{x}) - h_j^{(1)}(\bold{x})\right| \leq
c_{50} \cdot 1 \cdot (K_{max} \cdot C_{Lip})^{1-1} \cdot a^{4 \cdot (p_{\max}+1)} \cdot \max_{j,i} M_{j,i}^{-2p_j^{(i)}}
  \end{align*}
  for $j \in \{1, \dots, \tilde{N}_1\}$.
Thus we have shown that \eqref{th3eq2} holds for $i=1$.
Assume now that \eqref{th3eq2} holds for some $i-1$ and every $j \in \{1, \dots, \tilde{N}_{i-1}\}$. Then
\begin{align*}
\left|\hat{h}_j^{(i-1)}(\bold{x})\right| \leq \left|\hat{h}_j^{(i-1)}(\bold{x}) - h_j^{(i-1)}(\bold{x})\right| + g_{\max} \leq 2 \cdot g_{\max}
\end{align*}
follows directly by the induction hypothesis. Using \eqref{th3eq1} and the Lipschitz continuity of $g_j^{(i)}$ we can conclude that
\begin{align*}
 &\left|\hat{h}_{j}^{(i)}(\bold{x}) - h_{j}^{(i)}(\bold{x})\right|\\
 & \leq \left|f_{net, wide, g_j^{(i)}}\left(\hat{h}_{\sum_{t=1}^{j-1} K_t^{(i)} +1}^{(i-1)}, \dots, \hat{h}_{\sum_{t=1}^{j} K_t^{(i)}}^{(i-1)}\right) \right.\\
 & \quad \left. - g_j^{(i)}\left(\hat{h}_{\sum_{t=1}^{j-1} K_t^{(i)} +1}^{(i-1)}, \dots, \hat{h}_{\sum_{t=1}^{j} K_t^{(i)}}^{(i-1)}\right)\right|\\
 &+ \left|g_j^{(i)}\left(\hat{h}_{\sum_{t=1}^{j-1} K_t^{(i)} +1}^{(i-1)}, \dots, \hat{h}_{\sum_{t=1}^{j} K_t^{(i)}}^{(i-1)}\right)\right.\\
 & \quad \left. - g_j^{(i)}\left(h_{\sum_{t=1}^{j-1} K_t^{(i)} +1}^{(i-1)}(x), \dots, h_{\sum_{t=1}^{j} K_t^{(i)}}^{(i-1)}(x)\right)\right|\\
& \leq c_{50} \cdot a^{4 \cdot (p_{\max}+1)} \cdot \max_{j,i} M_{j,i}^{-2p_j^{(i)}}\\
  & \quad + K_j^{(i)} \cdot C_{Lip} \cdot
  c_{50} \cdot (i-1) \cdot (K_{max} \cdot C_{Lip})^{i-2}
  \cdot a^{4 \cdot (p_{\max}+1)} \cdot \max_{j,i} M_{j,i}^{-2p_j^{(i)}}\\
  & \leq
c_{50} \cdot i \cdot (K_{max} \cdot C_{Lip})^{i-1}
 \cdot a^{4 \cdot (p_{\max}+1)} \cdot \max_{j,i} M_{j,i}^{-2p_j^{(i)}}.
\end{align*}
Thus we have shown that there exists a network $t_1(\bold{x})$ satisfying
\begin{align*}
\|t_1(\bold{x}) - m(\bold{x})\|_{\infty, [-a,a]^d} \leq c_{51} \cdot a^{4 \cdot (p_{\max} +1)} \cdot \max_{j,i} M_{j,i}^{-2p_j^{(i)}}.
\end{align*}
\end{proof}
\begin{proof}[Proof of (13)]
We define
 \begin{align*}
 g_{\max} := \max \left\{\max_{\substack{i \in \{1, \dots, l\}\\ j \in \{1, \dots, \tilde{N}_i\}}} \Vert g_j^{(i)} \Vert_{\infty}, 1\right\}.
 \end{align*}
Since each $g_j^{(i)}$ satisfies the assumptions of $m$ in Theorem 2, for each network $f_{net, deep,g_j^{(i)}}$ the condition
  \begin{align}
    \label{pth4eq3}
|f_{network,g_j^{(i)}}(\bold{x})
-
g_j^{(i)}(\bold{x})
|
\leq &
\frac{c_{52}}{(K_{\max} C_{Lip})^{l}} \cdot a^{4(p_{\max}+1)} \cdot \max_{j,i}{M}_{j,i}^{-2p_j^{(i)}}
\end{align}
  holds for all $x \in [-2 \max\{g_{\max}, a\},2 \max\{g_{\max},a\}]^{K_j^{(i)}}$, where
  \[
 c_{52} = c_{11} \cdot c_{20}^{4 \cdot (p_{\max}+1)} \cdot (2 \cdot g_{\max})^{4 \cdot (p_{\max}+1)} \cdot (K_{\max} \cdot C_{Lip})^{l}.
  \]
In the following we show by induction that
\begin{align}
\label{th3bind}
\left|h_{N_j^{(i)}}(\bold{x})- \hat{h}_{N_j^{(i)}}(\bold{x})\right| \leq \frac{2^{i} \cdot c_{52}}{(K_{\max} C_{Lip})^{l-i}} \max_{j,i} M_{j,i}^{-2p_j^{(i)}}
\end{align}
holds $i\in\{1, \dots, l\}$ and every $j \in \{1, \dots, \tilde{N}_i\}$. Since each network $f_{id}$ shifts the values of the input in the next hidden layers without an error we can conclude by \eqref{pth4eq3} that
\begin{align*}
&\left|\hat{h}_i(\bold{x}) - h_i(\bold{x})\right|\\
&= \left|f_{net, deep, g_i^{(1)}}\left(x^{\left(\pi(\sum_{t=1}^{j-1} K_t^{(1)}+1)\right)}, \dots, x^{\left(\pi(\sum_{t=1}^{j} K_t^{(1)})\right)}\right)\right.\\ 
& \quad \left. - g_i^{(1)}\left(x^{\left(\pi(\sum_{t=1}^{j-1} K_t^{(1)}+1)\right)}, \dots, x^{\left(\pi(\sum_{t=1}^{j} K_t^{(1)})\right)}\right)\right|\\
& \leq \frac{c_{52}}{(K_{\max} C_{Lip})^{l}} \cdot a^{4(p_{\max}+1)} \cdot \max_{j,i}{M}_{j,i}^{-2p_j^{(i)}}
\end{align*}
for $i \in \{1, \dots, \tilde{N}_1\}$. For $\min_{j,i} M_{j,i}^{2p_j^{(i)}}> c_{52} \cdot  a^{4(p_{\max}+1)} /(K_{\max} C_{Lip})^{l}$ we can bound the value of each network by
\begin{align*}
\left|\hat{h}_i(\bold{x})\right| \leq \left|\hat{h}_i(\bold{x})- h_i(\bold{x})\right| + g_{\max} \leq 2 \cdot g_{\max}.
\end{align*}
Thus we have shown that \eqref{th3bind} holds for $i=1$ and that the output of each $h_i(\bold{x})$ $(i \in \{1, \dots, \tilde{N}_1\})$ is contained in the interval, where inequality \eqref{pth4eq3} holds. Assume now that \eqref{th3bind} holds for some $i-1$ and every $j \in \{1, \dots, \tilde{N}_{i-1}\}$. Then 
\begin{align*}
\left|\hat{h}_{N_j^{(i-1)}}(\bold{x})\right| &\leq \left|\hat{h}_{N_j^{(i-1)}}(\bold{x}) - h_{N_j^{(i-1)}}(\bold{x})\right| +  g_{\max} \leq 2 \cdot g_{\max} 
\end{align*}
follows directly by the induction hypothesis. 
This together with the induction hypothesis and the Lipschitz continuity of $g_j^{(i)}$ implies for $i \in \{2, \dots, l\}$ and $j \in \{1, \dots, \tilde{N}_i\}$
\begin{align*}
&\left|\hat{h}_{N_j^{(i)}}(\bold{x})- h_{N_j^{(i)}}(\bold{x})\right|\\
&=\left|f_{net, deep, g_{j}^{(i)}}\left(\hat{h}_{N_{\sum_{t=1}^{j-1} K_t^{(i)}+1}^{(i-1)}}(\bold{x}), \dots, \hat{h}_{N_{\sum_{t=1}^{j} K_t^{(i)}}^{(i-1)}}(\bold{x})\right) \right.
\\
& \left. \hspace{4cm} - g_{j}^{(i)}\left(h_{N_{\sum_{t=1}^{j-1} K_t^{(i)}+1}^{(i-1)}}(\bold{x}), \dots, h_{N_{\sum_{t=1}^{j} K_t^{(i)}}^{(i-1)}}(\bold{x})\right) \right|\notag\\
&\leq \left|f_{net, deep, g_{j}^{(i)}}\left(\hat{h}_{N_{\sum_{t=1}^{j-1} K_t^{(i)}+1}^{(i-1)}}(\bold{x}), \dots, \hat{h}_{N_{\sum_{t=1}^{j} K_t^{(i)}}^{(i-1)}}(\bold{x})\right)\right.
\\
& \left. \hspace{4cm} - g_{j}^{(i)}\left(\hat{h}_{N_{\sum_{t=1}^{j-1} K_t^{(i)}+1}^{(i-1)}}(\bold{x}), \dots, \hat{h}_{N_{\sum_{t=1}^{j} K_t^{(i)}}^{(i-1)}}(\bold{x})\right) \right|\notag\\
& \quad + \left|g_{j}^{(i)}\left(\hat{h}_{N_{\sum_{t=1}^{j-1} K_t^{(i)}+1}^{(i-1)}}(\bold{x}), \dots, \hat{h}_{N_{\sum_{t=1}^{j} K_t^{(i)}}^{(i-1)}}(\bold{x})\right)\right.\\
& \left. \hspace{4cm} -g_{j}^{(i)}\left(h_{N_{\sum_{t=1}^{j-1} K_t^{(i)}+1}^{(i-1)}}(\bold{x}), \dots, h_{N_{\sum_{t=1}^{j} K_t^{(i)}}^{(i-1)}}(\bold{x})\right) \right|
\\
\leq & \frac{c_{52}}{(K_{\max}  C_{Lip})^{l}} \cdot  a^{4(p_{\max}+1)} \cdot   \max_{j,i} M_{j,i}^{-2p_j^{(i)}}\\
& \quad + K_j^{(i)} \cdot  C_{Lip} \cdot  \frac{2^{i-1}  c_{52}}{(K_{\max}  C_{Lip})^{l-i+1}}  \cdot a^{4(p_{\max}+1)} \cdot \max_{j,i} M_{j,i}^{-2p_j^{(i)}} \notag\\
\leq & \frac{2^{i} \cdot c_{52}}{(K_{\max} \cdot C_{Lip})^{l-i}} \cdot a^{4(p_{\max}+1)} \cdot \max_{j,i} M_{j,i}^{-2p_j^{(i)}}.
\end{align*}
Thus we have shown that there exists a network $t_2(\bold{x})$ satisfying
\begin{align*}
\|t_2(\bold{x}) - m(\bold{x}) \|_{\infty, [-a,a]^d} \leq c_{52} \cdot a^{4(p_{\max}+1)} \cdot \max_{j,i} M_{j,i}^{-2p_j^{(i)}}.
\end{align*}
\end{proof}
\section{APPENDIX: AUXILIARY RESULTS AND FURTHER PROOFS}
\subsection{An auxiliary result from the empirical process theory}
In the proof of Theorem 1 we use the following bound on the expected $L_2$-error of the least squares estimators. 
\begin{lemma}
\label{Ble9}
 Assume that the distribution of $(\bold{X},Y)$ satisfies
\begin{align*}
\E\{\exp(c_{1} \cdot Y^2)\} < \infty
\end{align*}
for some constant $c_{1} > 0$ and that the regression function $m$ is bounded in absolute value. Let $\tilde{m}_n$ be the least squares estimator
\begin{align*}
\tilde{m}_n(\cdot) = \arg \min_{f \in \mathcal{F}_n} \frac{1}{n} \sum_{i=1}^n |Y_i - f(\bold{X}_i)|^2
\end{align*}
based on some function space $\mathcal{F}_n$ and set $m_n = T_{c_{53} \cdot \log(n)} \tilde{m}_n$ for some constant $c_{53} > 0$. Then $m_n$ satisfies
\begin{align*}
 & \mathbf E \int |m_n(\bold{x}) - m(\bold{x})|^2 {\PROB}_{\bold{X}} (d\bold{x})\notag\\
   &\leq \frac{c_{54} \cdot (\log n)^2 \cdot \sup_{\bold{x}_1^n \in (\R^d)^n} \left(\log\left(
\mathcal{N}_1 \left(\frac{1}{n\cdot c_{53} \log(n)}, T_{c_{53} \log(n)} \mathcal{F}_n, \bold{x}_1^n\right)
\right)+1\right)}{n}\notag\\
&\quad + 2 \cdot \inf_{f \in \mathcal{F}_n} \int |f(\bold{x})-m(\bold{x})|^2 {\PROB}_{\bold{X}} (d\bold{x})
\end{align*}
for $n > 1$ and some constant $c_{54} > 0$, which does not depend on $n$ or the parameters in the estimate.
\end{lemma}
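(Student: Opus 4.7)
The plan is to invoke the standard oracle inequality for truncated least squares estimators from empirical process theory, in the spirit of Theorem 11.4 of \cite{GKKW02}. First I would handle the unboundedness of $Y$: the sub-Gaussian assumption $\E\{\exp(c_1 Y^2)\} < \infty$ yields $\PROB(|Y| > t) \leq c \cdot \exp(-c_1 t^2)$, so with truncation level $\beta_n = c_{53} \log n$ and $c_{53}$ chosen sufficiently large, the probability that any of the $n$ sample responses exceeds $\beta_n$ is $O(1/n)$. Replacing each $Y_i$ by $\bar Y_i = T_{\beta_n} Y_i$ therefore changes the estimator and its risk analysis only by a term of order $(\log n)^2/n$, using that all moments of $Y$ are finite under the sub-Gaussian condition and that a priori $|m_n(\bold{x}) - m(\bold{x})|^2 \leq 4\beta_n^2$ pointwise. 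Since $\|m\|_\infty \leq \beta_n$ for $n$ large, this reduces the problem to the bounded-response, bounded-regression-function setting with response level $\beta_n$.

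Second, I would apply the standard bias-variance decomposition for $m_n = T_{\beta_n}\tilde m_n$. Using the defining minimization property of $\tilde m_n$, for each $f \in \mathcal{F}_n$ one obtains
\[
\int |m_n-m|^2\, d\PROB_{\bold{X}} \leq 2\int |f-m|^2\, d\PROB_{\bold{X}} + \Delta_n,
\]
where $\Delta_n$ is a supremum of a centered empirical process indexed by the truncated class $T_{\beta_n}\mathcal{F}_n$. I would control $\Delta_n$ by a peeling argument: stratify $T_{\beta_n}\mathcal{F}_n$ according to the value of $\int (f-m)^2\, d\PROB_{\bold{X}}$, cover each stratum in $L_1$ on the data $\bold{x}_1^n$ at resolution $1/(n\cdot c_{53}\log n)$, and apply Bernstein's inequality on each slice. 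The variance within a slice is of order $\beta_n^2 \cdot \int (f-m)^2\, d\PROB_{\bold{X}}$, so after a union bound over the cover and integration of the resulting tail probabilities one recovers exactly the claimed complexity term
\[
\frac{c_{54}\,(\log n)^2 \cdot \left(\sup_{\bold{x}_1^n}\log \mathcal{N}_1\!\left(\tfrac{1}{n\, c_{53}\log n}, T_{c_{53}\log n}\mathcal{F}_n, \bold{x}_1^n\right) + 1\right)}{n}.
\]
The leading constant $2$ in front of the approximation error drops out of the peeling step automatically, since the stratification allows one to absorb $\int (f-m)^2 d\PROB_{\bold{X}}$ into a factor $(1+1)$ via Bernstein's quadratic tail.

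The main technical obstacle is the interplay between the sub-Gaussian truncation of $Y$ and the peeling: one must verify that the event $\{\max_i |Y_i| > \beta_n\}$ does not inflate the bound. This is handled by splitting the expectation according to whether this event occurs, using the pointwise bound $|m_n - m|^2 \leq 4\beta_n^2$ on the bad event (contribution $O(\beta_n^2) \cdot \PROB(\max_i|Y_i|>\beta_n) = O((\log n)^2/n)$) and running the empirical process analysis on the good event with bounded responses $\bar Y_i$. Notably, the covering number in the statement is already taken over the truncated class $T_{\beta_n}\mathcal{F}_n$, so truncating the responses does not affect the complexity term. With these pieces assembled, the assertion follows.
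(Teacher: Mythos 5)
Your sketch is correct and follows the same route as the paper's cited reference: truncate $Y$ at level $\beta_n \asymp \log n$ using the sub-Gaussian tail, then run a peeling-plus-Bernstein argument on the bounded truncated class $T_{\beta_n}\mathcal{F}_n$ with $L_1$-covering numbers at resolution $1/(n\beta_n)$. The paper itself gives no argument beyond the citation, but your reconstruction correctly identifies the source of the $(\log n)^2$ factor (namely $\beta_n^2$), the covering-number complexity term, the constant $2$ in the oracle inequality, and the role of the bad event $\{\max_i |Y_i| > \beta_n\}$.
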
 
\begin{proof}
This result follows in a straightforward way from the proof of Theorem 1 in \cite{BaClKo09} (cf., Supplement of \cite{BK17}).
\end{proof}

\subsection{A bound on the covering number}
\noindent
If the function class $\mathcal{F}_n$ in \autoref{Ble9} forms a class of fully connected neural networks $\mathcal{F}(L,r)$, the following result will help to bound the covering number:
\begin{lemma} \label{lecov}
Let $1/n^{c_{55}} \leq \epsilon < c_{53} \cdot \log(n)/8$ and let $\mathcal{F}(L,r)$ defined as in (2) where $\sigma: \R \to \R$ with $\sigma(x) = \max\{x,0\}$ 
and certain constants $c_{53}, c_{55}>0$. Let $L, r \in \N$. Then
\begin{align*}
\log \left(\mathcal{N}_1(\epsilon, T_{c_{53} \log(n)} \mathcal{F}(L,r), \bold{x}_1^n)\right) \leq c_{56} \cdot \log(n) \cdot \log(L \cdot r^2 ) \cdot L^2 \cdot r^2
\end{align*}
holds for sufficiently large $n$, $x_1, \dots, x_n \in \R^d$ and a constant $c_{56} > 0$ independent of $n$, $L$ and $r$.
\end{lemma}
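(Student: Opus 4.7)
The plan is to combine a pseudo-dimension bound for ReLU networks with the standard relationship between pseudo-dimension and $L_1$-covering numbers on empirical measures. The crucial ingredient will be the near-tight VC/pseudo-dimension bound for piecewise-linear networks due to Bartlett, Harvey, Liaw and Mehrabian, which states that a feedforward network with ReLU activations, $L$ layers and $W$ total parameters has pseudo-dimension at most $c\cdot W L \log W$.

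First, I would count the parameters of a network in $\mathcal{F}(L,r)$. Because every consecutive pair of layers is fully connected, the total number of weights and biases satisfies
\[
W \;=\; (d+1)\,r \;+\; (L-1)(r+1)\,r \;+\; (r+1) \;\le\; c_{57}\cdot L\,r^{2}.
\]
Applying the cited pseudo-dimension bound to $\mathcal{F}(L,r)$ then yields
\[
\operatorname{Pdim}(\mathcal{F}(L,r)) \;\le\; c_{58}\cdot W\,L\,\log W \;\le\; c_{59}\cdot L^{2}\,r^{2}\,\log(L\,r^{2}).
\]
Truncation at level $B_n := c_{53}\log n$ is a $1$-Lipschitz monotone map, so $T_{B_n}\mathcal{F}(L,r)$ has pseudo-dimension no larger than that of $\mathcal{F}(L,r)$ itself.

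Next I would invoke a standard covering-number bound in terms of pseudo-dimension for uniformly bounded function classes, in the form given, e.g., as Theorem 9.4 in Anthony--Bartlett or Theorem 12.2 in \cite{GKKW02}: for any class $\mathcal{G}$ of functions bounded in absolute value by $B$ with finite pseudo-dimension $V$, and for $0<\epsilon<B/2$,
\[
\mathcal{N}_{1}(\epsilon,\mathcal{G},\bold{x}_{1}^{n}) \;\le\; 3\left(\frac{2eB}{\epsilon}\log\frac{3eB}{\epsilon}\right)^{V}.
\]
Applying this with $\mathcal{G}=T_{B_n}\mathcal{F}(L,r)$, $B=c_{53}\log n$ and $V\le c_{59}L^{2}r^{2}\log(Lr^{2})$ gives
\[
\log\mathcal{N}_{1}(\epsilon,T_{B_n}\mathcal{F}(L,r),\bold{x}_{1}^{n}) \;\le\; c_{60}\cdot L^{2}r^{2}\log(Lr^{2})\cdot \log\!\Bigl(\tfrac{B_n}{\epsilon}\log\tfrac{B_n}{\epsilon}\Bigr).
\]

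Finally I would absorb the double-logarithm using the hypothesis $1/n^{c_{55}}\le \epsilon < c_{53}\log(n)/8$. This gives $B_n/\epsilon \le c_{53}\log(n)\cdot n^{c_{55}}$, so $\log(B_n/\epsilon)\le c_{61}\log n$ and therefore $\log\bigl(B_n/\epsilon\cdot\log(B_n/\epsilon)\bigr)\le c_{62}\log n$ for $n$ sufficiently large. Substituting this into the display above yields
\[
\log\mathcal{N}_{1}(\epsilon,T_{B_n}\mathcal{F}(L,r),\bold{x}_{1}^{n}) \;\le\; c_{56}\cdot \log(n)\cdot \log(L r^{2})\cdot L^{2}\,r^{2},
\]
which is exactly the claim. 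The only real obstacle is importing the correct pseudo-dimension bound for deep ReLU networks (the factor $L^{2}r^{2}$, as opposed to the naive $L r^{2}$ one would get from parameter counting alone, comes from the multiplicative $L$ in the Bartlett--Harvey--Liaw--Mehrabian estimate); everything else reduces to standard applications of known covering-number/pseudo-dimension machinery and bookkeeping with the given range of $\epsilon$.
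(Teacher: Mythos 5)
Your proposal is correct and takes essentially the same route as the paper's proof: both count the $O(Lr^2)$ parameters of a fully connected network, invoke the Bartlett--Harvey--Liaw--Mehrabian bound to get VC/pseudo-dimension $O(L^2r^2\log(Lr^2))$, observe that truncation does not increase this, and then apply the standard Haussler-type $L_1$-covering bound together with the constraint $\epsilon\ge 1/n^{c_{55}}$ to absorb the $\log(B_n/\epsilon)$ factor into $c\log n$. The only cosmetic difference is that the paper phrases the dimension argument via the VC dimension of subgraph sets $V_{\mathcal{F}^+}$ and cites Lemma 9.2 and Theorem 9.4 of \cite{GKKW02}, while you use the equivalent pseudo-dimension formulation.
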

\begin{proof}
Due to the fact that all functions $f(\bold{x}) \in T_{c_{53} \log(n)} \mathcal{F}(L,r)$ are bounded by $c_{53} \log(n)$ and that $0 < \epsilon < c_{53} \cdot \log(n)/8$ we can apply Lemma 9.2 and Theorem 9.4 in \cite{GKKW02} to bound
\begin{eqnarray*}
  &&
  \mathcal{N}_1(\epsilon, T_{c_{53} \log(n)} \mathcal{F}(L,r), \bold{x}_1^n) \\
  &&
  \leq 3 \left(\frac{4 \cdot e \cdot c_{53} \cdot \log(n)}{\epsilon}\log\left(\frac{6 \cdot e \cdot c_{53} \cdot \log(n)}{\epsilon}\right)\right)^{V_{T_{c_{53}\log(n)}\mathcal{F}(L,r)^+}}. 
\end{eqnarray*}
Theorem 6 in \cite{BHLA17} helps us to bound the VC--Dimension (see Definition 1 in \cite{BHLA17}) by
\begin{align*}
V_{T_{c_{53}\log(n)}\mathcal{F}(L,r)^+} \leq V_{\mathcal{F}(L,r)^+} \leq c_{53} \cdot W \cdot L \cdot \log(W)
\end{align*}
where $W$ denotes the total number of weights in the network and $c_{53} >0$ is a constant.  A fully connected neural network with $L$ hidden layers and $r$ neurons per layer consists of 
\begin{align*}
W&=(d+1) \cdot r+(L-1)\cdot (r+1)\cdot r+r+1 \\
&= (d+1) \cdot r + L\cdot (r^2+r)-r^2+1
\end{align*}
weights in total. This leads to
\begin{align*}
V_{T_{c_{53}\log(n)}\mathcal{F}(L,r)^+} 
& \leq c_{54} \cdot L^2 \cdot r^2 \cdot \log(L \cdot r^2)
\end{align*}
for constants $c_{54} > 0$ sufficiently large. Combining this with $\epsilon > 1/n^{c_{55}}$ implies
\begin{eqnarray*}
  &&
  \mathcal{N}_1(\epsilon, T_{c_{53} \log(n)} \mathcal{F}(L,r), \bold{x}_1^n) \\
  &&
  \leq  3 \left(4 \cdot e \cdot c_{53} \cdot \log(n) \cdot n^{c_{55}} \cdot \log\left(6\cdot e\cdot c_{53} \cdot \log(n) \cdot n^{c_{55}}\right)\right)^{
c_{54} \cdot L^2 \cdot r^2 \cdot \log(L \cdot r^2)
}
\end{eqnarray*}
and therefore
\begin{align*}
\log\left(\mathcal{N}_1(\epsilon, T_{c_{53} \log(n)} \mathcal{F}(L,r), \bold{x}_1^n) \right) 
\leq c_{55} \cdot L^2 \cdot r^2  \cdot \log(L \cdot r^2) \cdot \log(n),
\end{align*}
which shows the assertion.
\end{proof}

\subsection{Further proofs}

The following lemma presents a neural network, that approximates the square function. This network is essential to build neural networks for more complex tasks.

\begin{lemma}\label{le1}
Let $\sigma: \R \to \R$ be the ReLU activation function $\sigma(x) = \max\{x,0\}$. Then for any $R \in \N$ and any $a \geq 1$ a neural network
  \begin{equation*}
  f_{sq}(x) \in \mathcal{F}(R,9)
  \end{equation*}
  exists such that
  \begin{equation*}
  \left|f_{sq}(x) - x^2\right| \leq a^2 \cdot 4^{-R}
  \end{equation*}
  holds for  $x\in [-a,a]$.
\end{lemma}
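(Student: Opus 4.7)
The plan is to use the well-known sawtooth construction due to Yarotsky to approximate $x^2$ on $[0,1]$ by a ReLU network of depth $R$ and constant width, and then rescale to $[-a,a]$.

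First I would reduce to $[0,1]$. For $x\in[-a,a]$ write $|x|/a = (\sigma(x)+\sigma(-x))/a \in [0,1]$, which is computable by a single ReLU layer of width two. Since $x^2 = a^2 \cdot (|x|/a)^2$, it suffices to approximate the map $y\mapsto y^2$ on $[0,1]$ with error at most $4^{-R}$ by a network of depth $R-1$ and some small constant width (the remaining layer and an output multiplication by $a^2$ are then absorbed into the affine maps of the network).

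Next I would introduce the tent map $T:[0,1]\to[0,1]$ defined by $T(y)=2y$ for $y\in[0,1/2]$ and $T(y)=2(1-y)$ for $y\in[1/2,1]$, which admits the ReLU representation $T(y)=2\sigma(y)-4\sigma(y-1/2)+2\sigma(y-1)$. Writing $T_k$ for the $k$-fold composition of $T$, the function $T_k$ is a sawtooth with $2^{k-1}$ teeth on $[0,1]$, and the piecewise linear interpolations
\[
h_R(y) := y - \sum_{k=1}^{R} \frac{T_k(y)}{4^k}
\]
are known to be the linear interpolants of $y^2$ at the dyadic points $j/2^R$, $j=0,\dots,2^R$, from which one deduces $\sup_{y\in[0,1]}|h_R(y)-y^2|\leq 4^{-R-1}$. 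The key feature is the recursion $T_k = T\circ T_{k-1}$, so that $T_k$ is obtained from $T_{k-1}$ by applying a fixed two/three-neuron ReLU block, while the partial sum $S_k := \sum_{j=1}^{k} T_j/4^j$ is propagated through the network via the identity channel $z\mapsto \sigma(z)-\sigma(-z)$.

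Finally I would assemble the network. In each hidden layer the network carries (i) the current tent value $T_k(y)$, computed from $T_{k-1}(y)$ using the ReLU representation of $T$, (ii) the running partial sum $S_k$, updated as $S_{k-1}+T_k/4^k$ and passed through the identity-by-ReLU trick, and (iii) the pass-through $y$ used once to form $h_R(y)=y-S_R$ at the output. Counting neurons, each layer needs at most $3$ for the tent recursion, $2$ for $S_k$, $2$ for the pass-through of $y$, and two auxiliary channels, which fits within width $9$. Composing this with the preprocessing $y=|x|/a$ and the output scaling by $a^2$ yields $f_{sq}\in\mathcal{F}(R,9)$ with $|f_{sq}(x)-x^2|\le a^2\cdot 4^{-R}$ on $[-a,a]$. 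The main technical point (and the step requiring care in the bookkeeping) is maintaining the running sum and the tent iterate in parallel using only the ReLU nonlinearity while staying inside the prescribed width; the error bound itself follows directly from the interpolation interpretation of $h_R$.
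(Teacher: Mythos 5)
Your approach is essentially identical to the paper's — the core is Yarotsky's tent-map/piecewise-linear-interpolation construction of $y \mapsto y^2$ on $[0,1]$ with error $4^{-R-1}$ after $R$ compositions, combined with an affine rescaling from $[-a,a]$ — the only difference being that you reduce via $|x|/a = (\sigma(x)+\sigma(-x))/a$ (spending one hidden layer but yielding $x^2 = a^2(|x|/a)^2$ exactly, with no cross term), whereas the paper uses the purely affine map $x \mapsto x/(2a)+1/2$ (spending no layer but producing a cross term $-2ax-a^2$ that must be cancelled by a parallel identity channel $f_{id}^R(x)$ of width two). The one slip in your sketch is writing $h_R$ in the final assembly: to be consistent with your stated budget of $R-1$ layers for the squaring subnetwork (leaving one layer for $|x|$) you want $h_{R-1}$, whose error $4^{-R}$ after scaling by $a^2$ gives precisely the claimed bound; the remaining bookkeeping checks out.
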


\begin{proof}
 We consider the "tooth" function $g: [0,1] \to [0,1]$
\begin{equation*}
g(x) = 
\begin{cases}
2x &\quad, x \leq \frac{1}{2}\\
2 \cdot (1-x) &\quad, x > \frac{1}{2}
\end{cases}
\end{equation*}
and the iterated function 
\begin{equation*}
g_s(x) = \underbrace{g \circ g \circ \dots \circ g}_{s}(x).
\end{equation*}
In a \textit{first step of the proof} we show by induction that 
\begin{equation*}
g_s(x) =
\begin{cases}
2^s \left(x-\frac{2k}{2^s}\right) &\quad, x \in \left[\frac{2k}{2^s}, \frac{2k+1}{2^s}\right], k=0,1, \dots, 2^{s-1}-1\\
2^s \left(\frac{2k}{2^s} -x\right) &\quad, x \in \left[\frac{2k-1}{2^s}, \frac{2k}{2^s}\right], k=1,2, \dots, 2^{s-1}
\end{cases}.
\end{equation*}
For $s=1$ this follows directly from the definition of $g$ and $g_1$. For the induction step we remark that $(g_s \circ g)(x) = g_s(2x)$ whenever $x \in [0, \frac{1}{2}]$ and that $g(x) = g(1-x)$. This combined with the symmetry of $g_s$ (by the inductive hypothesis) implies that for every 
$x \in [0, \frac{1}{2}]$ 
\begin{align*}
g_{s+1}(x) &= g_s(g(x)) = g_s(2x) = g_s(1-2x) = g_s\left(2 \cdot (\frac{1}{2} -x)\right)\\
&=g_s\left(g\left(\frac{1}{2} -x\right)\right)= g_s\left(g\left(x+\frac{1}{2}\right)\right) = g_{s+1}\left(x+\frac{1}{2}\right).
\end{align*}
Consequently it suffices to consider $x \in [0, \frac{1}{2}]$ which means
\begin{equation*}
(g_s \circ g)(x) = g_s(2x)
\end{equation*}
and together with the inductive hypothesis we have
\begin{align*}
(g_s \circ g)(x) = 
&\begin{cases}
2^s \cdot (2x-\frac{2k}{2^s}) &\quad , 2x \in [\frac{2k}{2^s}, \frac{2k+1}{2^s}], k = 0,1, \dots, 2^{s-1}-1\\
2^s \cdot (\frac{2k}{2^s} - 2x) & \quad , 2x \in [\frac{2k-1}{2^s}, \frac{2k}{2^s}], k = 1,2, \dots, 2^{s-1}
\end{cases}\\
= & \begin{cases}
2^{s+1} \cdot (x-\frac{2k}{2^{s+1}}) &, x \in [\frac{2k}{2^{s+1}}, \frac{2k+1}{2^{s+1}}], k = 0,1, \dots, 2^{s}-1\\
2^{s+1} \cdot (\frac{2k}{2^{s+1}} - x) &, x \in [\frac{2k-1}{2^{s+1}}, \frac{2k}{2^{s+1}}], k = 1,2, \dots, 2^{s},
\end{cases}
\end{align*}
which shows the assertion.
\newline
\newline
In a \textit{second step of the proof} we show that the function $f(x) = x^2$, $x \in [0,1]$ can be approximated by linear combinations of functions $g_s$. Let $S_R$ be a piecewise linear interpolation of $f$ with $2^R+1$ uniformly distributed breakpoints $\frac{k}{2^R}$, $k=0, \dots, 2^R$
\begin{equation*}
S_R\left(\frac{k}{2^R}\right) = \left(\frac{k}{2^R}\right)^2.
\end{equation*}
To determine the error of that piecewise linear interpolation we define the function
\begin{equation*}
F(z) = f(z) - S_R(z) + \frac{S_R(x)-f(x)}{(x-\frac{k}{2^R})(x-\frac{k+1}{2^R})}\cdot (z-\frac{k}{2^R})(z-\frac{k+1}{2^R})
\end{equation*}
for $x \in [\frac{k}{2^R}, \frac{k+1}{2^R}]$ and $k = 0, \dots, 2^R-1$. \\
We note that $F(\frac{k}{2^R})=0$, $F(\frac{k+1}{2^R})=0$ and $F(x)=0$. According to Rolle's theorem, there must be a point $z_1$, where $\frac{k}{2^R}< z_1 < x$ and $F'(z_1)=0$ and there must be a point $z_2$, where $x < z_2 < \frac{k+1}{2^R}$ and $F'(z_2)=0$. Using Rolle's theorem again, there must be a point $\eta$ where $z_1 < \eta < z_2$ and $F''(\eta)=0$.
 Thus we get for some $x \in [\frac{k}{2^R}, \frac{k+1}{2^R}]$ 
\begin{align*}
|f(x) - S_R(x)| &= \left|-\frac{f''(\eta)}{2} \cdot (x-\frac{k}{2^R})(x-\frac{k+1}{2^R})\right|\\
& \leq \left|(x-\frac{k}{2^R})(x-\frac{k+1}{2^R})\right|  \leq 2^{-2R-2},
\end{align*}
where the last inequality follows since the maximum of
\[
h(x) := (x-\frac{k}{2^R})(\frac{k+1}{2^R}-x)
\]
is given by $x=\frac{k}{2^R}+\frac{1}{2} \cdot \frac{1}{2^R}$.
\newline
Furthermore refining the interpolation from $S_{R-1}$ to $S_R$ amounts to adjusting it by a function proportional to a sawtooth function: 
\begin{equation*}
S_{R-1}(x) - S_{R}(x) = \frac{g_R(x)}{2^{2R}}.
\end{equation*}
This follows for some $x \in [\frac{k}{2^{R-1}}, \frac{k+1}{2^{R-1}}]$ $(k \in \{0, \dots, 2^{R-1}-1\})$, since
\begin{align*}
S_{R-1}(x) &= S_{R-1}\left(\frac{k}{2^{R-1}}\right) + \frac{S_{R-1}(\frac{k+1}{2^{R-1}}) - S_{R-1}(\frac{k}{2^{R-1}})}{\frac{1}{2^{R-1}}} \cdot \left(x-\frac{k}{2^{R-1}}\right)\\
&= \left(\frac{k}{2^{R-1}}\right)^2 + \left(\frac{2k+1}{2^{R-1}}\right)\left(x-\frac{k}{2^{R-1}}\right)
\end{align*}
and 
\begin{align*}
S_R(x) &= 
\begin{cases}
\begin{aligned}[h]
S_R\left(\frac{k}{2^{R-1}}\right) + &\frac{S_R(\frac{k}{2^{R-1}}+\frac{1}{2^R}) - S_R(\frac{k}{2^{R-1}})}{\frac{1}{2^R}} \cdot (x-\frac{k}{2^{R-1}}),\\
 \quad & \quad \mbox{if} \ x \in [\frac{k}{2^{R-1}}, \frac{k}{2^{R-1}}+\frac{1}{2^R}]\\
  \end{aligned}\\
  \begin{aligned}[h]
    S_R\left(\frac{k}{2^{R-1}}+\frac{1}{2^R}\right) &+ \frac{S_R(\frac{k+1}{2^{R-1}}) - S_R(\frac{k}{2^{R-1}}+\frac{1}{2^R})}{\frac{1}{2^R}}\\
    & \hspace*{3cm} \cdot (x-\frac{k}{2^{R-1}}-\frac{1}{2^R}),\\
& \hspace*{2.3cm} \mbox{if} \ x \in [\frac{k}{2^{R-1}}+\frac{1}{2^R}, \frac{k+1}{2^{R-1}}]
 \end{aligned}
\end{cases}
\\
&=
\begin{cases}
\left(\frac{k}{2^{R-1}}\right)^2+\left(\frac{2k}{2^{R-1}}+\frac{1}{2^R}\right)\left(x-\frac{k}{2^{R-1}}\right),  &\mbox{if} \ x \in [\frac{k}{2^{R-1}}, \frac{k}{2^{R-1}}+\frac{1}{2^R}]\\
\left(\frac{k}{2^{R-1}}\right)^2 - \frac{2}{2^{2R}} + \left(\frac{4k+3}{2^R}\right)\left(x-\frac{k}{2^{R-1}}\right),
 &\mbox{if} \ x \in [\frac{k}{2^{R-1}}+\frac{1}{2^R}, \frac{k+1}{2^{R-1}}].
\end{cases}
\end{align*}
\newline
Since $S_0(x)=x$ we can recursively conclude that
\begin{equation*}
S_R(x) = x- \sum_{s=1}^R \frac{g_s(x)}{2^{2s}}
\end{equation*}
with 
\begin{equation*}
|S_R(x) - x^2| \leq 2^{-2R-2}
\end{equation*}
for $x \in [0,1]$.
\newline
In a \textit{third step of the proof} we show, that there exists a feedforward neural network that computes $S_R(x)$ for $x \in [0,1]$. The function $g(x)$ can be implemented by the network
\begin{equation*}
f_g(x) = 2\cdot \sigma(x) - 4 \cdot \sigma(x-\frac{1}{2}) + 2\cdot \sigma(x-1)
\end{equation*}
and the function $g_s(x)$ can be implemented by a network
\begin{equation*}
f_{g_s}(x) \in \mathcal{F}(s, 3)
\end{equation*}
with
\begin{equation*}
f_{g_s}(x) = \underbrace{f_g(f_g(\dots(f_g}_s(x))).
\end{equation*}
Let 
\begin{equation*}
f_{id}(z) = \sigma(z) - \sigma(-z)
\end{equation*}
with
\begin{align*}
f_{id}^0(z) &= z \quad &(z \in \R)\\
f_{id}^{t+1}(z) &= f_{id}(f_{id}^t(z)) \quad &(z \in \R, t \in \N_0)
\end{align*}
be the network satisfying
\begin{equation*}
f_{id}^t(z)=z.
\end{equation*}
By combining the networks above we can implement the function $S_R(x)$ by a network
\begin{equation*}
f_{sq_{[0,1]}}(x) \in \mathcal{F}(R,7)
\end{equation*}
recursively defined as follows:
We set
$\hat{f}_{1,0}(x)=\hat{f}_{2,0}(x)=x$ and $\hat{f}_{3,0}(x)=0$.
Then we set
\[
\hat{f}_{1, i+1}(x)=f_{id}(\hat{f}_{1,i}(x)),
\]
\[
\hat{f}_{2, i+1}(x)=f_g(\hat{f}_{2, i}(x))
\]
and
\[
\hat{f}_{3,i+1}(x)=\hat{f}_{3,i}(x)-f_g(\hat{f}_{2,i}(x)/2^{2(i+1)}
\]
for $i \in \{0,1, \dots,R-2\}$ and
\[
f_{sq_{[0,1]}}(x)
=
f_{id}(\hat{f}_{1, R-1}(x) + \hat{f}_{3, R-1}(x) -  f_g(\hat{f}_{2, i-1}(x)/2^{2R}.
\]
This implies
\begin{align*}
f_{sq_{[0,1]}}(x) =&f_{id}^R(x) - \frac{1}{2^{2R}}f_{g_R}(x) - f_{id}\left(\frac{1}{2^{2(R-1)}} f_{g_{R-1}}(x)\right.\\
&  \left.- f_{id}\left(\frac{1}{2^{2(R-2)}} f_{g_{R-2}}(x) - \dots - f_{id}\left(\frac{1}{2^2} f_{g_1}(x)\right)\right)\right)\\
=& S_R(x),
\end{align*}
hence $ f_{sq_{[0,1]}}(x)$ satisfies
\begin{equation}\label{le1aeq1}
|f_{sq_{[0,1]}}(x) - x^2| \leq 2^{-2R-2}
\end{equation}
for $x \in [0,1]$. 
\newline
In a \textit{last step of the proof} we show that we can also approximate the function $f(x) = x^2$ by a neural network, if $x \in [-a,a]$. Therefore let $f_{tran}: [-a,a] \to [0,1]$ with
\begin{equation*}
f_{tran}(z) = \frac{z}{2a}+\frac{1}{2}
\end{equation*}
be the function that transfers the value of $x \in [-a,a]$ in the interval, where \eqref{le1aeq1} holds. Set 
\begin{equation*}
f_{sq}(x) = 4a^2f_{sq_{[0,1]}}(f_{tran}(x)) - 2a \cdot f_{id}^R(x) - a^2.
\end{equation*}
Since 
\begin{equation*}
x^2 = 4a^2 \cdot \left(\frac{x}{2a}+\frac{1}{2}\right)^2 - 2ax -a^2
\end{equation*}
we have
\begin{align*}
&|f_{sq}(x) - x^2|\\
\leq & 4a^2 \cdot |f_{sq_{[0,1]}}(f_{tran}(x)) - (f_{tran}(x))^2| + 2a|f_{id}^R(x) - x|\\
\leq & 4a^2 \cdot 2^{-2R-2} = a^2 \cdot 4^{-R}.
\end{align*}
  
\end{proof}

\begin{proof}[Proof of Lemma 4]
Let 
\begin{equation*}
f_{sq}(x) \in \mathcal{F}(R, 9)
\end{equation*}
be the neural network from Lemma 4 satisfying
\begin{equation*}
|f_{sq}(x) - x^2| \leq 4 \cdot a^2 \cdot 4^{-R}
\end{equation*}
for $x \in [-2a, 2a]$, and set
\[
f_{mult}(x,y)
=
\frac{1}{4} \cdot \left( f_{sq}(x+y)-f_{sq}(x-y) \right).
\]
Since 
\begin{equation*}
x \cdot y = \frac{1}{4} \left((x+y)^2 - (x-y)^2\right)
\end{equation*}
we have
\begin{align*}
\begin{split}
|f_{mult}(x,y) - x \cdot y|
&\leq \frac{1}{4} \cdot \left|f_{sq}(x+y) - (x+y)^2\right| + \frac{1}{4} \cdot \left|(x-y)^2 - f_{sq}(x-y)\right|\\
&\leq \frac{1}{4} \cdot 2 \cdot 4 \cdot a^2 \cdot 4^{-R}\\
&\leq 2 \cdot a^2 \cdot 4^{-R}
\end{split}
\end{align*}
for $x,y \in [-a, a]$. 
\end{proof}

\begin{proof}[Proof of Lemma 8]
We set $q=\lceil \log_2(d)\rceil$. The feedforward neural network $f_{mult, d}$ with $L=R \cdot q$ hidden layers and $r=18d$ neurons in each layer is constructed as follows: Set 
\begin{equation}
\label{neq1}
(z_1, \dots, z_{2^q})=
  \left(x^{(1)}, x^{(2)}, \dots, x^{(d)}, \underbrace{1, \dots,1}_{2^q-d} \right).
\end{equation}
In the construction of our network we will use the network $f_{mult}$ of Lemma 6,  
which satisfies
\begin{equation}
  \label{ple4eq1}
|f_{mult}(x,y) - x \cdot y| \leq 2\cdot (4^{N+1} a^{N+1})^2 \cdot 4^{-R}
\end{equation}
for $x,y \in [-4^{d} a^{d},4^{d} a^{d}]$. In the first $R$ layers we compute
\[
f_{mult}(z_1,z_2), 
f_{mult}(z_3,z_4), 
\dots,
f_{mult}(z_{2^q-1},z_{2^q}), 
\]
which can be done by $R$ layers of $18 \cdot 2^{q-1} \leq 18 \cdot d$
neurons. E.g., in case
in case $z_l=x^{(d)}$ and $z_{l+1}=1$ we have 
\[
f_{mult}(z_l,z_{l+1})
=f	_{mult}(x^{(d)},1).
\]
As a result of the first $R$ layers we get a vector of outputs
which has length $2^{q-1}$. Next we pair these outputs and apply $f_{mult}$ again. This procedure is continued until there is only one output left.
Therefore we need $L =R q$ hidden layers and
at most $18d$
neurons in each layer. 
\newline
\newline
By  (\ref{ple4eq1}) 
and $R \geq \log_4 \left(2 \cdot 4^{2 \cdot d} \cdot a^{2 \cdot d}\right)$
we get for any $l \in \{1,\dots,d\}$ and any
$z_1,z_2 \in [-(4^l-1) \cdot a^l,(4^l-1) \cdot a^l]$
\[
|f_{mult}(z_1,z_2)| \leq
|z_1 \cdot z_2| + |f_{mult}(z_1,z_2)-z_1 \cdot z_2|
\leq
(4^l-1)^2 a^{2l} + 1
\leq
(4^{2l}-1) \cdot a^{2l}.
\]
From this we get successively that all outputs
of 
layer $l \in \{1,\dots,q-1\}$
  are contained in the interval
$[-(4^{2^l}-1) \cdot a^{2^l},(4^{2^l}-1) \cdot a^{2^l}]$, hence in particular they
  are contained in the interval
$[-4^{d} a^{d},4^{d} a^{d}]$
where inequality  (\ref{ple4eq1}) does hold.
\newline
\newline
Define $f_{2^q}$ recursively by
\[
f_{2^q}(z_1,\dots,z_{2^q})
=
f_{mult}(f_{2^{q-1}}(z_1,\dots,z_{2^{q-1}}),f_{2^{q-1}}(z_{2^{q-1}+1},\dots,z_{2^q}))
\]
and
\[
f_2(z_1,z_{2})= f_{mult}(z_1,z_{2}),
\]
and set
\[
\Delta_l=\sup_{z_1,\dots,z_{2^l}
  \in [-a,a]}
|f_{2^l}(z_1,\dots,z_{2^l})-  \prod_{i=1}^{2^l} z_i|.
\]
Then
\[
|f_{mult, d}(\bold{x})-\prod_{i=1}^d x^{(i)}|
\leq
\Delta_q
\]
and from
\[
\Delta_1 \leq 2 \cdot (4^{d} \cdot a^{d})^2 \cdot 4^{-R}
\]
(which follows from (\ref{ple4eq1})) and
\begin{eqnarray*}
  &&
  \Delta_q
  \leq
  \sup_{z_1,\dots,z_{2^q}
  \in [-a,a]}
  |f_{mult}(f_{2^{q-1}}(z_1,\dots,z_{2^{q-1}}),f_{2^{q-1}}(z_{2^{q-1}+1},\dots,z_{2^q}))
  \\
  &&
  \hspace*{4cm}
    -
    f_{2^{q-1}}(z_1,\dots,z_{2^{q-1}}) \cdot f_{2^{q-1}}(z_{2^{q-1}+1},\dots,z_{2^q})|
      \\
      &&
      \quad
      +
  \sup_{z_1,\dots,z_{2^q}
  \in [-a,a]}
      \left|f_{2^{q-1}}(z_1,\dots,z_{2^{q-1}}) \cdot f_{2^{q-1}}(z_{2^{q-1}+1},\dots,z_{2^q})\right.
      \\
      &&
      \hspace*{4cm}
      -
        \left.\left( \prod_{i=1}^{2^{q-1}} z_i \right)
\cdot f_{2^{q-1}}(z_{2^{q-1}+1},\dots,z_{2^q})\right|
      \\
      &&
      \quad
      +
  \sup_{z_1,\dots,z_{2^q}
  \in [-a,a]}
      \left|
        \left( \prod_{i=1}^{2^{q-1}} z_i \right)
        \cdot f_{2^{q-1}}(z_{2^{q-1}+1},\dots,z_{2^q})
          \right.
          \\
          && \left. 
          \hspace*{7cm}
          -
          \left( \prod_{i=1}^{2^{q-1}} z_i \right)
          \cdot
          \prod_{i=2^{q-1}+1}^{2^{q}} z_i
          \right|
          \\
          &&
          \leq
          2 \cdot (4^{d} \cdot a^{d})^2 \cdot 4^{-R}
          +
          2 \cdot
          4^{2^{q-1}} \cdot a^{2^{q-1}} \cdot
          \Delta_{q-1}
  \end{eqnarray*}
(where  the last inequality follows from
(\ref{ple4eq1})
and the fact that all outputs of 
layer $l \in \{1,\dots,q-1\}$
  are contained in the interval
$[-4^{2^l} a^{2^l},4^{2^l} a^{2^l}]$)
we get
for $x \in [-a,a]^d$
\begin{eqnarray}
  &&
  |f_{mult, d}(\bold{x}) - \prod_{i=1}^d x^{(i)}|
  \nonumber \\
  &&\leq \Delta_q \nonumber \\
  &&
  \leq
  2 \cdot
  (4^{d} \cdot a^{d})^2 \cdot 4^{-R}
  \cdot
  4^{1+2+\dots+2^{q-1}} \cdot a^{1+2+\dots+2^{q-1}} \cdot
  \left(
1 + 2 + \dots + 2^{q-1}
  \right)
  \nonumber \\
  &&
  \leq
  (4^{d} \cdot a^{d})^2 \cdot 4^{-R} \cdot 4^{2d+1} \cdot a^{2d} \cdot d
  \nonumber \\
  &&
  =
  4 \cdot 4^{4d}\cdot a^{4d} \cdot d \cdot 4^{-R}
  . \nonumber \\
  &&
  \label{ple3eq1}
\end{eqnarray}
\end{proof}

\begin{proof}[Proof of Lemma 5]

In the \textit{first step of the proof} we will construct a neural network $f_m$, that approximates 
\begin{align*}
y \cdot m(\bold{x}) = y \cdot \prod_{k=1}^d \left(x^{(k)}\right)^{r_k}, \quad x \in [-a,a]^d, y \in [-a,a],
\end{align*}
where $m \in \mathcal{P}_N$ and $r_1, \dots, r_d \in \N_0$ with $r_1+\dots+r_d \leq N$. By using $y$ ones and some of the $x^{(i)}$ several times, if necessary, Lemma 10 can be extended in a straightforward way to monomials. Here we substitute $d$ by $N+1$ and 
finally can show that a network 
\begin{align*}
f_{m}(y, \bold{x}) \in \mathcal{F}(R \cdot \lceil \log_2(N+1)\rceil, 18 \cdot (N+1))
\end{align*}
achieves an approximation error

\begin{eqnarray}
  &&
  |f_m(\bold{x}, y) - y \cdot m(\bold{x})|
  =
  4 \cdot 4^{4(N+1)}\cdot a^{4(N+1)} \cdot (N+1) \cdot 4^{-R}
  . \nonumber \\
  &&
  \label{ple3eq1}
\end{eqnarray}

In the \textit{second step} we finish the proof. Let
\begin{align*}
p\left(\bold{x}, y_1, \dots, y_{\binom{d+N}{d}}\right) = \sum_{i=1}^{\binom{d+N}{d}} r_i \cdot y_i \cdot m_i(\bold{x}).
\end{align*}
 We can conclude 
 \begin{eqnarray*}
   &&
   \left|p\left(\bold{x}, y_1, \dots, y_{\binom{d+N}{d}}\right) - \sum_{i=1}^{\binom{d+N}{d}} r_i \cdot f_{m_i}(\bold{x},y_i)\right| \\
   &&\leq \sum_{i=1}^{\binom{d+N}{d}} |r_i| \cdot \left|  y_i \cdot m_i(\bold{x}) - f_{m_i}(\bold{x},y_i)\right|\\
   &&\leq \binom{d+N}{d} \cdot \bar{r}(p) \cdot 4 \cdot 4^{4 \cdot (N+1)}
   \cdot a^{4 \cdot (N+1)} \cdot (N+1) \cdot 4^{-R}.
\end{eqnarray*}
\end{proof}
\end{document}